\documentclass[10pt, journal, final, letterpaper, twocolumn]{IEEEtran}

\usepackage{microtype}
\usepackage{graphicx}
\usepackage{booktabs} 
\usepackage{hyperref}
\usepackage{url}
\usepackage{amsmath}
\usepackage{amssymb}
\usepackage{mathtools}
\usepackage{amsthm}
\usepackage{enumitem}
\usepackage[numbers]{natbib}

\usepackage{amsmath,amsfonts,bm}

\def\eqref#1{equation~\ref{#1}}

\def\1{\bm{1}}

\DeclareMathAlphabet{\mathsfit}{\encodingdefault}{\sfdefault}{m}{sl}
\SetMathAlphabet{\mathsfit}{bold}{\encodingdefault}{\sfdefault}{bx}{n}

\usepackage{algorithm}
\usepackage{algorithmic}
\usepackage{bbm}
\usepackage{tabularx}
\usepackage{pifont}
\usepackage{multirow}
\usepackage{wrapfig}
\usepackage{makecell}
\usepackage{tikz}
\usepackage{listings}
\usepackage{courier}
\usepackage{titletoc}

\newcommand{\rot}[1]{\rotatebox{90}{#1}}

\newcommand{\tile}[2]{%
	\begin{tikzpicture}[baseline=(img.south)]
		\node[inner sep=0pt] (img) {\includegraphics[width=0.15\textwidth]{#1}};
		\node[anchor=north west, font=\scriptsize, fill=white, fill opacity=.85,
		text opacity=1, inner sep=1pt, xshift=1pt, yshift=-1pt]
		at (img.north west) {#2};
	\end{tikzpicture}%
}

\newtheorem{theorem}{Theorem}[section]

\newtheorem{corollary}[theorem]{Corollary}
\theoremstyle{definition}
\newtheorem{definition}[theorem]{Definition}

\theoremstyle{remark}
\newtheorem{remark}[theorem]{Remark}

\usepackage[capitalize,noabbrev]{cleveref}

\begin{document}

	\title{Weierstrass Positional Encoding for Vision Transformers}
	
	\author{Zhihang~Xin,
		Rui~Wang$^{*}$,
		Xitong~Hu,
		and~Xiaojun~Wu$^{*}$%
		\thanks{Zhihang Xin and Xitong Hu are with the School of Mathematics and Data Science, Jiangnan University, Wuxi 214122, China, e-mail: \{1131230221; 1131230110\}@stu.jiangnan.edu.cn.}%
		\thanks{Rui Wang and Xiaojun Wu are with the School of Artificial Intelligence and Computer Science, Jiangnan University, Wuxi 214122, China, e-mail: \{cs\_wr; wu\_xiaojun\}@jiangnan.edu.cn.}%
		\thanks{$^{*}$Corresponding authors. Rui Wang is the first corresponding author.}%
		\thanks{All theoretical results are established under explicit assumptions, with complete proofs provided in the Appendix. The code will be released upon acceptance.}%
	}
	
	\maketitle
	
	\begin{abstract}
		Vision Transformers (ViTs) have demonstrated remarkable success in computer vision tasks.  However, their reliance on learnable one-dimensional positional encoding  disrupts the inherent two-dimensional spatial structure of images due to patch flattening. Existing positional encoding approaches lack geometric constraints and fail to preserve a monotonic correspondence between Euclidean spatial distances and sequential index distances, thereby limiting the model's capacity to leverage spatial proximity priors effectively. Recognizing that periodicity is particularly beneficial for positional encoding, we propose Weierstrass elliptic Positional Encoding (WePE), a mathematically principled approach that encodes two-dimensional coordinates in the complex domain. This method maps the normalized two-dimensional patch coordinates onto the complex plane and constructs a compact four-dimensional positional feature based on the Weierstrass elliptic function $\wp(z)$ and its derivative. 
		The doubly periodic property of $\wp(z)$ enables a principled encoding of 2D positional information, while their intrinsic lattice structure aligns naturally with the geometric regularities of patch grids in images. Their nonlinear geometric characteristics enable faithful modeling of spatial distance relationships,  while the associated algebraic addition formula allows relative positional information between arbitrary patch pairs to be derived directly from their absolute encodings. WePE is a plug-and-play, resolution-agnostic positional module that integrates seamlessly with existing ViTs. Extensive experiments demonstrate that WePE delivers consistent performance gains in most scenarios, while its implementation with precomputed lookup tables ensures that these improvements incur no noticeable computational or memory overhead. In addition, several analyses and ablation studies bring further confirmation to the effectiveness of our method. 
	\end{abstract}
	
	\begin{IEEEkeywords}
		Double periodicity, Elliptic Curves, Neural Networks, Positional Encoding, Vision Transformers.
	\end{IEEEkeywords}
	
	\section{INTRODUCTION}
	\label{sec:introduction}
	
	\IEEEPARstart{V}{ision} Transformers (ViTs)~\citep{dosovitskiy2020image} 
	have recently emerged as a powerful representation learning architecture in computer vision,  challenging the long-standing dominance of Convolutional Neural Networks (CNNs)~\citep{lecun2002gradient}. 
	By partitioning an image into a sequence of patches, ViTs leverage the self-attention mechanism to model global dependencies~\citep{vaswani2017attention}, a stark contrast to the localized inductive biases inherent in CNNs~\citep{zeiler2014visualizing}. While this design enables greater flexibility in capturing long-range interactions, it also introduces a critical limitation: ViTs lack an intrinsic understanding of spatial geometry. As a result, their performance heavily relies on positional encodings (PE)~\citep{shaw2018self,parmar2018image} to provide the necessary spatial information.
	
	The standard formulation of ViTs adopts simple, learnable 1D positional embeddings~\citep{dosovitskiy2020image}. Beyond learnable absolute encodings, researchers have proposed and adopted a spectrum of positional encodings, including sinusoidal (trigonometric) schemes~\citep{vaswani2017attention}, Fourier Position Embedding (FoPE)~\citep{hua2024fourier}, Rotary Position Embedding (RoPE)~\citep{su2024roformer}, Lie-group–based rotational encodings (LieRE)~\citep{ostmeier2025liere}, the RoPE-Mixed variant specialized for 2D vision~\citep{heo2024rotary}, and others. However, most positional encoding schemes for ViTs entail a structural limitation: the 2D patch grid is serialized by flattening into a 1D token sequence to conform to the sequence-based formulation of Transformer self-attention~\citep{vaswani2017attention}, thereby disrupting the image’s intrinsic spatial geometry. More importantly, such encodings operate essentially as a lookup table without geometric constraints. As a consequence, no monotonic correspondence is ensured between the true Euclidean distance of image patches and their relative positions in the embedding space~\citep{wu2021rethinking}. 
	This deficiency severely limits the model's capacity to exploit spatial proximity priors, which acts as a cornerstone for effective visual understanding~\citep{cordonnier2019relationship}.
	
	To overcome these  limitations, we introduce the Weierstrass elliptic Positional Encoding (WePE), a mathematically principled framework rooted in the theory of complex analysis. Instead of flattening the patch grid, we map the 2D coordinates of each patch directly onto the complex plane, thereby preserving their geometric integrity. We then utilize the Weierstrass elliptic function~\citep{weierstrass1854theorie}, $\wp(z)$, a doubly periodic meromorphic function (As given in Definition~\ref{def:weierstrass}), to construct a continuous and structured spatial representation. It establishes a profound connection between a complex torus (defined by a lattice in the complex plane) and an algebraic elliptic curve through a specific differential equation. Formally, $\wp(z)$ possesses several  mathematical properties, such as being a doubly periodic meromorphic function (as detailed in~\Cref{def:elliptic_function}), satisfying a specific differential Equation (\Cref{thm:weierstrass_ode}), and adhering to a distinctive addition formula (\Cref{thm:addition_formula}). 
	
	From a three-dimensional perspective, The Weierstrass $\wp$-function resembles an array of identical volcano-like structures arranged regularly across the plane, where the sharp peak of each “volcano” corresponds to a second-order pole of the function. These peaks rise steeply to infinity at the lattice points, while the surface between them exhibits smooth, doubly periodic undulations. Each “volcano” is embedded within the parallelogram cell spanned by its two fundamental half-periods $\omega_{1}$ and $\omega_{3}$, which together form the period lattice of the Weierstrass elliptic function.
	This choice is not arbitrary: Our discussion in ~\Cref{sec:periodicity_benefits} concludes that periodicity is beneficial, and even optimal under the standard criterion of translation-equivariant positive definite attention kernels, for positional encodings. This insight leads us to select a function with a doubly periodic function to adapt to two-dimensional images. The intrinsic lattice structure of $\wp(z)$ naturally aligns with the geometric regularities of image patch grids, while its continuity ensures resolution invariance, a pivotal advantage for fine-tuning across resolutions. Moreover,  we demonstrate that WePE possesses a provable distance-decay property, and that its algebraic addition formula allows the direct derivation of relative positional information between arbitrary patches.
	
	This paper presents the design, theoretical grounding, and empirical validation of WePE. By replacing heuristic positional encoding with a mathematically  principled formulation, our method endows ViTs with a robust geometric inductive bias. The main contributions of this paper are summarized as follows:
	\begin{enumerate}
		\item \textbf{Geometrically principled positional encoding:} We propose WePE, a mathematically grounded framework that maps 2D image coordinates to the complex plane via the Weierstrass elliptic function. This design preserves the intrinsic spatial structure of images, inherently aligns with translational regularities, and provides a continuous, resolution-invariant positional representation.
		
		\item \textbf{WePE with several key properties:} Mathematical analysis shows that WePE has some key properties, such as relative position modeling via the elliptic function’s addition formula, the inherent distance-decay property, and the periodicity advantages that, under certain conditions, may even be optimal. Additionally, the continuous function evaluation ensures resolution-invariant fine-tuning, while industrial-level acceleration schemes simplify implementation.
		
		\item \textbf{Empirical validations across multiple datasets:} 
		We conduct experiments under the scenarios of pre-training and fine-tuning. The results demonstrate the empirical advantages of WePE, showing consistent improvements over existing models where positional encoding is the only variable. Besides, several ablations also corroborate the significance of WePE's intrinsic properties.

	\end{enumerate}

	\begin{figure*}[htbp] 
		\centering
		\includegraphics[width=\textwidth]{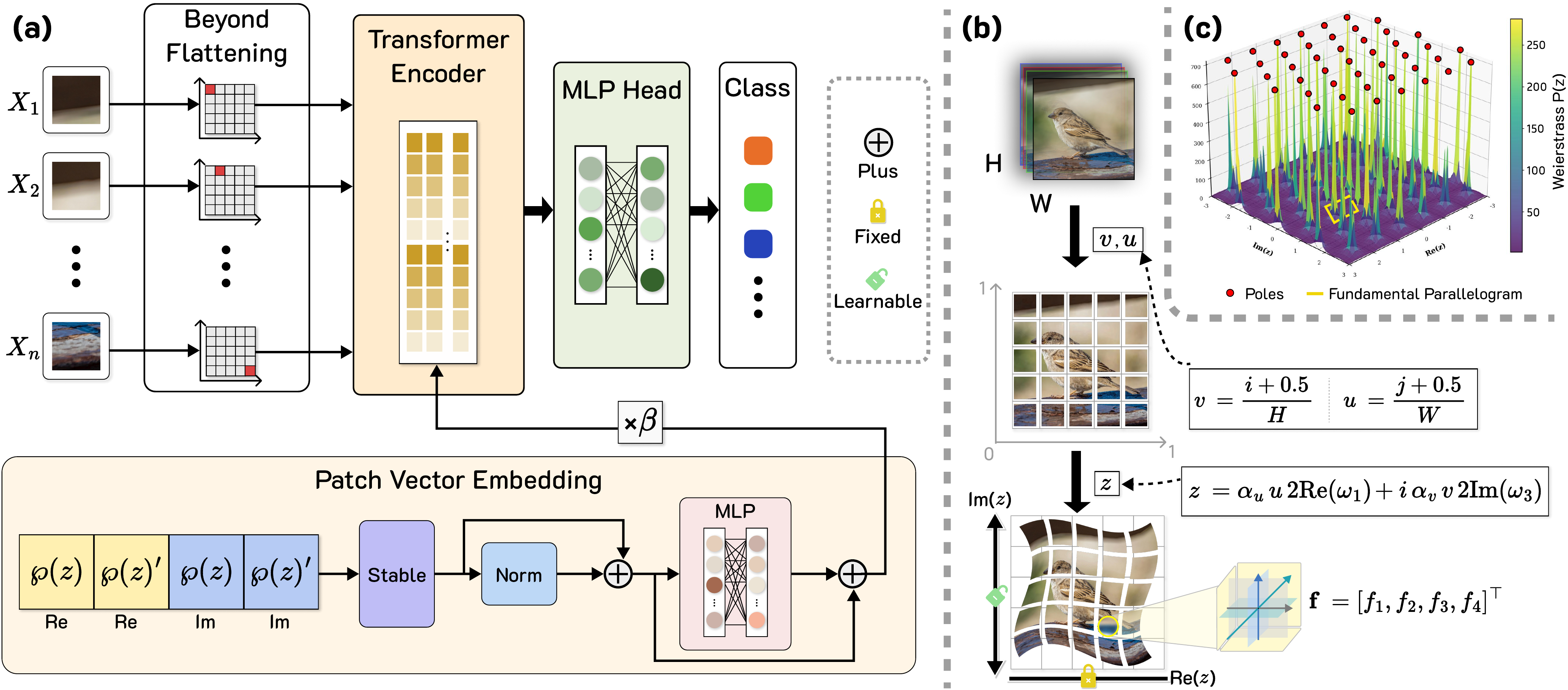} 
		\caption{Overview of how WePE encodes 2D spatial information. (a) Four-dimensional WePE features are mapped to patch embeddings for Transformer encodings. (b) Image patches are normalized and mapped onto the complex plane to construct WePE coordinates. (c) A three-dimensional visualization of the Weierstrass elliptic function, illustrating its doubly periodic structure and pole distribution across the complex plane.}
		\label{fig:Algorithm summary} 
	\end{figure*}

	\section{Weierstrass Elliptic Positional Encoding}
	\label{headings}

	\subsection{Foundational Framework of WePE}
	\label{subsec:wepe_framework}

	\paragraph{Coordinate System and Complex Plane Mapping}
	
	The input images of ViTs~\citep{dosovitskiy2020image} often exhibit varying resolutions, resulting in different numbers of image patches after partitioning. Consequently, the direct use of absolute indices is problematic, as their valid range changes with the total number of patches. In contrast, relative positional information, when represented within a normalized coordinate system, remains consistent and resolution-independent.
	
	Given an input image, we conventionally partition it into a grid of $H \times W$ patches. For patch coordinates $(i,j)$ of the input image, where $i \in \{0,1,\ldots,H-1\}, j \in \{0,1,\ldots,W-1\}$, we first normalize them to the $[0,1]$ interval:
	\begin{equation}
		u = \frac{j + 0.5}{W}, \qquad v = \frac{i + 0.5}{H} \label{eq:uv_norm}.
	\end{equation}
	
	The effective ranges for $u$ and $v$ are respectively 
	$\left(\frac{0.5}{W}, \frac{W-0.5}{W}\right)$ 
	and 
	$\left(\frac{0.5}{H}, \frac{H-0.5}{H}\right)$, 
	which are proper subsets of $[0, 1]$. This is advantageous in certain situations. Taking $z=0$ as an example, it is actually a pole in the mapping, and we want to avoid having the center of any patch located at a pole.

	Subsequently, the normalized coordinates can be mapped onto the complex plane via the following formula:
	\begin{equation}
		z = \alpha_u \cdot u \cdot 2\text{Re}(\omega_1) + i \cdot \alpha_v \cdot v \cdot 2\text{Im}(\omega_3),
		\label{eq:complex_mapping}
	\end{equation}
	where $\alpha_u, \alpha_v$ are adjustable scaling factors, and $\omega_1, \omega_3$ represent the real half-period and imaginary half-period of the elliptic function, respectively. This mapping serves as a pivotal step, as it embeds the rich geometric and analytic properties of the Weierstrass elliptic function into the spatial representation of each image patch. Intuitively, the elliptic function acts like "weaving a fishing net" across the image, thereby naturally coupling positional information along both spatial dimensions. 
	
	\paragraph{\boldmath Feature Extraction from the $\wp(z)$ and its Derivative}
	
	For the mapped complex coordinate $z$, to fully utilize the information in complex numbers, we extract the real and imaginary of $\wp(z)$ and its derivative, given below:
	\begin{equation}
		\begin{aligned}
			f_1 &= \operatorname{Re}(\wp(z)), & f_2 &= \operatorname{Im}(\wp(z)), \\
			f_3 &= \operatorname{Re}(\wp'(z)), & f_4 &= \operatorname{Im}(\wp'(z)).
		\end{aligned}
	\end{equation}
	
	This yields a four-dimensional positional feature, denoted by 
	$\mathbf{f} = [f_1, f_2, f_3, f_4]^{\top}$.

	In certain regions, the Weierstrass elliptic function may attain extremely large magnitudes, especially near its poles (also illustrated in~\Cref{fig:Algorithm summary}(c)). Such unbounded behavior poses a risk for training stability, often manifesting as gradient explosion and computational failure~\citep{lozier2003nist}. As a countermeasure, we instead propose two empirically validated and robust solutions (\Cref{subsec:wefpe-pretrain} and \Cref{subsec:wefpe-finetune} ), which are respectively more suitable for the scenarios of pre-training and fine-tuning.

	\subsection{WePE Implementation for From-Scratch Pre-training}
	\label{subsec:wefpe-pretrain}
	
	\paragraph{Numerical Computation via Direct Lattice Summation}
	
	When computing the series expansion of the Weierstrass elliptic function, we truncate it (Definition~\ref{def:weierstrass}) to a finite sum over indices $|m| \leq M, |n| \leq N$ (excluding the origin), where each lattice point is given by  $\omega_{mn} = 2m\omega_1 + 2n\omega_3$. At this point, the truncated approximation of $\wp(z)$ is expressed as:
	\begin{equation}
		\wp(z) \approx \frac{1}{z^2} + \sum_{\substack{|m| \leq M, |n| \leq N, \\ (m,n) \neq (0,0)}} \left( \frac{1}{(z-\omega_{mn})^2} - \frac{1}{\omega_{mn}^2} \right).
		\label{eq:truncated_series}
	\end{equation}
	For $w_{m,n}$ with a large modulus ($|w_{m,n}| \gg |z|$), the asymptotic contribution behaves as:
	\begin{equation} \label{eq:asymptotic_behavior}
		\begin{split}
			|T_{m,n}(z)| &= \left| \frac{1}{(z - w_{m,n})^2} - \frac{1}{w_{m,n}^2} \right| \\
			&\approx \frac{2|z|}{|w_{m,n}|^3} + O\left(\frac{|z|^2}{|w_{m,n}|^4}\right).
		\end{split}
	\end{equation}
	This indicates  that the contribution decays as a power law of $|w_{m,n}|^{-3}$. 
	
	To improve convergence, we sort the lattice points by their modulus, such that 
	$|\pi(w_1)| \le |\pi(w_2)| \le \cdots$. 
	Here, $\pi$ denotes a permutation that reorders the lattice points according to the magnitude of their modulus. 
	The reordered partial sum is then expressed as 
	$
	S_K(z) = \sum_{k=1}^{K} T_{\pi(k)}(z),
	$
	where $K$ is the truncation index. 
	This ordering ensures that terms corresponding to lattice points with smaller modulus are accumulated first, 
	significantly accelerating the convergence of the truncated series compared with the conventional lexicographic scheme. The truncation error can be bounded as:
	\begin{equation} \label{eq:truncation_error}
		|S_\infty(z) - S_K(z)| \le \sum_{k=K+1}^{\infty} |T_{\pi(k)}(z)| \le C \sum_{k=K+1}^{\infty} \frac{1}{|\pi(w_k)|^3}.
	\end{equation}
	For a 2D lattice, the tail sum can be approximated by an integral, $\sum_{|w|>R} |w|^{-3} \sim \int_{R}^{\infty} r^{-2} dr = 1/R$, which ensures that the error decays as $O(1/R_K)$, where $R_K = |\pi(w_K)|$. Compared with a conventional lexicographic ordering, this modulus-based summation significantly improves convergence, reducing the truncation error from $O(\log K / \sqrt{K})$ to $O(1/\sqrt{K})$. 
	
	Finally, the upper bound of the truncation error can be estimated as:
	\begin{equation} \label{eq:trunc_bound}
		E_{\text{trunc}}
		= \left| \sum_{|w|>R_{\max}} T_w(z) \right|
		\le \sum_{|w|>R_{\max}} \frac{2|z|}{|w|^3}
		\le \frac{2C_\Lambda\,|z|}{R_{\max}},
	\end{equation}
	where $C_\Lambda > 0$ is a lattice-dependent constant. This follows from comparing the
	tail sum with a planar integral: since the number of lattice points in an annulus
	$[r, r+\mathrm{d}r]$ is approximately $2\pi r\,\mathrm{d}r / a^2$, where
	$a = 2\omega_1$ is the fundamental period length, one obtains
	\begin{equation}
		\sum_{|w|>R} |w|^{-3}
		\;\lesssim\;
		\int_R^\infty \frac{2\pi r}{a^2}\cdot r^{-3}\,\mathrm{d}r
		= \frac{2\pi}{a^2 R},
	\end{equation}
	so that $C_\Lambda \lesssim 2\pi / a^2$. Taking $M_{\text{search}} = N_{\text{search}} = 12$
	as an example, we obtain $R_{\max} \approx 30$. In the lemniscatic setting used in our
	experiments, $a = 2\omega_1 \approx 5.244$, giving $C_\Lambda \lesssim 2\pi/a^2 \approx 0.229$.
	Substituting into Equation~\ref{eq:trunc_bound} yields
	\begin{equation}
		E_{\text{trunc}}
		\;\lesssim\;
		\frac{2 \times 0.229}{30}\,|z|
		\;\approx\;
		1.5\times 10^{-2}\,|z|.
	\end{equation}
	
	Our discussion in~\Cref{sec:wef_pe_lut} presents a solution that balances computational precision and implementation efficiency through the pre-computation of a high-resolution WePE look-up table. Once training is complete, the optimal parameters learned by the proposed WePE module are fixed. We then generate a high-resolution look-up table by calculating a four-dimensional positional feature vector for each point on a fine, fixed-size grid. During the inference stage, the normalized coordinates of the patches for any given input image are used as query points. During inference, the GPU utilizes hardware-accelerated bilinear interpolation to efficiently approximate and retrieve the corresponding positional encodings from the pre-computed table. This method transforms the computational burden from complex, real-time calculations into a one-time pre-computation and a rapid, memory-based retrieval operation, thus bringing the time complexity of online inference down to a level comparable to that of simple grid-based encoding schemes. We have shown in~\Cref{sec:wef_pe_lut} that the error introduced by this scheme is negligible, as it is far below the inherent stochastic error sources during deep neural network training and inference.
	
	\paragraph{Numerical Stability and Convergence Acceleration}
	
	To mitigate potential numerical explosion, we apply an adaptive tanh-based compression of each feature component in $\boldsymbol{\mathrm{f}}$, \textit{i.e.},$\tilde{f}_i = \tanh(\alpha_{\text{scale}} \cdot f_i)$ for $i=1, \dots, 4$, where the scaling parameter is parameterized as $\alpha_{\text{scale}} = \text{softplus}(\alpha_{\text{raw}})$ for ensuring positivity. For the input $z$ near a pole (\textit{i.e.}, $|z| < 15\epsilon$, where $\epsilon$ is a small threshold), the function value is clipped to a large constant $C_{\text{large}}$; otherwise, $\wp(z)$ is computed using  Equation ~\ref{eq:truncated_series}. The accumulated round-off error for a sum over $K$ terms is bounded by $E_{\text{round}} \le K \cdot \epsilon_{\text{mach}} \cdot \max_k |T_k|$, where  $\epsilon_{\text{mach}} \approx 2.22 \times 10^{-16}$ is the machine precision. By further clipping individual terms such that $\max_k |T_k| \le M_{\text{clip}}$, the total round-off error is controlled at the order of $10^{-10}$. These measures collectively ensure numerical stability, while preserving both model accuracy and the fidelity of the encoded features~\citep{higham2002accuracy}.
	
	\paragraph{Network Architecture Integration}
	
	We use $\tilde{\mathbf{f}}_{ij}$ to denote the complete four-dimensional stabilized feature vector of the $(i,j)$-th image patch, which is composed of the components
	$\tilde{f}_{1}, \tilde{f}_{2}, \tilde{f}_{3}, \tilde{f}_{4}$. These 4D WePE features are then projected to the model dimension $d$ via a linear layer, yielding patch encodings $\mathbf{PE}_{ij} = \text{LayerNorm}(\mathbf{W}_{\text{proj}} \tilde{\mathbf{f}}_{ij} + \mathbf{b}_{\text{proj}})$
	\citep{ba2016layer}, where $\mathbf{W}_{\text{proj}} \in \mathbb{R}^{d \times 4}$ and
	$\mathbf{b}_{\text{proj}} \in \mathbb{R}^d$.
	For enhanced representational capacity, this linear layer can be substituted with a Multi-Layer Perceptron (MLP)~\citep{rumelhart1986learning}. The classification token is endowed with a separate learnable encoding $\mathbf{PE}_{\text{cls}} \in \mathbb{R}^{1 \times d}$~\citep{devlin2019bert}. Finally, these position encodings are added to their corresponding patch and token encodings to form the model's input sequence:
	\begin{equation}
		\mathbf{X}_{\text{input}} = \big[ \mathbf{x}_{\text{cls}} + \mathbf{PE}_{\text{cls}}, \, \{ \mathbf{x}_{i} + \mathbf{PE}_{i} \}_{i=0}^{HW} \big]^{\top}
		\label{eq:final_input_v1}
	\end{equation}

	\paragraph{Positional Encodings with Learnable Parameters}
	
	To enable adaptive spatial scaling, the imaginary half-period $\omega_3$ is parameterized as a learnable quantity. Since the imaginary part $\omega_3^{\prime}$ must be positive, it cannot be directly optimized using gradient-based methods. Instead, we introduce an unconstrained trainable parameter $\alpha_{\text{learn}}$ and define $\omega_3 = i \cdot \omega_3^{\prime}$, where $\omega_3^{\prime}$ is derived via the softplus function to ensure positivity: $\omega_3^{\prime} = \text{softplus}(\alpha_{\text{learn}}) = \log(1 + \exp(\alpha_{\text{learn}}))$. The other lattice parameter, $\omega_1$, is kept as a fixed constant to prevent potential overfitting. This configuration allows the lattice basis, which is initialized to be orthogonal and form a square, to adaptively deform into a rectangle during training, thereby learning the optimal aspect ratio for the given data.
	
	Recognizing that semantic and positional cues are not always of equal importance, we introduce a learnable global scaling factor, $\beta_{\text{pos}}$, to balance their contributions. The final positional encoding is then defined as $\mathbf{PE}_{\text{final}} = \beta_{\text{pos}} \cdot \mathbf{PE}_{\text{WePE}}$, where $\mathbf{PE}_{\text{WePE}}$ is the encoding generated by our method.
	
	\begin{figure*}[t]
		\centering
		\includegraphics[width=\linewidth]{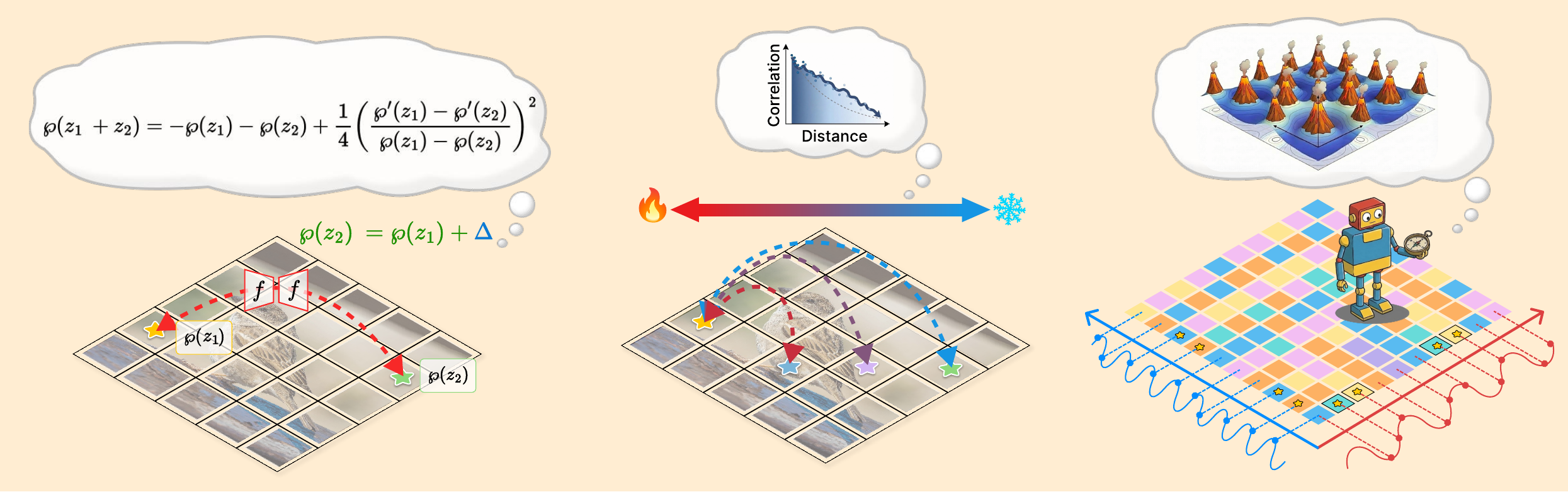}
		\caption{
			Diagram of key features of WePE.
			Left: WePE's addition formula enables explicit modeling of relative displacements on the patch lattice;
			Middle: The induced distance–aware correlation shows a decay with spatial separation;
			Right: The Weierstrass elliptic function forms a two-dimensional periodic positional structure on the complex plane.
		}
		\label{fig:wepe_properties}
	\end{figure*}
	
	\subsection{WePE Adaptation for Fine-tuning}
	\label{subsec:wefpe-finetune}
	
	Fine-tuning a pre-trained ViT typically involves far fewer optimization 
	steps than scratch training, starting from a well-initialized checkpoint. 
	In this regime, two practical difficulties arise when directly applying the 
	lattice-sum formulation of $\wp(z)$: the series evaluation introduces 
	non-trivial computational overhead at each gradient step, and the 
	unbounded behavior near lattice points can destabilize a training 
	trajectory that is already close to a good solution. Rather than 
	discarding the geometric structure established during pre-training, 
	we construct a stabilized surrogate field that retains the 
	inductive bias of the original encoding, a strong radial response 
	near the origin together with structured directional periodicity, while 
	ensuring numerical robustness throughout optimization.

	Let $(u, v) \in [0,1]^2$ denote the normalized patch coordinates. We embed them into the complex plane via
	\begin{equation}
		z = z_{\mathrm{re}} + i\, z_{\mathrm{im}},
	\end{equation}
	where
	\begin{equation}
		z_{\mathrm{re}} = \omega_1 u + \varepsilon_u \sin(2\pi u),
		\qquad
		z_{\mathrm{im}} = \omega_3' v + \varepsilon_v \cos(2\pi v).
		\label{eq:ft_coord}
	\end{equation}
	Here $\omega_1$ is the fixed real half-period inherited from pre-training, 
	$\omega_3' > 0$ is a learnable vertical scaling parameter constrained to 
	be positive via a shifted softplus transformation, and $\varepsilon_u, 
	\varepsilon_v$ are small fixed phase perturbations that improve 
	optimization robustness without introducing additional learned degrees 
	of freedom.

	Let $r(z) = \sqrt{(Re z)^2 + (Im z)^2}$ and $\theta(z) = 
	\mathrm{atan2}(Im z,\, Re z)$, and define normalized coordinates
	\begin{equation}
		u' = \frac{Re z}{\omega_1}, \qquad v' = \frac{Im z}{\omega_3'}.
	\end{equation}
	To avoid instability at the origin we adopt the clamped radius 
	$r_{\mathrm{safe}}(z) = \max\{r(z),\, \epsilon\}$ for a small constant 
	$\epsilon > 0$. A learnable shape parameter $\beta > 0$ governs the 
	principal radial term
	\begin{equation}
		M(z) = \frac{1}{r_{\mathrm{safe}}(z)^2 + \beta},
		\label{eq:M}
	\end{equation}
	which approximates the dominant $1/|z|^2$ singularity of $\wp(z)$ 
	while remaining finite everywhere. To encode directional periodicity 
	we augment $M(z)$ with a Fourier-like correction
	\begin{equation}
		C(z) = \sum_{k=1}^{K} a_k
		\Bigl[
		\cos(k\pi u')\,e^{-k\pi|v'|}
		+ \sin(k\pi v')\,e^{-k\pi|u'|}
		\Bigr],
		\label{eq:C}
	\end{equation}
	where $\{a_k\}$ are frequency-dependent amplitudes. In practice, 
	retaining only the first few low-frequency terms ($K = 3$) is 
	sufficient. The surrogate complex field is then
	\begin{equation}
		\widetilde{\wp}_{\mathrm{ft}}(z)
		=
		\bigl(M(z)\cos\theta(z) + C(z)\bigr)
		+ i\bigl(M(z)\sin\theta(z) + \eta\, C(z)\bigr),
		\label{eq:surr}
	\end{equation}
	where $\eta \in \mathbb{R}$ is a fixed mixing coefficient. This 
	construction yields a non-degenerate complex-valued field whose 
	real and imaginary channels are coupled through the polar angle, 
	while $\eta \neq 1$ breaks the symmetry between the two channels 
	in a controlled manner.

	To enrich the positional representation and maintain the same 
	four-channel interface as the pre-training formulation, we introduce 
	an auxiliary branch that mimics the role of $\wp'(z)$. This branch 
	is not the exact complex derivative of Equation ~\ref{eq:surr}; it is 
	designed to capture local spatial variation of the surrogate field 
	while remaining stable under gradient flow. Concretely, we define 
	a radial derivative proxy
	\begin{equation}
		M'(z) = -\frac{2}{r_{\mathrm{safe}}(z)^3 + \beta},
		\label{eq:Mprime}
	\end{equation}
	and a complementary correction term
	\begin{equation}
		C'(z) = \sum_{k=1}^{K'} b_k\, k
		\Bigl[
		-\sin(k\pi u')\,e^{-k\pi|v'|}
		+ \cos(k\pi v')\,e^{-k\pi|u'|}
		\Bigr],
		\label{eq:Cprime}
	\end{equation}
	where $b_k$ are fixed or learnable amplitudes. The auxiliary branch is
	\begin{equation}
		\widetilde{\wp}_{\mathrm{ft}}'(z)
		=
		\bigl(M'(z)\cos\theta(z) + C'(z)\bigr)
		+ i\bigl(M'(z)\sin\theta(z) + \eta'\,C'(z)\bigr),
		\label{eq:surr_deriv}
	\end{equation}
	with a fixed coefficient $\eta' \in \mathbb{R}$.

	The four positional features are assembled from the real and imaginary 
	parts of Equation ~\ref{eq:surr} and Equation ~\ref{eq:surr_deriv}, compressed 
	by a scaled $\tanh$ nonlinearity with a learnable or fixed factor 
	$\alpha > 0$:
	\begin{align}
		f_1 &= \tanh\!\bigl(\alpha\,Re(\widetilde{\wp}_{\mathrm{ft}}(z))\bigr), &
		f_2 &= \tanh\!\bigl(\alpha\,Im(\widetilde{\wp}_{\mathrm{ft}}(z))\bigr), 
		\label{eq:f12}\\
		f_3 &= \tanh\!\bigl(\alpha\,Re(\widetilde{\wp}_{\mathrm{ft}}'(z))\bigr), &
		f_4 &= \tanh\!\bigl(\alpha\,Im(\widetilde{\wp}_{\mathrm{ft}}'(z))\bigr).
		\label{eq:f34}
	\end{align}
	Collecting these gives $\mathbf{f}_{\mathrm{ft}}(z) = [f_1, f_2, f_3, 
	f_4]^\top \in \mathbb{R}^4$, which is projected to the model dimension 
	through a lightweight multilayer projection head:
	\begin{equation}
		\mathrm{PE}_{\mathrm{ft}}(z) = \mathrm{Proj}\!\bigl(\mathbf{f}_{\mathrm{ft}}(z)\bigr) + \mathbf{b}_{\mathrm{off}},
		\label{eq:PE_ft}
	\end{equation}
	where $\mathbf{b}_{\mathrm{off}}$ is a learnable offset vector.

	A key consideration in fine-tuning is that the pre-trained absolute 
	position embeddings $\mathbf{E}_{\mathrm{learned}} \in 
	\mathbb{R}^{(N+1)\times d}$ already encode task-relevant spatial 
	patterns acquired from large-scale pre-training. Discarding them 
	entirely would forfeit this prior. We therefore interpolate between 
	the pre-trained embeddings and the WePE encoding via a learnable 
	scalar gate $\lambda \in [0,1]$:
	\begin{equation}
		\mathbf{E}_{\mathrm{hybrid}}^{\mathrm{patch}}
		= \lambda\,\mathbf{E}_{\mathrm{WePE}}
		+ (1-\lambda)\,\mathbf{E}_{\mathrm{learned}}^{\mathrm{patch}},
		\label{eq:hybrid}
	\end{equation}
	where $\mathbf{E}_{\mathrm{WePE}} \in \mathbb{R}^{N \times d}$ is 
	obtained from Equation ~\ref{eq:PE_ft} for all $N$ patch positions. The 
	gate is parameterized as $\lambda = \sigma(\lambda_{\mathrm{raw}})$, 
	where $\sigma$ denotes the sigmoid function and $\lambda_{\mathrm{raw}}$ 
	is an unconstrained scalar optimized jointly with the rest of the model. 
	The class token retains its original pre-trained embedding throughout 
	fine-tuning. The complete position embedding matrix is then
	\begin{equation}
		\mathbf{E}_{\mathrm{final}}
		= \begin{bmatrix}
			\mathbf{E}_{\mathrm{cls}} \\[4pt]
			\mathbf{E}_{\mathrm{hybrid}}^{\mathrm{patch}}
		\end{bmatrix} \in \mathbb{R}^{(N+1)\times d}.
		\label{eq:E_final}
	\end{equation}
	This formulation allows the model to continuously interpolate between 
	the empirically learned spatial representation and the mathematically 
	principled WePE structure, with $\lambda$ adapting automatically to 
	the demands of each downstream task.

	The fine-tuning construction described above should be understood as a 
	geometry-aware surrogate rather than a faithful evaluation of the 
	Weierstrass elliptic function. Its purpose is to preserve the two 
	structural properties most essential for transfer, a strong radial 
	response concentrated near the origin, and directional periodic 
	modulation consistent with the patch lattice, while sidestepping 
	the numerical sensitivity that makes direct elliptic-function 
	evaluation impractical within a gradient-based optimization loop. 
	Importantly, the surrogate retains the same four-channel complex-valued 
	interface as the pre-training formulation, ensuring seamless 
	compatibility with the downstream transformer. A detailed derivation 
	of the surrogate from the Fourier expansion of $\wp(z)$ is provided 
	in Appendix~\ref{sec:derivation_rationale}.

	\subsection{Theoretical Explanation}
	\begin{figure}[t]
		\centering
		\includegraphics[width=1.0\linewidth]{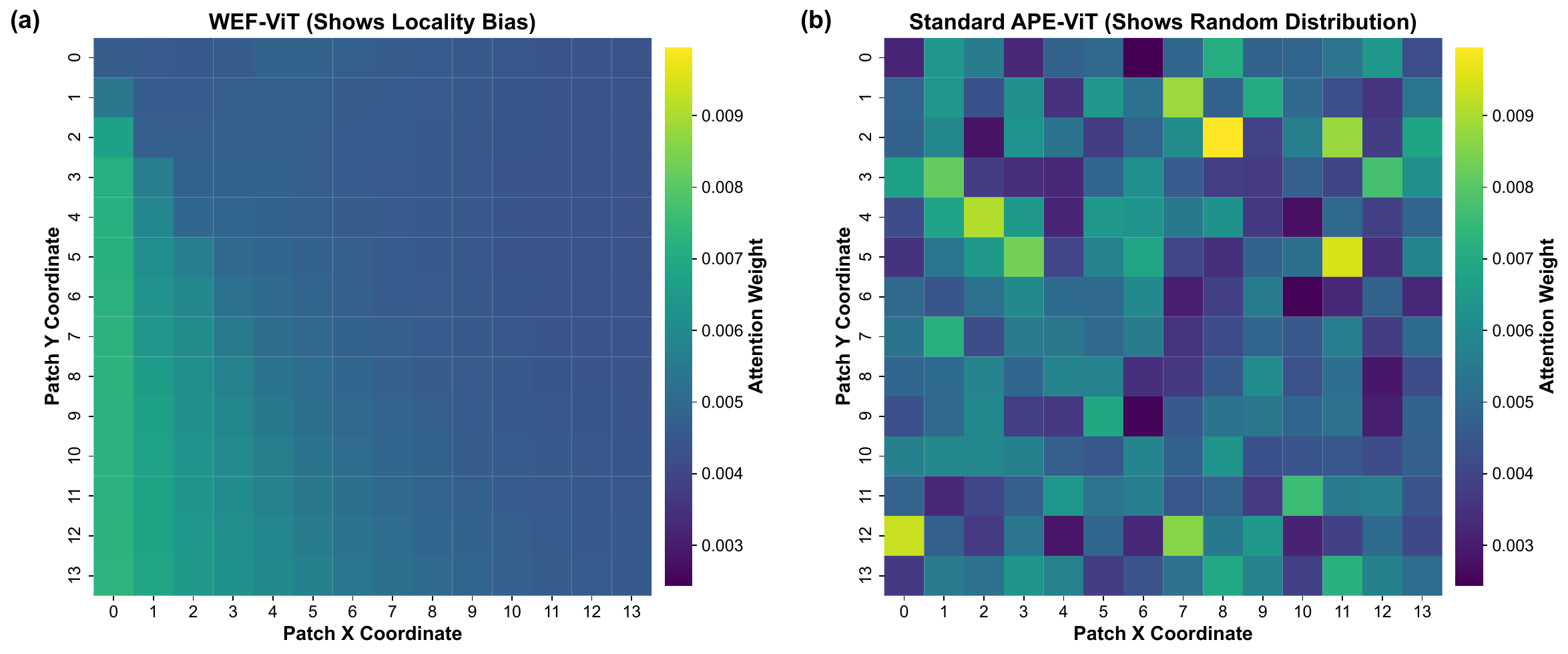}
		\caption{Comparison of geometric inductive bias between WePE-ViT and standard APE-ViT. (a) WePE-ViT exhibits structured locality-aware attention patterns with smooth, isotropic decay from the query patch. (b) Standard APE-ViT demonstrates unstructured attention distribution that lacks spatial coherence.}
		\label{fig:inductive_bias_comparison}
	\end{figure}
	\paragraph{Mapping Normalized Coordinates to the Complex Plane via Isomorphism} Mapping~\ref{eq:complex_mapping} can therefore be written as
	$
	T:\mathbb{R}^2\to\mathbb{C}, T(u,v)=c_1 u + i\,c_2 v, c_1c_2\neq 0.
	$
	To verify linearity, let $\mathbf{x}_1=(u_1,v_1)$ and $\mathbf{x}_2=(u_2,v_2)$ in $\mathbb{R}^2$ and $a,b\in\mathbb{R}$. Then
	$
	T(a\mathbf{x}_1+b\mathbf{x}_2)=T(au_1+bu_2,\,av_1+bv_2)=a\,T(\mathbf{x}_1)+b\,T(\mathbf{x}_2),
	$
	so $T$ is linear. To prove injectivity, observe that
	$
	T(u,v)=0 \iff c_1 u + i\,c_2 v = 0.
	$
	Since $c_1,c_2,u,v\in\mathbb{R}$ and $c_1\neq 0$, $c_2\neq 0$, we must have $u=v=0$, hence $\ker(T)=\{(0,0)\}$ and $T$ is injective. Because
	$
	\dim_{\mathbb{R}}\mathbb{R}^2=\dim_{\mathbb{R}}\mathbb{C}=2,
	$
	an injective linear map between equal-dimensional finite-dimensional real spaces is automatically surjective (rank–nullity). Therefore $T$ is bijective and thus an $\mathbb{R}$-linear isomorphism $\mathbb{R}^2 \cong \mathbb{C}$. Consequently, the spatial relationships among image patches are preserved when embedding normalized coordinates into the complex plane.

	\paragraph{Relative Position Modeling Based on Addition Formula}
	A distinctive property of Weierstrass elliptic function is their addition formula. Let the absolute positions of two image patches be $z_i$ and $z_j$ in the mapped complex plane $\mathbb{C}$. Their relative position can be represented as $\sigma_z = z_j - z_i$. By applying the addition formula (see Theorem~\ref{thm:addition_formula}), $\wp(z_j) = \wp(z_i + \sigma_z)$ can be expressed entirely in terms of $\wp(z_i)$, $\wp(\sigma_z)$, and their derivatives. This algebraic relationship enables direct derivation of relative positional information between any two points from their absolute encodings, without requiring additional relative position encoding modules. Within self-attention mechanisms, the interaction between query vector $Q_{z_i}$ and key vector $K_{z_j}$ can naturally utilize the $\wp(z_j - z_i)$ term from the addition formula. This design enables attention weights to inherently capture spatial relationships between positions, thereby enhancing the model's understanding of geometric structures. Compared to methods that require explicit computation of relative for all position pairs~\citep{shaw2018self}, WePE achieves continuous, precise, and computationally efficient relative position representations. 
	
	\paragraph{Local attenuation of WePE}
	
	In self-attention, interaction strength is determined by the similarity
	between token representations. If the interaction strength after adding
	positional encodings does not exhibit a locality-sensitive trend with
	spatial separation, the model cannot effectively distinguish nearby
	relationships from long-range ones. We formalize this through the
	following local attenuation result: for any two patch positions $z$ and
	$z+\delta$ drawn from a compact domain $D$ within a single fundamental
	parallelogram of the WePE lattice, and satisfying
	$\mathrm{dist}(D,\Lambda)\geq\rho>0$, the normalized similarity of their
	positional encodings satisfies
	\begin{equation*}
		0 \;\le\; 1 - s(z,\delta) \;\le\; C_D|\delta|^2,
	\end{equation*}
	where $s(z,\delta)=\langle q(z),\,q(z+\delta)\rangle$ denotes the inner
	product of the normalized encodings and $C_D>0$ is a domain-dependent
	constant. In particular, the normalized similarity attains its maximum
	at zero displacement and departs from this maximum under small spatial
	perturbations with a quadratic local bound. This is a local attenuation property: within the finite patch
	domain, patches that are close in image space have highly similar
	positional encodings, and this similarity attenuates locally as the
	displacement moves away from zero. This property arises from the
	combination of the linear embedding of normalized patch coordinates into
	the complex plane and the analytic structure of the Weierstrass elliptic
	function $\wp(z)$ away from its poles. Specifically, the final encodings
	are linear projections of 4D feature vectors constructed from $\wp(z)$
	and its derivative (see \Cref{subsec:wepe_framework}). We emphasize that
	this is a finite-domain statement and does not assert global
	asymptotic decay, the periodicity instead induces a translation-consistent tiling of the locality prior across the image,
	ensuring that the same structured proximity bias is reproduced
	consistently in every period cell. This equips the model with an
	explicit spatial proximity prior, which benefits a wide range of vision
	tasks~\citep{vaswani2017attention}. A detailed proof is provided in
	Appendix~\ref{sec:wef_decay_proof}.
	
	\paragraph{Resolution-Invariant Positional Encoding through Continuous Function Evaluation}
	Fine-tuning ViTs typically increases input resolution to capture fine details \citep{dosovitskiy2020image,steiner2021train}. Discrete learnable encodings are tied to a fixed grid and do not transfer; bilinear or bicubic interpolation attenuates high frequencies, introducing boundary aliasing and distorting long-range geometry \citep{keys2003cubic,touvron2021training}.As shown in Equation~\ref{eq:complex_mapping}, WePE effectively overcomes these limitations through its formulation as a continuous meromorphic function evaluated at arbitrary complex coordinates rather than a discrete lookup operation. The scaling factors $\alpha_u$ and $\alpha_v$ control the effective spatial frequency of the elliptic function across horizontal and vertical dimensions respectively, enabling the encoding to maintain optimal spatial discrimination at the increased resolution while preserving the fundamental periodic structure that encodes translational regularities in visual data. The imaginary half-period parameter adapts during fine-tuning to accommodate changes in the aspect ratio between patches and the overall spatial density of the representation, ensuring that the doubly periodic lattice structure characterized by the fundamental parallelogram maintains geometric consistency across resolutions. The lattice summation in Equation ~\ref{eq:truncated_series} remains numerically stable across different resolutions since the truncation parameters $M$ and $N$ are determined by the desired numerical precision rather than the specific image dimensions, ensuring consistent computational accuracy regardless of the resolution scaling factor. 
	This continuous formulation enables the generation of positional encodings at any spatial resolution without resorting to interpolation of pre-computed values, thereby preserving the mathematical precision and geometric fidelity of the spatial representation.

	\section{Experiments}
	
	To evaluate the effectiveness of WePE, we conduct experiments under both pre-training and fine-tuning settings. We further perform ablation studies to assess the contribution of each module component, and provide empirical analyses to reveal the underlying mechanism and theoretical rationale of WePE. More supplementary experiments, including robustness and invariance analysis (Appx.~\ref{sec:occlusion-robustness-analysis}, ~\ref{sec:geometric-invariance-analysis}), geometric properties and mechanism verification (Appx.~\ref{sec:relative-position-awareness}, \ref{sec:wepe-geometric-bias}, \ref{sec:supplement_long_term}, \ref{sec:attention-behavior-analysis}), comparison of different 2D encoding schemes (Appx.~\ref{sec:comparison-2d-pe-schemes}), adaptability and sensitivity analysis (Appx.~\ref{sec:extreme-aspect-ratio}, \ref{sec:fourier-params-sensitivity}), object detection and image segmentation (Appx.~\ref{app:coco_attention_seg}), and shortcomings and failure analysis (Appx.~\ref{sec:limitations-future-work}), are provided in Appendix~\ref{sec:additional_experiments}.

	\subsection{Understanding  WePE}

	\paragraph{WePE exhibits better geometric inductive bias} To investigate the inductive bias introduced by our proposed WePE, we first visualize self-attention maps of a randomly-initialized ViTs~\citep{dosovitskiy2020image}, without any training. Specifically, we focus on the attention distribution originating from a central query patch to all other patches within the sequence. As depicted in~\Cref{fig:inductive_bias_comparison}, the WePE-equipped model exhibits a highly structured and localized attention pattern: attention weights are concentrated on the query patch itself and decay smoothly and isotropically with increasing spatial distance. In contrast, the baseline using standard learnable Absolute Positional Encoding (APE)~\citep{dosovitskiy2020image} shows a largely uniform and unstructured attention distribution, where attention weights appear randomly scattered. This demonstrates that WePE inherently provides a strong spatial locality prior, predisposing the model to focus on local interactions even before learning, whereas standard encodings lack such an inductive bias. These results collectively demonstrate that WePE injects a robust and accurate 2D geometric inductive bias into the ViTs~\citep{dosovitskiy2020image} from initialization. This structural prior is absent in standard models, which must learn spatial relationships from data alone.

	\paragraph{Global Semantic Attention in ViTs~\citep{dosovitskiy2020image}} We trained two ViT-Tiny models under identical settings. We then visualized the complete information flow from input to output on unseen high-resolution images using the Attention Rollout method. The results, presented in ~\Cref{fig:semantic_attention}, consistently demonstrate a significant qualitative difference in the learned attention patterns. For instance, when presented with an image of a cat, the WePE-ViTs model's attention forms a coherent and complete silhouette that accurately envelops the entire animal. In stark contrast, the APE-ViTs attention is fragmented, focusing disproportionately on high-contrast edges where the subject meets the background, rather than the semantic object itself. WePE learns to associate features within a global spatial context, resulting in attention maps that align closely with the primary semantic content. The baseline APE model~\citep{dosovitskiy2020image}, lacking this structural prior, appears to overfit to low-level, local cues, leading to a fragmented attention mechanism that often fails to represent the complete semantic entity within the image. From these visualizations, we conclude that the geometric inductive bias inherent in our WePE enables the model to develop a more holistic and structurally-aware understanding of visual scenes.

	\begin{figure}[t]
		\centering
		\includegraphics[width=0.8\linewidth]{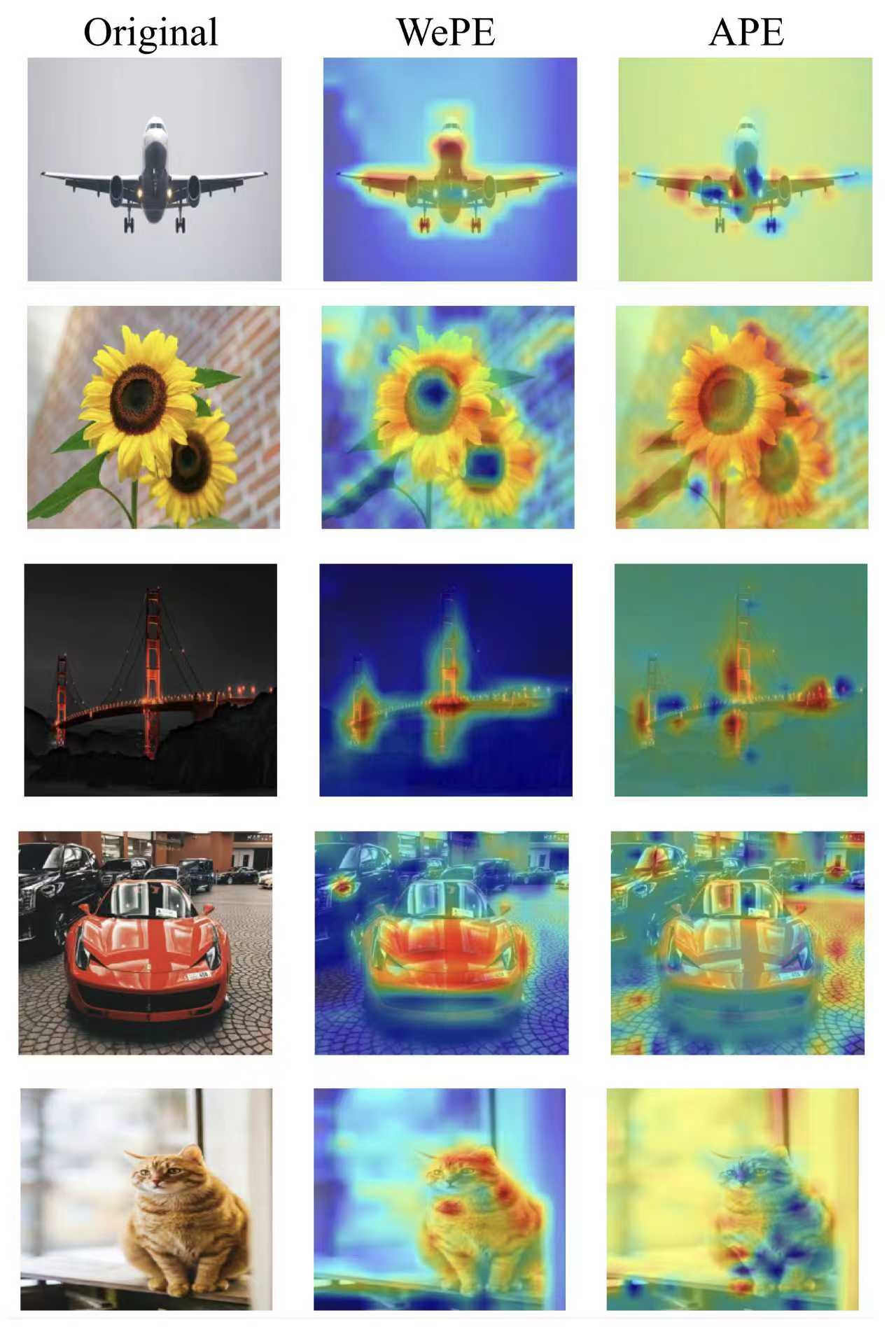}
		\caption{Attention rollout visualization comparing semantic focus patterns between WePE-ViTs and APE-ViTs models trained on CIFAR-100.}
		\label{fig:semantic_attention}
	\end{figure}

	\paragraph{Long-term Attenuation of Positional Encoding} \label{sec:long_term_attenuation}
	We verify the distance–decay property of WePE on a $14\times14$ patch grid (from $224\times224$ images). For all $\binom{196}{2}=19{,}110$ pairs we compute the normalized Euclidean distance $d_{\mathrm{rel}}\!\in\![0,100]$ and the cosine similarity $S$ between encodings $(\mathbf{p}_i,\mathbf{p}_j)$, rescale $S$ by min–max for visualization~\citep{jiawei2006data}, and aggregate results into 80 distance bins, taking the bin midpoint as the representative distance and the mean similarity as the representative score (see Appendix~\ref{sec:supplement_long_term}). The curve (see \Cref{fig:app-distance-decay-quad}) shows a pronounced negative correlation with distance ($\rho=-0.966$), evidencing long-range attenuation. 
	
	\begin{figure}[t]
		\centering
		\includegraphics[width=1.0\linewidth]{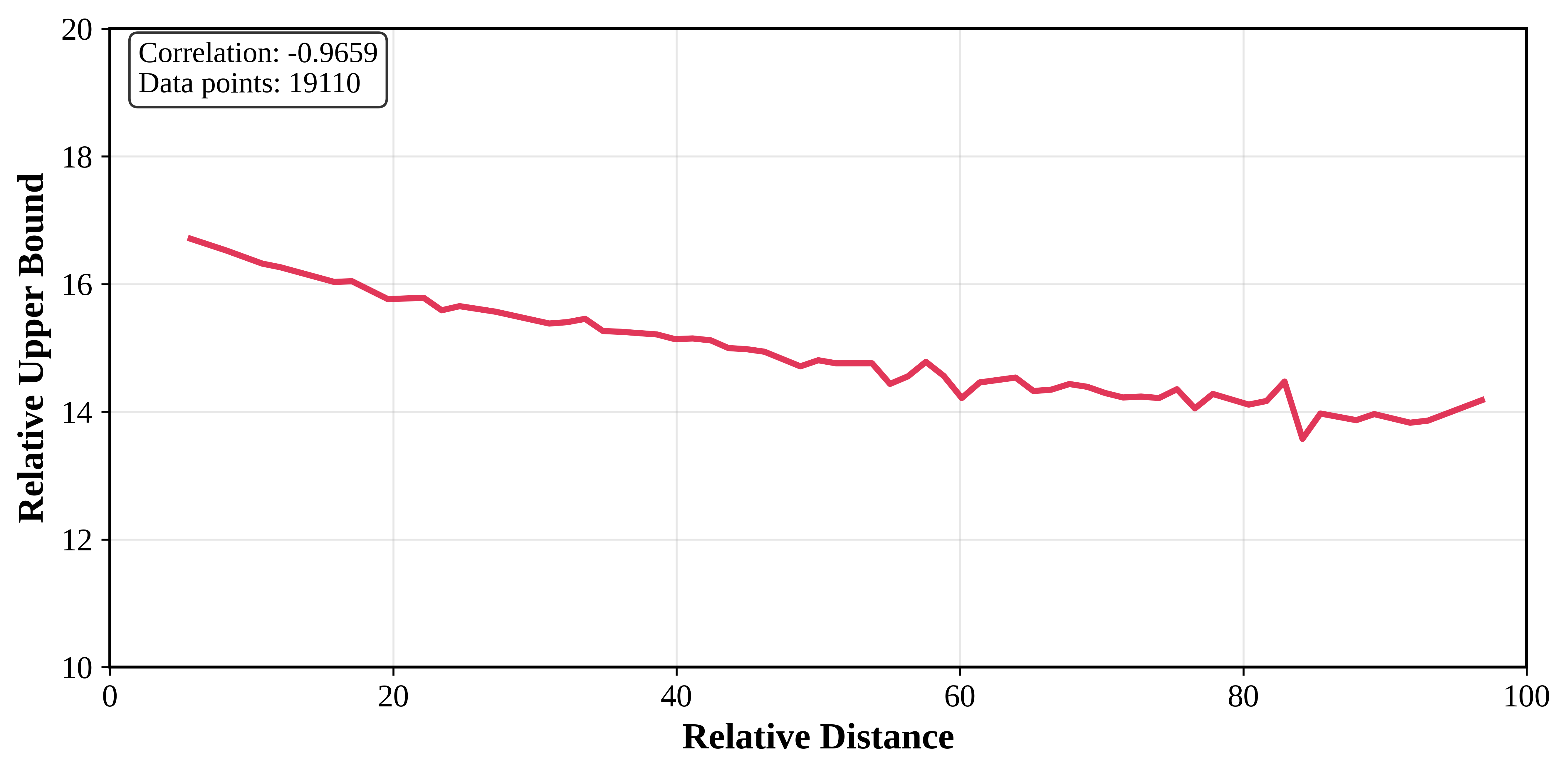}
		
		\caption{Empirical validation of the distance--decay property of WePE: x-axis denotes normalized patch spatial separation, y-axis denotes min--max scaled cosine similarity of positional encodings.}
		\label{fig:distance_decay_validation}
	\end{figure}

	In practical applications of ViTs~\citep{dosovitskiy2020image}, the self-attention mechanism depends not only on pure positional encodings, but more critically, on the fused representation of patch content features and their positional encodings.
	Similarly, to simulate a content-agnostic scenario and isolate the effect of the positional signal, we sample random content features $\mathbf{f}_{ij}\!\sim\!\mathcal{N}(\mathbf{0},\mathbf{I})$ in $\mathbb{R}^{192}$ and fuse them with WePE via $\mathbf{h}_{ij}=\mathbf{f}_{ij}+\mathbf{p}_{ij}$, repeating the analysis yields the same distance–decay trend (see~\Cref{fig:distance_decay_validation}).
	
	\subsection{Computational Efficiency}
	
	\begin{figure}[t]
		\centering
		\includegraphics[width=\columnwidth]{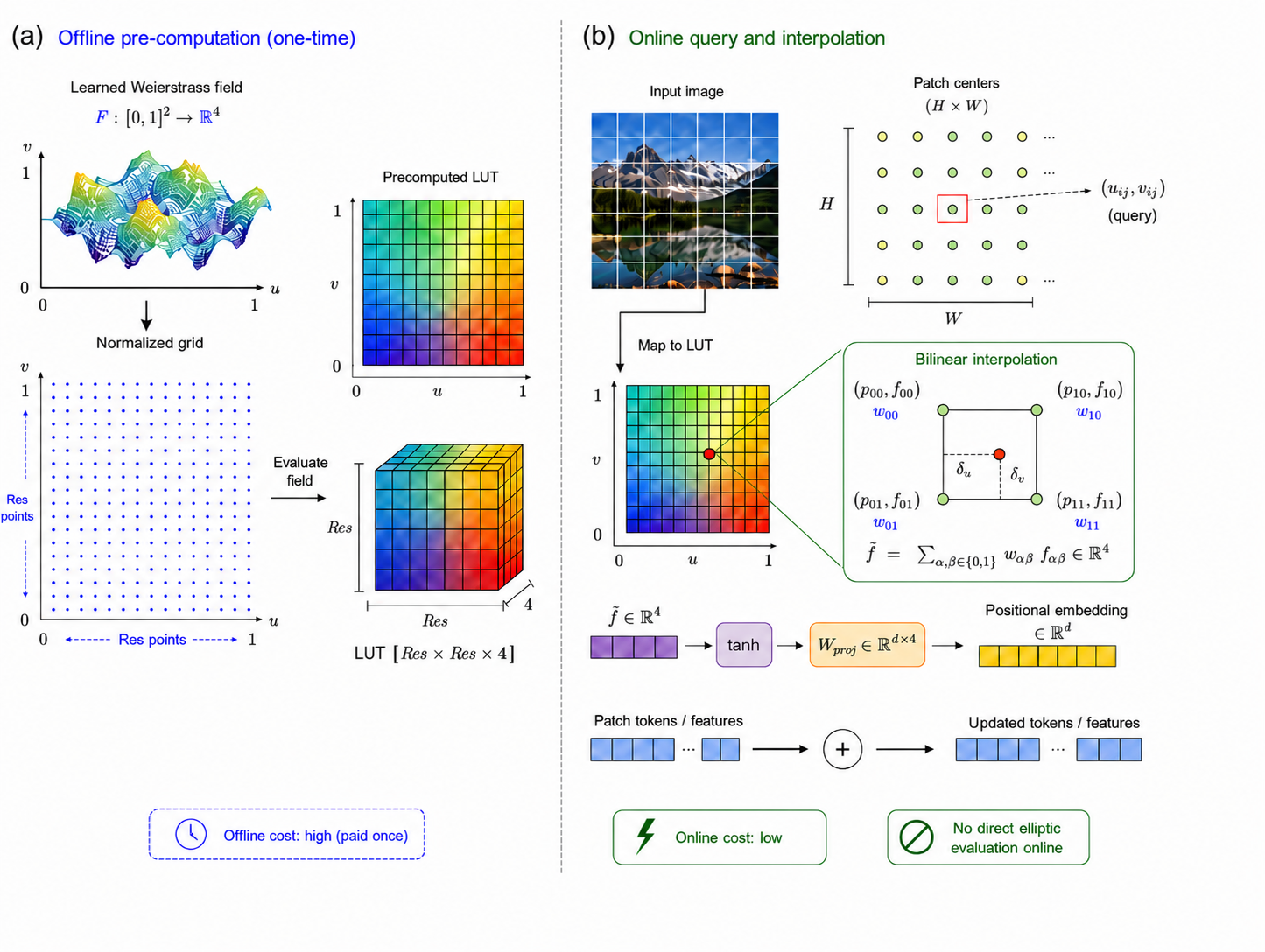}
		\caption{Illustration of the precomputed lookup-table implementation of WePE.
			The learned Weierstrass positional field is precomputed offline as a dense LUT,
			while online positional features are obtained by querying patch centers and
			bilinear interpolation, avoiding direct elliptic-function evaluation.}
		\label{fig:wepe_lut}
	\end{figure}

	Although WePE is defined through the Weierstrass elliptic function, its online implementation does not require repeated elliptic-function evaluation. After training, the learned elliptic-function parameters are fixed, and the corresponding four-channel positional field is precomputed on a dense regular grid. During inference and training, patch-level positional features are obtained from this lookup table by bilinear interpolation. We refer to this implementation as WePE-LUT. The detailed construction of the lookup table, the interpolation procedure, and the associated approximation-error analysis are provided in Appendix~\ref{sec:wef_pe_lut}.
	
	We benchmark the computational efficiency of WePE using inference latency, training-step time, and peak GPU memory across three architectures, ViT-T/B/L, and two input resolutions, $224\times224$ and $384\times384$. We compare WePE-LUT with APE, 2D sinusoidal positional encoding (2D-Sin), RoPE-Mixed, and FoPE. Measurements are conducted on a single NVIDIA RTX 4090 GPU with 24GB memory, CUDA 12.1, and PyTorch 2.2.0. We use batch size 32 for inference and batch size 16 for training. Each latency measurement uses 20 warm-up iterations and 100 timed iterations, with GPU-side timing obtained through \texttt{torch.cuda.Event} to avoid host-synchronization overhead.
	
	Direct lattice summation is also reported for ViT-T. Without the lookup table, evaluating $\wp(z)$ through truncated lattice summation introduces inference latencies of about 532--540 ms on ViT-T. In contrast, WePE-LUT fixes the learned elliptic-function parameters after training, stores a $256\times256\times4$ lookup table as a model buffer, and retrieves patch positional features through hardware-accelerated bilinear interpolation using \texttt{F.grid\_sample}. The online cost is therefore comparable to other grid-based positional encodings.
	
	\begin{table*}[t]
		\centering
		\caption{Benchmark results at resolution $224\times224$. Inference uses batch size 32 and 100 timed iterations. Training step uses batch size 16 and includes forward, backward, and optimizer update. Peak GPU memory includes activations and gradients. WePE-Direct is reported only for ViT-T due to its high cost.}
		\label{tab:benchmark_224}
		\resizebox{\textwidth}{!}{%
			\begin{tabular}{llrrrrr}
				\toprule
				Arch & PE Method & Inf. (ms) & Inf. Mem (MB) & Trn. (ms) & Trn. Mem (MB) & PE params \\
				\midrule
				\multirow{6}{*}{6*ViT-T}
				& APE         & 5.83   & 113   & 22.57   & 669    & 38,016 \\
				& 2D-Sin      & 6.01   & 165   & 23.07   & 713    & 192 \\
				& RoPE-Mix    & 6.06   & 209   & 21.82   & 756    & 12,480 \\
				& FoPE        & 6.28   & 252   & 22.21   & 800    & 3,464 \\
				& WePE-Direct & 532.56 & 295   & 1476.41 & 857    & 1,731 \\
				& WePE-LUT    & 6.35   & 340   & 22.86   & 888    & 58,087 \\
				\midrule
				\multirow{5}{*}{5*ViT-B}
				& APE      & 27.21 & 612   & 48.38 & 3,340  & 152,064 \\
				& 2D-Sin   & 27.28 & 1,283 & 48.52 & 4,025  & 768 \\
				& RoPE-Mix & 27.25 & 1,958 & 48.84 & 4,703  & 49,920 \\
				& FoPE     & 27.30 & 2,628 & 48.90 & 5,373  & 38,424 \\
				& WePE-LUT & 27.40 & 3,304 & 49.41 & 6,045  & 895,879 \\
				\midrule
				\multirow{5}{*}{5*ViT-L}
				& APE      & 81.98 & 1,521  & 150.99 & 9,604  & 202,752 \\
				& 2D-Sin   & 82.10 & 3,851  & 151.22 & 11,932 & 1,024 \\
				& RoPE-Mix & 82.13 & 6,183  & 151.16 & 14,263 & 66,560 \\
				& FoPE     & 82.08 & 8,515  & 151.65 & 16,596 & 67,616 \\
				& WePE-LUT & 82.23 & 10,853 & 151.71 & 18,935 & 1,587,719 \\
				\bottomrule
		\end{tabular}}
	\end{table*}
	
	\begin{table*}[t]
		\centering
		\caption{Benchmark results at resolution $384\times384$. ViT-L training exceeds 24GB GPU memory for all methods, inference results are still reported.}
		\label{tab:benchmark_384}
		\resizebox{\textwidth}{!}{%
			\begin{tabular}{llrrrrr}
				\toprule
				Arch & PE Method & Inf. (ms) & Inf. Mem (MB) & Trn. (ms) & Trn. Mem (MB) & PE params \\
				\midrule
				\multirow{6}{*}{6*ViT-T}
				& APE         & 18.23  & 419 & 27.10   & 2,367 & 110,976 \\
				& 2D-Sin      & 18.24  & 463 & 26.93   & 2,410 & 192 \\
				& RoPE-Mix    & 18.23  & 507 & 27.37   & 2,454 & 12,480 \\
				& FoPE        & 18.30  & 550 & 27.45   & 2,497 & 3,464 \\
				& WePE-Direct & 540.48 & 594 & 1491.96 & 2,576 & 1,731 \\
				& WePE-LUT    & 18.33  & 638 & 27.39   & 2,585 & 58,087 \\
				\midrule
				\multirow{5}{*}{5*ViT-B}
				& APE      & 98.57 & 1,270 & 171.36 & 10,082 & 443,904 \\
				& 2D-Sin   & 98.62 & 1,936 & 170.54 & 10,751 & 768 \\
				& RoPE-Mix & 98.56 & 2,606 & 170.36 & 11,403 & 49,920 \\
				& FoPE     & 98.66 & 3,270 & 170.77 & 12,068 & 38,424 \\
				& WePE-LUT & 98.71 & 3,942 & 170.28 & 12,740 & 895,879 \\
				\midrule
				\multirow{5}{*}{5*ViT-L}
				& APE      & 294.52 & 2,389 & OOM & OOM & 591,872 \\
				& 2D-Sin   & 294.65 & 3,553 & OOM & OOM & 1,024 \\
				& RoPE-Mix & 294.56 & 4,713 & OOM & OOM & 66,560 \\
				& FoPE     & 294.68 & 5,875 & OOM & OOM & 67,616 \\
				& WePE-LUT & 294.93 & 7,042 & OOM & OOM & 1,587,719 \\
				\bottomrule
		\end{tabular}}
	\end{table*}
	
	The benchmark shows that WePE-LUT introduces negligible latency overhead. On ViT-T at $224\times224$, WePE-LUT requires 6.35 ms compared with 5.83 ms for APE; at $384\times384$, the difference is only 0.10 ms. On ViT-B and ViT-L, the latency difference remains within measurement noise. Training-step time is also comparable: on ViT-B at $224\times224$, WePE-LUT requires 49.41 ms compared with 48.38 ms for APE.
	
	The main overhead of WePE-LUT is memory rather than latency. This comes primarily from the lightweight projection head that maps the four-dimensional elliptic features to the model dimension and stores the associated activations during training. The static $256\times256\times4$ lookup table contributes only about 1 MB and is independent of model size or input resolution. Therefore, once the lookup table is precomputed, the online time complexity of WePE-LUT is $O(Nd)$, the same order as standard absolute and 2D sinusoidal positional encodings, while the memory cost is moderately larger due to the projection module.

	\subsection{pre-training}
	\label{sec:pre-training}
	
	We begin with CIFAR-100~\citep{krizhevsky2009learning} and ImageNet-1k~\citep{deng2009imagenet} benchmarks to evaluate WePE in 2D vision tasks. All models are based on the ViTs~\citep{dosovitskiy2020image} . We compare WePE with APE~\citep{dosovitskiy2020image}, RoPE‑Mixed~\citep{heo2024rotary}, Auto-Scaled 2D RoPE (AS2DRoPE)~\citep{chu2024visionllama}, Sinusoidal Position Encoding(SPE)~\citep{vaswani2017attention}, FoPE~\citep{hua2024fourier}, RoPE~\citep{su2024roformer}, and LieRE~\citep{ostmeier2025liere} under identical configurations. To adapt RoPE and FoPE to 2D inputs, an image $I\!\in\!\mathbb{R}^{H\times W\times C}$ is patchified by a strided convolution (kernel\,$=$\,stride\,$=P$), each $P{\times}P$ patch is linearly projected and the resulting $H/P\times W/P$ grid is serialized into a 1D token sequence, on which the original sequence-based formulations are applied to the query/key vectors in multi-head self-attention to encode periodic, relative spatial structure. Table \ref{tab:cifar100_full_120} and Table \ref{tab:cifar_imagenet_200} compare the performance of ViTs~\citep{dosovitskiy2020image} integrated with different position encodings, trained from scratch for a varying number of epochs on multiple datasets. In these comparisons, WePE consistently demonstrates superior performance.

	\begin{table}[t]
		\centering
		\footnotesize 
		
		\setlength{\belowcaptionskip}{2pt}
		\renewcommand{\thead}[1]{\begin{tabular}{@{}c@{}}#1\end{tabular}}
		
		\caption{Top-1 accuracy after training for 120 epochs on the full dataset. (\%).}
		\label{tab:cifar100_full_120}
		
		\begin{tabular*}{\columnwidth}{@{\extracolsep{\fill}}lccccc@{}}
			\toprule
			Method & \thead{\textbf{WePE}} & \thead{APE} & RoPE & FoPE & \thead{SPE} \\
			\midrule
			Accuracy & \textbf{63.78} & 56.46 & 57.29 & 57.70 & 51.99 \\
			\bottomrule
		\end{tabular*}
	\end{table}

	\begin{table}[t]
		\centering
		\caption{Top-1 accuracy (\%) on CIFAR-100 and ImageNet-1k, trained for 200 epochs.}
		\label{tab:cifar_imagenet_200}
		
		\resizebox{\linewidth}{!}{
			\begin{tabular}{lccccccc} 
				\toprule
				Dataset & Frac. & \textbf{WePE} & LieRE$_8$ & LieRE$_{64}$ & RoPE-Mix & AS2D & APE \\
				\midrule
				CIFAR-100 & 20\%  & \textbf{46.36} & 45.42 & 44.44 & 44.48 & 39.14 & 39.80 \\
				CIFAR-100 & 40\%  & \textbf{56.81} & 54.68 & 54.64 & 55.14 & 50.53 & 49.90 \\
				CIFAR-100 & 60\%  & \textbf{63.38} & 62.04 & 62.90 & 61.56 & 58.58 & 56.83 \\
				CIFAR-100 & 90\%  & \textbf{68.96} & 67.72 & 68.36 & 67.00 & 62.59 & 62.76 \\
				ImageNet-1k & 100\% & \textbf{70.10} & 69.60 & 69.30 & 68.80 & 64.40 & 66.10 \\
				\bottomrule
		\end{tabular}}
		
	\end{table}
	
	To further assess the proposed method, we integrated the WePE into a Dynamic Hybrid Vision Transformer Tiny (DHVT-Ti) model~\citep{lu2022bridging}, which is engineered for data efficiency on smaller datasets. The DHVT model is specifically engineered to enhance the inductive biases of ViTs~\citep{dosovitskiy2020image} for improved data efficiency on small-scale datasets by incorporating convolutional operations. This serves as an excellent baseline model for comparing the pre-training capabilities of various vision models. As shown in Table \ref{tab:224_resolution_results}, the model achieves a peak validation accuracy of 76.53\%, surpassing all baselines.

	\subsection{fine-tuning}
	
	To assess transferability and data efficiency, we fine-tuned an ImageNet-21k pre-trained ViT-L/16 on  Visual Task Adaptation Benchmark 1000 (VTAB-1k) tasks under the 1k-shot protocol~\citep{zhai2019large}. Inputs were resized to $384\times384$, requiring bilinear interpolation of the pre-trained positional encodings from a $14\times14$ to a $24\times24$ grid; rather than discarding these encodings, we formed a hybrid position module that blends the interpolated encodings with our WePE through a learnable gate $\lambda$ (see Appendix ~\ref{sec:supp_experimental_details}). The experimental procedure is identical to that described in~\cite{dosovitskiy2020image}. The experimental results, as shown in Table~\ref{tab:vtab_breakdown_selected}, demonstrate that our algorithm outperforms traditional methods on most datasets.
	
	\begin{table}[t]
		\centering
		\footnotesize 
		\caption{Results on 224$\times$224 resolution.}
		\label{tab:224_resolution_results}
		
		\setlength{\tabcolsep}{3.5pt} 
		
		\begin{tabular}{lrrr}
			\toprule
			\textbf{Method} & \textbf{\#Params} & \textbf{GFLOPs} & \textbf{Accuracy (\%)} \\
			\midrule
			ResNet-50+$\mathcal{L}_{\text{dr.loc}}$  & 21.2M & 3.8 & 72.94 \\
			SwinT+$\mathcal{L}_{\text{dr.loc}}$      & 24.1M & 4.3 & 66.23 \\
			CvT-13+$\mathcal{L}_{\text{dr.loc}}$     & 19.6M & 4.5 & 74.51 \\
			T2T-ViT+$\mathcal{L}_{\text{dr.loc}}$    & 21.2M & 4.8 & 68.03 \\
			DHVT-T                                  & 6.0M  & 1.2 & 74.78 \\
			\midrule
			\textbf{WePE}                    & \textbf{5.5M} & 1.6 & \textbf{76.53} \\
			\bottomrule
		\end{tabular}
	\end{table}

	On the full CIFAR-100~\citep{krizhevsky2009learning} dataset, we fine-tuned an ImageNet-21k~\citep{deng2009imagenet} pre-trained ViT-B/16 while directly replacing the original learnable positional encodings with WePE. This configuration attains a peak test accuracy of $93.28\%$, indicating that WePE’s continuous, doubly periodic spatial representation benefits fine-grained recognition, the learned parameters $\omega_{3}^{\prime}\!\approx\!1.085$ and $\beta\!\approx\!0.610$ further evidence successful geometric adaptation to the dataset.

	\begin{table*}[t]
		\centering
		\caption{Performance breakdown on selected VTAB-1k tasks.}
		\label{tab:vtab_breakdown_selected}
		
		\providecommand{\rot}[1]{\rotatebox{90}{#1}}
		\providecommand{\rdb}{\textcolor{red}{$\bullet$}}
		\providecommand{\grb}{\textcolor{green}{$\bullet$}}
		\providecommand{\blb}{\textcolor{blue}{$\bullet$}}
		\providecommand{\mad}{\textcolor{magenta}{$\bullet$}}
		
		\setlength{\tabcolsep}{3pt}
		\renewcommand{\arraystretch}{1.1}
		
		\scriptsize
		
		\resizebox{\textwidth}{!}{%
			\begin{tabular}{@{}l ccccccc|cccc|cccccccc|c@{}}
				\toprule
				& \rot{Caltech101} & \rot{CIFAR-100} & \rot{DTD} & \rot{Flowers102} & \rot{Pets} & \rot{Sun397} & \rot{SVHN} &
				\rot{Camelyon} & \rot{EuroSAT} & \rot{Resisc45} & \rot{Retinopathy} &
				\rot{Clevr-Count} & \rot{Clevr-Dist} & \rot{DMLab} & \rot{dSpr-Loc} & \rot{dSpr-Ori} & \rot{KITTI-Dist} & \rot{sNORB-Azim} & \rot{sNORB-Elev} &
				\rot{Mean} \\
				
				\\[-2.5ex]
				& \rdb & \rdb & \rdb & \rdb & \rdb & \rdb & \rdb &
				\grb & \grb & \grb & \grb &
				\blb & \blb & \blb & \blb & \blb & \blb & \blb & \blb &
				\mad \\
				\midrule
				
				APE &
				90.80 & 84.10 & 74.10 & 99.30 & 92.70 & 61.00 & 80.90 &
				82.50 & 95.60 & 85.20 & 75.30 &
				70.30 & 56.10 & 41.90 & 74.70 & 64.90 & 79.90 & 30.50 & 41.70 &
				72.70 \\
				
				\textbf{WePE} &
				\textbf{91.32} & \textbf{87.59} & \textbf{77.41} & 98.79 & \textbf{93.16} & \textbf{64.30} & \textbf{84.58} &
				\textbf{83.73} & 93.89 & \textbf{86.10} & \textbf{77.15} &
				\textbf{73.81} & \textbf{60.91} & \textbf{54.24} & \textbf{75.18} & \textbf{68.10} & \textbf{81.25} & \textbf{34.11} & \textbf{42.01} &
				\textbf{73.59} \\
				\bottomrule
			\end{tabular}
		}
	\end{table*}

	\subsection{ablation}
	
	To evaluate the contribution of each component in WePE, we conducted ablation studies. The baseline achieves 63.78\%. Removing $\wp'(z)$ and using only $\text{Re}(\wp(z)),\text{Im}(\wp(z))$ lowers accuracy to 63.08\% ($-0.70$ \%), confirming the derivative provides essential gradient cues. Fixing $\alpha_{\text{scale}}$ and $\alpha_{\text{learn}}$ yields 62.88\% ($-0.90$\%), showing the necessity of adaptive scaling and lattice adjustment. Using non-lemniscatic invariants results in 63.20\% ($-0.58$\%), indicating robustness but also the superiority of the lemniscatic square lattice. Fixing the global scaling parameter to unity produces the largest drop, 62.60\% ($-1.18$\%), highlighting the need for adaptive control of positional strength. Overall, the consistent yet modest degradations across all settings demonstrate that while each component enhances performance, the primary advantage arises from the holistic geometric prior imparted by the Weierstrass elliptic function, seamlessly integrated into the ViTs~\citep{dosovitskiy2020image} backbone.
	
	\subsection{Hyperparameter Sensitivity of WePE}
	\label{app:wepe_sensitivity}
	We evaluate the sensitivity of WePE to four groups of hyperparameters on a $14\times14$ patch grid, corresponding to $224\times224$ images with patch size $16$. For each configuration, we report three types of measurements: (i) the Pearson correlation $\rho$ between normalized patch distance and positional-encoding dissimilarity $1-\cos$ over all $\binom{196}{2}=19{,}110$ patch pairs, which quantifies preservation of the distance-decay property; (ii) the proxy training loss after 30 gradient steps on a depth-6 ViT-T model, which probes optimization stability; and (iii) feature statistics, including the mean absolute feature value, the saturation fraction $|f|>0.99$, and the near-zero fraction $|f|<0.01$. The results are summarized in Tables~\ref{tab:kappa_sensitivity}--\ref{tab:cv_summary}.
	
	\begin{table}[t]
		\centering
		\caption{Sensitivity to the pole-neighbourhood multiplier $\kappa$.}
		\label{tab:kappa_sensitivity}
		\begin{tabular}{ccccccc}
			\toprule
			$\kappa$ & $\rho$ & Proxy loss & Mean$|f|$ & Std & Sat.\% & $\approx0$\% \\
			\midrule
			1.0   & 0.6253 & 4.508 & 0.1063 & 0.225 & 1.91 & 11.61 \\
			5.0   & 0.6253 & 4.454 & 0.1063 & 0.225 & 1.91 & 11.61 \\
			15.0  & 0.6253 & 4.471 & 0.1063 & 0.225 & 1.91 & 11.61 \\
			30.0  & 0.6253 & 4.515 & 0.1063 & 0.225 & 1.91 & 11.61 \\
			60.0  & 0.6253 & 4.531 & 0.1063 & 0.225 & 1.91 & 11.61 \\
			120.0 & 0.6253 & 4.496 & 0.1063 & 0.225 & 1.91 & 11.61 \\
			\bottomrule
		\end{tabular}
	\end{table}
	
	The pole-neighbourhood threshold $\kappa$, which activates the substitution $C_{\mathrm{large}}$ when $|z|<\kappa\epsilon$, is structurally inactive under the standard normalized coordinate mapping. For the lemniscatic square lattice with $\omega_1\approx2.622$, normalized patch coordinates $(u,v)\in(0,1)^2$ are mapped to $|z|\geq 0.5\omega_1/H\approx0.094$, which is approximately $10^7\epsilon$ away from the origin for $\epsilon=10^{-8}$. Hence, no patch coordinate falls within the clipping radius for the tested range $\kappa\in[1,120]$. This explains why the distance correlation, proxy loss, and feature statistics remain invariant up to measurement noise.
	
	\begin{table}[t]
		\centering
		\caption{Sensitivity to the pole substitute value $C_{\mathrm{large}}$.}
		\label{tab:clarge_sensitivity}
		\begin{tabular}{cccc}
			\toprule
			$C_{\mathrm{large}}$ & $\rho$ & Proxy loss & Sat.\% \\
			\midrule
			$5\times10^1$ & 0.6253 & 4.459 & 1.91 \\
			$2\times10^2$ & 0.6253 & 4.500 & 1.91 \\
			$5\times10^2$ & 0.6253 & 4.476 & 1.91 \\
			$2\times10^3$ & 0.6253 & 4.459 & 1.91 \\
			$5\times10^3$ & 0.6253 & 4.488 & 1.91 \\
			$1\times10^5$ & 0.6253 & 4.519 & 1.91 \\
			\bottomrule
		\end{tabular}
	\end{table}
	
	Similarly, the substitute value $C_{\mathrm{large}}$ has no measurable effect on the positional statistics because the clipping branch is not activated by any valid patch coordinate. The scaled hyperbolic tangent compression further ensures that, even if an extreme value were assigned near a pole, it would be mapped to a bounded value before projection.
	
	\begin{table}[t]
		\centering
		\caption{Sensitivity to the lattice scaling factors $\alpha_u=\alpha_v$.}
		\label{tab:lattice_scale_sensitivity}
		\begin{tabular}{cccc}
			\toprule
			$\alpha_u=\alpha_v$ & $\rho$ & Proxy loss & Mean$|f|$ \\
			\midrule
			0.20 & 0.613 & 4.464 & 0.323 \\
			0.40 & 0.625 & 4.488 & 0.106 \\
			0.60 & 0.620 & 4.473 & 0.053 \\
			0.80 & 0.513 & 4.469 & 0.040 \\
			1.00 & 0.347 & 4.556 & 0.073 \\
			1.20 & 0.089 & 4.478 & 0.100 \\
			\bottomrule
		\end{tabular}
	\end{table}
	
	The lattice scaling factors are the only tested hyperparameters that substantially affect the distance-decay metric. Values in $\alpha\in[0.2,0.6]$ preserve strong distance decay, with $\rho\in[0.61,0.63]$, whereas larger values move patch coordinates into higher-frequency oscillatory regions of $\wp(z)$ and reduce monotonicity. This sensitivity is expected: $\alpha_u$ and $\alpha_v$ control the effective spatial frequency of the elliptic field, analogous to the choice of frequency bands in sinusoidal positional encodings. In all main experiments, we set $\alpha_u=\alpha_v=0.4$ by default, while the learnable half-period parameter $\omega_3'$ allows the vertical lattice geometry to adapt during training. The proxy loss varies by only about $1\%$ across the sweep, indicating that training stability is not sensitive to this choice even when the geometric correlation degrades at extreme values.
	
	\begin{table}[t]
		\centering
		\caption{Sensitivity to the tanh compression factor $\alpha_{\mathrm{scale}}$.}
		\label{tab:tanh_scale_sensitivity}
		\begin{tabular}{ccccc}
			\toprule
			$\alpha_{\mathrm{scale}}$ & $\rho$ & Proxy loss & Sat.\% & $\approx0$\% \\
			\midrule
			0.010 & 0.634 & 4.485 & 0.26 & 84.82 \\
			0.050 & 0.630 & 4.559 & 0.89 & 42.98 \\
			0.100 & 0.627 & 4.434 & 1.40 & 20.92 \\
			0.150 & 0.625 & 4.442 & 1.91 & 11.61 \\
			0.300 & 0.625 & 4.492 & 3.32 & 5.87 \\
			0.500 & 0.626 & 4.486 & 4.34 & 3.57 \\
			\bottomrule
		\end{tabular}
	\end{table}
	
	The compression factor $\alpha_{\mathrm{scale}}$ has negligible influence on the distance-decay metric and proxy loss. Its main effect is feature utilization: overly small values leave most features close to zero, whereas overly large values increase saturation. The default $\alpha_{\mathrm{scale}}=0.15$ balances these two effects. In the implementation, $\alpha_{\mathrm{scale}}$ is parameterized through a softplus transformation and can adapt during training.
	
	\begin{table}[t]
		\centering
		\caption{Coefficient of variation (CV = std/mean) of key metrics across each hyperparameter sweep. CV $<0.01$ indicates negligible sensitivity.}
		\label{tab:cv_summary}
		\begin{tabular}{lccc}
			\toprule
			Sweep & CV($\rho$) & CV(loss) & CV(mean$|f|$) \\
			\midrule
			A: threshold $\kappa$ & 0.0000 & 0.0059 & 0.0000 \\
			B: $C_{\mathrm{large}}$ & 0.0000 & 0.0049 & 0.0000 \\
			C: $\alpha_u=\alpha_v$ & 0.4175 & 0.0070 & 0.8230 \\
			D: $\alpha_{\mathrm{scale}}$ & 0.0049 & 0.0091 & 0.6610 \\
			\bottomrule
		\end{tabular}
	\end{table}
	
	Overall, two numerical-safety hyperparameters, $\kappa$ and $C_{\mathrm{large}}$, are structurally inert under standard coordinate normalization and can be set to any reasonable value. The tanh compression factor has negligible influence on both the distance-decay correlation and short-run optimization stability. Only the lattice scaling factors require care, and the interval $[0.2,0.6]$ consistently preserves strong distance decay. These results support the robustness of WePE and indicate that its main user-facing spatial-frequency control can be fixed to a stable default, with the learnable half-period parameter adapting the lattice aspect ratio during training.

	\section{Conclusion}
	In this work, we introduce Weierstrass elliptic Positional Encoding, a mathematically principled
	approach that leverages the rich structure of elliptic functions to address spatial representation
	limitations in Vision Transformers. Our method preserves 2D spatial relationships through a direct
	complex domain mapping and provides explicit spatial proximity priors via a theoretically guaranteed
	distance-decay property. We demonstrated the effectiveness of WePE through extensive experiments,
	achieving superior performance in most from-scratch training and fine-tuning scenarios across a
	variety of standard benchmarks. Furthermore, empirical analyses are conducted to investigate the
	underlying factors contributing to the superiority of WePE. Additionally, we provide a rigorous
	exposition and derivation of the core mathematical principles underpinning WePE. In summary, our
	proposed WePE offers a plug-and-play, resolution-agnostic positional module that restores the 2D
	geometric inductive bias with negligible computational and memory overhead, making it a practical
	drop-in replacement for existing encodings in ViTs.

	\bibliographystyle{IEEEtran}
	\bibliography{IEEEabrv,references}

\begin{thebibliography}{10}
\providecommand{\url}[1]{#1}
\csname url@samestyle\endcsname
\providecommand{\newblock}{\relax}
\providecommand{\bibinfo}[2]{#2}
\providecommand{\BIBentrySTDinterwordspacing}{\spaceskip=0pt\relax}
\providecommand{\BIBentryALTinterwordstretchfactor}{4}
\providecommand{\BIBentryALTinterwordspacing}{\spaceskip=\fontdimen2\font plus
\BIBentryALTinterwordstretchfactor\fontdimen3\font minus
  \fontdimen4\font\relax}
\providecommand{\BIBforeignlanguage}[2]{{%
\expandafter\ifx\csname l@#1\endcsname\relax
\typeout{** WARNING: IEEEtran.bst: No hyphenation pattern has been}%
\typeout{** loaded for the language `#1'. Using the pattern for}%
\typeout{** the default language instead.}%
\else
\language=\csname l@#1\endcsname
\fi
#2}}
\providecommand{\BIBdecl}{\relax}
\BIBdecl

\bibitem{dosovitskiy2020image}
A.~Dosovitskiy, L.~Beyer, A.~Kolesnikov, D.~Weissenborn, X.~Zhai,
  T.~Unterthiner, M.~Dehghani, M.~Minderer, G.~Heigold, S.~Gelly \emph{et~al.},
  ``An image is worth 16x16 words,'' \emph{arXiv preprint arXiv:2010.11929},
  vol.~7, p.~5, 2020.

\bibitem{lecun2002gradient}
Y.~LeCun, L.~B{\'e}on, Y.~Bengio, and P.~Haffner, ``Gradient-based learning
  applied to document recognition,'' \emph{Proceedings of the IEEE}, vol.~86,
  no.~11, pp. 2278--2324, 2002.

\bibitem{vaswani2017attention}
A.~Vaswani, N.~Shazeer, N.~Parmar, J.~Uszkoreit, L.~Jones, A.~N. Gomez,
  {\L}.~Kaiser, and I.~Polosukhin, ``Attention is all you need,''
  \emph{Advances in Neural Information Processing Systems}, vol.~30, 2017.

\bibitem{zeiler2014visualizing}
M.~D. Zeiler and R.~Fergus, ``Visualizing and understanding convolutional
  networks,'' in \emph{Proceedings of the European Conference on Computer
  Vision (ECCV)}.\hskip 1em plus 0.5em minus 0.4em\relax Springer, 2014, pp.
  818--833.

\bibitem{shaw2018self}
P.~Shaw, J.~Uszkoreit, and A.~Vaswani, ``Self-attention with relative position
  representations,'' in \emph{Proceedings of the 2018 Conference of the North
  American Chapter of the Association for Computational Linguistics: Human
  Language Technologies (NAACL-HLT)}, 2018, pp. 464--468.

\bibitem{parmar2018image}
N.~Parmar, A.~Vaswani, J.~Uszkoreit, L.~Kaiser, N.~Shazeer, A.~Ku, and D.~Tran,
  ``Image transformer,'' in \emph{Proceedings of the 35th International
  Conference on Machine Learning (ICML 2018)}.\hskip 1em plus 0.5em minus
  0.4em\relax PMLR, 2018, pp. 4055--4064.

\bibitem{hua2024fourier}
E.~Hua, C.~Jiang, X.~Lv, K.~Zhang, Y.~Sun, Y.~Fan, X.~Zhu, B.~Qi, N.~Ding, and
  B.~Zhou, ``Fourier position embedding: Enhancing attention's periodic
  extension for length generalization,'' \emph{arXiv preprint
  arXiv:2412.17739}, 2024.

\bibitem{su2024roformer}
J.~Su, M.~Ahmed, Y.~Lu, S.~Pan, W.~Bo, and Y.~Liu, ``Roformer: Enhanced
  transformer with rotary position embedding,'' \emph{Neurocomputing}, vol.
  568, p. 127063, 2024.

\bibitem{ostmeier2025liere}
S.~Ostmeier, B.~Axelrod, M.~Varma, M.~Moseley, A.~S. Chaudhari, and
  C.~Langlotz, ``Liere: Lie rotational positional encodings,'' in
  \emph{Proceedings of the 42nd International Conference on Machine Learning
  (ICML 2025)}, 2025.

\bibitem{heo2024rotary}
B.~Heo, S.~Park, D.~Han, and S.~Yun, ``Rotary position embedding for vision
  transformer,'' in \emph{European Conference on Computer Vision}.\hskip 1em
  plus 0.5em minus 0.4em\relax Springer, 2024, pp. 289--305.

\bibitem{wu2021rethinking}
K.~Wu, H.~P. amenities, M.~Chen, J.~Fu, and H.~Chao, ``Rethinking and improving
  relative position encoding for vision transformer,'' in \emph{Proceedings of
  the IEEE/CVF International Conference on Computer Vision}, 2021, pp.
  10\,033--10\,041.

\bibitem{cordonnier2019relationship}
J.~B. Cordonnier, A.~Loukas, and M.~Jaggi, ``On the relationship between
  self-attention and convolutional layers,'' \emph{arXiv preprint
  arXiv:1911.03584}, 2019.

\bibitem{weierstrass1854theorie}
K.~Weierstra{\ss}, ``Theorie der abel'schen functionen,'' \emph{Journal f{\"u}r
  die reine und angewandte Mathematik (Crelle's Journal)}, vol.~47, pp.
  289--306, 1854.

\bibitem{lozier2003nist}
D.~W. Lozier, ``Nist digital library of mathematical functions,'' \emph{Annals
  of Mathematics and Artificial Intelligence}, vol.~38, no.~1, pp. 105--119,
  2003.

\bibitem{higham2002accuracy}
N.~J. Higham, \emph{Accuracy and Stability of Numerical Algorithms}.\hskip 1em
  plus 0.5em minus 0.4em\relax SIAM, 2002.

\bibitem{ba2016layer}
J.~L. Ba, J.~R. Kiros, and G.~E. Hinton, ``Layer normalization,'' \emph{arXiv
  preprint arXiv:1607.06450}, 2016.

\bibitem{rumelhart1986learning}
D.~E. Rumelhart, G.~E. Hinton, and R.~J. Williams, ``Learning representations
  by back-propagating errors,'' \emph{Nature}, vol. 323, no. 6088, pp.
  533--536, 1986.

\bibitem{devlin2019bert}
J.~Devlin, M.~W. Chang, K.~Lee, and K.~Toutanova, ``Bert: Pre-training of deep
  bidirectional transformers for language understanding,'' in \emph{Proceedings
  of the 2019 Conference of the North American Chapter of the Association for
  Computational Linguistics: Human Language Technologies, Volume 1 (Long and
  Short Papers)}, 2019, pp. 4171--4186.

\bibitem{steiner2021train}
A.~Steiner, A.~Kolesnikov, X.~Zhai, R.~Wightman, J.~Uszkoreit, and L.~Beyer,
  ``How to train your vit? data, augmentation, and regularization in vision
  transformers,'' \emph{arXiv preprint arXiv:2106.10270}, 2021.

\bibitem{keys2003cubic}
R.~Keys, ``Cubic convolution interpolation for digital image processing,''
  \emph{IEEE Transactions on Acoustics, Speech, and Signal Processing},
  vol.~29, no.~6, pp. 1153--1160, 2003.

\bibitem{touvron2021training}
H.~Touvron, M.~Cord, M.~Douze, F.~Massa, A.~Sablayrolles, and H.~J{\'e}gou,
  ``Training data-efficient image transformers \& distillation through
  attention,'' in \emph{Proceedings of the 38th International Conference on
  Machine Learning (ICML 2021)}.\hskip 1em plus 0.5em minus 0.4em\relax PMLR,
  2021, pp. 10\,347--10\,357.

\bibitem{jiawei2006data}
H.~Jiawei and M.~Kamber, \emph{Data Mining: Concepts and Techniques}.\hskip 1em
  plus 0.5em minus 0.4em\relax Morgan Kaufmann, 2006.

\bibitem{krizhevsky2009learning}
A.~Krizhevsky, G.~Hinton \emph{et~al.}, ``Learning multiple layers of features
  from tiny images,'' 2009.

\bibitem{deng2009imagenet}
J.~Deng, W.~Dong, R.~Socher, L.~J. Li, K.~Li, and L.~Fei-Fei, ``Imagenet: A
  large-scale hierarchical image database,'' in \emph{2009 IEEE Conference on
  Computer Vision and Pattern Recognition}.\hskip 1em plus 0.5em minus
  0.4em\relax IEEE, 2009, pp. 248--255.

\bibitem{chu2024visionllama}
X.~Chu, J.~Su, B.~Zhang, and C.~Shen, ``Visionllama: A unified llama backbone
  for vision tasks,'' in \emph{European Conference on Computer Vision}.\hskip
  1em plus 0.5em minus 0.4em\relax Springer, 2024, pp. 1--18.

\bibitem{lu2022bridging}
Z.~Lu, H.~Xie, C.~Liu, and Y.~Zhang, ``Bridging the gap between vision
  transformers and convolutional neural networks on small datasets,''
  \emph{Advances in Neural Information Processing Systems}, vol.~35, pp.
  14\,663--14\,677, 2022.

\bibitem{zhai2019large}
X.~Zhai, J.~Puigcerver, A.~Kolesnikov, P.~Ruyssen, C.~Riquelme, M.~Lucic,
  J.~Djolonga, A.~S. Pinto, M.~Neumann, A.~Dosovitskiy \emph{et~al.}, ``A
  large-scale study of representation learning with the visual task adaptation
  benchmark,'' \emph{arXiv preprint arXiv:1910.04867}, 2019.

\bibitem{bai2023qwen}
J.~Bai, S.~Bai, Y.~Chu, Z.~Cui, K.~Dang, X.~Deng, Y.~Fan, W.~Ge, Y.~Han,
  F.~Huang \emph{et~al.}, ``Qwen technical report,'' \emph{arXiv preprint
  arXiv:2309.16609}, 2023.

\bibitem{wang2023geometric}
Y.~Wang, S.~Li, T.~Wang, B.~Shao, N.~Zheng, and T.~Liu, ``Geometric transformer
  with interatomic positional encoding,'' \emph{Advances in Neural Information
  Processing Systems}, vol.~36, pp. 55\,981--55\,994, 2023.

\bibitem{gray2006toeplitz}
R.~M. Gray, ``Toeplitz and circulant matrices: A review,'' 2006.

\bibitem{catmull1974subdivision}
E.~E. Catmull, \emph{A Subdivision Algorithm for Computer Display of Curved
  Surfaces}.\hskip 1em plus 0.5em minus 0.4em\relax The University of Utah,
  1974.

\bibitem{ridnik2021imagenet}
T.~Ridnik, E.~Ben-Baruch, A.~Noy, and L.~Zelnik-Manor, ``Imagenet-21k
  pretraining for the masses,'' \emph{arXiv preprint arXiv:2104.10972}, 2021.

\bibitem{he2016deep}
K.~He, X.~Zhang, S.~Ren, and J.~Sun, ``Deep residual learning for image
  recognition,'' in \emph{Proceedings of the IEEE Conference on Computer Vision
  and Pattern Recognition}, 2016, pp. 770--778.

\bibitem{bishop2006pattern}
C.~M. Bishop and N.~M. Nasrabadi, \emph{Pattern Recognition and Machine
  Learning}.\hskip 1em plus 0.5em minus 0.4em\relax Springer, 2006, vol.~4.

\bibitem{liu2021swin}
Z.~Liu, Y.~Lin, Y.~Cao, H.~Hu, Y.~Wei, Z.~Zhang, S.~Lin, and B.~Guo, ``Swin
  transformer: Hierarchical vision transformer using shifted windows,'' in
  \emph{Proceedings of the IEEE/CVF International Conference on Computer
  Vision}, 2021, pp. 10\,012--10\,022.

\bibitem{heo2021rethinking}
B.~Heo, S.~Yun, D.~Han, S.~Chun, J.~Choe, and S.~J. Oh, ``Rethinking spatial
  dimensions of vision transformers,'' in \emph{Proceedings of the IEEE/CVF
  International Conference on Computer Vision}, 2021, pp. 11\,936--11\,945.

\bibitem{li2022exploring}
Y.~Li, H.~Mao, R.~Girshick, and K.~He, ``Exploring plain vision transformer
  backbones for object detection,'' in \emph{European Conference on Computer
  Vision}.\hskip 1em plus 0.5em minus 0.4em\relax Springer, 2022, pp. 280--296.

\bibitem{he2022masked}
K.~He, X.~Chen, S.~Xie, Y.~Li, P.~Doll{\'a}r, and R.~Girshick, ``Masked
  autoencoders are scalable vision learners,'' in \emph{Proceedings of the
  IEEE/CVF Conference on Computer Vision and Pattern Recognition}, 2022, pp.
  16\,000--16\,009.

\end{thebibliography}

	\clearpage
	
	\appendices

	\section{Supplementary Background Knowledge}
	\label{sec:supplementary_background} 

	Mainstream explicit function-based positional
	encodings for Vision Transformers~\citep{dosovitskiy2020image} that are added
	to patch embeddings can be broadly grouped into three families:  (i) learnable absolute positional embeddings~\citep{dosovitskiy2020image}, (ii) Sinusoidal Position Encoding~\citep{vaswani2017attention} and their higher-order Fourier extensions (e.g., FoPE~\citep{hua2024fourier}), and (iii) RoPE~\citep{su2024roformer} and their complex-valued or group-theoretic extensions (e.g., LieRE~\citep{ostmeier2025liere}, RoPE‑Mixed~\citep{heo2024rotary}, Multimodal Rotary Position Embedding (M-RoPE)~\citep{bai2023qwen}. Indeed, according to Euler's formula, the foundation of RoPE~\citep{su2024roformer} still lies in Sinusoidal Position Encoding~\citep{vaswani2017attention}. Its essence resides in leveraging Euler's formula to map positional information into vector rotations, making RoPE~\citep{su2024roformer} fundamentally a geometric extension of the traditional sinusoidal encoding.
	However, WePE is conceptually distinct from all two or three families above, even though it is also built upon periodic and complex-valued functions. 
	To our knowledge, WePE is the first 2D positional encoding scheme designed for ViTs~\citep{dosovitskiy2020image} that is genuinely constructed on the complex plane. Rather than composing multiple 1D sinusoidal bands along the flattened token index or
	applying block-wise rotary phases in the hidden space, WePE is formulated as a
	genuinely 2D positional function on the complex plane: it maps the
	2D patch lattice to a complex lattice and evaluates a Weierstrass elliptic
	function with an intrinsic doubly periodic structure.  As a result, the 2D
	geometry of the image grid is an inherent
	property of the encoding itself, rather than an artifact of the 1D
	serialization or separable 1D Fourier bases. Through the elliptic addition formula, absolute
	positional codes in WePE are algebraically linked to relative displacements,
	providing a tight coupling between absolute and relative position at the
	function level rather than relying solely on phase differences in Fourier
	space. This makes WePE the "fourth approach", standing apart from the three mainstream positional encoding schemes currently available.

	In this part, we supplement with further necessary preliminaries and the corresponding proofs of theorems regarding the Weierstrass elliptic function~\citep{weierstrass1854theorie}.

	\begin{definition}[Meromorphic Function]
		Let $D \subset \mathbb{C}$ be an open set. A function $f: D \to \mathbb{C} \cup \{\infty\}$ is called meromorphic in $D$ if $f$ is analytic everywhere in $D$ except at finitely many isolated singularities, and each singularity is a pole.
	\end{definition}
	
	The Cauchy integral formula is one of the core tools in complex analysis:
	
	\begin{theorem}[Cauchy Integral Formula]
		Let $f(z)$ be analytic on a simple closed curve $C$ and its interior, and let $z_0$ be a point inside $C$. Then:
		\begin{equation}
			f^{(n)}(z_0) = \frac{n!}{2\pi i} \oint_C \frac{f(z)}{(z-z_0)^{n+1}} dz
			\label{eq:cauchy_integral}
		\end{equation}
	\end{theorem}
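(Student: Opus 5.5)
We prove the Cauchy integral formula in the standard two-stage way: first the case $n=0$ via Cauchy's theorem and a shrinking-circle argument, then the general case by differentiating under the integral sign and inducting on $n$. Throughout we assume that $C$ is a piecewise smooth, positively oriented simple closed curve, and that $f$ is analytic on an open set containing $C$ together with its interior, as is implicit in the statement.

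\textbf{The case $n=0$.} Fix $z_0$ inside $C$ and set $g(z)=f(z)/(z-z_0)$. This function is analytic on the region between $C$ and a small circle $\gamma_r=\{|z-z_0|=r\}$ lying inside $C$. By Cauchy's theorem applied to this doubly connected region (or, equivalently, by deformation of contours), $\oint_C g\,dz=\oint_{\gamma_r} g\,dz$ for every small $r>0$. We then split
\begin{equation*}
\oint_{\gamma_r}\frac{f(z)}{z-z_0}\,dz=f(z_0)\oint_{\gamma_r}\frac{dz}{z-z_0}+\oint_{\gamma_r}\frac{f(z)-f(z_0)}{z-z_0}\,dz .
\end{equation*}
The first integral on the right equals $2\pi i\,f(z_0)$ by the parametrization $z=z_0+re^{i\theta}$. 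The second is bounded by $2\pi\max_{|z-z_0|=r}|f(z)-f(z_0)|$, which tends to $0$ as $r\to0$ by continuity of $f$. Since the left-hand side does not depend on $r$, this gives the case $n=0$.

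\textbf{Higher derivatives by induction.} Define $F_n(w)=\frac{n!}{2\pi i}\oint_C \frac{f(z)}{(z-w)^{n+1}}\,dz$ for $w$ inside $C$. Assume $F_{n}=f^{(n)}$ near $z_0$; we must show $F_n'(z_0)=F_{n+1}(z_0)$. Let $\delta=\operatorname{dist}(z_0,C)>0$ and take $|h|<\delta/2$. We write the difference quotient
\begin{equation*}
\frac{F_n(z_0+h)-F_n(z_0)}{h}=\frac{n!}{2\pi i}\oint_C f(z)\,\frac{(z-z_0-h)^{-(n+1)}-(z-z_0)^{-(n+1)}}{h}\,dz ,
\end{equation*}
and compare the inner quotient with $(n+1)(z-z_0)^{-(n+2)}$. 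Expanding via $a^{-(n+1)}-b^{-(n+1)}=(b^{n+1}-a^{n+1})/(a^{n+1}b^{n+1})$, or using a Taylor remainder in $h$, the discrepancy is $O(|h|)$ uniformly for $z\in C$. The constant depends only on $n$, $\delta$ and the distance bounds $|z-z_0|\ge\delta$, $|z-z_0-h|\ge\delta/2$. Multiplying by $\max_C|f|$ and the length of $C$ shows that the integral of the discrepancy tends to $0$ as $h\to0$. Hence $F_n'(z_0)=\frac{(n+1)!}{2\pi i}\oint_C f(z)(z-z_0)^{-(n+2)}\,dz=F_{n+1}(z_0)$, which completes the induction.

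\textbf{Main obstacle.} The only genuinely delicate point is the $n=0$ step, namely justifying that $\oint_C=\oint_{\gamma_r}$. This requires Cauchy's theorem for a non-simply-connected region, whose rigorous form for a general Jordan curve is a topological matter. We would handle it by invoking Cauchy–Goursat together with a keyhole decomposition: connect $C$ to $\gamma_r$ by a cut traversed once in each direction, so the contributions along the cut cancel. This reduces the claim to Cauchy's theorem on a simply connected region where $g$ is analytic. The uniform estimate in the inductive step is routine once $\delta>0$ is fixed.
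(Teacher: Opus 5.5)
The paper gives no proof of this statement. It quotes the Cauchy integral formula as a classical background fact. Its only use is the case $n=1$ with $C$ a circle centred at $z_0$, which yields the estimate $|f'(z_0)|\le M/r$ in its proof of Liouville's theorem. There is therefore no competing argument to compare with.

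Your proposal is the standard textbook proof, and it is correct: the shrinking-circle argument for $n=0$, then differentiation under the integral sign. Your inductive step is sound as written. The bound $|z-z_0-th|\ge\delta/2$ controls the second derivative of $w\mapsto(z-z_0-w)^{-(n+1)}$ along the segment, which gives a discrepancy of at most $\tfrac12(n+1)(n+2)(2/\delta)^{n+3}|h|$, uniformly on $C$. The induction also proves, rather than assumes, that $f^{(n+1)}$ exists.

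Two small remarks:
\begin{itemize}
\item The positive orientation you assume is genuinely needed. It is missing from the paper's statement, and with the opposite orientation the formula holds only up to sign.
\item In your ``main obstacle'' paragraph, a single cut traversed once in each direction does not give a simple closed curve. So Cauchy--Goursat in its ``simple closed curve and its interior'' form does not apply to it directly. Nor is there a simply connected open set containing that keyhole contour on which $g$ is analytic, since the contour contains all of $\gamma_r$, which winds around $z_0$. The usual rigorous fixes are:
  \begin{itemize}
  \item two disjoint cuts, splitting the region between $C$ and $\gamma_r$ into two pieces, each bounded by a simple closed curve not enclosing $z_0$;
  \item a corridor of positive width whose two sides cancel in the limit;
  \item the winding-number form of Cauchy's theorem applied to the cycle $C-\gamma_r$.
  \end{itemize}
\end{itemize}

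For the circles centred at $z_0$ that the paper actually needs, this topological step is elementary. Two radial cuts split the annulus into two pieces, each lying in a slit plane that is star-shaped and omits $z_0$.
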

	
	Based on the Cauchy integral formula, we can derive Liouville's theorem:
	
	\begin{theorem}[Liouville's Theorem]
		Any bounded entire function must be constant. That is, if $f(z)$ is analytic everywhere on the complex plane $\mathbb{C}$ and there exists a constant $M > 0$ such that $|f(z)| \leq M$ for all $z \in \mathbb{C}$, then $f(z)$ is constant.
	\end{theorem}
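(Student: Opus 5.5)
Since $f$ is entire, we can apply the Cauchy Integral Formula (equation~\ref{eq:cauchy_integral}) with $n=1$ on any circle, which shows $f'(z_0)=0$ for every $z_0\in\mathbb{C}$. Fix $z_0$ and a radius $R>0$, and take $C=\{|z-z_0|=R\}$ oriented positively. Because $f$ is analytic on all of $\mathbb{C}$, it is analytic on $C$ and its interior, so the formula gives
\begin{equation*}
f'(z_0)=\frac{1}{2\pi i}\oint_C \frac{f(z)}{(z-z_0)^{2}}\,dz .
\end{equation*}

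Next we bound this integral with the standard ML-estimate. On $C$ we have $|f(z)|\le M$ and $|z-z_0|^2=R^2$, and the length of $C$ is $2\pi R$. Hence
\begin{equation*}
|f'(z_0)|\le \frac{1}{2\pi}\cdot 2\pi R\cdot \frac{M}{R^{2}}=\frac{M}{R}.
\end{equation*}
The left-hand side does not depend on $R$, and $R$ can be any positive number. Letting $R\to\infty$ therefore forces $f'(z_0)=0$. Since $z_0$ was arbitrary, $f'\equiv 0$ on $\mathbb{C}$.

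It remains to pass from $f'\equiv0$ to $f$ being constant. Take any $z\in\mathbb{C}$ and integrate $f'$ along the straight segment $\gamma$ from $0$ to $z$, which is legitimate because $\mathbb{C}$ is connected. This gives $f(z)-f(0)=\int_\gamma f'(\zeta)\,d\zeta=0$. Alternatively, we can note that the real and imaginary parts of $f$ have vanishing partial derivatives, by the Cauchy--Riemann equations. Either way, $f\equiv f(0)$ is constant.

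The only step that needs real care is the second one. It relies on the bound $M$ being uniform over the whole plane, so that it holds on circles of arbitrarily large radius. It also relies on the circle of radius $R$ lying in the domain of analyticity for every $R$, which is exactly where entireness is used. The rest is routine.
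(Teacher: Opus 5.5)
Your proposal is correct and follows essentially the same route as the paper. Both arguments apply the Cauchy integral formula with $n=1$ on circles of radius $R$ to get $|f'(z_0)|\le M/R$, let $R\to\infty$ to obtain $f'\equiv 0$, and then conclude that $f$ is constant: the paper does this via the Cauchy--Riemann equations, which you list as your alternative, while your main route integrates $f'$ along a segment (a step justified because $f$ is a primitive of $f'$ and the segment lies in $\mathbb{C}$, not by connectedness alone).
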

	
	\begin{proof}
		By the Cauchy integral formula, for any $z_0 \in \mathbb{C}$ and $r > 0$:
		\begin{equation}
			|f'(z_0)| \leq \frac{1}{r} \sup_{|z-z_0|=r} |f(z)| \leq \frac{M}{r}
			\label{eq:derivative_bound}
		\end{equation}
		
		As $r \to \infty$, $\frac{M}{r} \to 0$, hence $|f'(z_0)| = 0$, which implies $f'(z_0) = 0$.
		
		Since $z_0$ is arbitrary, $f'(z) \equiv 0$ holds throughout $\mathbb{C}$. Let $f(z) = u(x,y) + iv(x,y)$. By the Cauchy-Riemann equations:
		\begin{align}
			\frac{\partial u}{\partial x} &= \frac{\partial v}{\partial y} = 0 \label{eq:cr1}\\
			\frac{\partial u}{\partial y} &= -\frac{\partial v}{\partial x} = 0 \label{eq:cr2}
		\end{align}
		
		This implies that all partial derivatives of $u(x,y)$ and $v(x,y)$ are zero, therefore $u$ and $v$ are both constants, and consequently $f(z)$ is constant.
		
	\end{proof}
	\begin{definition}[Period Lattice]
		Let $\omega_1, \omega_3 \in \mathbb{C}$ be linearly independent (i.e., $\frac{\omega_3}{\omega_1} \notin \mathbb{R}$). The period lattice is defined as:
		\begin{equation}
			\Lambda = \{2m\omega_1 + 2n\omega_3 : m,n \in \mathbb{Z}\}
			\label{eq:lattice_def}
		\end{equation}
		where $2\omega_1$ and $2\omega_3$ are called fundamental periods.
	\end{definition}
	
	The period lattice divides the complex plane into congruent parallelograms, with each fundamental parallelogram determined by vertices $\{0, 2\omega_1, 2\omega_3, 2\omega_1 + 2\omega_3\}$.

	\begin{definition}[Elliptic Function]
		\label{def:elliptic_function} 
		An elliptic function with period lattice $\Lambda$ is a meromorphic function $f: \mathbb{C} \to \mathbb{C} \cup \{\infty\}$ satisfying:
		\begin{enumerate}
			\item $f(z + \omega) = f(z)$ for all $z \in \mathbb{C}$ and $\omega \in \Lambda$
			\item $f$ has only finitely many poles in the fundamental parallelogram
			\item $f$ is not identically constant
		\end{enumerate}
	\end{definition}

	\begin{definition}[Weierstrass Elliptic Function]
		\label{def:weierstrass}
		For the period lattice $\Lambda = \{2m\omega_1 + 2n\omega_3 : m,n \in \mathbb{Z}\}$, the Weierstrass elliptic function is defined as:
		\begin{equation}
			\wp(z) = \frac{1}{z^2} + \sum_{\omega \in \Lambda \setminus \{0\}} \left( \frac{1}{(z-\omega)^2} - \frac{1}{\omega^2} \right)
			\label{eq:weierstrass_series}
		\end{equation}
	\end{definition}

	\begin{theorem}[Laurent Expansion of Weierstrass Function]
		In a neighborhood of the origin,$\wp(z)$ has a specific Laurent expansion:
		\begin{equation}
			\wp(z) = \frac{1}{z^2} + \frac{g_2}{20}z^2 + \frac{g_3}{28}z^4 + \frac{g_2^2}{1200}z^6 + \cdots
			\label{eq:laurent_expansion}
		\end{equation}
		where $g_2, g_3$ are elliptic invariants.
	\end{theorem}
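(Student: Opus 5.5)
We derive the Laurent expansion directly from the lattice series in Definition~\ref{def:weierstrass}, expanding each summand as a power series in $z$ and collecting terms. For every $\omega\in\Lambda\setminus\{0\}$ and $|z|<|\omega|$, the geometric-series identity $(1-t)^{-2}=\sum_{k\ge 0}(k+1)t^k$ with $t=z/\omega$ gives
\begin{equation*}
\frac{1}{(z-\omega)^2}-\frac{1}{\omega^2}=\sum_{k\ge 1}\frac{(k+1)z^k}{\omega^{k+2}} .
\end{equation*}
On the punctured disc $0<|z|<r$, with $r=\min_{\omega\neq 0}|\omega|$, we sum over $\omega$ and exchange the order of summation. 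This yields
\begin{equation*}
\wp(z)=\frac{1}{z^2}+\sum_{k\ge1}(k+1)G_{k+2}\,z^k, \qquad G_n=\sum_{\omega\in\Lambda\setminus\{0\}}\omega^{-n}.
\end{equation*}
The exchange is the main analytic step. It is justified by absolute convergence: on $|z|\le \rho<r$ the double sum is dominated by $\sum_k (k+1)\rho^k\sum_\omega|\omega|^{-k-2}$. For $k\ge 1$ the inner lattice sum $\sum_\omega |\omega|^{-k-2}$ converges, by the same annulus-counting comparison used for the truncation bound (there are $O(r)$ lattice points at radius $\approx r$, so the sum behaves like $\int r\cdot r^{-k-2}\,dr$). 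It also grows at most like $r^{-k-2}$ times a constant, so the outer series converges for $\rho<r$.

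Next, symmetry kills the odd coefficients. Since $\Lambda=-\Lambda$, we have $G_n=0$ for every odd $n$, so only even powers $z^{2m}$ survive. This gives
\begin{equation*}
\wp(z)=z^{-2}+3G_4z^2+5G_6z^4+7G_8z^6+\cdots .
\end{equation*}

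We then identify the coefficients with the invariants, using the standard normalisation $g_2=60G_4$ and $g_3=140G_6$. This immediately gives $3G_4=g_2/20$ and $5G_6=g_3/28$. The constant term vanishes because the series has no $k=0$ term; this is exactly the role of the subtracted $\omega^{-2}$.

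The main obstacle is the $z^6$ coefficient, $7G_8=g_2^2/1200$, i.e.\ $G_8=3G_4^2/7$. This is not a term-by-term identity, so it needs an algebraic relation between the $G_{2k}$. I would obtain it from the differential equation $\wp'^2=4\wp^3-g_2\wp-g_3$ of \Cref{thm:weierstrass_ode}. If we prefer not to rely on that result, we can use the more elementary relation $\wp''=6\wp^2-\tfrac{g_2}{2}$, proved by Liouville's theorem: the difference is elliptic, its $z^{-4}$ and $z^{-2}$ terms cancel when checked with the expansion above, it is therefore bounded and constant, and its value at $0$ is zero.

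Comparing the $z^2$ coefficients on the two sides of $\wp''=6\wp^2-g_2/2$ then gives the required relation. On the left, $\wp''$ contributes $42\cdot 7G_8$. On the right, $6\wp^2$ contributes $6\left(2\cdot 7G_8+9G_4^2\right)$. Equating the two:
\begin{equation*}
294\,G_8=84\,G_8+54\,G_4^2 \quad\Longrightarrow\quad G_8=\tfrac{3}{7}G_4^2 .
\end{equation*}
Hence $7G_8=3G_4^2=3\left(\tfrac{g_2}{60}\right)^2=\tfrac{g_2^2}{1200}$, which completes the stated expansion.
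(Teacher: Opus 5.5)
Your overall strategy is sound and standard. The paper itself states this expansion as background without any proof, and uses only its terms through $z^4$ to prove the differential equation. So your plan is not circular: you first get $3G_4=g_2/20$ and $5G_6=g_3/28$ directly from the lattice sum, then pin down $G_8$ from $\wp''=6\wp^2-\frac{g_2}{2}$ (or from the ODE). Your normalisation $g_2=60G_4$, $g_3=140G_6$ is what gives the statement meaning, since the paper never defines the invariants, and it is consistent with the paper's ODE. The other steps are fine: the termwise expansion, the absolute-convergence justification of the interchange, the vanishing of $G_n$ for odd $n$, and the Liouville argument for $\wp''=6\wp^2-\frac{g_2}{2}$.

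The decisive coefficient comparison, however, is miscomputed, and as written it does not yield the stated result.
\begin{itemize}
\item The term $7G_8z^6$ of $\wp$ contributes $6\cdot 5\cdot 7G_8\,z^4=210\,G_8z^4$ to $\wp''$, not $42\cdot 7G_8$. You used $(k+1)k$ instead of $k(k-1)$.
\item The quantity $6(2\cdot 7G_8+9G_4^2)$ you wrote for the right side is the coefficient of $z^4$ in $6\wp^2$, not of $z^2$. Comparing $z^2$ coefficients only gives the tautology $60G_6=60G_6$.
\item Your displayed equation $294G_8=84G_8+54G_4^2$ actually gives $G_8=\frac{9}{35}G_4^2$, hence a $z^6$ coefficient of $\frac{g_2^2}{2000}$. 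So $G_8=\frac37G_4^2$ does not follow from what you wrote.
\end{itemize}
The correct comparison of $z^4$ coefficients is
\[ 210\,G_8=84\,G_8+54\,G_4^2 \quad\Longrightarrow\quad 126\,G_8=54\,G_4^2 \quad\Longrightarrow\quad G_8=\tfrac37 G_4^2, \]
after which $7G_8=3G_4^2=g_2^2/1200$, as you claim. If you use the first-order equation $\wp'^2=4\wp^3-g_2\wp-g_3$ instead, it is indeed the $z^2$ coefficients that carry the information. Writing $a_2=3G_4$ and $a_6=7G_8$, they give $-24a_6+4a_2^2=12a_6+12a_2^2-g_2a_2$. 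With $g_2=20a_2$ this reduces to $a_6=a_2^2/3$, the same result. Fix this one computation and the proof is complete.
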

	
	\begin{theorem}[Weierstrass Differential Equation]
		\label{thm:weierstrass_ode} 
		\begin{equation}
			(\wp'(z))^2 = 4(\wp(z))^3 - g_2\wp(z) - g_3
			\label{eq:weierstrass_ode}
		\end{equation}
	\end{theorem}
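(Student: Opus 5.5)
The plan is the classical Liouville argument. We compare $(\wp')^2$ with a cubic polynomial in $\wp$, arrange that the difference is an elliptic function with no poles, and conclude that the difference is constant.

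\emph{Step 1.} We take as given the Laurent expansion of $\wp$ near the origin stated above, with $g_2 = 60\sum_{\omega\neq 0}\omega^{-4}$ and $g_3 = 140\sum_{\omega\neq0}\omega^{-6}$. Odd powers are absent because $\wp$ is even, which is immediate from the lattice sum since $\Lambda=-\Lambda$. Differentiating termwise gives
\[
\wp'(z) = -\frac{2}{z^3} + \frac{g_2}{10}z + \frac{g_3}{7}z^3 + O(z^5).
\]

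\emph{Step 2.} We expand each side to constant order in $z$. Squaring the derivative gives
\[
(\wp')^2 = \frac{4}{z^6} - \frac{2g_2}{5}\frac{1}{z^2} - \frac{4g_3}{7} + O(z^2).
\]
Cubing $\wp$ gives
\[
\wp^3 = \frac{1}{z^6} + \frac{3g_2}{20}\frac{1}{z^2} + \frac{3g_3}{28} + O(z^2).
\]
Combining these, the function $f(z) := (\wp')^2 - 4\wp^3 + g_2\wp$ satisfies
\[
f(z) = -\frac{4g_3}{7} - \frac{3g_3}{7} + O(z^2) = -g_3 + O(z^2).
\]
In particular, $f$ has no singular part at $0$. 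Checking that the $z^{-6}$ and $z^{-2}$ terms cancel exactly, and that the constant term is exactly $-g_3$, is the only real computation. It is the step where an arithmetic slip is likeliest, so the cross terms should be tracked carefully; in particular the $z^{-2}$ coefficient of $\wp^3$ is $3\cdot g_2/20$.

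\emph{Step 3.} Since $\wp$ is $\Lambda$-periodic, so are $\wp'$ and hence $f$. The only poles of $\wp$ and $\wp'$ are at lattice points: the series converges locally uniformly away from $\Lambda$ by the $|\omega|^{-3}$ estimate already used in \eqref{eq:asymptotic_behavior}. By periodicity, Step 2 shows $f$ is holomorphic at every lattice point. Therefore $f$ is entire.

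\emph{Step 4.} Being continuous and periodic, $f$ is bounded on the compact closed fundamental parallelogram, and hence bounded on all of $\mathbb{C}$. Liouville's theorem, stated earlier, then gives that $f$ is constant. Evaluating at $z\to0$ identifies the constant as $-g_3$, which yields \eqref{eq:weierstrass_ode}.

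The main obstacle is the bookkeeping in Step 2. Justifying the termwise differentiation is routine, given the locally uniform convergence of the lattice series.
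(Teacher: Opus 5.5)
Your proof is correct and follows the same route as the paper: you use the Laurent expansion at the origin to cancel the $z^{-6}$ and $z^{-2}$ terms, use periodicity to see that the auxiliary function has no poles anywhere, and conclude with boundedness on the fundamental parallelogram and Liouville's theorem. The only difference is cosmetic: you leave $g_3$ out of the auxiliary function and read off the constant $-g_3$ from the limit $z\to 0$, which makes explicit the step the paper describes only as ``analysis of special values.''
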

	
	\begin{proof}
		Define the auxiliary function:
		\begin{equation}
			f(z) = (\wp'(z))^2 - 4(\wp(z))^3 + g_2\wp(z) + g_3
			\label{eq:auxiliary_function}
		\end{equation}
		
		Through Laurent expansion analysis, we have:
		\begin{align}
			(\wp'(z))^2 &= \frac{4}{z^6} - \frac{2g_2}{5z^2} - \frac{4g_3}{7} + \cdots \label{eq:derivative_squared}\\
			4(\wp(z))^3 &= \frac{4}{z^6} + \frac{3g_2}{5z^2} + \frac{3g_3}{7} + \cdots \label{eq:function_cubed}\\
			g_2\wp(z) &= \frac{g_2}{z^2} + \cdots \label{eq:g2_function}
		\end{align}
		
		Substituting these expansions into $f(z)$:
		- $z^{-6}$ term: $\frac{4}{z^6} - \frac{4}{z^6} = 0$
		- $z^{-2}$ term: $-\frac{2g_2}{5z^2} - \frac{3g_2}{5z^2} + \frac{g_2}{z^2} = 0$
		- Constant term: $-\frac{4g_3}{7} - \frac{3g_3}{7} + g_3 = 0$
		
		Therefore, $f(z)$ has no singularity at $z = 0$. Similarly, $f(z)$ has no singularities at other lattice points, so $f(z)$ is holomorphic on $\mathbb{C}$.
		
		Since both $\wp(z)$ and $\wp'(z)$ are doubly periodic, $f(z)$ is also doubly periodic. In the fundamental parallelogram, $f(z)$ is continuous and has no poles, hence is bounded. By periodicity, $f(z)$ is bounded on the entire complex plane.
		
		By Liouville's theorem, $f(z) \equiv C$ (constant). Through analysis of special values, we can determine $C = 0$, therefore the differential equation holds.
	\end{proof}

	When the elliptic invariant $g_3 = 0$, the elliptic curve degenerates to the lemniscatic case:
	\begin{equation}
		y^2 = 4x^3 - g_2x = x(4x^2 - g_2)
		\label{eq:lemniscatic_curve}
	\end{equation}
	
	In this case, the elliptic curve has special symmetry properties, and the period lattice forms a square structure.

	\begin{theorem}[Half-Periods in Lemniscatic Case]
		When $g_2 = 1, g_3 = 0$, the real half-period is:
		\begin{equation}
			\omega_1 = \frac{\Gamma^2(1/4)}{2\sqrt{2\pi}} \approx 2.62205755429212
			\label{eq:exact_half_period}
		\end{equation}
		where $\Gamma$ is the gamma function.
	\end{theorem}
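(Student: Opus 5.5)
The plan is to express the real half-period as a real elliptic integral and reduce it to a Beta integral. First, on the real axis of a rectangular (here square) lattice, $\wp$ is real-valued. As $z$ runs along $(0,\omega_1]$, $\wp$ decreases monotonically from $+\infty$ to the largest root $e_1$ of $4x^3-g_2x-g_3$. By Theorem~\ref{thm:weierstrass_ode}, $\wp'=-\sqrt{4\wp^3-g_2\wp-g_3}$ on this segment, where the sign is forced by the monotonic decrease. Separating variables gives the standard representation
\begin{equation*}
\omega_1=\int_{e_1}^{\infty}\frac{dx}{\sqrt{4x^3-g_2x-g_3}} .
\end{equation*}
With $g_2=1$ and $g_3=0$ the cubic factors as $x(4x^2-1)$, so the roots are $0,\pm\tfrac12$ and $e_1=\tfrac12$.

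Next I evaluate the integral. The substitution $x=s/2$ gives $\omega_1=\tfrac{1}{\sqrt2}\int_1^\infty ds/\sqrt{s^3-s}$. A second substitution $s=u^{-1/2}$ maps $[1,\infty)$ onto $(0,1]$ and turns the integrand into $\tfrac12 u^{-3/4}(1-u)^{-1/2}$. The remaining integral is therefore $\tfrac12 B(\tfrac14,\tfrac12)=\tfrac12\Gamma(\tfrac14)\Gamma(\tfrac12)/\Gamma(\tfrac34)$. The reflection formula $\Gamma(\tfrac14)\Gamma(\tfrac34)=\pi\sqrt2$ and $\Gamma(\tfrac12)=\sqrt\pi$ then collapse everything into a closed form in $\Gamma(\tfrac14)^2$ and $\sqrt\pi$.

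The main obstacle is the normalization, and the computation appears to contradict the statement as written. Carrying the constants through gives
\begin{equation*}
\omega_1=\frac{\Gamma^2(1/4)}{4\sqrt{\pi}}\approx 1.854
\end{equation*}
for $g_2=1$, whereas the stated value $\Gamma^2(1/4)/(2\sqrt{2\pi})\approx 2.622$ is larger by a factor of $\sqrt2$. The scaling law $\wp_{\lambda\Lambda}(\lambda z)=\lambda^{-2}\wp_\Lambda(z)$ gives $g_2(\lambda\Lambda)=\lambda^{-4}g_2(\Lambda)$. Hence the stated constant is the real half-period for the invariants $g_2=\tfrac14$, $g_3=0$, not $g_2=1$.

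In writing the proof I would therefore first re-derive the root $e_1$ and the two substitutions carefully, since a slip in either would produce exactly a factor of $\sqrt2$. If the discrepancy persists, I would prove the Beta-function identity for general $g_2>0$, namely
\begin{equation*}
\omega_1=g_2^{-1/4}\,\frac{\Gamma^2(1/4)}{4\sqrt\pi},
\end{equation*}
and then state explicitly which $g_2$ produces the numerical value $2.62205755429212$ used in the paper's experiments.
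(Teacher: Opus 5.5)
The paper states this result with no proof (the theorem is followed directly by the group-law definition), so there is no argument to compare against. Your derivation is correct, and it refutes the statement as written.

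Each step checks out: the representation $\omega_1=\int_{e_1}^\infty dx/\sqrt{4x^3-g_2x-g_3}$, the root $e_1=\tfrac12$, both substitutions, and $\tfrac12 B(\tfrac14,\tfrac12)=\Gamma^2(1/4)/(2\sqrt{2\pi})$. There is no hidden factor of $\sqrt2$. For $g_2=1$, $g_3=0$ one gets $\omega_1=\Gamma^2(1/4)/(4\sqrt\pi)=K(1/\sqrt2)\approx 1.85407$, where $K$ is the complete elliptic integral of the first kind. The displayed constant $\Gamma^2(1/4)/(2\sqrt{2\pi})\approx 2.62206$ is the lemniscate constant $\varpi=2\int_0^1 dt/\sqrt{1-t^4}$. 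It is exactly your intermediate integral $\int_1^\infty ds/\sqrt{s^3-s}$ before the factor $1/\sqrt2$ is applied. By your scaling law, it is the real half-period for $g_2=1/4$, and equivalently the real full period for $g_2=4$.

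You can confirm this without the integral representation. For $\Lambda=2\omega_1\mathbb{Z}[i]$, $g_3=0$ because $i\Lambda=\Lambda$. Hurwitz's evaluation $\sum_{(m,n)\neq(0,0)}(m+ni)^{-4}=\varpi^4/15$ then gives
\[
g_2=60\sum_{\omega\in\Lambda\setminus\{0\}}\omega^{-4}=\frac{\varpi^4}{4\,\omega_1^4},
\]
which equals $1$ at $\omega_1=\varpi/\sqrt2$ and $1/4$ at $\omega_1=\varpi$.

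The paper elsewhere commits to $\omega_1\approx 2.622$: it uses $a=2\omega_1\approx 5.244$ in the truncation bound and $\omega_1\approx2.622$ in the $\kappa$-sensitivity discussion. So the sensible repair is to change the hypothesis to $g_2=1/4$, $g_3=0$, rather than to change the value.

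When you write it up, add one line each for the two facts you currently use without comment. First, real invariants with $\Delta=g_2^3-27g_3^2>0$ yield a rectangular lattice with $\omega_1>0$, on whose real axis $\wp$ is real. Second, $\wp'<0$ on $(0,\omega_1)$, because the only zeros of $\wp'$ modulo $\Lambda$ are the three half-periods and $\wp'(z)\sim-2z^{-3}$ as $z\to0^+$.
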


	\begin{definition}[Elliptic Curve Group Law]
		Let $P_1 = (x_1, y_1), P_2 = (x_2, y_2)$ be two points on the elliptic curve. If $x_1 \neq x_2$, then $P_3 = P_1 + P_2$ has coordinates:
		\begin{align}
			x_3 &= \left(\frac{y_2 - y_1}{x_2 - x_1}\right)^2 - x_1 - x_2 \label{eq:group_law_x}\\
			y_3 &= \left(\frac{y_2 - y_1}{x_2 - x_1}\right)(x_1 - x_3) - y_1 \label{eq:group_law_y}
		\end{align}
	\end{definition}

	\begin{theorem}[Weierstrass Addition Formula]
		\label{thm:addition_formula} 
		Let $z_1, z_2 \in \mathbb{C}$ with $z_1 \not\equiv z_2 \pmod{\Lambda}$. Then:
		\begin{equation}
			\wp(z_1 + z_2) = -\wp(z_1) - \wp(z_2) + \frac{1}{4}\left(\frac{\wp'(z_1) - \wp'(z_2)}{\wp(z_1) - \wp(z_2)}\right)^2
			\label{eq:addition_formula}
		\end{equation}
	\end{theorem}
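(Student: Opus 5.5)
The plan is the classical geometric argument via the curve $y^2 = 4x^3 - g_2x - g_3$ of Theorem~\ref{thm:weierstrass_ode}, which encodes the group law of the Definition (Elliptic Curve Group Law) above. First a remark on hypotheses. The right-hand side requires $\wp(z_1)\neq\wp(z_2)$. Since $\wp$ is even and takes each value exactly twice (with multiplicity) per period cell, this means $z_1\not\equiv \pm z_2 \pmod{\Lambda}$. I will also assume $z_1, z_2, z_1+z_2\notin\Lambda$ so that every term is finite. I will prove the identity for generic $(z_1,z_2)$ and extend it to the remaining admissible pairs by continuity, since both sides are meromorphic in $z_1$ for fixed $z_2$.

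\textbf{Step 1 (the secant line).} Put $x_k=\wp(z_k)$ and $y_k=\wp'(z_k)$ for $k=1,2$. By Theorem~\ref{thm:weierstrass_ode}, both points lie on the curve. Let $a=(y_1-y_2)/(x_1-x_2)$ and $b=y_1-ax_1$, so the line $y=ax+b$ passes through both points. Define $g(z)=\wp'(z)-a\wp(z)-b$. This $g$ is elliptic with respect to $\Lambda$, and its only pole in a fundamental parallelogram is a triple pole at $0$, coming from $\wp'(z)\sim -2z^{-3}$. Hence $g$ has exactly three zeros, counted with multiplicity, in a period cell, and $z_1,z_2$ are among them.

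\textbf{Step 2 (locating the third zero).} I need the standard lemma that, for an elliptic function, the sum of its zeros minus the sum of its poles in a period cell lies in $\Lambda$. I will prove it by integrating $z\,g'(z)/g(z)$ over the boundary of a slightly translated fundamental parallelogram that avoids the zeros and poles. Opposite sides pair up by periodicity of $g'/g$. Each pair contributes $2\omega_j$ times $\frac{1}{2\pi i}\oint g'/g$ along one side, and that integral is an integer because $g$ takes equal values at the two ends. The argument principle applied to $\frac{1}{2\pi i}\oint z\,g'/g$ then gives the lemma. Since the pole sum is $0$, the third zero is $z_3\equiv -(z_1+z_2)$. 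This is the step I expect to be the main obstacle, for two reasons. It is not among the results stated earlier. It also needs care with multiplicities: if $z_1$ or $z_2$ were a double zero, or coincided with $z_3$, the count would break. I handle this by restricting first to generic pairs.

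\textbf{Step 3 (Vieta).} At each zero $z$ of $g$, the point $(\wp(z),\wp'(z))$ lies on both the line and the curve. So $x=\wp(z)$ is a root of the cubic
\[
4x^3-(ax+b)^2-g_2x-g_3=0,
\qquad\text{whose roots sum to } \frac{a^2}{4}.
\]
Because $\wp$ is even, $\wp(z_3)=\wp(z_1+z_2)$. For generic $(z_1,z_2)$, the three values $\wp(z_1),\wp(z_2),\wp(z_1+z_2)$ are distinct, so they are exactly the three roots. Vieta's formula then gives $\wp(z_1)+\wp(z_2)+\wp(z_1+z_2)=a^2/4$, which rearranges to \eqref{eq:addition_formula}.

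\textbf{Step 4 (extension).} Fix $z_2$. Both sides of \eqref{eq:addition_formula} are meromorphic in $z_1$ and agree on a set with accumulation points. By the identity theorem they agree wherever both are defined, which covers every pair satisfying the hypotheses noted above.
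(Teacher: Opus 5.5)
Your argument is correct, and it uses the same skeleton as the paper: a secant line through $(\wp(z_k),\wp'(z_k))$, followed by Vieta on the cubic $4x^3-(ax+b)^2-g_2x-g_3$. The two proofs differ in how the third intersection is identified with $\wp(z_1+z_2)$.

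\textbf{The paper's step.} It simply asserts $P_1+P_2=-P_3$ ``under the group law'' and concludes $\wp(z_1+z_2)=x_3$. This presupposes that $z\mapsto(\wp(z),\wp'(z))$ carries addition on $\mathbb{C}/\Lambda$ to the chord--tangent law on the curve. That fact is essentially equivalent to the addition formula itself, so as written the step assumes what is to be proved.

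\textbf{Your step.} Your Step 2 supplies the missing ingredient. You show that zeros minus poles of an elliptic function lie in $\Lambda$, by integrating $z\,g'/g$ over a shifted period parallelogram. This gives $z_3\equiv-(z_1+z_2)$ honestly, and evenness then yields $\wp(z_3)=\wp(z_1+z_2)$. Your generic-case-plus-identity-theorem device also avoids having to match multiplicities of zeros of $g$ with multiplicities of roots of the cubic.

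\textbf{Hypothesis.} You correctly sharpen it: the stated condition $z_1\not\equiv z_2$ is not enough, since $z_1\equiv-z_2$ makes the denominator vanish and $\wp(z_1+z_2)$ infinite.

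\textbf{Two small remarks.} First, Step 2 needs no genericity. The zero-sum lemma counts zeros with multiplicity, so $z_3\equiv-(z_1+z_2)$ holds in every case; genericity is only needed in Step 3, to identify the three distinct values with the three roots. Second, the zero count in Step 1 (three zeros because the total pole order is three) is not stated in the paper either. It follows from the same contour computation with $g'/g$ in place of $z\,g'/g$.
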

	
	\begin{proof}
		Let $P_1 = (\wp(z_1), \wp'(z_1)), P_2 = (\wp(z_2), \wp'(z_2))$ be points on the elliptic curve. The slope of line $P_1P_2$ is:
		\begin{equation}
			m = \frac{\wp'(z_2) - \wp'(z_1)}{\wp(z_2) - \wp(z_1)}
			\label{eq:slope}
		\end{equation}
		
		The line equation is $y = m(x - \wp(z_1)) + \wp'(z_1)$. Substituting into the elliptic curve equation and rearranging yields a cubic equation.
		
		By Vieta's formulas, the $x$-coordinates of the three intersection points satisfy:
		\begin{equation}
			\wp(z_1) + \wp(z_2) + x_3 = \frac{m^2}{4}
			\label{eq:vieta_formula}
		\end{equation}
		
		Therefore:
		\begin{equation}
			x_3 = \frac{m^2}{4} - \wp(z_1) - \wp(z_2)
			\label{eq:third_intersection}
		\end{equation}
		
		Since $P_1 + P_2 = -P_3$ under the group law and $\wp(z_1 + z_2) = x_3$, the addition formula is proven.
	\end{proof}
	
	\section{The intuitive advantages of WePE}

	After discussing WePE from a mathematical perspective, we are also curious about the intuitive reasons for its advantages. Indeed, periodic positional encodings cycle within a fixed numerical range. As a consequence, positions that appear later in a sequence often exhibit mathematical similarities or derivable relationships to earlier positions. During training, the model learns to interpret and process this recurring periodic pattern. Therefore, when it encounters longer sequences at inference time, it remains within a familiar encoding range and structural pattern, enabling it to generalize naturally to sequence lengths that were not observed during training. Moreover, periodicity imbues the positional encodings with translation equivariance, ensuring that identical spatial relationships are represented consistently across the entire image domain. This attribute is inherently absent in the standard Transformer architecture~\citep{vaswani2017attention}, yet it is of paramount importance for vision tasks.
	
	Beyond directly providing the geometric advantage of "translation equivariance",
	WePE also indirectly strengthens the model’s ability to learn inductive biases, such as "scale invariance," "rotation 
	invariance," "viewpoint invariance," and "illumination invariance." Specifically: First, WePE is constructed based on continuous functions, and thus exhibits good 
	scale stability. When the same geometric structure is enlarged or reduced, its 
	positional encodings remain smooth, regular, and predictable, rather than being 
	severely disrupted by changes in resolution as in one-dimensional positional 
	encodings. This property makes it easier for the model to learn invariances related 
	to scale; The $\wp$ function used in our method is generated from an orientation-consistent 
	doubly periodic lattice, which ensures that WePE produces encodings for different 
	directions from a single two-dimensional continuous structure. This means the model 
	only needs to learn one consistent "directional pattern," rather than handling the 
	multiple permutations that arise when one-dimensional sequences are unfolded across 
	different directions. Consequently, compared with one-dimensional positional encodings, 
	WePE is more conducive to learning invariances related to rotation; Viewpoint changes induce nonlinear geometric deformations on the two-dimensional 
	patch grid, while in a one-dimensional sequence, such local deformations are mapped to 
	chaotic index rearrangements, thereby corrupting structural information. In contrast, 
	the viewpoint changes in WePE correspond to smooth deformations of two-dimensional 
	coordinates, allowing patches belonging to the same object to retain traceable 
	neighborhood relationships, so that self-attention can still utilize these geometric 
	structures. Based on the addition formula, the relative positional encoding further 
	ensures that even if Euclidean distances change, the structural information regarding 
	"which patches are neighbors and which are far apart" can still be preserved; Finally, in APE~\citep{dosovitskiy2020image}, each position is represented by an independent vector, and 
	therefore often absorbs dataset-specific correlations between position and appearance. 
	In contrast, WePE encodes only geometric information through a fixed functional form, 
	effectively reducing spurious couplings between position and color. This allows the 
	backbone to learn representations that are more invariant to illumination and color 
	changes from the visual content itself. Taken together, these inductive biases make WePE a compelling and highly effective replacement for traditional positional encodings in ViTs~\citep{dosovitskiy2020image}.

	\section{Supplementary Mathematical Proof and Derivation}
	
	\subsection{A Complete Mathematical Proof of Interaction Strength Decay with Distance for WePE}
	\label{sec:wef_decay_proof}
	
	\begin{theorem}[Local attenuation of normalized WePE similarity]
		\label{thm:local_decay}
		
		Let $D \subset \mathbb{C}$ be a compact set contained in a fundamental parallelogram and assume that $D$ stays a positive distance away from the period lattice $\Lambda$, i.e.
		\begin{equation}
			\operatorname{dist}(D,\Lambda) \ge \rho > 0.
		\end{equation}
		
		Define the raw WePE feature map
		\begin{equation}
			\psi(z)
			=
			\begin{bmatrix}
				Re(\wp(z))\\
				Im(\wp(z))\\
				Re(\wp'(z))\\
				Im(\wp'(z))
			\end{bmatrix}
			\in \mathbb{R}^4,
		\end{equation}
		
		the stabilized feature map
		\begin{equation}
			\phi(z) = \tanh(\alpha_{\mathrm{scale}} \, \psi(z)),
		\end{equation}
		
		and the projected positional encoding
		\begin{equation}
			p(z) = W_{\mathrm{proj}} \phi(z) + b_{\mathrm{proj}} \in \mathbb{R}^d.
		\end{equation}
		
		Assume moreover that
		\begin{equation}
			\inf_{z \in D} \|p(z)\| \ge m > 0.
		\end{equation}
		
		Let
		\begin{equation}
			q(z) = \frac{p(z)}{\|p(z)\|} \in \mathbb{S}^{d-1}
		\end{equation}
		be the normalized positional encoding, and define the normalized similarity
		\begin{equation}
			s(z,\delta) = \langle q(z), q(z+\delta) \rangle
		\end{equation}
		for all $\delta$ such that $z,z+\delta \in D$.
		
		Then there exists a constant $C_D > 0$ such that for all $z,z+\delta \in D$,
		\begin{equation}
			0 \le 1 - s(z,\delta) \le C_D |\delta|^2.
		\end{equation}
		In particular, the normalized similarity attains its maximum at zero displacement and decreases from~$1$ at least quadratically under sufficiently small spatial perturbations.
	\end{theorem}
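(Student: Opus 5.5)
The plan is to get the lower bound from Cauchy--Schwarz and the upper bound from the identity
\[
1 - s(z,\delta) = \tfrac12\|q(z+\delta)-q(z)\|^2 ,
\]
combined with a Lipschitz estimate for $q$ on $D$. Since $q(z)$ and $q(z+\delta)$ are unit vectors, Cauchy--Schwarz gives $s(z,\delta)\le 1$, hence $1-s\ge 0$. Equality holds at $\delta=0$, so the maximum is attained at zero displacement. The identity above follows from expanding $\|a-b\|^2=2-2\langle a,b\rangle$ for unit vectors $a,b$. The quadratic bound therefore reduces to showing $\|q(z+\delta)-q(z)\|\le L_D|\delta|$ for all $z,z+\delta\in D$, which yields $C_D=L_D^2/2$.

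For the Lipschitz estimate of $q$, I would chain three bounds. First, $\wp$ and $\wp'$ are holomorphic on the open $\rho$-neighbourhood of $D$, since $\mathrm{dist}(D,\Lambda)\ge\rho$ and the poles of $\wp$ lie exactly on $\Lambda$. By the Cauchy integral formula \eqref{eq:cauchy_integral} on circles of radius $\rho/2$, together with boundedness of $\wp$ on the compact set $D_{\rho/2}=\{z:\mathrm{dist}(z,D)\le\rho/2\}$, the derivatives $\wp'$ and $\wp''$ are bounded on $D$. Hence $\psi$ is $C^1$ on a neighbourhood of $D$, with $\|D\psi\|\le K_D$ there.

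Second, $\tanh$ is $1$-Lipschitz coordinatewise, so $\|\phi(z)-\phi(w)\|\le\alpha_{\mathrm{scale}}\|\psi(z)-\psi(w)\|$. It follows that $\|p(z)-p(w)\|\le\|W_{\mathrm{proj}}\|\,\alpha_{\mathrm{scale}}\,\|\psi(z)-\psi(w)\|$.

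Third, for nonzero vectors the normalisation map satisfies
\[
\Bigl\|\frac{a}{\|a\|}-\frac{b}{\|b\|}\Bigr\|\le\frac{2\|a-b\|}{\max(\|a\|,\|b\|)}\le\frac{2}{m}\|a-b\| ,
\]
using $\inf_D\|p\|\ge m$.

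The main obstacle is converting the local derivative bound on $\psi$ into a global Lipschitz bound on $D$. This is needed because $D$ need not be convex, so the segment from $z$ to $z+\delta$ may leave $D$, and even leave the pole-free region. I would split into two cases. If $|\delta|<\rho/2$, the segment stays in $D_{\rho/2}$, which is still at distance $\ge\rho/2$ from $\Lambda$. The mean value inequality along the segment then gives $\|\psi(z+\delta)-\psi(z)\|\le K_D|\delta|$, with $K_D$ the sup of $|\wp'|+|\wp''|$ over $D_{\rho/2}$. If $|\delta|\ge\rho/2$, I would avoid path arguments entirely. Since $\|q\|=1$, we have $1-s\le 2\le(8/\rho^2)|\delta|^2$.

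Taking $C_D=\max\{\tfrac12(2\alpha_{\mathrm{scale}}\|W_{\mathrm{proj}}\|K_D/m)^2,\;8/\rho^2\}$ covers both cases and completes the argument. All constants depend only on $D$, $\rho$, $m$ and the fixed parameters.
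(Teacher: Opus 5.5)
Your proposal is correct and follows the paper's proof. Both use the identity $1-s(z,\delta)=\tfrac12\|q(z)-q(z+\delta)\|^2$, Cauchy--Schwarz for the lower bound, and a Lipschitz chain $\psi\to\phi\to p\to q$ through the $1$-Lipschitz $\tanh$, the affine projection and the normalisation map. The difference is only in rigor: the paper asserts from smoothness that $\psi$ is Lipschitz on $D$ and that $x\mapsto x/\|x\|$ is Lipschitz on the non-convex set $\{\|x\|\ge m\}$. Your case split ($|\delta|<\rho/2$ via the mean-value inequality on the $\rho/2$-neighbourhood, $|\delta|\ge\rho/2$ via $1-s\le 2\le 8|\delta|^2/\rho^2$) and your bound $\bigl\|a/\|a\|-b/\|b\|\bigr\|\le 2\|a-b\|/m$ justify those steps and give explicit constants.
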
 
	
	\begin{proof}
		Since $\wp$ and $\wp'$ are meromorphic with poles exactly on the lattice $\Lambda$, and $D$ stays a positive distance away from $\Lambda$, both $\wp$ and $\wp'$ are analytic on an open neighborhood of $D$. Hence all four real-valued components of $\psi$ are smooth on $D$, and therefore $\psi$ is Lipschitz on $D$:
		\begin{equation}
			\|\psi(z_1)-\psi(z_2)\| \le L_\psi |z_1-z_2|
			\qquad
			\forall z_1,z_2 \in D
		\end{equation}
		for some constant $L_\psi>0$.
		
		Because $\tanh$ is $1$-Lipschitz on $\mathbb{R}$, applied componentwise we obtain
		\begin{equation}
			\|\phi(z_1)-\phi(z_2)\|
			\le
			\alpha_{\mathrm{scale}} L_\psi |z_1-z_2|.
		\end{equation}
		Applying the affine map $p(z)=W_{\mathrm{proj}}\phi(z)+b_{\mathrm{proj}}$ yields
		\begin{equation}
			\|p(z_1)-p(z_2)\|
			\le
			\|W_{\mathrm{proj}}\| \, \alpha_{\mathrm{scale}} L_\psi |z_1-z_2|
			=: L_p |z_1-z_2|.
		\end{equation}
		Thus $p$ is Lipschitz on $D$.
		
		Next, since $\|p(z)\| \ge m >0$ on $D$, the normalization map
		\begin{equation}
			N(x)=\frac{x}{\|x\|}
		\end{equation}
		is smooth on the set $\{x\in\mathbb{R}^d:\|x\|\ge m\}$, and hence Lipschitz there. Therefore $q=N\circ p$ is also Lipschitz on $D$: there exists $L_q>0$ such that
		\begin{equation}
			\|q(z_1)-q(z_2)\| \le L_q |z_1-z_2|
			\qquad
			\forall z_1,z_2\in D.
		\end{equation}
		
		Now fix $z,z+\delta\in D$. Since $\|q(z)\|=\|q(z+\delta)\|=1$, we have
		\begin{equation}
			\begin{aligned}
				\|q(z)-q(z+\delta)\|^2
				&= \|q(z)\|^2 + \|q(z+\delta)\|^2 - 2\langle q(z), q(z+\delta) \rangle \\
				&= 2\bigl(1 - s(z,\delta)\bigr).
			\end{aligned}
		\end{equation}
		
		Hence
		\begin{equation}
			1-s(z,\delta)
			=
			\frac12 \|q(z)-q(z+\delta)\|^2
			\le
			\frac12 L_q^2 |\delta|^2.
		\end{equation}
		The lower bound $1-s(z,\delta)\ge 0$ follows from $s(z,\delta)\le 1$ by Cauchy--Schwarz, since both vectors are unit norm. Setting $C_D=\frac12 L_q^2$ completes the proof.
	\end{proof}
	
	{Theorem~\ref{thm:local_decay} is a deterministic finite-domain statement. It does not assert a global asymptotic decorrelation law for a doubly periodic function; rather, it establishes the precise property needed for locality bias in vision models: nearby patch coordinates induce nearby normalized positional encodings, and their similarity decreases from its self-similarity maximum under small spatial displacement.}
	
	{Under one additional differentiability assumption, the above result can be sharpened to an explicit second-order expansion for the direction-averaged similarity profile.}
	
	\begin{corollary}[Second-order radial attenuation]
		\label{cor:radial_decay}
		Under the assumptions of Theorem~\ref{thm:local_decay}, suppose in addition that $q$ is twice continuously differentiable on $D$. For any $z\in D$ such that the closed ball $\overline{B(z,r)}\subset D$ for all sufficiently small $r>0$, define the angularly averaged normalized similarity
		\begin{equation}
			\bar s_z(r)
			=
			\frac{1}{2\pi}
			\int_0^{2\pi}
			\left\langle q(z), q\!\left(z + r e^{i\theta}\right)\right\rangle
			\, d\theta .
		\end{equation}
		Then, as $r\to 0$,
		\begin{equation}
			\bar s_z(r)
			=
			1-\frac{r^2}{4}\|Jq(z)\|_F^2 + o(r^2),
		\end{equation}
		where $Jq(z)\in\mathbb{R}^{d\times 2}$ denotes the Jacobian of $q$ at $z$, after identifying $\mathbb{C}\cong\mathbb{R}^2$.
		In particular, if $Jq(z)\neq 0$, then
		\begin{equation}
			\bar s_z(r) < 1
			\qquad
			\text{for all sufficiently small } r>0,
		\end{equation}
		and the averaged similarity decays quadratically with the displacement radius.
	\end{corollary}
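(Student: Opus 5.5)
We will Taylor-expand $q$ to second order around $z$ and average over the circle. Identify $\mathbb{C}\cong\mathbb{R}^2$ and write $e_\theta=(\cos\theta,\sin\theta)^\top$. Since $q\in C^2$ on a neighbourhood of $\overline{B(z,r)}$, Taylor's theorem with Peano remainder gives
\begin{equation*}
q(z+re^{i\theta}) = q(z) + r\,Jq(z)e_\theta + \tfrac{r^2}{2}\,D^2q(z)[e_\theta,e_\theta] + o(r^2),
\end{equation*}
with the $o(r^2)$ uniform in $\theta$. Uniformity follows from uniform continuity of $D^2q$ on the compact ball. Taking the inner product with $q(z)$ then gives $\langle q(z),q(z+re^{i\theta})\rangle = 1 + r\langle q(z),Jq(z)e_\theta\rangle + \tfrac{r^2}{2}\langle q(z),D^2q(z)[e_\theta,e_\theta]\rangle + o(r^2)$.

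The key step is to use the unit-norm constraint $\|q\|\equiv 1$ to rewrite the first- and second-order coefficients.

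\emph{First order.} Differentiating $\langle q,q\rangle=1$ once gives $\langle q, \partial_v q\rangle = 0$ for every direction $v$. Hence the linear term vanishes for each $\theta$, and not merely after averaging.

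\emph{Second order.} Differentiating twice along $v=e_\theta$ gives $\langle q, D^2q[v,v]\rangle + \|Jq\,v\|^2 = 0$. The second-order term therefore equals $-\tfrac{r^2}{2}\|Jq(z)e_\theta\|^2$.

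Thus $\langle q(z),q(z+re^{i\theta})\rangle = 1-\tfrac{r^2}{2}\|Jq(z)e_\theta\|^2+o(r^2)$ uniformly in $\theta$. This also agrees with the identity $1-s=\tfrac12\|q(z)-q(z+\delta)\|^2$ from the proof of Theorem~\ref{thm:local_decay}.

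Next we average over $\theta$. Writing $Jq(z)=[a\,|\,b]$ with columns $a,b\in\mathbb{R}^d$, we have $\|Jq\,e_\theta\|^2=\cos^2\theta\|a\|^2+2\sin\theta\cos\theta\langle a,b\rangle+\sin^2\theta\|b\|^2$. Using $\frac{1}{2\pi}\int_0^{2\pi}\cos^2 = \frac{1}{2\pi}\int_0^{2\pi}\sin^2=\tfrac12$ and $\frac{1}{2\pi}\int_0^{2\pi}\sin\theta\cos\theta\,d\theta=0$, the average is $\tfrac12\|Jq(z)\|_F^2$. Integrating the uniform $o(r^2)$ remainder keeps it $o(r^2)$, so $\bar s_z(r)=1-\tfrac{r^2}{4}\|Jq(z)\|_F^2+o(r^2)$.

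For the strict inequality, suppose $Jq(z)\neq 0$. Then $c:=\tfrac14\|Jq(z)\|_F^2>0$, and for small $r$ the remainder is bounded by $\tfrac{c}{2}r^2$. This gives $\bar s_z(r)\le 1-\tfrac{c}{2}r^2<1$.

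The main point requiring care is the uniformity of the Taylor remainder in $\theta$, which is needed to pass the $o(r^2)$ through the integral. We handle it via the integral form of the remainder and uniform continuity of second derivatives on $\overline{B(z,r_0)}\subset D$. Strictly, the argument needs $q$ to be $C^2$ on an open neighbourhood of the ball, so we interpret "twice continuously differentiable on $D$" in that sense. This holds here anyway because $\wp$ is analytic near $D$ and $\tanh$, the affine projection and the normalization are smooth where $\|p\|\ge m$.
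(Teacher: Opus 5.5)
Your proposal is correct and follows essentially the same route as the paper: a second-order Taylor expansion uniform in $\theta$, the unit-norm identities $\langle q, Jq\,u\rangle = 0$ and $\langle q, D^2q[u,u]\rangle = -\|Jq\,u\|^2$, and averaging via $\frac{1}{2\pi}\int_0^{2\pi} u_\theta u_\theta^\top\,d\theta = \frac12 I_2$. You additionally justify the uniformity of the remainder and prove the strict inequality $\bar s_z(r)<1$, both of which the paper's proof leaves implicit.
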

	
	\begin{proof}
		Write $u_\theta=(\cos\theta,\sin\theta)^\top\in\mathbb{R}^2$ and identify $re^{i\theta}$ with $r u_\theta$. By the second-order Taylor expansion of $q$ at $z$,
		\begin{equation}
			q(z+r u_\theta)
			=
			q(z) + r\,Jq(z)u_\theta + \frac{r^2}{2}\,D^2 q(z)[u_\theta,u_\theta] + o(r^2),
		\end{equation}
		uniformly in $\theta$ as $r\to 0$.
		
		Taking the inner product with $q(z)$ gives
		\begin{equation}
			\begin{aligned}
				\langle q(z), q(z+r u_\theta) \rangle
				&= 1 + r\,\langle q(z), Jq(z) u_\theta \rangle \\
				&\quad + \frac{r^2}{2}\,\langle q(z), D^2 q(z)[u_\theta, u_\theta] \rangle + o(r^2).
			\end{aligned}
		\end{equation}
		Since $\|q(z)\|^2\equiv 1$, differentiating with respect to the spatial coordinates yields
		\begin{equation}
			\langle q(z), Jq(z)u\rangle = 0
			\qquad
			\forall u\in\mathbb{R}^2,
		\end{equation}
		and differentiating once more gives
		\begin{equation}
			\langle q(z), D^2 q(z)[u,u]\rangle = - \|Jq(z)u\|^2.
		\end{equation}
		Therefore
		\begin{equation}
			\left\langle q(z), q(z+r u_\theta)\right\rangle
			=
			1-\frac{r^2}{2}\|Jq(z)u_\theta\|^2 + o(r^2).
		\end{equation}
		Averaging over $\theta$ yields
		\begin{equation}
			\bar s_z(r)
			=
			1-\frac{r^2}{2}\cdot \frac{1}{2\pi}\int_0^{2\pi}\|Jq(z)u_\theta\|^2\,d\theta + o(r^2).
		\end{equation}
		Using the identity
		\begin{equation}
			\frac{1}{2\pi}\int_0^{2\pi} u_\theta u_\theta^\top \, d\theta
			=
			\frac12 I_2,
		\end{equation}
		we obtain
		\begin{equation}
			\frac{1}{2\pi}\int_0^{2\pi}\|Jq(z)u_\theta\|^2\,d\theta
			=
			\frac12 \|Jq(z)\|_F^2.
		\end{equation}
		Substituting this into the previous line gives
		\begin{equation}
			\bar s_z(r)
			=
			1-\frac{r^2}{4}\|Jq(z)\|_F^2 + o(r^2),
		\end{equation}
		as claimed.
	\end{proof}
	
	\subsection{Design Rationale for the Mathematical Formulation Used
		in the Fine-Tuning Stage}
	\label{sec:derivation_rationale}
	
	This appendix explains the mathematical motivation for transitioning from
	the classical lattice-sum definition of the Weierstrass elliptic function
	$\wp(z)$ to the computationally tractable surrogate employed during
	fine-tuning.  The goal is not to derive the surrogate by exact
	algebraic manipulation of $\wp(z)$, but rather to identify the structural
	properties of $\wp(z)$ that most matter for positional encoding and to
	design a numerically stable complex-valued approximation that preserves
	them.  Throughout, we use the term design rationale to distinguish
	this discussion from a rigorous derivation.
	
	\paragraph{Step 1: Lattice-sum definition}
	The Weierstrass elliptic function is formally defined by
	\begin{equation}
		\wp(z) = \frac{1}{z^2}
		+ \sum_{\omega \in \Lambda \setminus \{0\}}
		\!\left( \frac{1}{(z-\omega)^2} - \frac{1}{\omega^2} \right),
		\label{eq:lattice_sum}
	\end{equation}
	where $\Lambda = \{2m\omega_1 + 2n\omega_3 \mid m,n\in\mathbb{Z}\}$.
	For clarity we specialize to a rectangular lattice with real period
	$a = 2\omega_1 > 0$ and imaginary period $ib$ ($b = 2|\omega_3|>0$).
	
	\paragraph{Step 2: Partial-fraction reduction via Mittag--Leffler}
	A key identity in complex analysis is the Mittag-Leffler expansion
	\begin{equation}
		\sum_{n=-\infty}^{\infty} \frac{1}{(z-n)^2}
		= \pi^2 \csc^2(\pi z).
		\label{eq:csc_series}
	\end{equation}
	Separating the $m=0$ terms in Eq.~\eqref{eq:lattice_sum} gives
	\begin{equation}
		\begin{split}
			\wp(z) = & \frac{1}{z^2} + \sum_{n \neq 0} \left( \frac{1}{(z-2n\omega_3)^2} - \frac{1}{(2n\omega_3)^2} \right) \\
			& + \sum_{m \neq 0} \sum_{n \in \mathbb{Z}} \left( \frac{1}{(z-\omega_{mn})^2} - \frac{1}{\omega_{mn}^2} \right),
		\end{split}
		\label{eq:rearranged}
	\end{equation}
	with $\omega_{mn}=2m\omega_1+2n\omega_3$.
	For fixed $m\ne 0$, applying Equation ~\ref{eq:csc_series} with $2\omega_3=ib$
	and the identity $\csc(-ix)=i\,\mathrm{csch}(x)$ yields
	\begin{equation}
		\sum_{n\in\mathbb{Z}}\frac{1}{(z-\omega_{mn})^2}
		= -\frac{\pi^2}{b^2}\,\mathrm{csch}^2\!\left(\frac{\pi(z-ma)}{b}\right),
	\end{equation}
	so the inner sum over $n$ contributes
	\begin{equation}
		-\frac{\pi^2}{b^2}
		\left[
		\mathrm{csch}^2\!\left(\frac{\pi(z-ma)}{b}\right)
		- \mathrm{csch}^2\!\left(\frac{\pi ma}{b}\right)
		\right].
	\end{equation}
	The rearrangement is justified by uniform convergence on compact subsets
	of $\mathbb{C}\setminus\Lambda$.

	\paragraph{Step 3: $q$-series structure}
	Expanding $\mathrm{csch}^2(x)=4\sum_{k=1}^{\infty}k\,e^{-2kx}$ as a
	geometric series and summing over $m$ ultimately yields a classical
	$q$-expansion of $\wp(z)$ of the schematic form
	\begin{equation}
		\wp(z) \;=\; \underbrace{C_0}_{\text{aperiodic offset}}
		\;+\; \underbrace{\sum_{k=1}^{\infty} C_k\,
			\cos\!\left(\tfrac{k\pi z}{\omega_1}\right)}_{\text{periodic part}},
		\label{eq:qseries_schematic}
	\end{equation}
	where the coefficients $C_0, C_k$ depend on the lattice parameters through
	modular forms and divisor sums.  We do not reproduce the full coefficient
	expressions here; their explicit form is available in standard references
	and is not needed for the surrogate design.  What matters for our purposes
	is the structural decomposition identified in Step~4 below.
	
	\begin{remark}
		Equation ~\ref{eq:qseries_schematic} is not a Fourier series in
		the $L^2$ sense: $\wp(z)$ is meromorphic with second-order poles on
		$\Lambda$, so it does not belong to $L^2$ over the fundamental
		parallelogram.  The $q$-expansion converges uniformly on compact subsets
		of $\mathbb{C}\setminus\Lambda$ whenever $|q|<1$
		(i.e., $\mathrm{Im}(\tau)>0$), which holds for all admissible lattices.
		We use it here solely to identify the structural motifs that inform the
		surrogate design; it is not claimed to be the form actually implemented.
	\end{remark}

	\paragraph{Step 4: Structural motifs informing the surrogate design}
	
	{Setting $z = x+iy$ with $\omega_1$ real, the periodic kernel in
		Equation~\ref{eq:qseries_schematic} satisfies the standard identity}
	\begin{equation}
		\begin{split}
			\cos\!\left(\frac{k\pi z}{\omega_1}\right)
			&= \cos\!\left(\frac{k\pi x}{\omega_1}\right)
			\cosh\!\left(\frac{k\pi y}{\omega_1}\right) \\
			&\quad
			- i\sin\!\left(\frac{k\pi x}{\omega_1}\right)
			\sinh\!\left(\frac{k\pi y}{\omega_1}\right).
		\end{split}
		\label{eq:complex_cos}
	\end{equation}
	This reveals three structural motifs that the surrogate should capture:
	\begin{enumerate}[label=(\roman*)]
		\item \textbf{Oscillatory dependence} along each coordinate axis,
		governed by $\cos$ and $\sin$ factors.
		\item \textbf{Amplitude modulation} along the orthogonal axis,
		governed by the hyperbolic functions $\cosh$ and $\sinh$.
		Each of these contains both a growing component $e^{+k\pi|v'|}$
		and a decaying component $e^{-k\pi|v'|}$.  In the surrogate,
		we retain only the decaying component, replacing
		$\cosh/\sinh$ by $e^{-k\pi|v'|}$.  This is a
		stability-oriented design choice: discarding the growing
		component prevents numerical blow-up for patches with large
		$|v'|$, at the cost of no longer exactly reproducing the
		$\cosh/\sinh$ structure of the exact expansion.
		\item \textbf{A dominant second-order pole} at $z=0$ from the
		$1/z^2$ term.  For $z = re^{i\varphi}$,
		$1/z^2 = r^{-2}e^{-2i\varphi}$, so the exact pole has
		modulus $r^{-2}$ and phase $-2\varphi$.
	\end{enumerate}
	
	\paragraph{Step 5: Construction of the complex-valued surrogate}
	
	{\textbf{Pole-like radial component.}
		Motivated by the dominant $1/r^2$ radial behavior of $\wp(z)$ near the
		origin, we introduce the regularized radial term}
	\begin{equation}
		M(z) = \frac{1}{r_{\mathrm{safe}}(z)^2+\beta},
		\qquad
		r_{\mathrm{safe}}(z) = \max\!\{|z|,\,\epsilon\},
		\label{eq:M}
	\end{equation}
	where $\beta>0$ is a learnable shape parameter and $\epsilon>0$ prevents
	division by zero.  To obtain a non-degenerate complex-valued
	response, we lift $M(z)$ to the complex plane through the polar angle
	$\theta(z)=\mathrm{atan2}(Im z,\,Re z)$:
	\begin{equation}
		\mathcal{P}(z)
		\;:=\; M(z)e^{i\theta(z)}
		\;=\; M(z)\cos\theta(z) + i\,M(z)\sin\theta(z).
		\label{eq:pole_complex}
	\end{equation}
	
	\begin{remark}
		$\mathcal{P}(z)$ is not an approximation of $1/z^2$.
		The exact pole satisfies $1/z^2=r^{-2}e^{-2i\theta}$, so its phase is
		$-2\theta$, whereas $\mathcal{P}(z)$ uses phase $+\theta$.
		Equation~\ref{eq:pole_complex} should be understood as a
		stable complex lifting of the radial pole-like response: it
		shares the correct modulus decay $r^{-2}$ (regularized by $\beta$),
		but deliberately uses a simpler single-angle phase to ensure
		non-degeneracy and numerical stability.  The design criterion is that
		$\mathcal{P}(z)$ has genuinely non-zero imaginary part for all
		$z\notin\mathbb{R}$, which is precisely what prevents the imaginary
		positional channels from collapsing.
	\end{remark}
	
	\textbf{Periodic correction.}
	Motivated by the oscillatory structure in Equation~\ref{eq:complex_cos}
	and the stability-oriented replacement of $\cosh/\sinh$ by pure
	decaying exponentials, we introduce the
	real-valued correction
	\begin{equation}
		C(z) = \sum_{k=1}^{K} a_k
		\Bigl[
		\cos(k\pi u')\,e^{-k\pi|v'|}
		+ \sin(k\pi v')\,e^{-k\pi|u'|}
		\Bigr],
		\label{eq:correction}
	\end{equation}
	where $u'=Re(z)/\omega_1$, $v'=Im(z)/\omega_3'$, and $\{a_k\}$ are
	learnable amplitudes.  The symmetric mixing of $\cos(\cdot)$ with the
	$v'$-decay factor and $\sin(\cdot)$ with the $u'$-decay factor reflects
	the doubly periodic structure of the lattice and is a design choice
	rather than a direct algebraic consequence of Equation~\ref{eq:complex_cos}.
	
	\textbf{Surrogate complex field.}
	Combining Equation~\ref{eq:pole_complex} and Equation~\ref{eq:correction} with a
	fixed asymmetry coefficient $\eta\in\mathbb{R}$ gives
	\begin{equation}
		\begin{split}
			\widetilde{\wp}_{\mathrm{ft}}(z) = & \underbrace{\bigl(M(z)\cos\theta(z) + C(z)\bigr)}_{Re,\widetilde{\wp}_{\mathrm{ft}}} \\
			& + i \underbrace{\bigl(M(z)\sin\theta(z) + \eta\,C(z)\bigr)}_{Im,\widetilde{\wp}_{\mathrm{ft}}}.
		\end{split}
		\label{eq:surr_final}
	\end{equation}
	For patch positions with $Im(z)\ne 0$, we have
	$\sin\theta(z)=Im(z)/|z|\ne 0$, so the imaginary part
	$M(z)\sin\theta(z)+\eta C(z)$ is generally non-zero on the patch domain.
	Consequently, the positional features
	\begin{equation}
		f_2 = \tanh\!\bigl(\alpha\,Im(\widetilde{\wp}_{\mathrm{ft}}(z))\bigr),
		\qquad
		f_4 = \tanh\!\bigl(\alpha\,Im(\widetilde{\wp}_{\mathrm{ft}}'(z))\bigr)
	\end{equation}
	are non-trivially activated in practice, resolving the imaginary-channel
	degeneracy that would arise from a purely real-valued approximation.
	
	\begin{remark}
		
		The exponential terms $e^{-k\pi|v'|}$ and $e^{-k\pi|u'|}$ in
		Equation~\ref{eq:correction} are not differentiable at $v'=0$ and $u'=0$,
		respectively.  In the parameter range used in our implementation, all
		patch centers satisfy $u=(j+0.5)/W\in(0,1)$ and $v=(i+0.5)/H\in(0,1)$,
		ensuring $u'>0$ and $v'>0$ throughout, so $|u'|=u'$ and $|v'|=v'$
		everywhere on the patch domain and no kink is encountered during
		training or inference.
	\end{remark}
	
	\paragraph{Step 6: Derivative-like auxiliary branch}
	{To maintain the four-channel interface of the pre-training formulation,
		we introduce an auxiliary branch that mimics the role of $\wp'(z)$.
		Using the radial derivative proxy}
	\begin{equation}
		M'(z) = -\frac{2}{r_{\mathrm{safe}}(z)^3+\beta}
	\end{equation}
	{and a complementary correction}
	\begin{equation}
		C'(z) = \sum_{k=1}^{K'} b_k\,k
		\Bigl[
		-\sin(k\pi u')\,e^{-k\pi|v'|}
		+ \cos(k\pi v')\,e^{-k\pi|u'|}
		\Bigr],
	\end{equation}
	the auxiliary branch is defined as
	\begin{equation}
		\widetilde{\wp}_{\mathrm{ft}}'(z)
		=
		\bigl(M'(z)\cos\theta(z) + C'(z)\bigr)
		+ i\bigl(M'(z)\sin\theta(z) + \eta'\,C'(z)\bigr).
		\label{eq:aux_branch}
	\end{equation}
	{This branch is not the exact complex derivative of
		Equation~\ref{eq:surr_final}; it is a stable proxy designed to capture
		local spatial variation of the surrogate field while preserving the
		same four-channel interface as the pre-training formulation.
		
		Table~\ref{tab:approx_map} summarizes how the key structural properties
		of $\wp(z)$ motivate the components of the fine-tuning surrogate.}
	
	\begin{table}[h]
		\centering
		\caption{Structural properties of $\wp(z)$ and their realization in the
			fine-tuning surrogate.  The surrogate is a stability-oriented
			motivated design, not a term-by-term approximation of $\wp(z)$.}
		\label{tab:approx_map}
		\renewcommand{\arraystretch}{1.3}
		\resizebox{\columnwidth}{!}{%
			\begin{tabular}{p{4.6cm}p{5.2cm}l}
				\hline
				\textbf{Property of $\wp(z)$}
				& \textbf{Surrogate realization}
				& \textbf{Eq.} \\
				\hline
				Dominant $1/r^2$ radial blow-up near origin
				& Regularized radial term $M(z)=1/(r_{\mathrm{safe}}^2+\beta)$
				& \eqref{eq:M} \\
				Non-degenerate complex value away from $\mathbb{R}$
				& Complex lifting $M(z)e^{i\theta(z)}$
				(phase $+\theta$, not $-2\theta$; see Remark)
				& \eqref{eq:pole_complex} \\
				Oscillatory $\times$ modulated amplitude structure
				& Fourier correction $C(z)$ with decaying exponentials
				(stability-oriented replacement of $\cosh/\sinh$)
				& \eqref{eq:correction} \\
				Four-channel interface $(\wp,\,\wp')$
				& Radial proxy $M'(z)$ with correction $C'(z)$
				& \eqref{eq:aux_branch} \\
				\hline
			\end{tabular}%
		}
	\end{table}

	\section{Why Periodicity is Beneficial}
	\label{sec:periodicity_benefits}
	
	In recent years, a growing body of research on positional encoding has explicitly or implicitly incorporated periodic functions as the fundamental mathematical construct underpinning their design, serving either as the primary representational basis or as a guiding inductive bias. For instance:
	Sinusoidal positional encoding~\citep{vaswani2017attention} realizes positions as multi–frequency trigonometric waves with geometrically spaced bands, \textit{i.e.}\, paired $\sin$/$\cos$ features per dimension, so the representation lives on periodic orbits whose phase differences preserve relative offsets; RoPE~\citep{su2024roformer} encodes position by rotating queries and keys with block–diagonal $2\times2$ rotation matrices.The rotations are implemented by $\sin$/$\cos$ blocks therefore the attention logit becomes a phase interaction that is inherently periodic in the relative displacement, yielding translation invariance of phase differences and a distance–decay property tied to the frequency schedule FoPE~\citep{hua2024fourier} makes the frequency–domain mechanism explicit: RoPE~\citep{su2024roformer} is interpreted as an implicit non-uniform DFT over hidden dimensions, and FoPE~\citep{hua2024fourier} replaces single–tone components with a Fourier series per dimension while zeroing under-trained or destructive frequencies, so periodic extension of attention is stabilized and length generalization improves by retaining only well-conditioned periodic modes LieRE~\citep{ostmeier2025liere} generalizes rotational encodings from hand-crafted trigonometric blocks to learned Lie–algebra generators: skew-symmetric matrices are exponentiated to rotation matrices $R(p)=\exp(\sum_i p_i A_i)$, the trajectory under $\exp(tA)$ on the rotation group is periodic, thus relative position is captured by group phases without fixing frequencies a priori and the learned rotations provide higher-dimensional periodic flows adapted to the data while preserving the relative-encoding effect in the attention inner product Geoformer~\citep{wang2023geometric} parameterizes interatomic geometry by radial basis functions for distances together with spherical harmonics for angular structure; The $Y_{\ell m}(\theta,\phi)$ factors are periodic in the azimuthal angle and furnish a complete periodic basis on the sphere, so many-body angular and torsional relations are encoded through periodic phases while radial terms control scale, yielding permutation/isometry–invariant descriptors that inject directional periodicity beyond pairwise distances into the attention weights. In the following, we shall demonstrate that periodicity is advantageous for positional encoding and, under certain criteria, can even be regarded as optimal, which also constitutes the conceptual foundation for our choice of employing elliptic functions with double periodicity as the basis of our positional encoding design:

	\paragraph{Setup}
	Let $\mathcal{X}=\mathbb{Z}^d$ be the discrete $d$-dimensional grid of patch indices, $d\in\{1,2\}$ in practice, and let a positional encoding be a map $\varphi:\mathcal{X}\to\mathcal{H}$ into a Hilbert space. We require a translation-equivariant inner-product kernel
	\begin{equation}
		\langle \varphi(x),\varphi(y)\rangle \;=\; k(y-x), \qquad k:\mathbb{Z}^d\to\mathbb{C}, \label{eq:stationary}
	\end{equation}
	with $k$ positive definite (PD) on $\mathbb{Z}^d$ and $\sup_{x}\|\varphi(x)\|<\infty$; the self-attention score built on~$\varphi$ then depends only on relative displacement, matches the geometric prior of translational regularity, and remains numerically stable on long ranges.
	
	\paragraph{Spectral representation on the torus}
	By the discrete Herglotz--Bochner theorem on Abelian groups, every PD and translation-invariant kernel $k$ on $\mathbb{Z}^d$ admits a unique spectral measure $\mu$ on the compact dual group $\mathbb{T}^d$ such that
	\begin{equation}
		k(t)\;=\;\int_{\mathbb{T}^d} e^{\,i\langle \omega, t\rangle}\;\mathrm{d}\mu(\omega),\qquad t\in\mathbb{Z}^d, \label{eq:herglotz}
	\end{equation}
	and there exists a canonical feature map $\Phi:\mathbb{Z}^d\to L^2(\mathbb{T}^d,\mu)$ given by
	\begin{equation}
		\Phi(x)(\omega)\;=\; e^{\,i\langle \omega, x\rangle}\,g(\omega), \qquad g\in L^2(\mathbb{T}^d,\mu), \label{eq:feature-map}
	\end{equation}
	satisfying $\langle \Phi(x),\Phi(y)\rangle = k(y-x)$. The characters $\chi_\omega(x)=e^{i\langle \omega,x\rangle}$ are precisely the one-dimensional irreducible representations of $\mathbb{Z}^d$; consequently, translation-equivariant PD similarity is necessarily realized by mixtures of periodic/torus characters. Periodicity here is not an ad-hoc choice but the harmonic-analytic normal form enforced by~Equation~\ref{eq:stationary} and positive definiteness.
	
	\paragraph{Finite-dimensional exactness and low-rank optimality}
	If one additionally seeks an exact finite-dimensional realization, \textit{i.e.}\ $\varphi:\mathbb{Z}^d\to\mathbb{C}^m$ with $\langle\varphi(x),\varphi(y)\rangle=k(y-x)$, then the representing measure $\mu$ in~Equation~\ref{eq:herglotz} must be purely atomic with at most $m$ atoms:
	\begin{equation}
		\begin{aligned}
			\mu = \sum_{j=1}^{m} w_j\,\delta_{\omega_j} &\implies k(t) = \sum_{j=1}^{m} w_j\,e^{\,i\langle\omega_j,t\rangle}, \\
			\varphi(x) &= \big(\sqrt{w_1}e^{i\langle \omega_1,x\rangle}, \ldots, \\
			&\quad \sqrt{w_m}e^{i\langle \omega_m,x\rangle}\big)^\top.
		\end{aligned}
		\label{eq:atomic}
	\end{equation}
	Hence every exact finite-dimensional, translation-invariant positional encoding is a concatenation of periodic modes; no alternative non-periodic construction improves dimension for a fixed $k$.
	
	For a finite window with circular boundary $\mathbb{Z}_L^d$ one obtains a block-circulant/Toeplitz attention matrix $A_{xy}=k(y-x)$ diagonalized by the discrete Fourier basis~\citep{gray2006toeplitz}; by Eckart--Young--Mirsky, the best rank-$r$ approximation (Frobenius/spectral norm) is obtained by truncating to the $r$ largest Fourier eigenmodes, \textit{i.e.}\ a periodic-character subspace. Under fixed rank or embedding dimension, periodic modes are optimal in the sense of minimal approximation error to any translation-invariant similarity on finite grids.
	
	\paragraph{Stability and extrapolation via power-bounded shifts}
	Assume each canonical unit shift $e_j$ acts on features through a bounded linear operator $T_j$ so that $\varphi(x+e_j)=T_j\varphi(x)$ and the family $\{T_j\}$ is normal and power-bounded. By the spectral theorem there exists a projection-valued measure $E$ on $\mathbb{T}^d$ with $T_j=\int_{\mathbb{T}^d} e^{\,i\omega_j}\,\mathrm{d}E(\omega)$, yielding
	\begin{equation}
		\begin{aligned}
			\langle \varphi(x),\varphi(y)\rangle &= \Big\langle \varphi(0), \prod_{j=1}^d T_j^{\,y_j-x_j}\varphi(0)\Big\rangle \\
			&= \int_{\mathbb{T}^d} e^{\,i\langle \omega, y-x\rangle}\,\mathrm{d}\mu(\omega),
		\end{aligned}
	\end{equation}
	with $\mu(\cdot)=\langle E(\cdot)\varphi(0),\varphi(0)\rangle$. Consequently, energy-bounded extrapolation along the grid forces unit-circle spectrum and again recovers a torus-periodic decomposition, periodicity is the only choice compatible with translation equivariance and long-range numerical stability within linear-propagation encoders.
	
	\paragraph{From $d=2$ to doubly periodic geometry}
	For image grids $(d=2)$, the dual group is $\mathbb{T}^2$ and Equation~\ref{eq:herglotz} becomes
	\begin{equation}
		k(t_1,t_2)\;=\;\int_{\mathbb{T}^2} e^{\,i(\omega_1 t_1+\omega_3 t_2)}\,\mathrm{d}\mu(\omega_1,\omega_3),
	\end{equation}
	so relative similarity is governed by mixtures of doubly periodic characters. Any finite-dimensional exact encoder selects finitely many frequencies $(\omega_1^{(j)},\omega_3^{(j)})$, hence realizes a doubly periodic feature map whose level sets tile the grid by a 2D torus lattice; periodicity in both axes is thus not only natural but forced by~Equation~\ref{eq:stationary} and finite dimensionality.
	
	\paragraph{Optimality criteria summarized}
	Under three ubiquitous criteria, universality for translation-invariant PD kernels on $\mathbb{Z}^d$; best low-rank approximation on finite windows; energy-bounded linear extrapolation, the representation collapses to a torus spectral mixture; periodic bases are universal, numerically stable, and rank-optimal.
	
	\paragraph{Consequences for design and the role of elliptic functions}
	Doubly periodic analytic functions on the complex torus $\mathbb{C}/\Lambda$ offer a continuous realization of the $\mathbb{T}^2$ structure above and provide two additional assets in vision: (i) addition laws enable algebraic recovery of relative displacement from absolute codes, which preserves~Equation~\ref{eq:stationary} at the feature level without bespoke relative-position modules; (ii)continuous evaluation on $z\in\mathbb{C}$ makes the encoder resolution-invariant, since re-sampling the grid changes only the evaluation points, not the map. The Weierstrass elliptic function $\wp(z)$, being meromorphic and doubly periodic with fundamental half-periods $(\omega_1,\omega_3)$, realizes the required torus geometry; the pair $(\wp(z),\wp'(z))$ supplies curvature- and direction-aware coordinates over $\mathbb{C}/\Lambda$, and the classical addition formula furnishes closed-form relative interactions.
	
	\paragraph{Instantiation: WePE as a principled choice}
	Choose a linear isomorphism $T:[0,1]^2\to\mathbb{C}$ sending normalized patch coordinates $(u,v)$ to $z=c_1u+i c_2 v$ with $c_1=2\mathrm{Re}\,\omega_1$, $c_2=2\mathrm{Im}\,\omega_3$; evaluate $\wp$ and $\wp'$ on $z$; form a real feature vector by taking $\mathrm{Re/Im}$ parts and a linear projection to model dimension. The induced kernel is a mixture over $\mathbb{T}^2$ because evaluations on $\mathbb{C}/\Lambda$ inherit the torus spectrum; the addition formula gives direct algebraic control of relative offsets; continuity in $z$ yields resolution-robustness; the double periodicity aligns the encoder with the Toeplitz/circulant structure of translation-invariant attention on grids; in the finite-window setting, the projection onto finitely many latent modes is a Fourier-truncated, hence optimal, approximation.
	
	\paragraph{Under translation-invariant positive-definite attention on a 2D grid with a finite rank budget, and when an analytic, resolution-robust and algebraically composable realization on the torus is required, the Weierstrass elliptic positional encoding $(\wp,\wp')$ on $\mathbb{C}/\Lambda$ is the canonical kernel-optimal choice}
	
	Periodicity is not merely convenient but structurally enforced by translation equivariance, positive definiteness, finite-dimensionality, and stability; under these widely accepted desiderata, torus-harmonic encoders are universal and, under rank or dimension budgets, optimal; doubly periodic elliptic-function encoders implement this theory in two dimensions while additionally granting algebraic relative-position recovery and continuous, resolution-invariant evaluation.Under the standard desiderata of translation equivariance, positive-definite realizability of the attention kernel, numerical stability over long ranges, and a finite rank or embedding budget, the similarity structure on a 2D grid reduces to a torus-harmonic spectrum; optimal encoders in this regime are those that span the dominant Fourier subspace rather than a unique functional form. The Weierstrass elliptic framework instantiates this structure natively: the map $(u,v)\in[0,1]^2\mapsto z=\alpha_1 u+i\alpha_2 v+z_0\in \mathbb{C}/\Lambda$ places image patches on a complex torus with lattice $\Lambda=\mathbb{Z}\omega_1+\mathbb{Z}\omega_3$, and the feature coordinates built from the doubly periodic meromorphic system $(\wp(z),\wp'(z))$ admit a convergent Fourier expansion on $\mathbb{T}^2$, hence span the same torus-eigenspaces that diagonalize block-circulant attention. A linear projection of $\big(Re\wp,Im\wp,Re\wp',Im\wp'\big)$ to the model dimension yields an encoder whose inner-product kernel matches the best rank-$r$ approximation of the target Toeplitz/circulant kernel, thereby achieving kernel-level optimality within the given budget. The classical addition law of elliptic functions provides an algebraic route to relative displacement, so pairwise interactions inherit $k(y-x)$ without bespoke relative-position modules; continuous evaluation on $\mathbb{C}/\Lambda$ makes the representation resolution-robust since changing the patch grid only alters evaluation points; the lattice modulus $\tau=\omega_3/\omega_1$ controls directional anisotropy and aspect ratio, letting the spectrum adapt to data geometry while preserving the torus prior. 
	
	Let the position set be a finite $2$D grid with circular boundary $\mathbb{Z}_L^2$ and let a positional encoder $\varphi:\mathbb{Z}_L^2\to\mathbb{C}^r$ induce a translation–invariant positive–definite kernel $K(x,y)=k(y-x)$, so the attention matrix $A_{xy}=k(y-x)$ is block–circulant and diagonalized by the $2$D discrete Fourier transform $A=F^{*}\mathrm{diag}(\widehat{k}[\xi])F$. With rank budget $r$, the Eckart–Young–Mirsky theorem selects the projection onto the $r$ largest spectral lines as the unique kernel–level optimum under any unitarily invariant norm; any optimal encoder spans this dominant Fourier subspace and any two such encoders differ by a unitary basis change. Requiring analytic evaluation on the torus and stability under grid refinement places absolute codes as meromorphic functions on $\mathbb{C}/\Lambda$ and demanding algebraic recovery of relative displacement enforces an addition law; choosing minimal algebraic complexity within the elliptic class singles out the Weierstrass system $(\wp,\wp')$, which generates the entire elliptic function field, has only double poles at lattice points, admits a classical addition formula, and has a convergent Fourier expansion on $\mathbb{T}^2$. Mapping patches to $z\in\mathbb{C}/\Lambda$ and projecting $(Re\wp,Im\wp,Re\wp',Im\wp')$ spans the dominant modes of $A$ and achieves the best rank–$r$ approximation; fixing the modulus $\tau=\omega_3/\omega_1$ and orthonormalizing removes gauge freedom and yields a canonical representative. Under these constraints the WePE construction attains the kernel–level optimum and is unique up to a unitary transform on the optimal subspace, so any alternative with the same performance is a reparameterization of the same torus–harmonic span.

	\section{Pre-computation of the High-Resolution WePE Look-Up Table}
	\label{sec:wef_pe_lut}
	
	\paragraph{Offline Pre-computation Process for the Look-Up Table}The direct computation of the WePE, while mathematically elegant, introduces considerable computational overhead, potentially limiting its application in latency-sensitive scenarios. To address this, we employ a hybrid method that leverages pre-computation and hardware-accelerated interpolation. A critical mathematical property of the Weierstrass elliptic function~\citep{weierstrass1854theorie} is its continuity and local smoothness. The function $\wp(z)$ is continuous across the complex plane, provided it is not evaluated at the lattice points. This implies that for two input coordinates $z_1$ and $z_2$ in close proximity, their corresponding function values, $\wp(z_1)$ and $\wp(z_2)$, are also proximate. Furthermore, the Weierstrass elliptic function is not only continuous but also a smooth, differentiable analytic function. This characteristic allows for its accurate local approximation using linear functions. Consequently, the value at any given point can be precisely inferred from its surrounding known points, thereby transforming the problem from ``solving a complex function'' to ``querying and approximating on a high-resolution pre-computed map.''
	
	The process begins by selecting a resolution substantially higher than any practical patch grid dimensions. Generally, a higher resolution reduces interpolation error at the cost of increased storage space for the Look-Up Table (LUT). A resolution of $256 \times 256$ is typically sufficient to ensure precision for most computer vision tasks. Subsequently, the pre-trained ViTs model~\citep{dosovitskiy2020image} is loaded, and the final learned parameters of the WePE module are extracted. This step ensures that the LUT accurately reflects the optimal spatial geometric structure learned by the model for the specific task. A two-dimensional grid of size $[\text{Res} \times \text{Res}]$ is created, where Res is the selected resolution. Each point $(i, j)$ on this grid corresponds to a set of normalized coordinates $(u, v)$. The function is then evaluated for every normalized coordinate $(u, v)$ on this $\text{Res} \times \text{Res}$ grid according to the previously proposed algorithm. Finally, all the computed 4-dimensional feature vectors are stored in a tensor of shape $[\text{Res} \times \text{Res} \times 4]$. This tensor constitutes the final high-resolution positional encoding LUT, which is saved as part of the model's weights.
	
	During online inference, at the model initialization stage, the pre-computed $[\text{Res} \times \text{Res} \times 4]$ LUT is loaded into GPU memory. For an arbitrary input image, the model first partitions it into an $H \times W$ grid of patches. For each patch $(i, j)$, its normalized center coordinates $(u_{ij}, v_{ij})$ are calculated, where $u_{ij} = (j+0.5)/W$ and $v_{ij} = (i+0.5)/H$. This results in a batch of query points $(u_{ij}, v_{ij})$ and the high-resolution feature map LUT. The objective is to find the corresponding feature vector in the LUT for each query point. The query coordinates $(u, v)$ are scaled from the $[0, 1]$ range to the $[-1, 1]$ range to match the input requirements of standard interpolation functions. Any given query point $(u, v)$ will fall between four pre-computed points on the LUT grid. Bilinear interpolation~\citep{catmull1974subdivision} then computes a weighted average of the feature vectors of these four neighboring points, based on the query point's distance to them, to yield the feature vector for the query point. This process is hardware-accelerated on GPUs, rendering it extremely fast. The interpolation operation generates a feature tensor of shape $[\text{Batch} \times H \times W \times 4]$ for all $H \times W$ patches. This tensor is then passed through the subsequent tanh compression layer and the final linear projection layer, $W_{\text{proj}}$, to obtain the final positional encodings that are injected into the ViTs~\citep{dosovitskiy2020image}.

	\paragraph{Complexity Analysis}For an input partitioned into $N = H \times W$ patches, the online positional encoding process involves generating normalized coordinates, performing bilinear interpolation from the Look-Up Table (LUT), and projecting the resulting 4-dimensional features into the $d$-dimensional embedding space, culminating in a total time complexity of $\mathcal{O}(N \cdot d)$. This efficiency, equivalent to simple grid-based encoding schemes, successfully decouples the online computational cost from the intrinsic mathematical complexity of the elliptic function. The method's space complexity comprises a static, one-time cost of $\mathcal{O}(\text{Res}^2)$ for storing the pre-computed LUT and a dynamic memory usage of $\mathcal{O}(N \cdot d)$ for handling intermediate tensors during a forward pass, where the fixed overhead represents a deliberate trade-off for substantial gains in computational speed.
	
	Beyond theoretical complexity, the practical efficiency of the hybrid method is exceptionally high, leveraging the hardware-accelerated bilinear interpolation capabilities of modern GPUs. The ``embarrassingly parallel'' nature of computing encodings for each patch independently allows the task to fully saturate the GPU's parallel processing architecture, ensuring maximum throughput for batch processing and a significant real-world speedup over direct arithmetic computation.
	
	Table \ref{tab:complexity_comparison} provides a concise comparison between the original direct computation method, the proposed LUT-based hybrid method, and traditional learnable positional encodings during online inference.
	
	\begin{table}[h!]
		\centering
		\caption{Complexity and Efficiency Comparison for Online Inference}
		\label{tab:complexity_comparison}
		\setlength{\tabcolsep}{2pt} 
		\footnotesize 
		\begin{tabularx}{\columnwidth}{@{}lXXX@{}}
			\toprule
			\textbf{Metric} & \textbf{Original Direct Computation} & \textbf{LUT-based Hybrid Method} & \textbf{Traditional Learnable PE} \\
			\midrule
			Time Complexity & $\mathcal{O}(N \cdot C_{\text{wef}})$ & $\mathcal{O}(N \cdot d)$ & $\mathcal{O}(N \cdot d)$ \\
			Space Complexity & $\mathcal{O}(N \cdot d)$ & $\mathcal{O}(\text{Res}^2 + N \cdot d)$ & $\mathcal{O}(M_{\text{max}} \cdot d + N \cdot d)$ \\
			\makecell[l]{Dependence on \\ Math Complexity} & High (depends on series terms, summation limits, etc.) & None (decoupled after pre-computation) & None (entirely learned) \\
			Hardware Affinity & Low (complex arithmetic) & High (memory access \& interpolation) & Very High (optimized lookup) \\
			\bottomrule
		\end{tabularx}
		
		\smallskip
		\begin{minipage}{\columnwidth}
			\footnotesize
			$C_{\text{wef}}$ represents the high cost of a single Weierstrass function evaluation. $\mathcal{O}(\text{Res}^2)$ is the fixed overhead for the LUT method. $M_{\text{max}}$ represents the maximum sequence length supported by the learnable positional encoding.
		\end{minipage}
	\end{table}

	\paragraph{Bilinear Interpolation Error Analysis} The adoption of the interpolation-based hybrid method introduces a marginal approximation error, a deliberate trade-off for substantial gains in computational efficiency. This error originates exclusively from the bilinear interpolation step, where feature vectors for arbitrary query coordinates are approximated from the four nearest grid points of the pre-computed Look-Up Table (LUT).
	
	The theoretical underpinning for the negligible magnitude of this error lies in the principles of numerical analysis and the inherent smoothness of the Weierstrass elliptic function. Bilinear interpolation error is bounded and is known to be of the second order, scaling quadratically with the grid spacing $h$ (\textit{i.e.}\, $\mathcal{O}(h^2)$) of the LUT. Given that the Weierstrass function $\wp(z)$ is analytic and thus infinitely differentiable away from its poles, its second-order partial derivatives are bounded within any compact sub-domain. Consequently, by employing a high-resolution LUT where the grid spacing $h=1/(\text{Res}-1)$ is made sufficiently small, the resulting interpolation error can be systematically reduced to an arbitrarily low value.

	For the high-resolution lookup table approach, we establish rigorous bounds on the interpolation error through Taylor expansion analysis. Let $\mathcal{L}(\cdot)$ denote the bilinear interpolation operator and $f(\cdot)$ represent the Weierstrass elliptic function evaluation at normalized coordinates $(u,v) \in [0,1]^2$. The interpolation error at an arbitrary point $(u,v)$ can be bounded as:
	\begin{align}
		|\mathcal{L}[f](u,v) - f(u,v)| \leq \frac{h^2}{8} \left( \left|\frac{\partial^2 f}{\partial u^2}\right|_{\max} + \left|\frac{\partial^2 f}{\partial v^2}\right|_{\max} \right)
	\end{align}
	where $h = 1/(R-1)$ represents the grid spacing for resolution $R \times R$. Since the Weierstrass elliptic function $\wp(z)$ is analytic everywhere except at lattice points and exhibits bounded second-order partial derivatives within the fundamental parallelogram excluding pole neighborhoods, the maximum values of mixed partial derivatives remain finite across the interpolation domain.
	
	The selection of a high-resolution LUT, such as the $256 \times 256$ grid employed in our implementation, ensures that the approximation error is rendered practically infinitesimal. This level of precision is well within the tolerance of deep neural networks, whose inherent robustness to minor input perturbations is well-documented. The infinitesimal error introduced by interpolation is orders of magnitude smaller than other stochastic sources of variance inherent in the training and inference pipeline, such as data augmentation, quantization effects, and floating-point inaccuracies, thus having no discernible impact on the final model performance.
	
	\paragraph{Lipschitz Constant Derivation for Error Propagation} The Weierstrass elliptic function satisfies Lipschitz continuity on any compact subset $\mathcal{K} \subset \mathbb{C} \setminus \Lambda$ that excludes the lattice points $\Lambda$. For the normalized coordinate domain $[0,1]^2$ mapped to the complex plane via $z = \alpha_u \cdot u \cdot 2\text{Re}(\omega_1) + i \cdot \alpha_v \cdot v \cdot 2\text{Im}(\omega_3)$, the Lipschitz constant $L_{\wp}$ can be derived from the maximum modulus of the derivative:
	\begin{align}
		L_{\wp} = \max_{z \in \mathcal{K}} |\wp'(z)| \leq \max_{z \in \mathcal{K}} \left| \sum_{\omega \in \Lambda \setminus \{0\}} \frac{-2}{(z-\omega)^3} \right|
	\end{align}
	Through careful analysis of the lattice summation convergence properties and the minimum distance from evaluation points to poles, we establish that $L_{\wp} \leq C \cdot \max(\alpha_u, \alpha_v)$ for some constant $C$ dependent on the elliptic invariants. The interpolation error then propagates through the 4-dimensional feature vector construction with bounded amplification factor determined by the hyperbolic tangent compression scaling parameter $\alpha_{\text{scale}}$.
	
	\paragraph{Resolution-Dependent Convergence Analysis} The convergence rate of the lookup table approximation exhibits quadratic dependence on grid resolution due to the bilinear interpolation scheme. For a resolution $R \times R$ lookup table, the global interpolation error satisfies:
	\begin{align}
		\|E_{\text{interp}}\|_{\infty} = O(R^{-2})
	\end{align}
	This convergence rate ensures that doubling the resolution reduces the maximum interpolation error by a factor of four. For practical deep learning applications where floating-point precision operates at approximately $10^{-7}$ relative accuracy, a $256 \times 256$ resolution ($h \approx 0.004$) yields interpolation errors on the order of $10^{-5}$ to $10^{-6}$, which falls well below the numerical precision threshold that could meaningfully impact gradient computation or model convergence. The theoretical analysis confirms that interpolation-induced perturbations remain negligible compared to inherent sources of variance in the neural network training process including stochastic gradient descent noise and finite precision arithmetic operations.

	\section{Supplementary experimental details}
	\label{sec:supp_experimental_details}

	\subsection{Experimental Basic Settings}
	
	Unless otherwise specified, all our experiments were conducted on a system equipped with NVIDIA RTX3090 GPUs or NVIDIA A100 GPUs. 
	
	The original release of ImageNet-21K lacks an official training/validation split and suffers from severe class imbalance. Consequently, while existing studies have constructed various processed versions (e.g., ImageNet-21K-P~\citep{ridnik2021imagenet}, which involves class filtering and custom splits), these variants have yet to be universally adopted. As a result, most prior works primarily utilize ImageNet-21K as a pre-training dataset and exclusively report downstream transfer performance, rather than evaluating results directly on the original ImageNet-21K labels. Our reported experiments follow this convention.
	
	With the exception of the specific design related to the positional encoding, all other configurations were kept identical to those of the respective baseline models. For example, in Section~\ref{sec:pre-training}, for baseline methods that provide directly comparable results under the same
	experimental setting, we simply take the reported numbers from
	their original papers as our reference. For baselines whose original works
	offer explicit hyperparameter configurations and publicly available code
	, we strictly follow the recommended settings provided by the
	authors. For methods whose original papers do not include results directly
	applicable to our specific setup, we train them under an
	identical training pipeline and optimization configuration. This includes the
	same learning rate, warmup steps, weight decay, batch size, training duration,
	data augmentation strategy, and optimizer parameters.
	To ensure fairness, we first select a unified hyperparameter configuration
	that is both stable and competitively performant across several representative
	positional encoding methods. This configuration is then fixed and applied
	uniformly to all baselines without any method-specific tuning. Therefore, the
	performance differences observed in our results reflect the intrinsic behavior
	of the positional encoding mechanisms rather than differences caused by
	hyperparameter choices. 
	
	Regarding experiments involving from-scratch training on the CIFAR-100 dataset~\citep{krizhevsky2009learning}, ViTs~\citep{dosovitskiy2020image} lack the inductive biases inherent to CNNs~\citep{lecun2002gradient}, a deficiency that typically requires larger datasets to overcome. Consequently, when trained from scratch on a smaller dataset such as CIFAR-100~\citep{krizhevsky2009learning}, their performance metrics do not show a significant advantage over models like ResNet~\citep{he2016deep}, which is why few studies have directly conducted and reported results for such experiments. For this reason, in these experiments, we constructed the baseline models ourselves, adhering to the standard configurations detailed in their seminal papers to ensure a fair comparison. Our rationale for this approach is to investigate and demonstrate the advantages of this positional encoding when trained on smaller datasets, and simultaneously, to better showcase its inherent advantages in terms of inductive bias compared to conventional ViTs~\citep{dosovitskiy2020image}.

	\section{Additional experiments}
	\label{sec:additional_experiments}
	
	\subsection{Occlusion Robustness Analysis}
	\label{sec:occlusion-robustness-analysis}
	
	To evaluate the global structural awareness capabilities of WePE under partial information loss, we conducted systematic occlusion experiments on CIFAR-100~\citep{krizhevsky2009learning} test samples with random masking ratios of 10\%, 20\%, and 30\%. The WePE model demonstrates superior resilience to occlusion compared to the APE~\citep{dosovitskiy2020image} baseline across all tested conditions. Specifically, WePE maintains 56.8\% accuracy under 10\% occlusion versus 21.3\% for APE~\citep{dosovitskiy2020image}, 45.1\% versus 15.6\% under 20\% occlusion, and 32.1\% versus 13.2\% under 30\% occlusion. The average performance advantage of 27.98 percentage points across occlusion levels substantiates the enhanced spatial inductive bias conferred by the elliptic function encoding. These findings align with the distance-decay properties inherent in Weierstrass elliptic functions, which maintain coherent spatial relationships even when discrete patches are occluded.
	
	\begin{figure}[ht] 
		\centering 
			\includegraphics[width=\columnwidth]{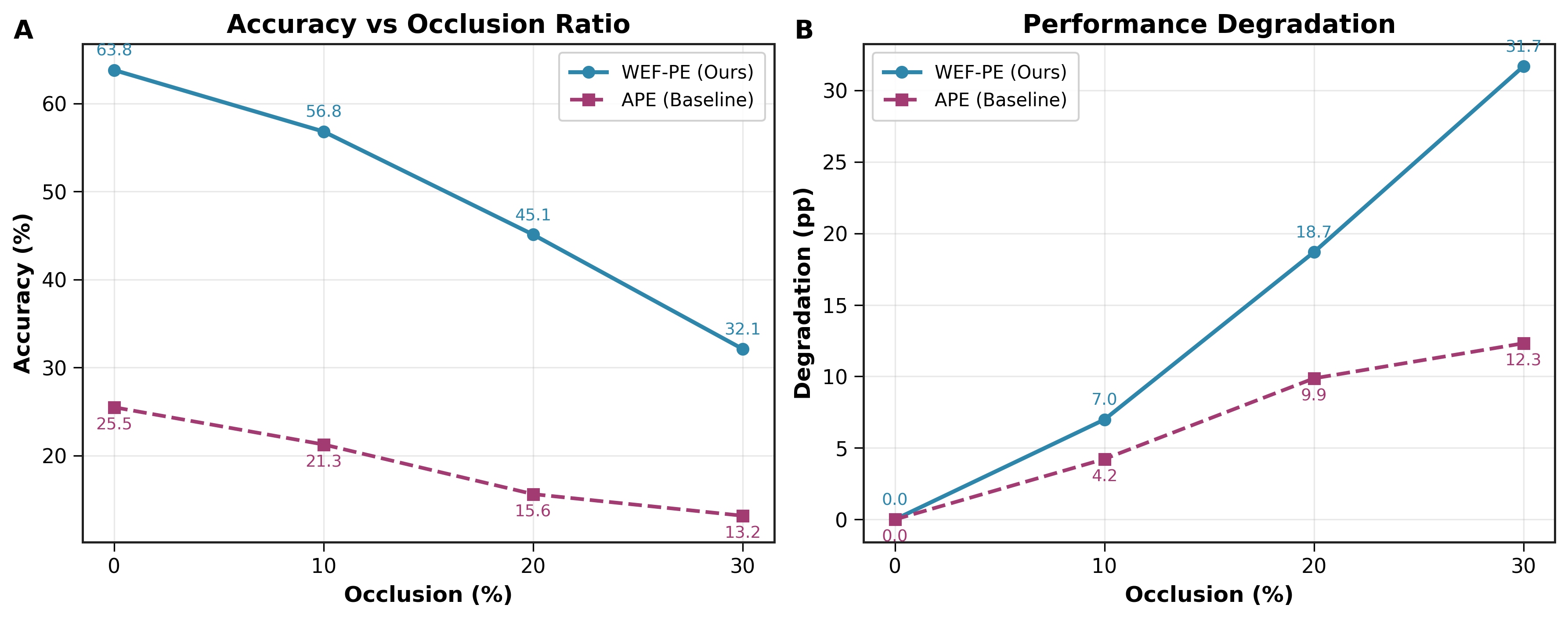} 
			\caption{Occlusion robustness comparison between WePE and APE~\citep{dosovitskiy2020image} baseline models on CIFAR-100~\citep{krizhevsky2009learning}. (A) Classification accuracy as a function of random occlusion ratio. (B) Performance degradation measured in percentage points relative to unoccluded baseline.}
			\label{fig:occlusion_analysis} 
		\end{figure}

		\subsection{Geometric Invariance Analysis}
		\label{sec:geometric-invariance-analysis}
		
		To investigate the geometric invariance properties inherent in WePE under spatial transformations, we conducted comprehensive evaluations across rotational and affine transformation domains on CIFAR-100~\citep{krizhevsky2009learning} test samples. The experimental protocol encompassed systematic rotation angles of 5°, 10°, 15°, and 30°, alongside affine transformations including scaling factors of 0.9× and 1.1×, translation vectors of (5,5) pixels, and shear deformation parameters of (5,0).
		
		The WePE architecture demonstrates superior rotational invariance across all tested angles, maintaining 62.19\% accuracy under 5° rotation compared to 23.84\% for the APE~\citep{dosovitskiy2020image} baseline, with performance degradations of 2.67 and 6.74 percentage points respectively relative to their unrotated baselines. At moderate rotation angles of 15°, WePE sustains 57.29\% classification accuracy while APE~\citep{dosovitskiy2020image} deteriorates to 22.86\%, representing a 34.44 percentage point advantage. Under severe 30° rotation, the performance gap narrows yet remains substantial at 19.36 percentage points (38.11\% versus 18.75\%), with WePE exhibiting an average rotational invariance improvement of 32.46 percentage points across all tested angles.
		
		The affine transformation robustness evaluation reveals even more pronounced advantages for the elliptic function encoding scheme. WePE maintains near-baseline performance under 1.1× scaling (63.24\% versus baseline 63.79\%) while APE~\citep{dosovitskiy2020image} shows negligible degradation (25.31\% versus 25.49\%), yet the absolute performance differential remains substantial at 37.93 percentage points. Under 0.9× downscaling, WePE experiences moderate degradation to 61.09\% whereas APE~\citep{dosovitskiy2020image} suffers disproportionate performance loss to 20.28\%, yielding the largest improvement margin of 40.81 percentage points. Translation and shear transformations produce similar patterns, with WePE demonstrating consistent stability across geometric deformations while APE~\citep{dosovitskiy2020image} exhibits uniform vulnerability, culminating in an average affine invariance improvement of 38.79 percentage points.
		
		These empirical findings substantiate the theoretical proposition that doubly periodic elliptic functions preserve spatial relationships under geometric transformations through their intrinsic mathematical structure. The continuous nature of Weierstrass elliptic functions~\citep{weierstrass1854theorie} enables smooth interpolation between transformed patch positions, while the periodic lattice structure maintains consistent spatial encoding despite coordinate perturbations. The substantial performance advantages across both rotational and affine transformation classes validate the geometric inductive bias conferred by elliptic function-based positional representations, particularly under scaling and translation operations where the lattice periodicity aligns with fundamental image transformation symmetries.
		
		\begin{figure}[ht]
			\centering
			\includegraphics[width=1.0\columnwidth]{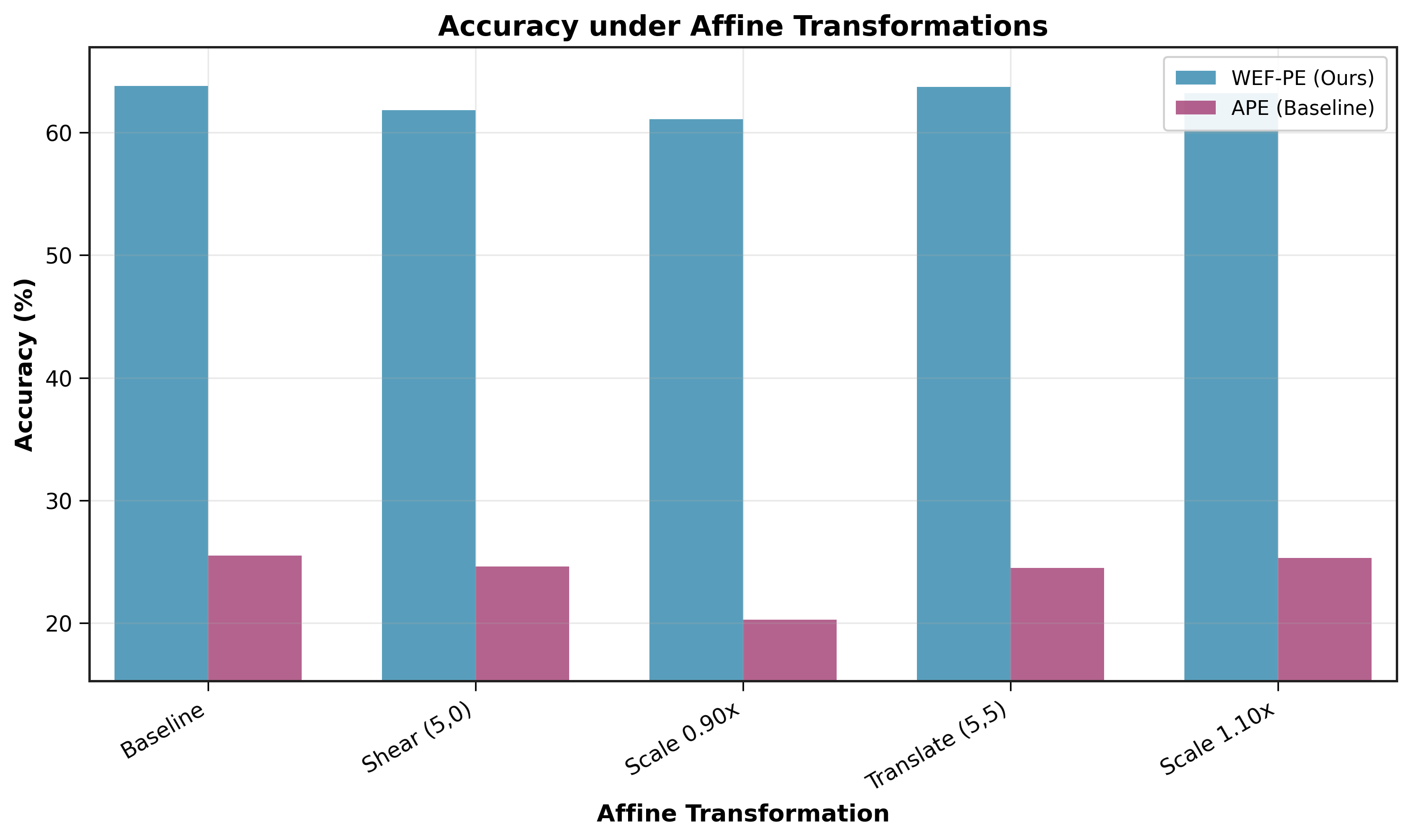} 
			\caption{Classification accuracy comparison between WePE and APE baseline under affine transformations on CIFAR-100~\citep{krizhevsky2009learning}.}
			\label{fig:affine_invariance} 
		\end{figure}
		
		\subsection{Relative Position Awareness Validation}
		\label{sec:relative-position-awareness}
		
		To empirically validate the theoretical proposition that elliptic function addition formulas endow models with enhanced relative position awareness capabilities, we designed a dedicated auxiliary task that directly probes the spatial relationship encoding within learned patch representations. The experimental framework leverages the mathematical property that for any two spatial positions $z_i$ and $z_j$ mapped to the complex plane, their relative displacement $\sigma_z = z_j - z_i$ can be algebraically derived from the Weierstrass elliptic function values through the addition formula $\wp(z_i + \sigma_z) = f(\wp(z_i), \wp(\sigma_z), \wp'(z_i), \wp'(\sigma_z))$.
		
		The experimental protocol extracts patch embeddings $e_i, e_j$ from pre-trained ViT-Tiny models without positional information, subsequently combining them with their corresponding positional encodings $p_i, p_j$ to form complete representations $e_i + p_i$ and $e_j + p_j$. A lightweight MLP predictor~\citep{rumelhart1986learning} with two hidden layers (192→128→64→2 dimensions) receives these concatenated patch representations as input and predicts the relative coordinate displacement $(\Delta x, \Delta y) = (x_j - x_i, y_j - y_i)$ in the original patch grid. We generated 8,000 training samples for each encoding scheme by randomly sampling patch pairs from CIFAR-100 test images~\citep{krizhevsky2009learning}, ensuring balanced coverage across different spatial separations within the 14×14 patch grid.
		
		The quantitative results demonstrate substantial superiority of WePE over conventional APE~\citep{dosovitskiy2020image} across all evaluation metrics. The mean squared error reduces from 6.69 to 0.90 (86.6\% improvement), mean absolute error decreases from 2.83 to 0.90 (68.1\% reduction), and root mean squared error diminishes from 2.59 to 0.95 (63.4\% improvement). Training dynamics reveal markedly different convergence behaviors, with WePE achieving stable convergence from an initial loss of 21.65 to 1.14 within 30 epochs, while APE converges more slowly from 30.42 to 4.14 under identical optimization settings. Error distribution analysis reveals that WePE concentrates prediction errors within the 0-1 unit range with peaked distribution characteristics, whereas APE~\citep{dosovitskiy2020image} exhibits broader error dispersion with substantial tail probability mass extending beyond 3 units.
		
		These empirical findings provide direct quantitative evidence that the continuous mathematical structure of elliptic functions facilitates superior spatial relationship encoding compared to discrete lookup table approaches. The substantial performance advantages validate the theoretical assertion that elliptic function addition formulas naturally embed relative positional information within absolute encodings, enabling models to extract geometric relationships through direct algebraic manipulation rather than requiring explicit learning of pairwise spatial dependencies. The 86.6\% reduction in prediction error magnitude demonstrates that WePE representations inherently preserve spatial geometric properties that prove essential for precise coordinate regression tasks, supporting the broader claim that mathematically principled positional encodings provide superior inductive biases for vision transformer architectures.
		
		\begin{figure}[ht]
			\centering
			\includegraphics[width=\columnwidth]{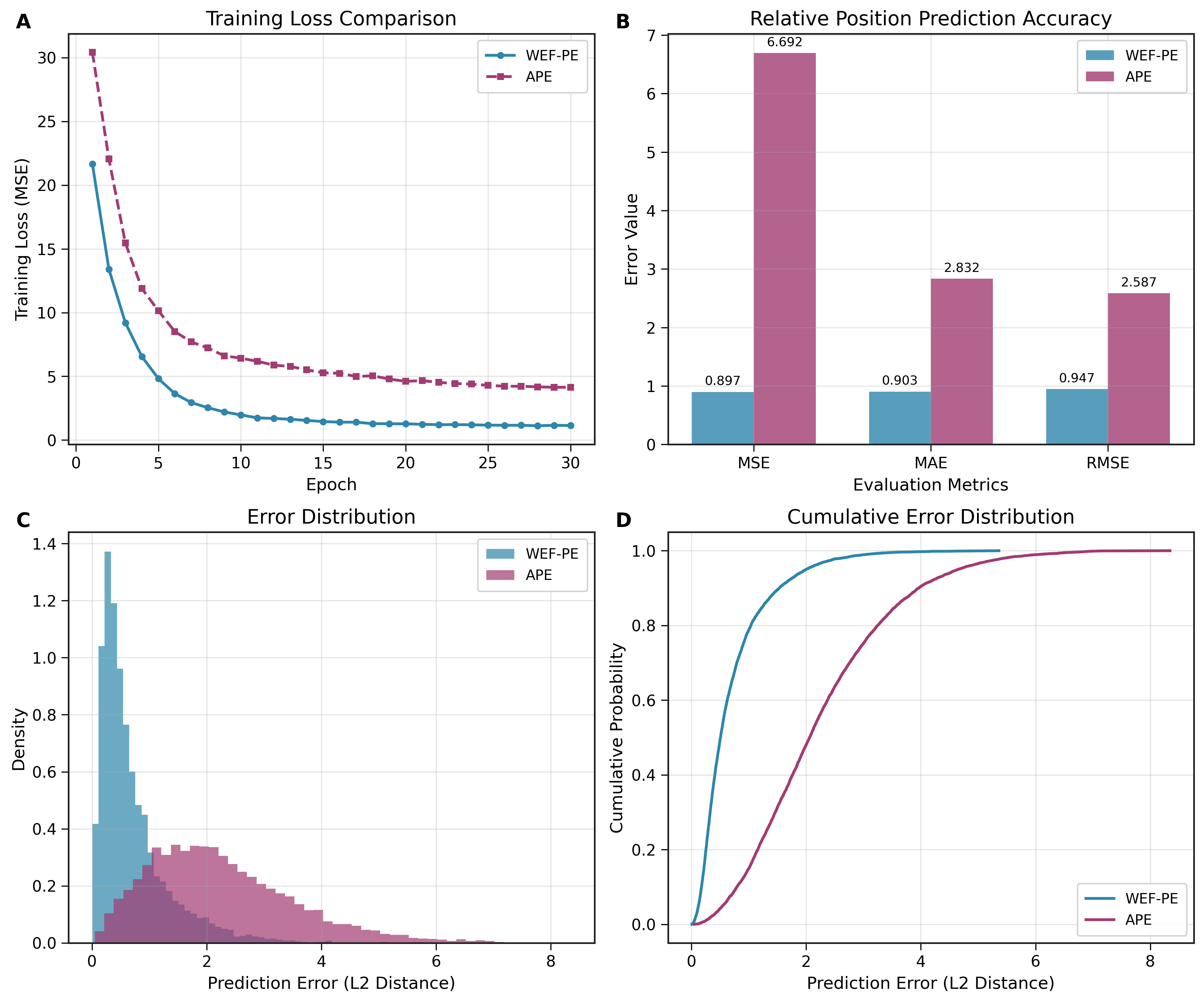}
			\caption{Quantitative validation of relative position awareness capabilities in WePE versus APE~\citep{dosovitskiy2020image} positional encodings. (A) Training loss convergence comparison showing faster and more stable learning dynamics for WePE. (B) Evaluation metrics demonstrating substantial improvements in prediction accuracy, with 86.6\%, 68.1\%, and 63.4\% reductions in MSE, MAE, and RMSE respectively. (C) Error distribution histograms revealing concentrated low-error predictions for WePE versus dispersed error patterns in APE~\citep{dosovitskiy2020image}. (D) Cumulative error distribution confirming superior prediction precision, with approximately 80\% of WePE predictions achieving sub-unit accuracy compared to broader error dispersion in APE-based representations.}
			\label{fig:relative_position_awareness}
		\end{figure}
		
		\subsection{Comparison with Alternative 2D Positional Encoding Schemes}
		\label{sec:comparison-2d-pe-schemes}
		
		To isolate the contribution of advanced mathematical structure from the fundamental advantage of preserving two-dimensional spatial relationships, we conducted systematic comparisons between WePE and alternative 2D positional encoding approaches that avoid the spatial flattening inherent in conventional APE methods~\citep{dosovitskiy2020image}. The experimental framework evaluated four distinct positional encoding schemes: our proposed WePE, traditional 1D flattened APE~\citep{dosovitskiy2020image}, 2D sinusoidal positional encoding extending classical Transformer sinusoidal patterns to two dimensions through independent coordinate-wise encoding, and 2D learnable grid positional encoding implementing direct parameter lookup based on spatial coordinates without intermediate flattening operations.
		
		The 2D sinusoidal approach generates positional representations by applying sine and cosine functions independently to horizontal and vertical coordinates, interleaving the resulting values as $\text{PE}[h,w,0::4] = \sin(w \cdot \text{div\_term}_x)$, $\text{PE}[h,w,1::4] = \cos(w \cdot \text{div\_term}_x)$, $\text{PE}[h,w,2::4] = \sin(h \cdot \text{div\_term}_y)$, and $\text{PE}[h,w,3::4] = \cos(h \cdot \text{div\_term}_y)$ where $\text{div\_term}$ follows the standard inverse frequency scaling. The 2D learnable grid method maintains a trainable parameter matrix of dimensions $H \times W \times D$ enabling direct coordinate-based lookup without spatial serialization, thereby preserving explicit two-dimensional indexing throughout the encoding process.
		
		Under identical training conditions with ViT-Tiny architecture on CIFAR-100~\citep{krizhevsky2009learning} for 60 epochs, the quantitative results demonstrate that WePE achieves superior performance with 60.03\% final validation accuracy and 60.28\% peak accuracy, followed closely by 2D sinusoidal encoding at 59.94\% final and 60.19\% peak accuracy. The 2D learnable grid approach yields 58.40\% accuracy for both final and peak measurements, while conventional 1D flattened APE produces 58.28\% final and 58.34\% peak accuracy. Training dynamics reveal that WePE maintains consistently lower training loss throughout the optimization process with smoother convergence characteristics compared to alternative approaches.

		\begin{figure}[ht]
			\centering
			\includegraphics[width=\columnwidth]{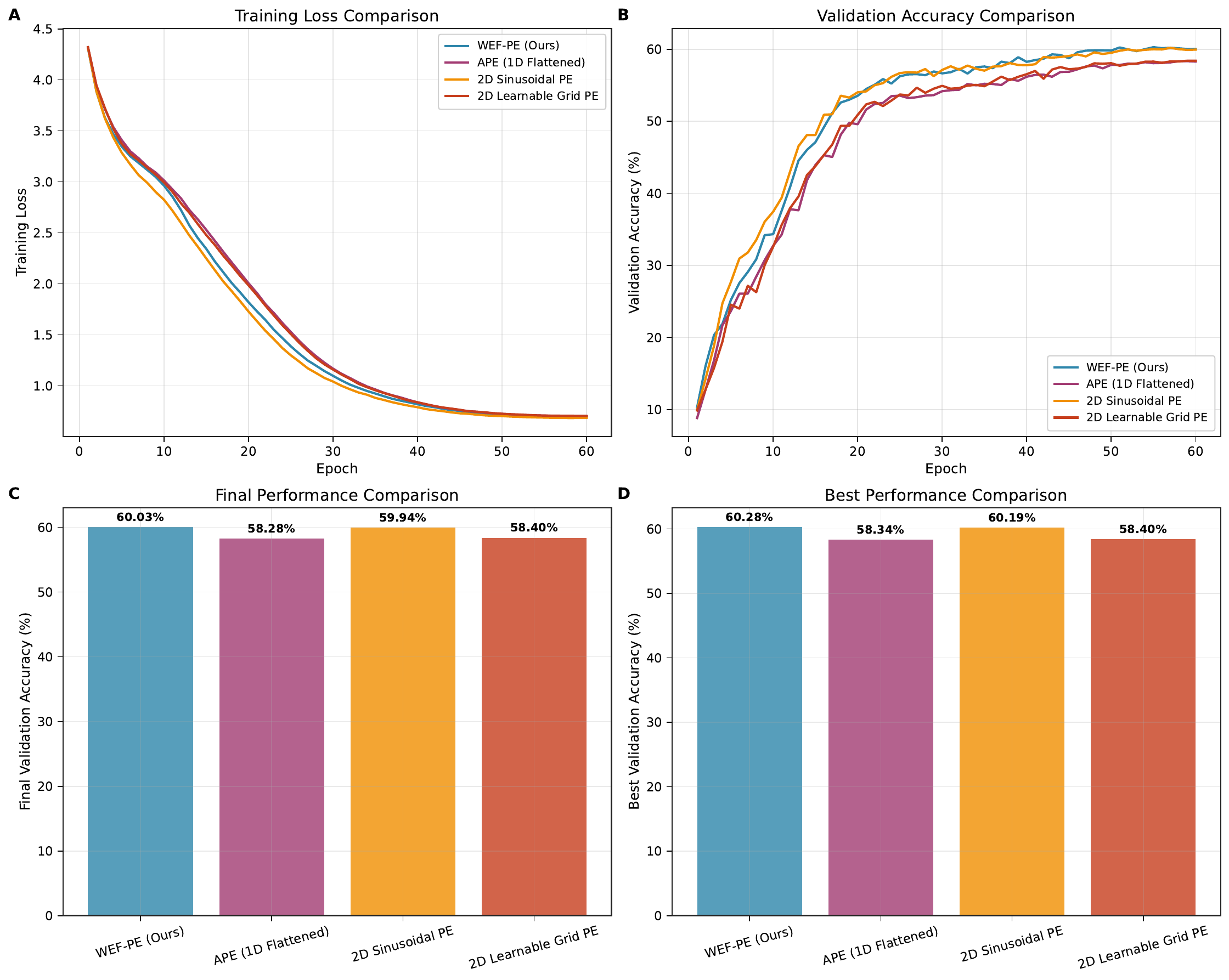}
			\caption{Comparative analysis of 2D positional encoding schemes. (A) Training loss dynamics for WePE, 1D flattened APE~\citep{dosovitskiy2020image}, 2D Sinusoidal PE, and 2D Learnable Grid PE over 60 epochs on CIFAR-100~\citep{krizhevsky2009learning}. (B) Corresponding validation accuracy curves. (C) Final validation accuracy comparison, where WePE achieves 60.03\%. (D) Best validation accuracy comparison, with WePE peaking at 60.28\%, demonstrating its superior performance over alternative 2D encoding strategies.}
			\label{fig:2d_pe_comparison}
		\end{figure}
		The experimental findings reveal that while preservation of two-dimensional spatial structure provides measurable advantages over traditional flattening approaches, the mathematical sophistication embedded within elliptic function-based encoding yields additional performance gains beyond those attributable solely to dimensionality considerations. The modest but consistent superiority of WePE over 2D sinusoidal encoding (0.09 percentage points final accuracy improvement) validates the hypothesis that continuous mathematical structure and inherent geometric properties contribute meaningfully to positional representation quality. The relatively strong performance of 2D sinusoidal encoding compared to learnable grid methods suggests that mathematical regularity and interpretability provide advantages over pure parameter optimization in spatial encoding tasks, supporting the broader principle that principled mathematical foundations enhance neural network architectural design for vision applications.

		\subsection{WePE exhibits better geometric inductive bias}
		\label{sec:wepe-geometric-bias}
		
		To further evaluate the structural properties of our proposed WePE, we visualized the positional encodings prior to any model training, as illustrated in ~\Cref{fig:structural_analysis}. The analysis comes from two aspects: a PCA~\citep{bishop2006pattern} to reveal the embedding manifold, and a cosine similarity matrix to expose their relational structure. In PCA~\citep{bishop2006pattern} space, WePE forms a highly structured, spiral-like manifold, that faithfully preserves the original 2D spatial arrangement, as confirmed by the color gradient-encoded patch coordinates. By contrast,  the APE~\citep{dosovitskiy2020image} projects into an unstructured, Gaussian-like cloud, demonstrating a complete lack of inherent spatial organization. Furthermore, the cosine similarity matrix of WePE displays a distinct, periodic, and grid-like pattern, indicating that the relationships between encodings are systematically governed by their relative spatial distances. The APE~\citep{dosovitskiy2020image} matrix, however, resembles random noise aside from the identity diagonal, confirming the absence of any pre-defined relational structure.
		
		\begin{figure}[h]
			\centering 
			\includegraphics[width=\columnwidth]{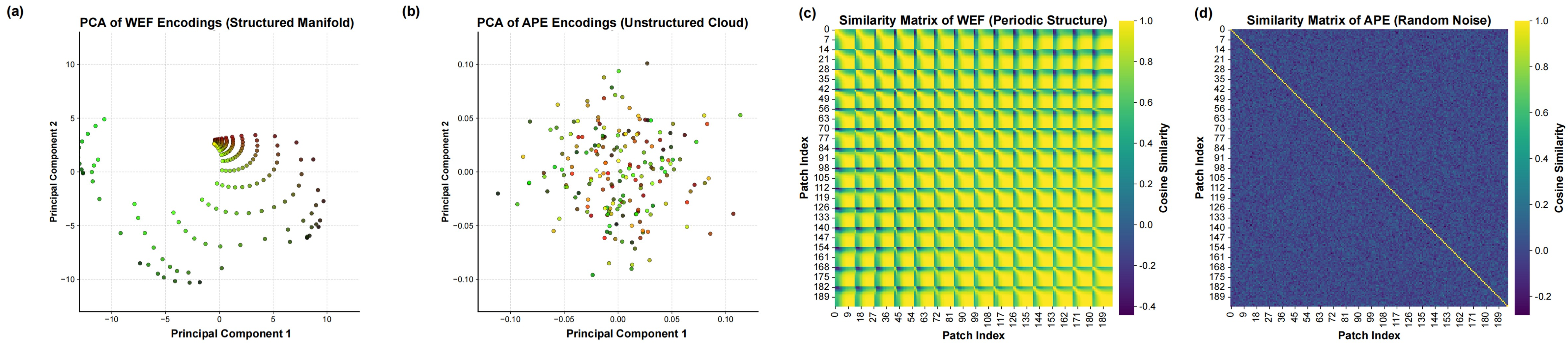} 
			\caption{Structural properties of positional encodings revealed through PCA~\citep{bishop2006pattern} and cosine similarity. (a, b) PCA~\citep{bishop2006pattern} projections demonstrate WePE forms structured spiral manifolds that preserving spatial topology, while APE~\citep{dosovitskiy2020image} appears as unstructured Gaussian-like clouds. (c, d) Cosine similarity matrices reveal that WePE displays periodic, grid-like patterns reflecting systematic spatial relationships, contrasting with largely random patterns of APE~\citep{dosovitskiy2020image}.}
			\label{fig:structural_analysis}
		\end{figure}

		\subsection{Supplement to the experiment in Section~\ref{sec:long_term_attenuation}}	
		\label{sec:supplement_long_term}
		
		\begin{figure}[t]
			\centering
			\includegraphics[width=\columnwidth]{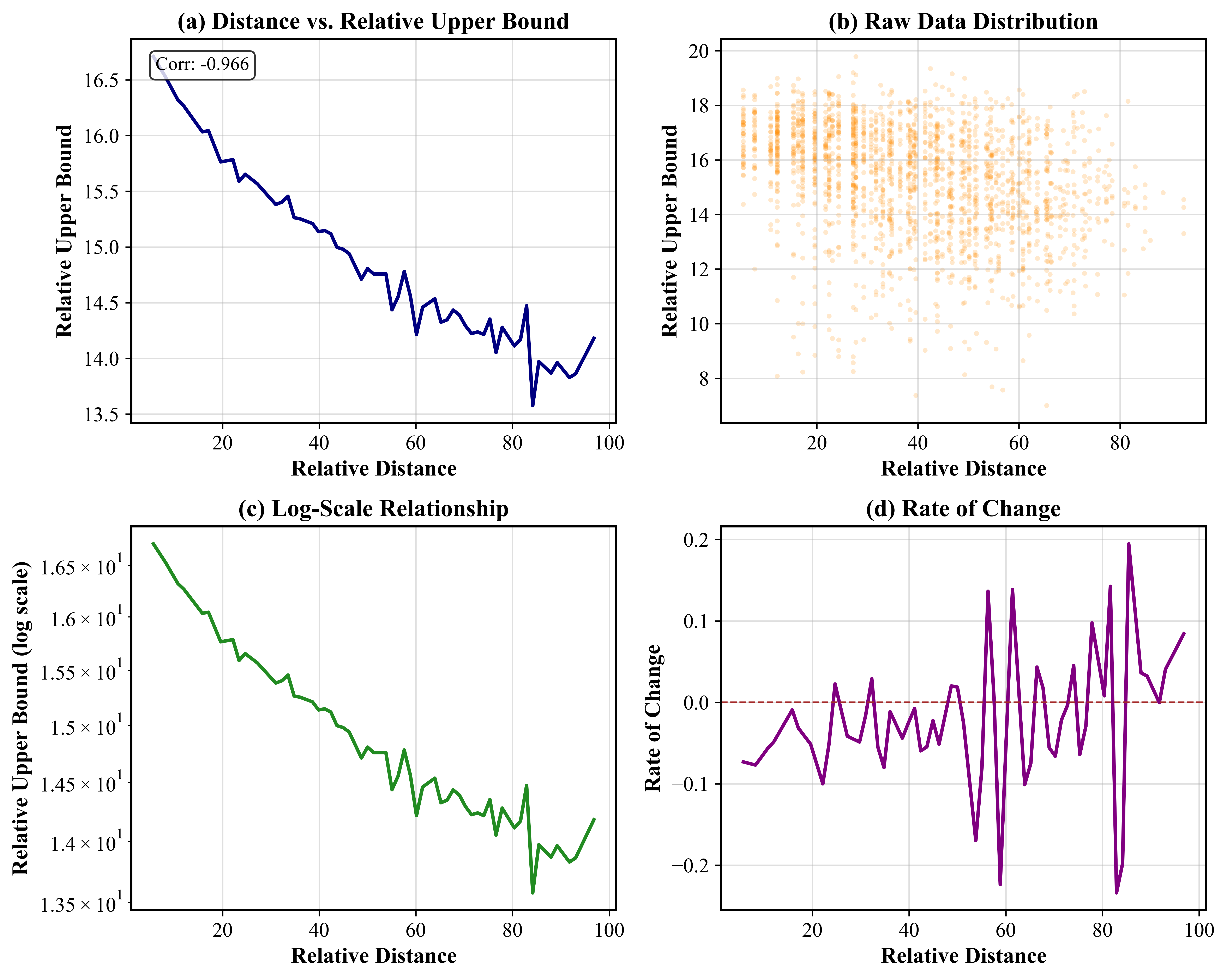}
			\caption{Quantitative analysis of distance-decay properties in WePE.
				(a) Scatter plot and fitted curve demonstrate strong negative correlation between relative patch distance and interaction strength. 
				(b) Raw data distribution across 19,110 patch pairs. 
				(c) Log-scale relationship confirming exponential decay characteristics. 
				(d) Rate of change analysis revealing monotonic decrease in similarity with increasing spatial separation.}
			\label{fig:app-distance-decay-quad}
		\end{figure}

		To validate the distance, decay property of the WePE, we designed a systematic experiment to quantitatively analyze the relationship between positional encoding interaction strength and spatial distance. 
		The experiment was based on the ViT-Ti architecture (embed\_dim=192, patch\_size=16), processing input images of size $224 \times 224$, which results in a $14 \times 14$ grid of patches, totaling 196 spatial locations.
		
		The core methodology involves establishing a quantitative relationship between the relative distance of all patch pairs and the interaction strength of their corresponding positional encodings. 
		Specifically, for any two patch locations $(i_1, j_1)$ and $(i_2, j_2)$, we first compute their Euclidean distance. 
		Subsequently, we extract their corresponding Weierstrass elliptic function positional encodings, $\mathbf{p}_{i_1,j_1}$ and $\mathbf{p}_{i_2,j_2}$, and compute their cosine similarity as a metric for interaction strength:
		\begin{equation}
			S_{i,j} = \frac{\mathbf{p}_i^\top \mathbf{p}_j}{\|\mathbf{p}_i\| \, \|\mathbf{p}_j\|}.
			\label{eq:cosine_similarity}
		\end{equation}
		
		Cosine similarity, which normalizes out the influence of vector magnitudes, provides a purer reflection of directional correlation and serves as a key indicator for evaluating the quality of positional encodings.
		
		To eliminate the dependency on image size, the distance is normalized into a relative distance:
		\begin{equation}
			d_{\mathrm{relative}} = \frac{d_{\mathrm{euclidean}}}{d_{\max}} \times 100,
			\label{eq:relative_distance}
		\end{equation}
		where $d_{\max}$ is the maximum possible distance between any two patches in the image.
		
		To enhance the visual interpretability of the relationship between inter-patch distance and interaction strength, we applied a linear transformation to the raw cosine similarity scores. 
		The primary motivation for this transformation is to normalize the similarity values, which may originally occupy a narrow numerical range, into a standardized and more visually dynamic scale. 
		This ensures consistent and comparable graphical representation across different experimental settings:
		\begin{equation}
			s_{\mathrm{rel}} = C_{\mathrm{base}} + \frac{s - s_{\min}}{s_{\max} - s_{\min}} \times C_{\mathrm{range}},
			\label{eq:similarity_normalization}
		\end{equation}
		where $s_{\min}$ and $s_{\max}$ represent the minimum and maximum observed cosine similarity values across the entire dataset of patch pairs, respectively. 
		The term $\tfrac{s-s_{\min}}{s_{\max}-s_{\min}}$ performs a min--max normalization, scaling the similarity scores to the range $[0,1]$. 
		In our specific analysis, we set the base constant $C_{\mathrm{base}}=6$ and the range constant $C_{\mathrm{range}}=14$, thereby mapping the original similarity scores to a new, standardized interval of $[6,20]$. 
		This procedure facilitates a clearer visualization of the decay trend by amplifying the dynamic range of the dependent variable.
		
		\begin{table}[t]
			\centering
			\caption{Quantitative Analysis Results of the Distance--Interaction Strength Relationship}
			\label{tab:distance_decay_results}
			\setlength{\tabcolsep}{4pt} 
			\begin{tabularx}{\columnwidth}{@{}Xl@{}}
				\toprule
				\textbf{Metric} & \textbf{Value} \\
				\midrule
				Pearson Correlation Coefficient $\rho$ & -0.966 \\
				Relative Distance Range & [0, 100] \\
				Relative Upper Bound Range & [13.5, 16.5] \\
				Initial Interaction Strength & 16.5 \\
				Final Interaction Strength & 13.8 \\
				Decay Magnitude $\Delta_{\mathrm{decay}}$ & 16.4\% \\
				Monotonicity Metric $\mathcal{M}$ & 87.5\% \\
				\bottomrule
			\end{tabularx}
		\end{table}

		Finally, a statistical summary is generated to distill the underlying trend from the point cloud of data. 
		We bin the data into 80 equi-width intervals based on distance. 
		For each of the 80 distance bins, indexed from $k=1$ to $80$, we aggregate all data points whose relative distance $d_{\mathrm{rel}}$ falls within the bin’s range $[d_k, d_{k+1}]$. 
		We then compute the arithmetic mean of the corresponding relative upper bound values within this bin, denoted as $\bar{s}_k$. 
		To represent the distance for each bin, we use its midpoint, calculated as $d_k = (d_k+d_{k+1})/2$. 
		This procedure culminates in two corresponding sequences: a sequence of bin centers $\{d_k\}$ and a sequence of average relative upper bounds $\{\bar{s}_k\}$. 
		These sequences effectively constitute a discrete function that quantitatively describes the relationship between relative distance and average interaction strength.The data of this experiment are presented in Figure~\ref{fig:app-distance-decay-quad} and Table~\ref{tab:distance_decay_results}.

		\subsection{Further supplementary attention visualizations}
		
		\begin{figure}[ht]
			\centering
			\begin{minipage}{\columnwidth}
				\centering
				\setlength{\tabcolsep}{0pt} 
				\renewcommand{\arraystretch}{0}
				
				\begin{tabular}{@{}cccccc@{}}
					\tile{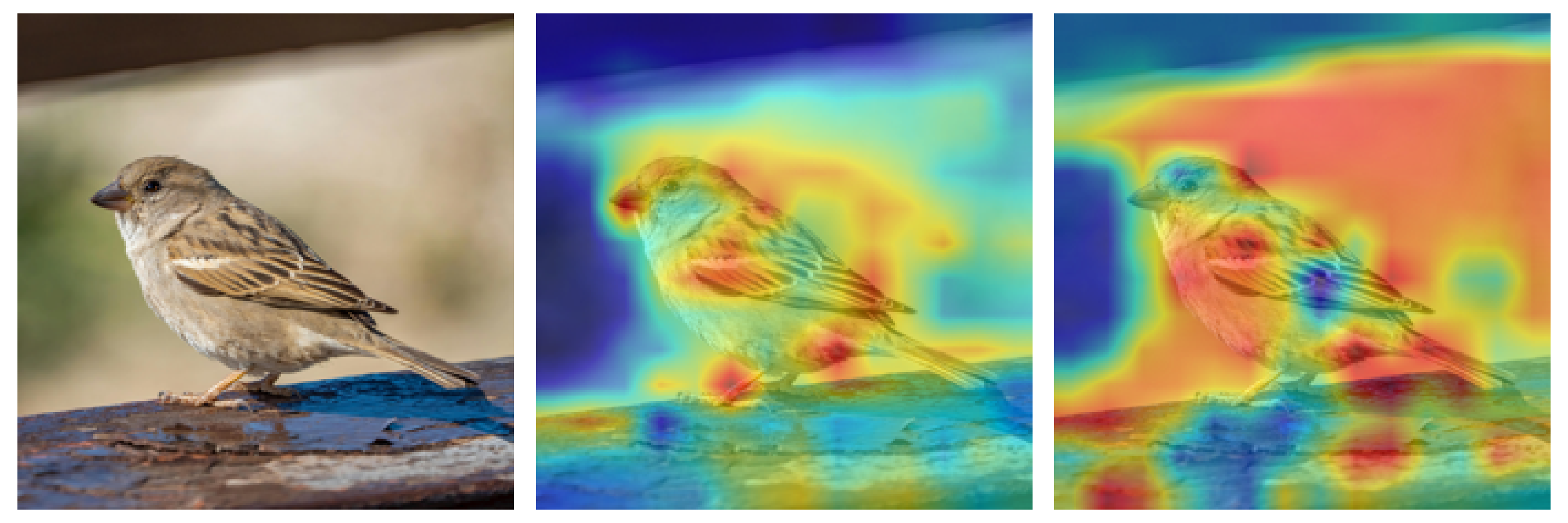}{1} &
					\tile{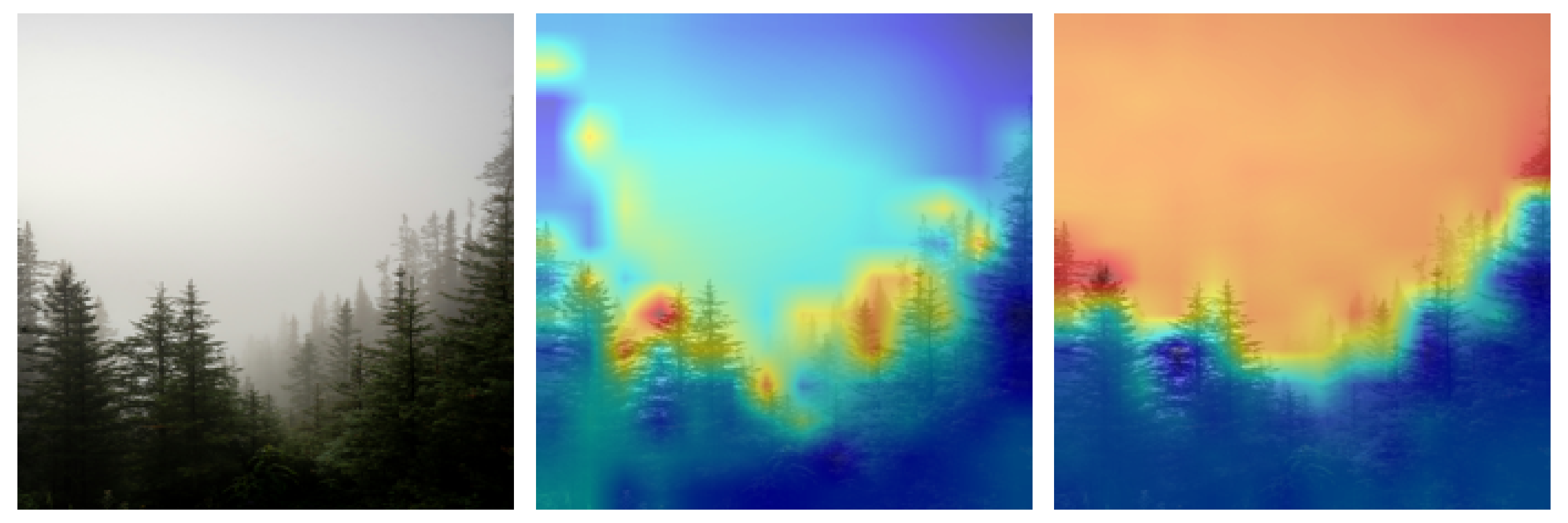}{2} &
					\tile{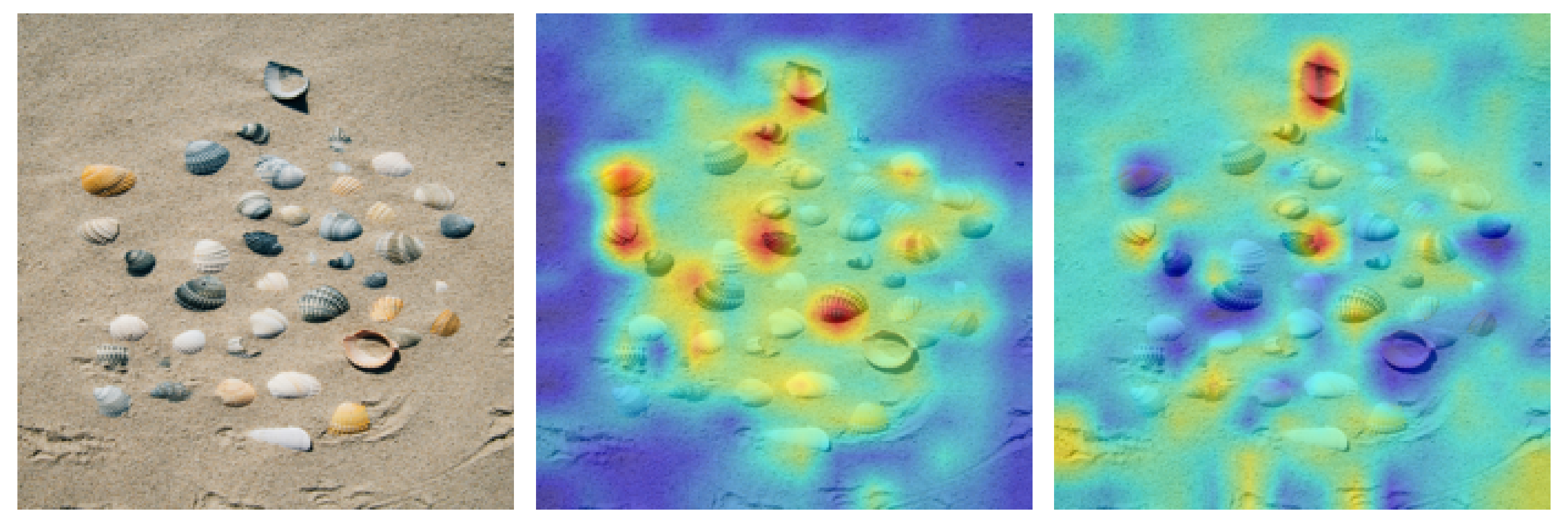}{3} &
					\tile{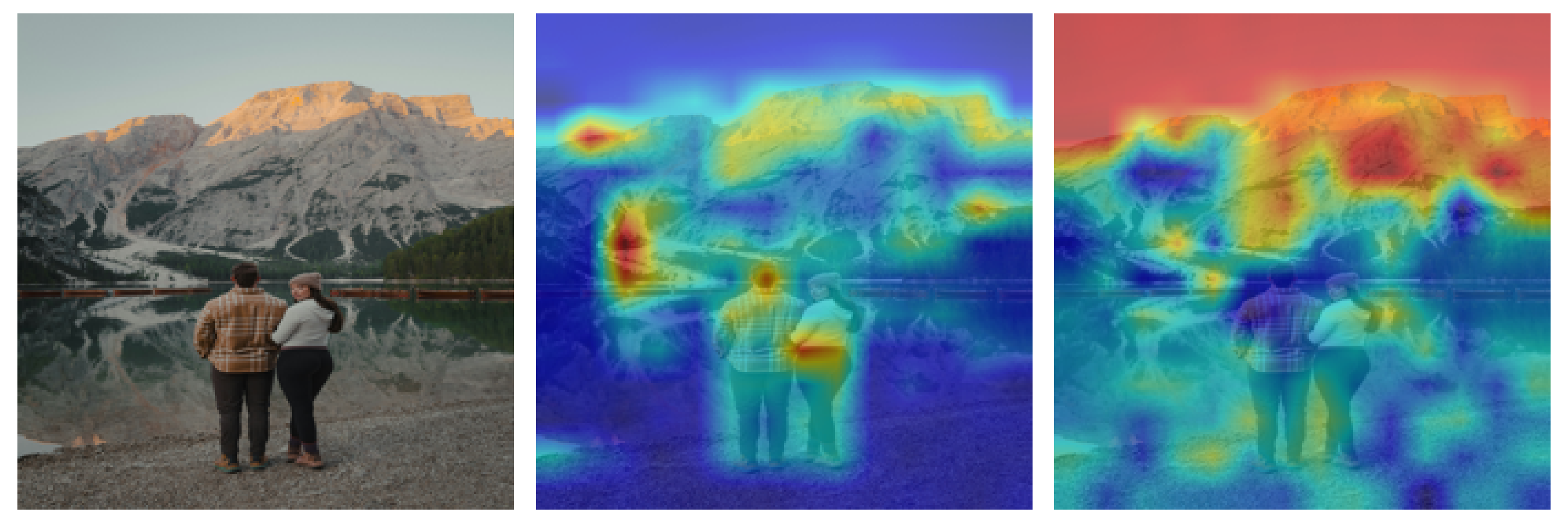}{4} &
					\tile{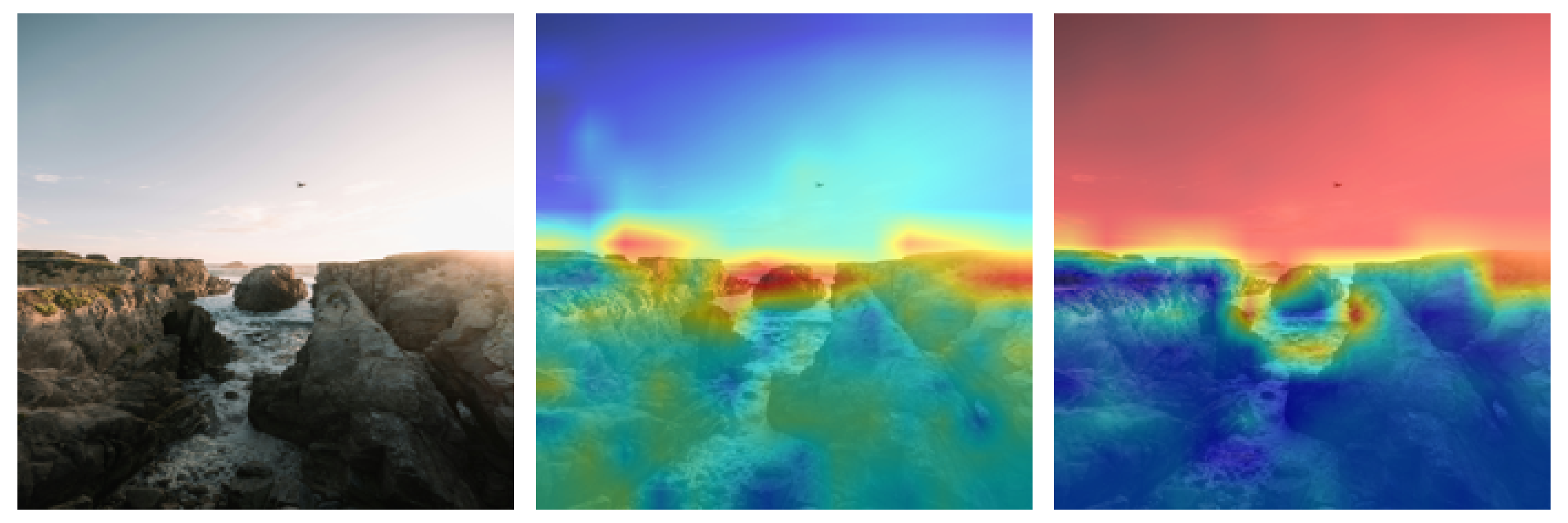}{5} &
					\tile{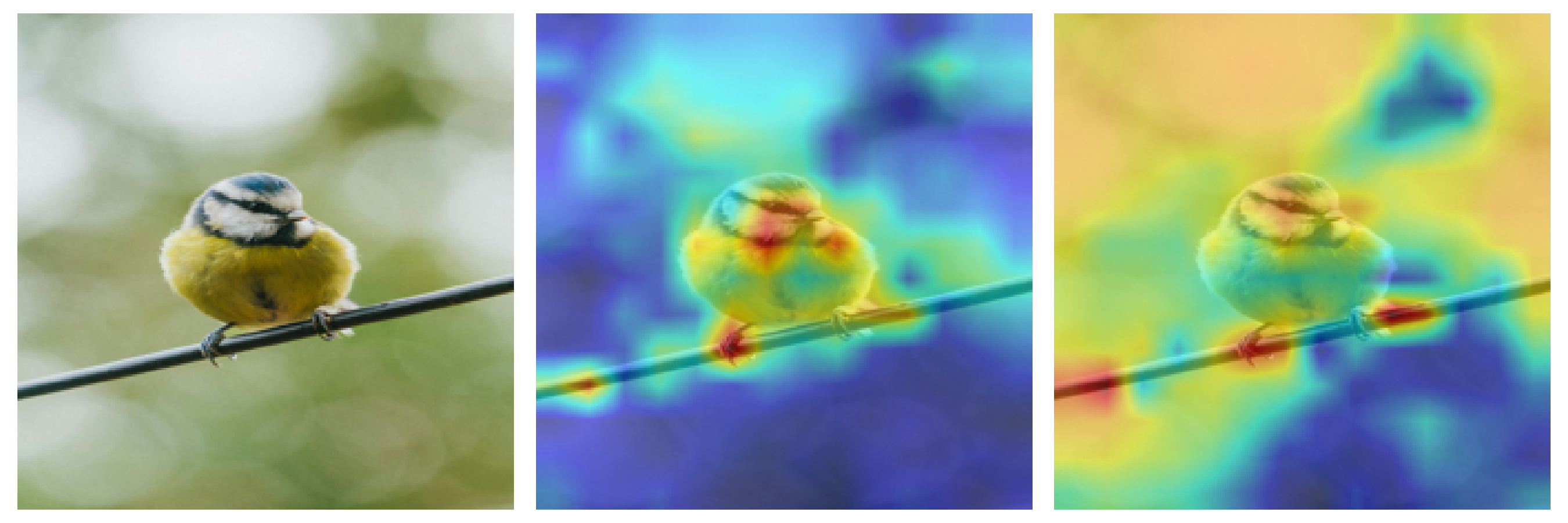}{6} \\
					\tile{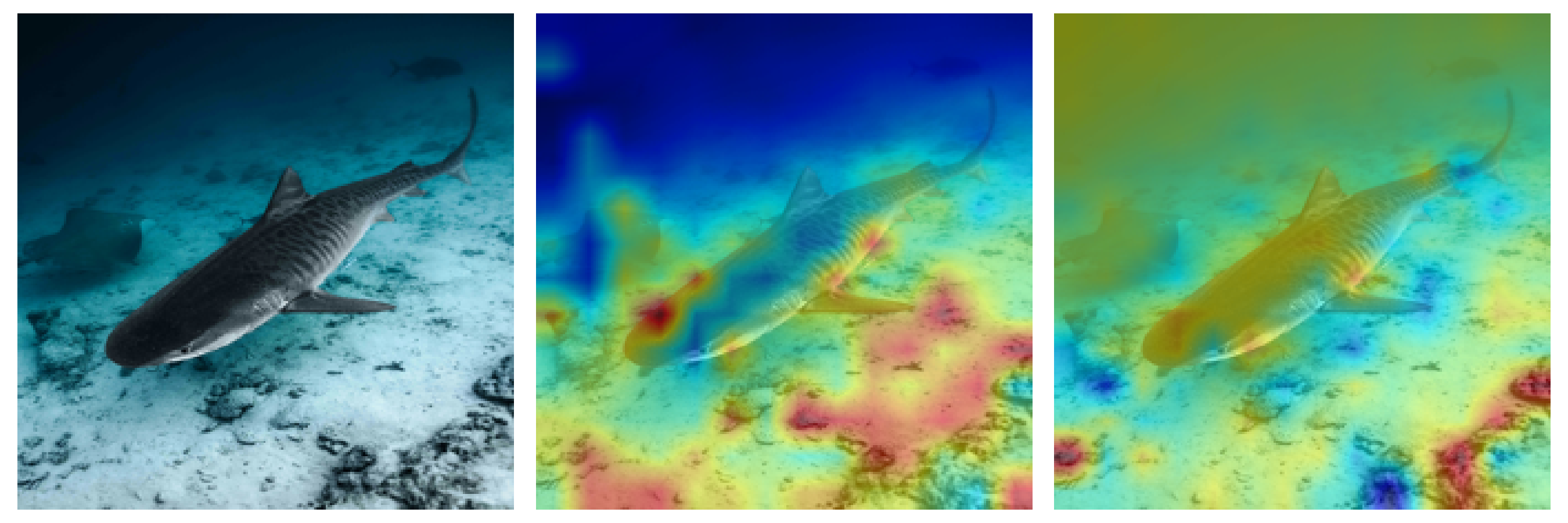}{7} &
					\tile{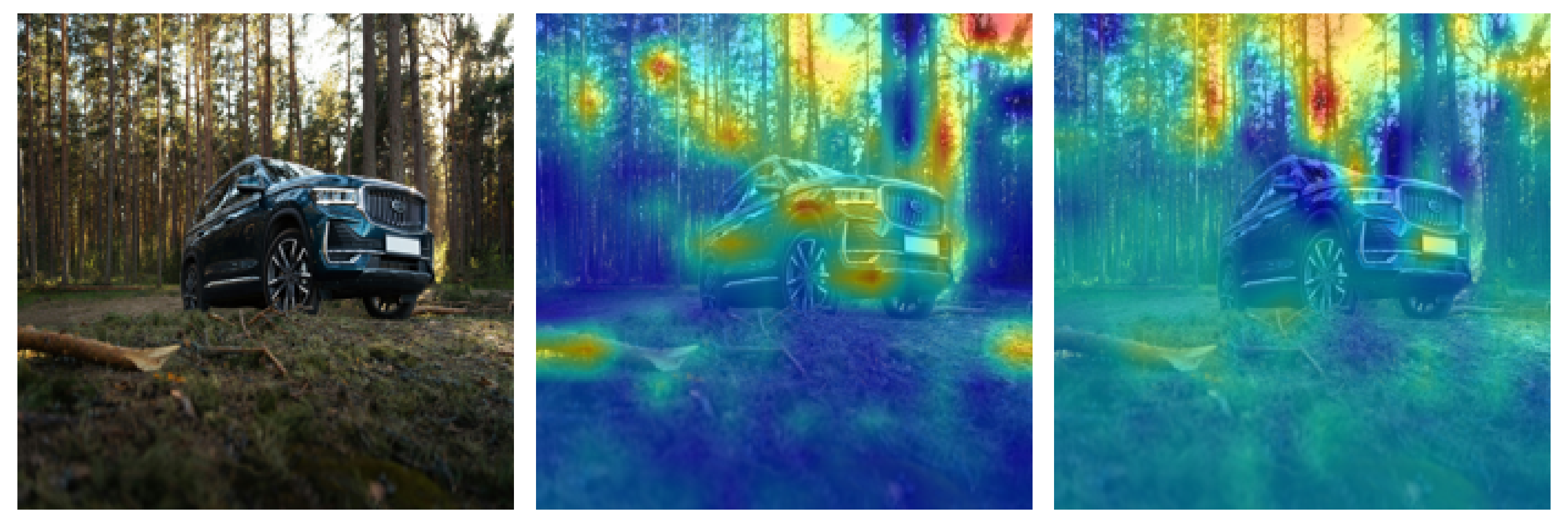}{8} &
					\tile{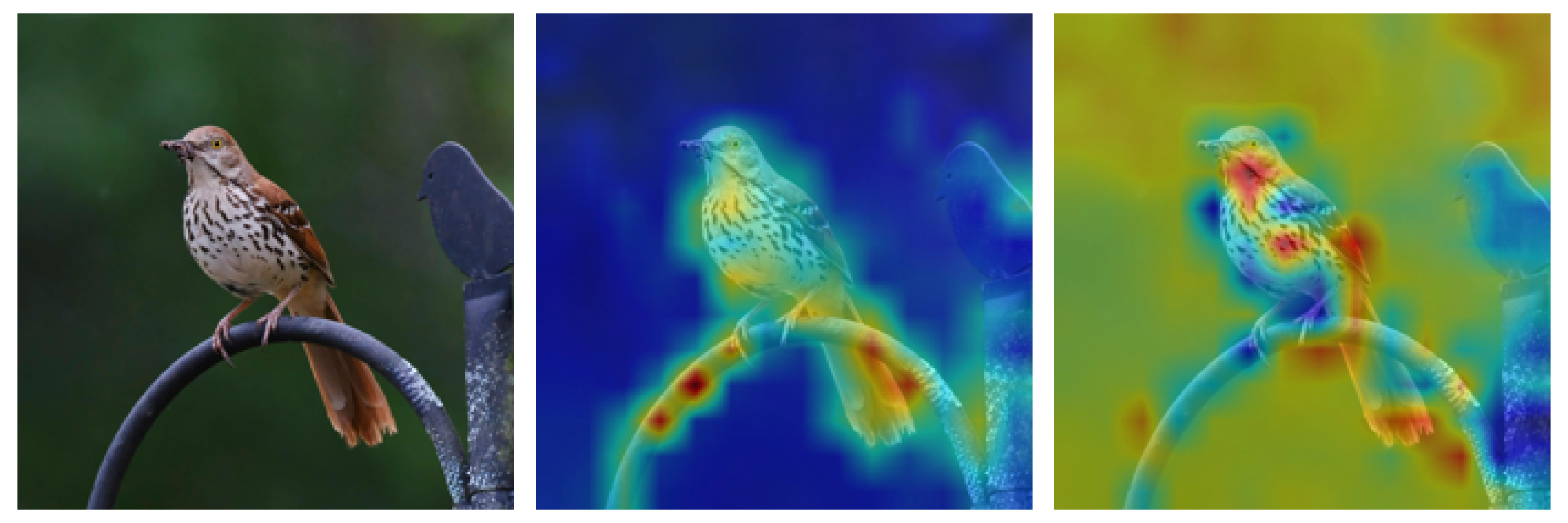}{9} &
					\tile{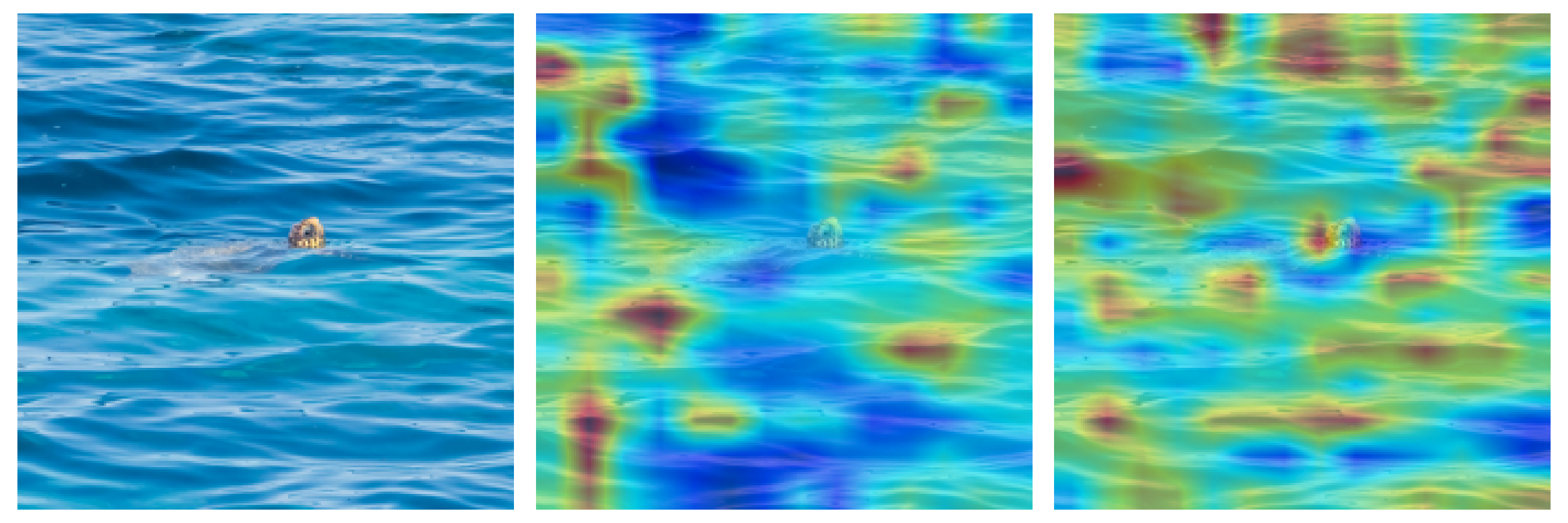}{10} &
					\tile{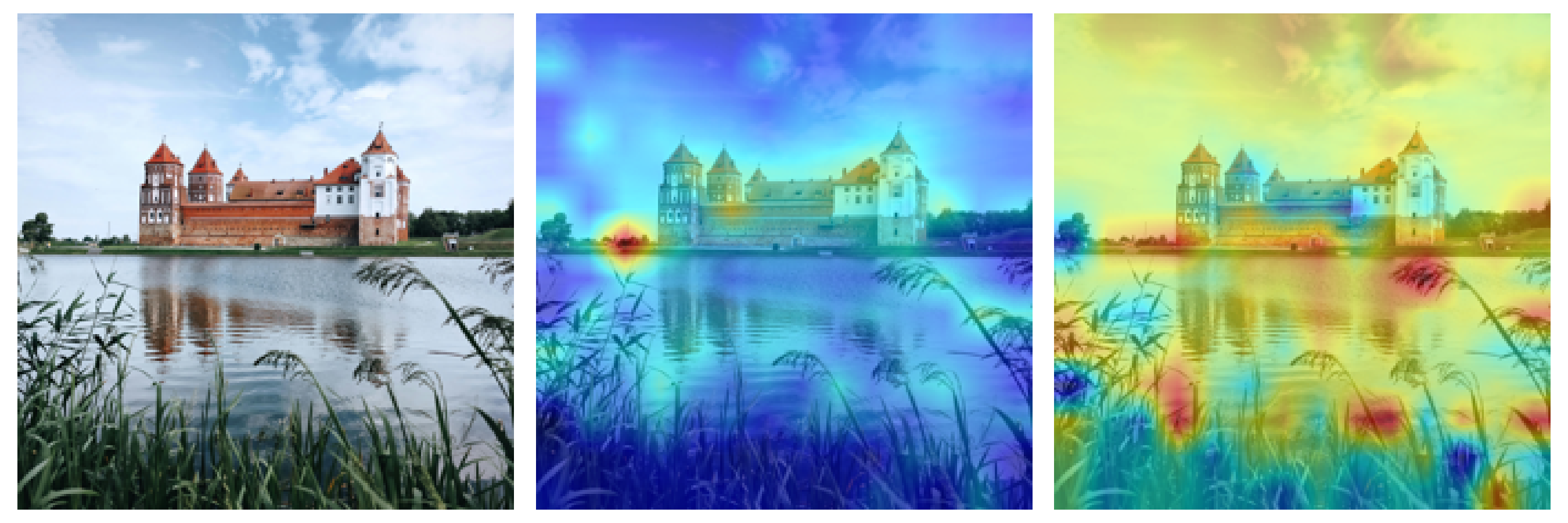}{11} &
					\tile{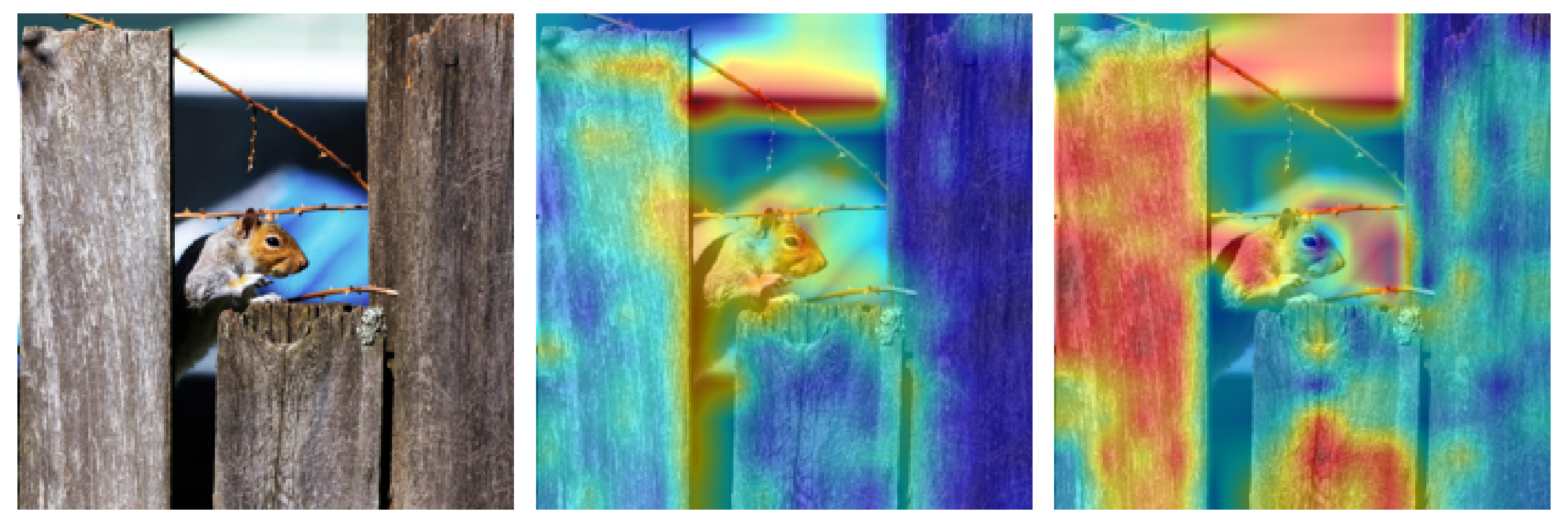}{12} \\
					\tile{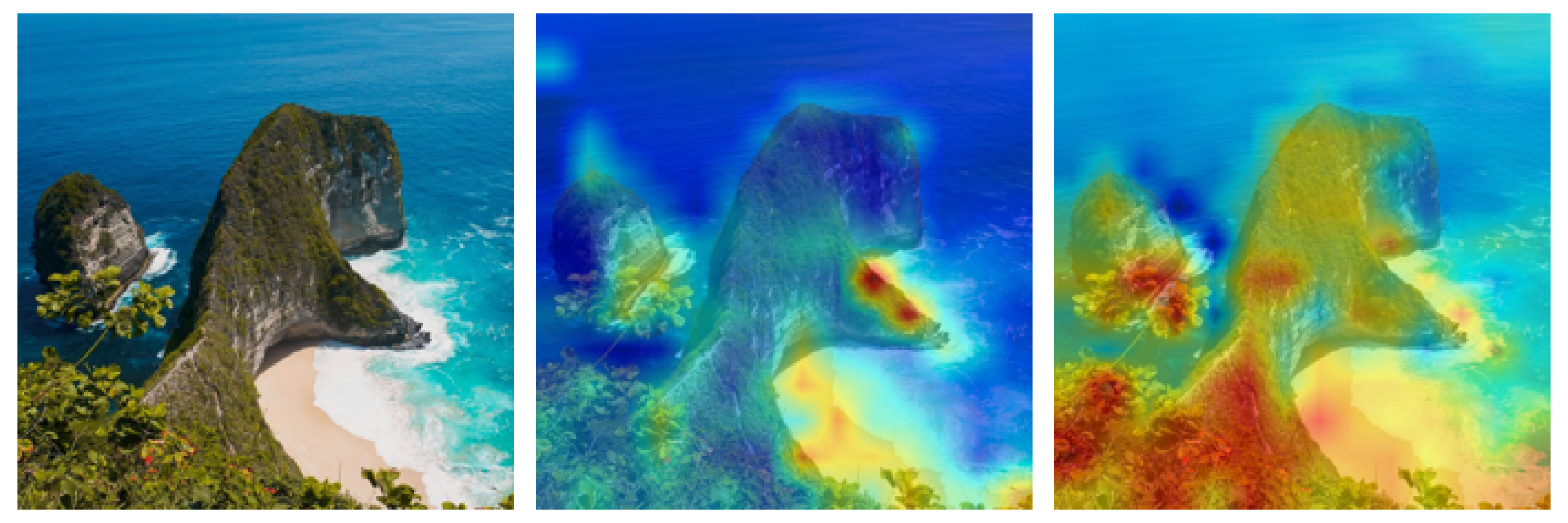}{13} &
					\tile{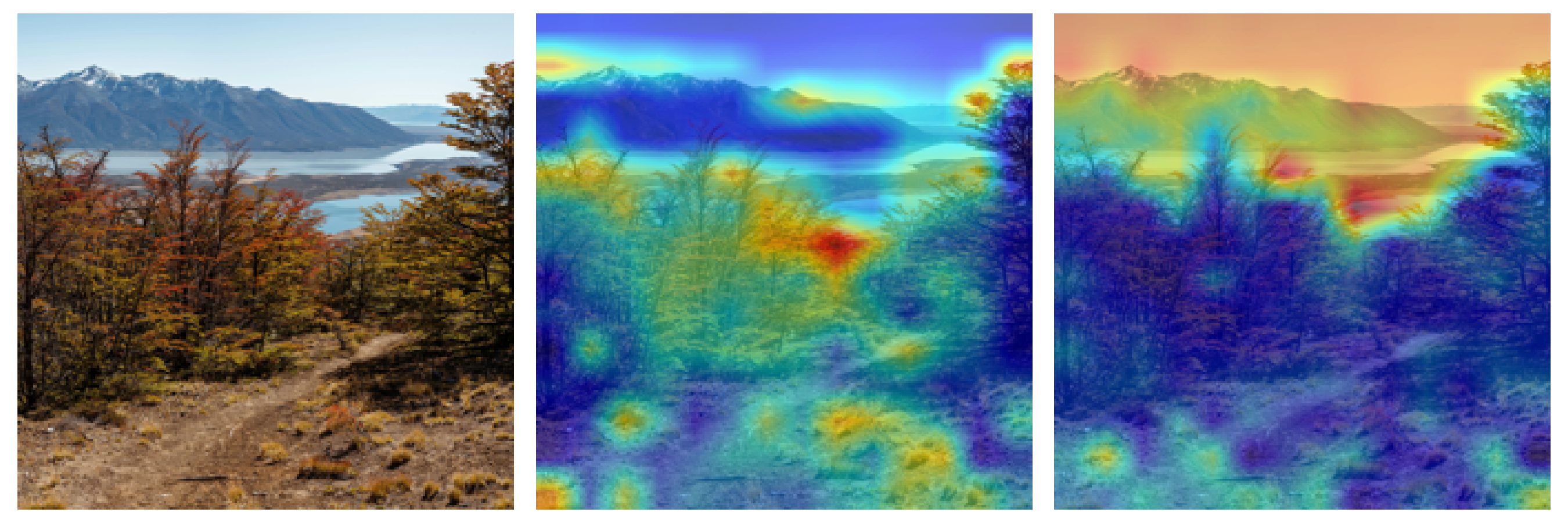}{14} &
					\tile{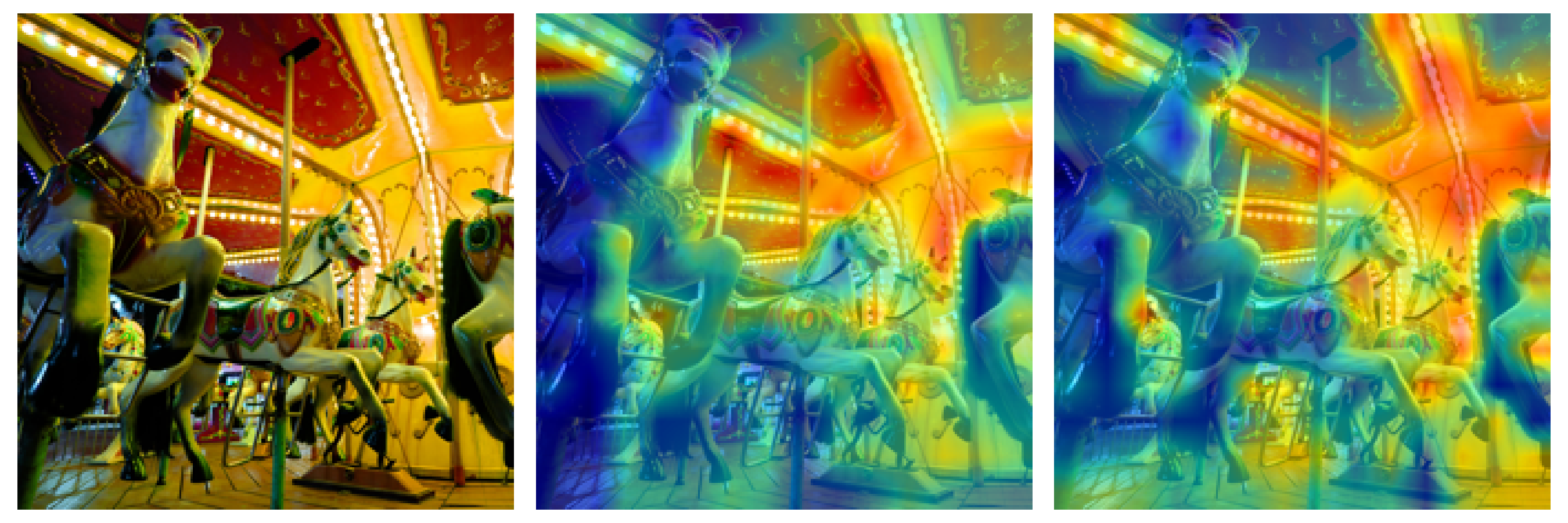}{15} &
					\tile{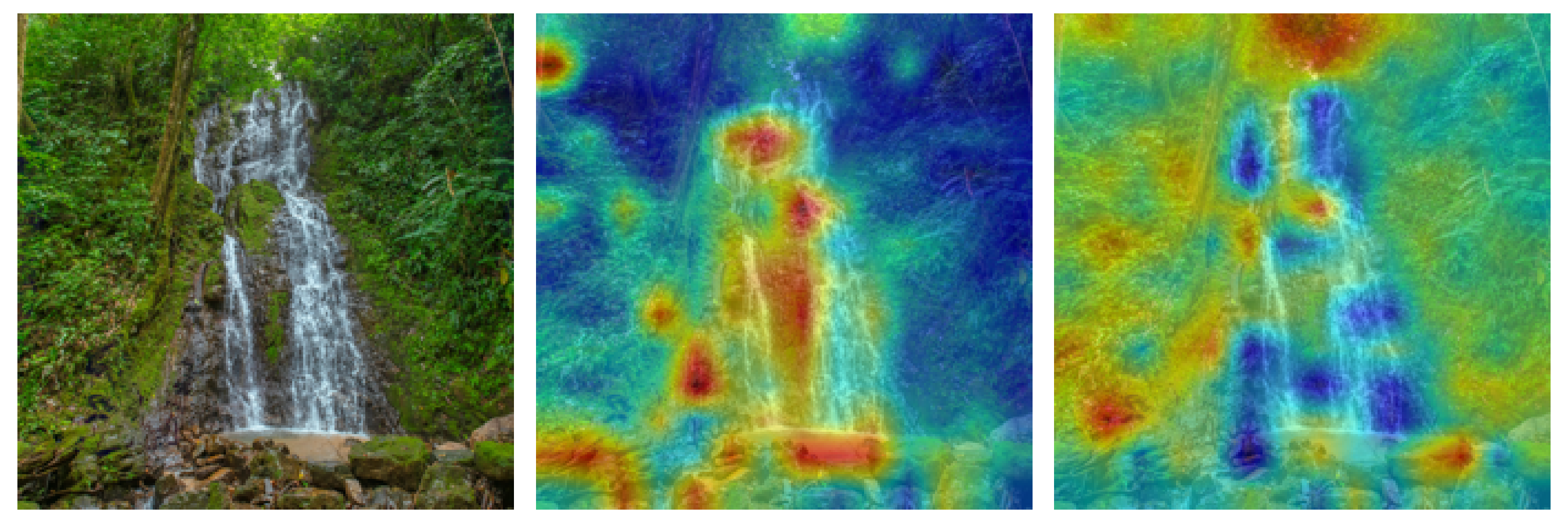}{16} &
					\tile{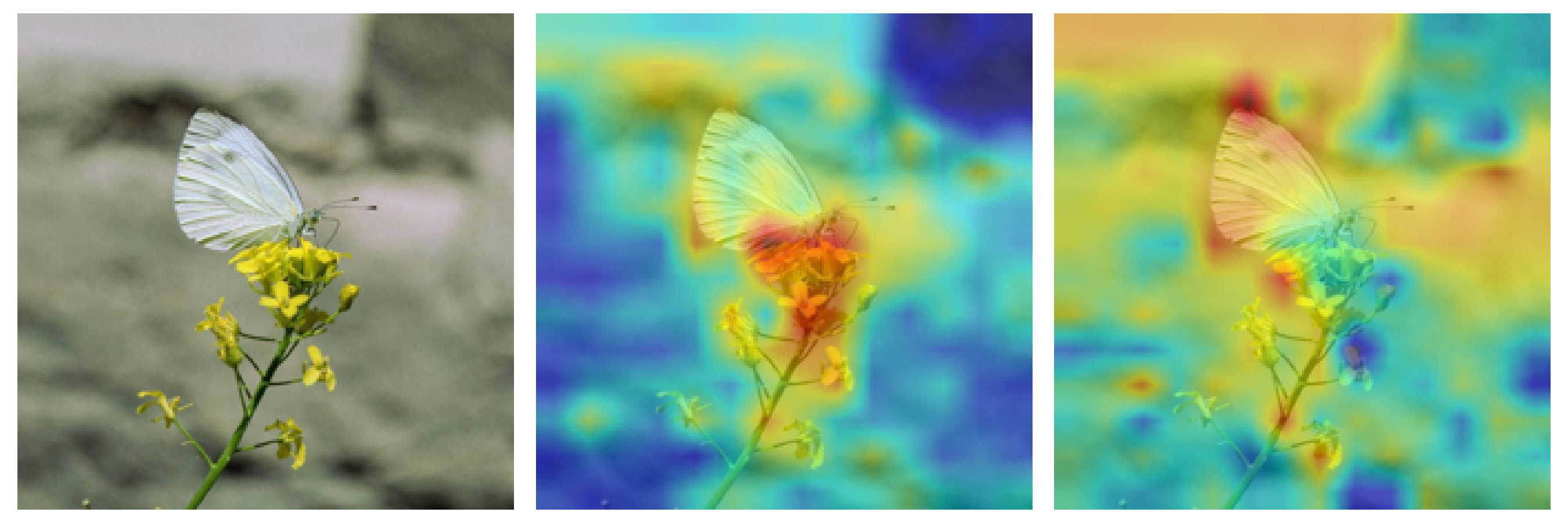}{17} &
					\tile{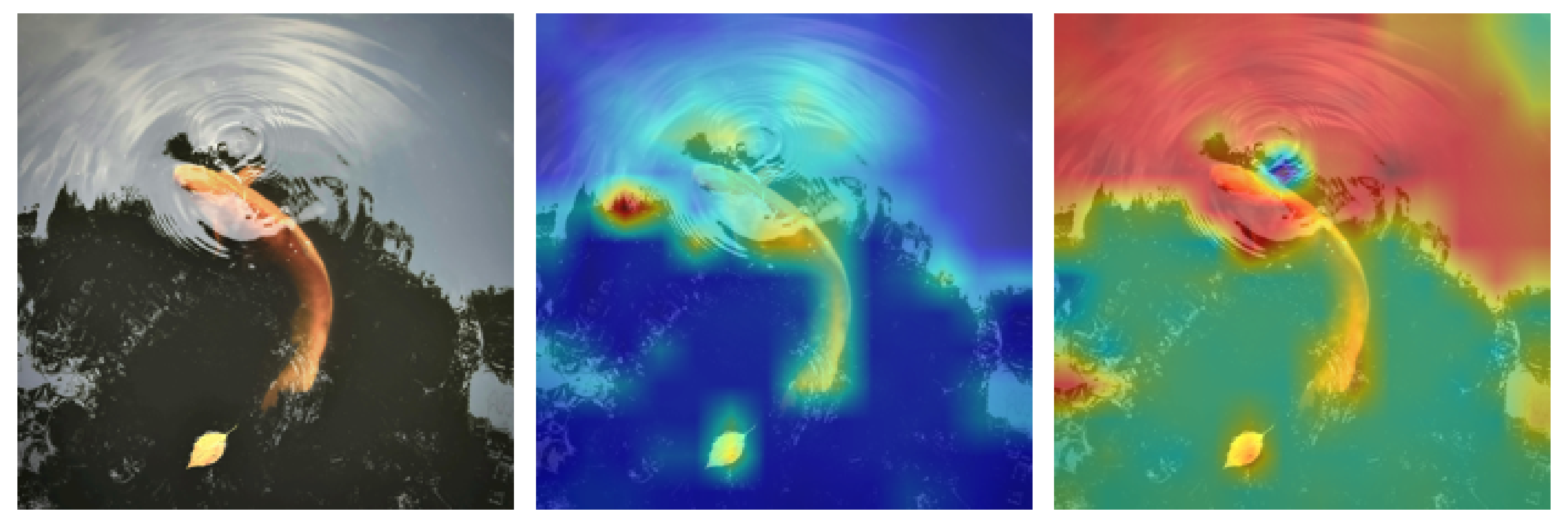}{18} \\
					\tile{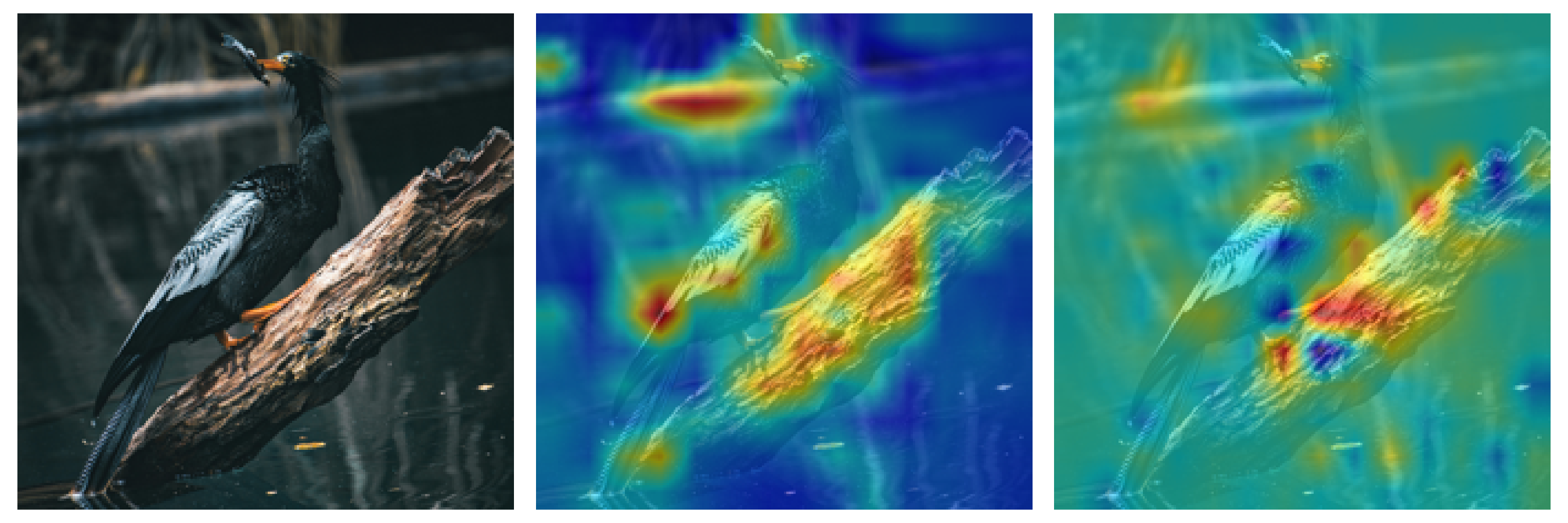}{19} &
					\tile{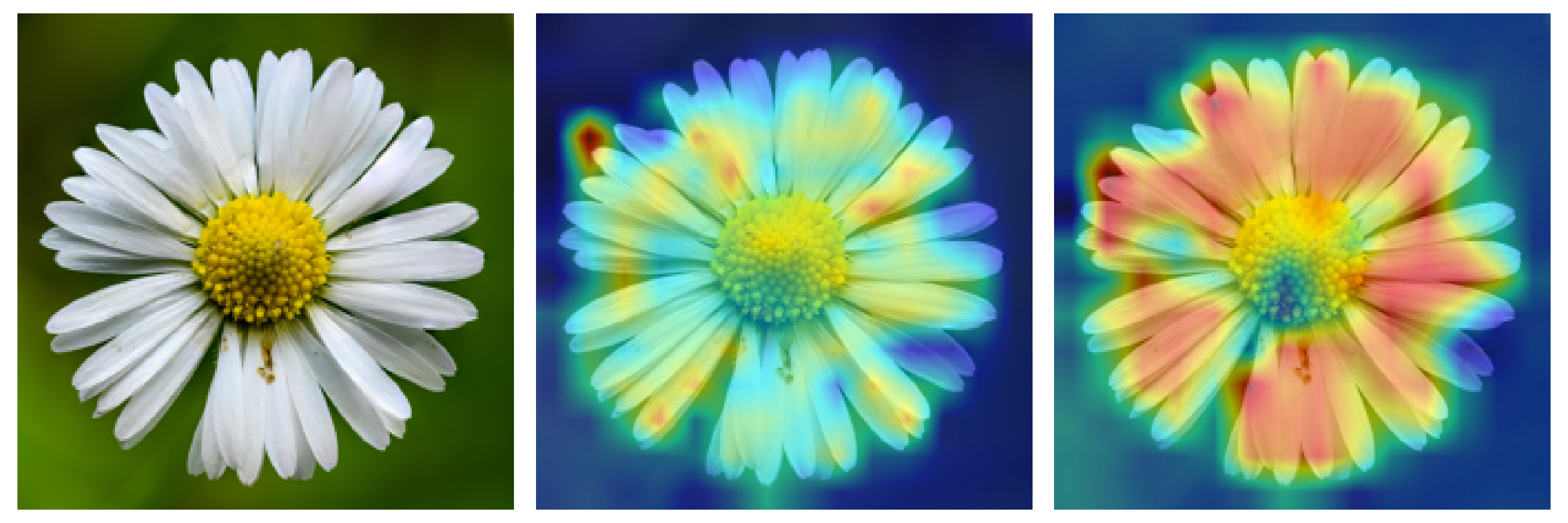}{20} &
					\tile{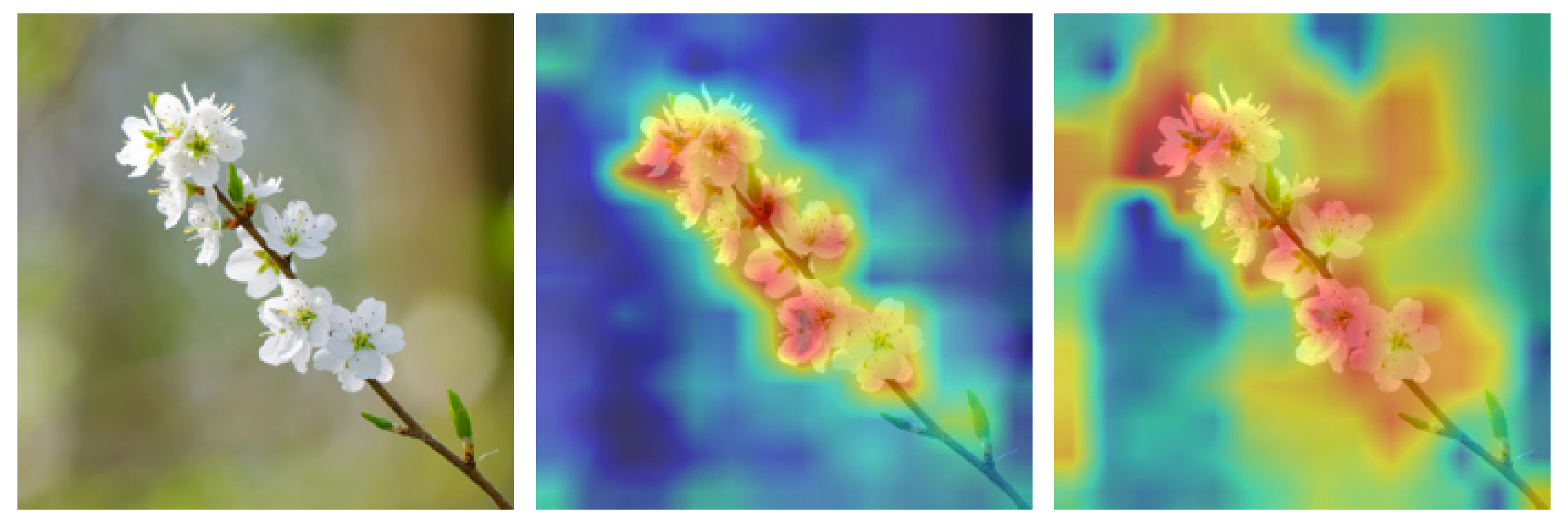}{21} &
					\tile{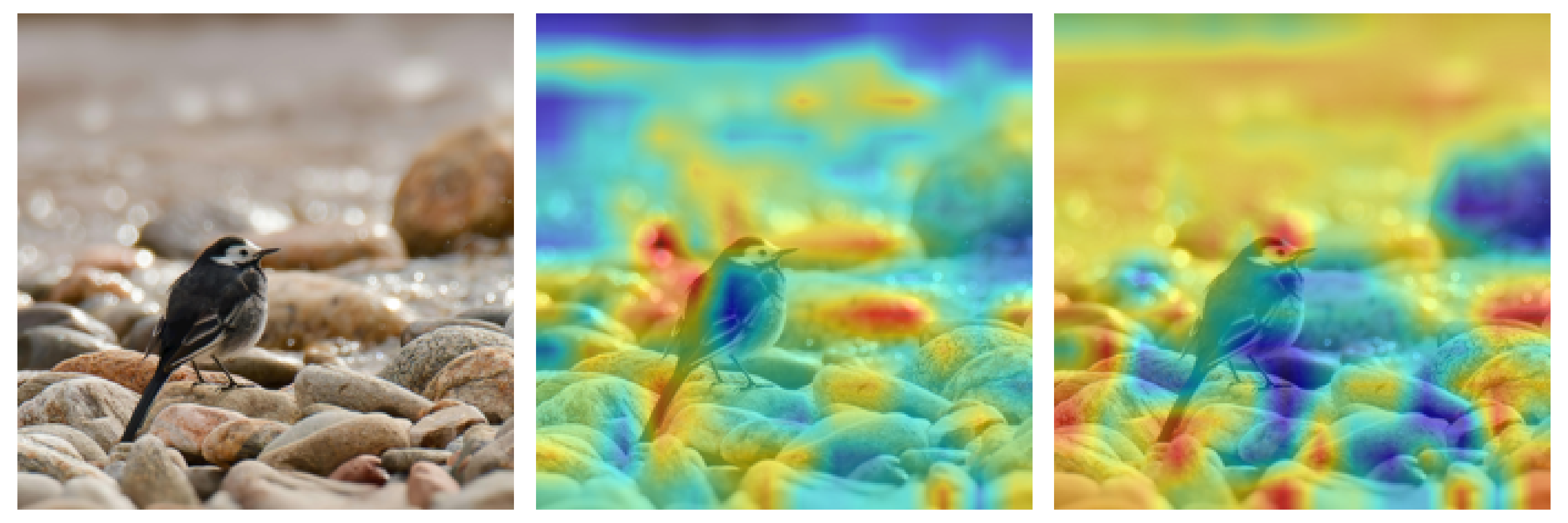}{22} &
					\tile{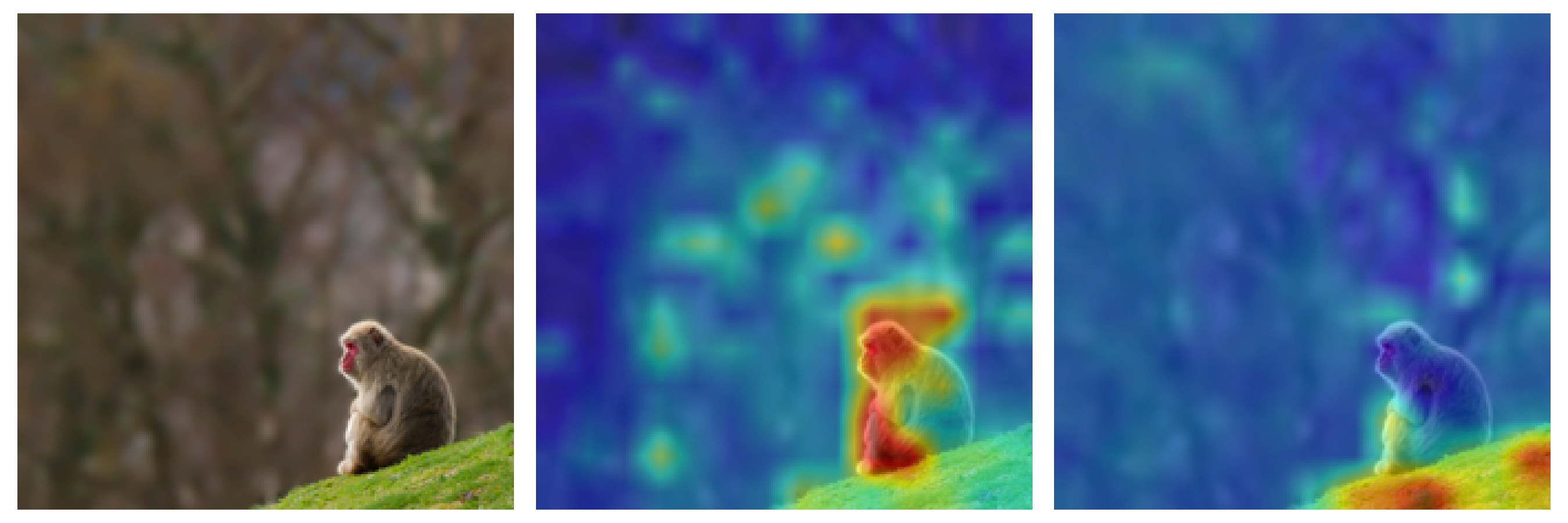}{23} &
					\tile{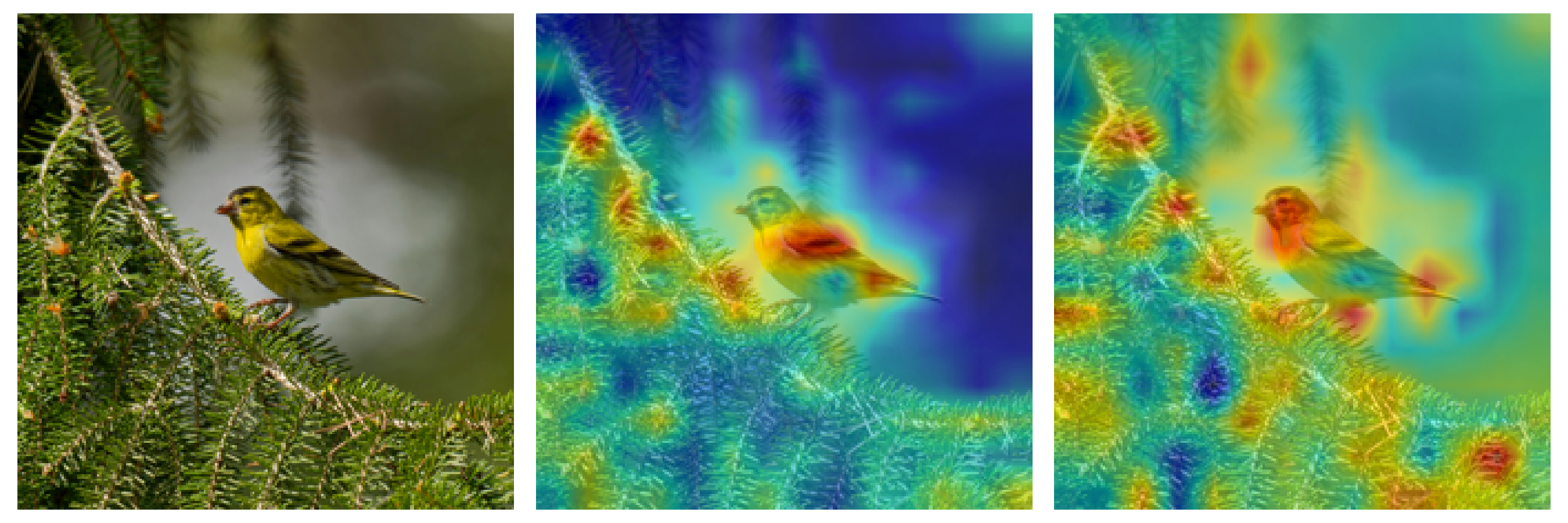}{24} \\
					\tile{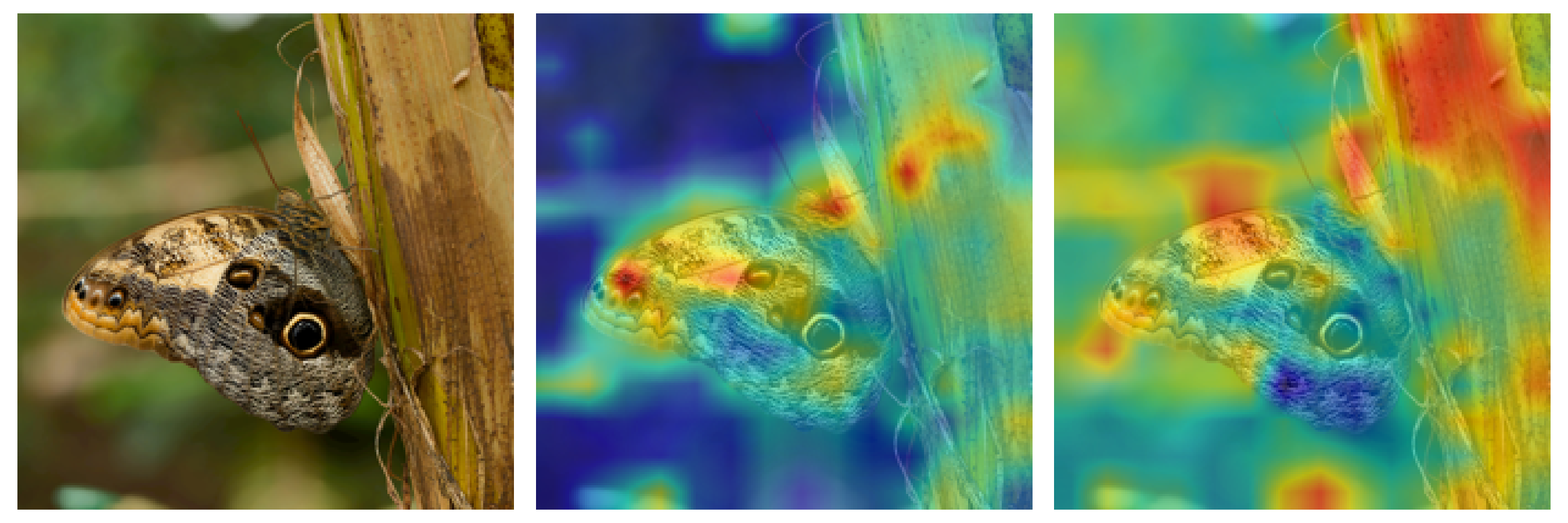}{25} &
					\tile{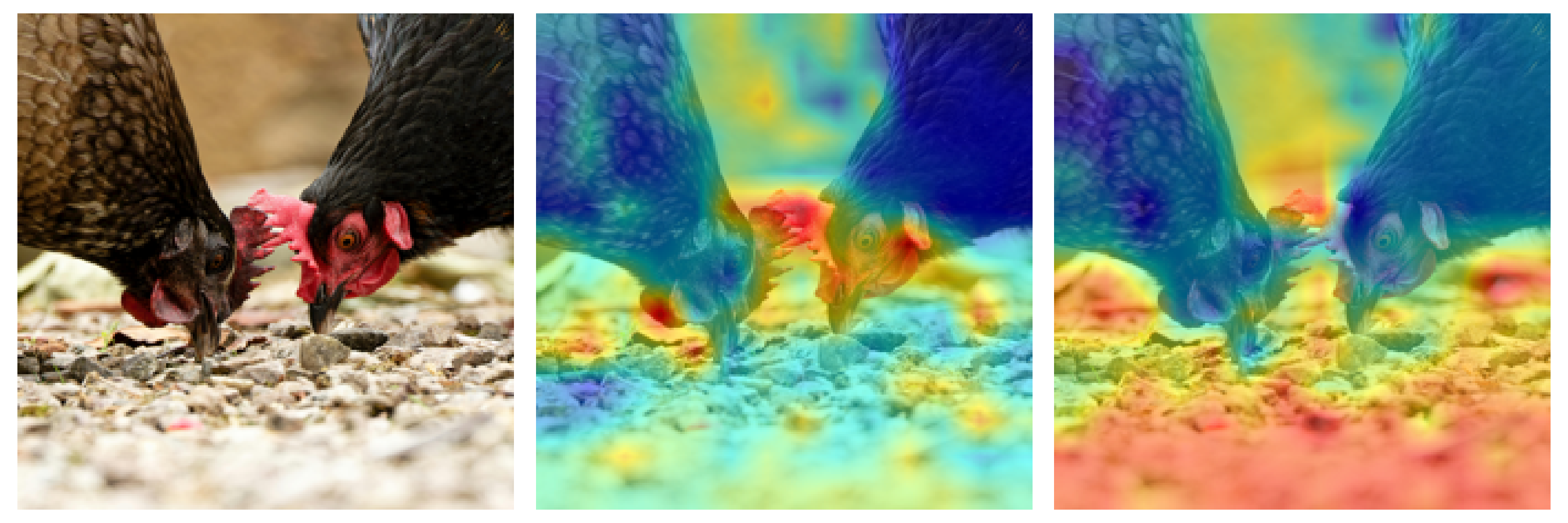}{26} &
					\tile{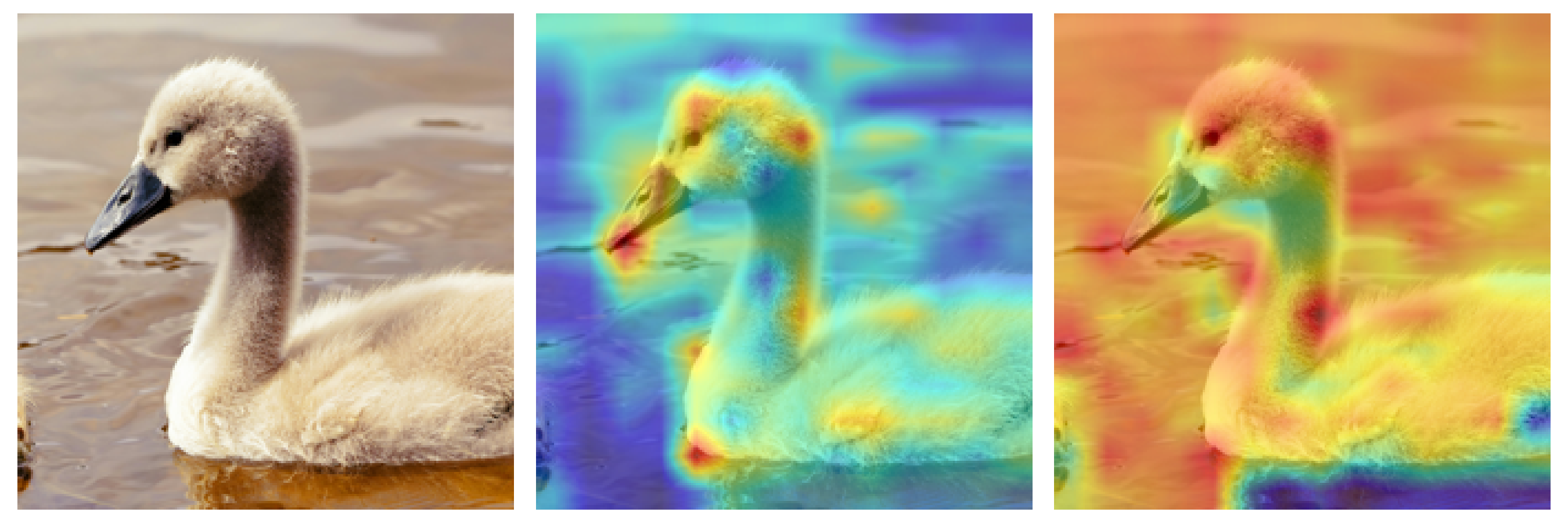}{27} &
					\tile{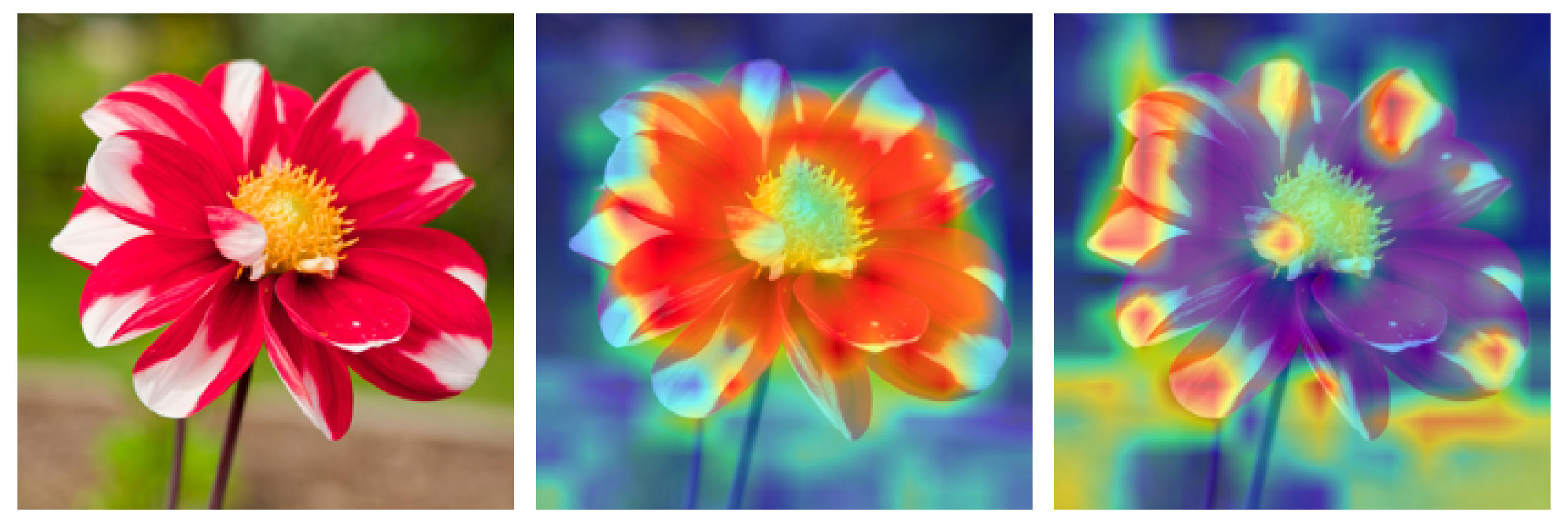}{28} &
					\tile{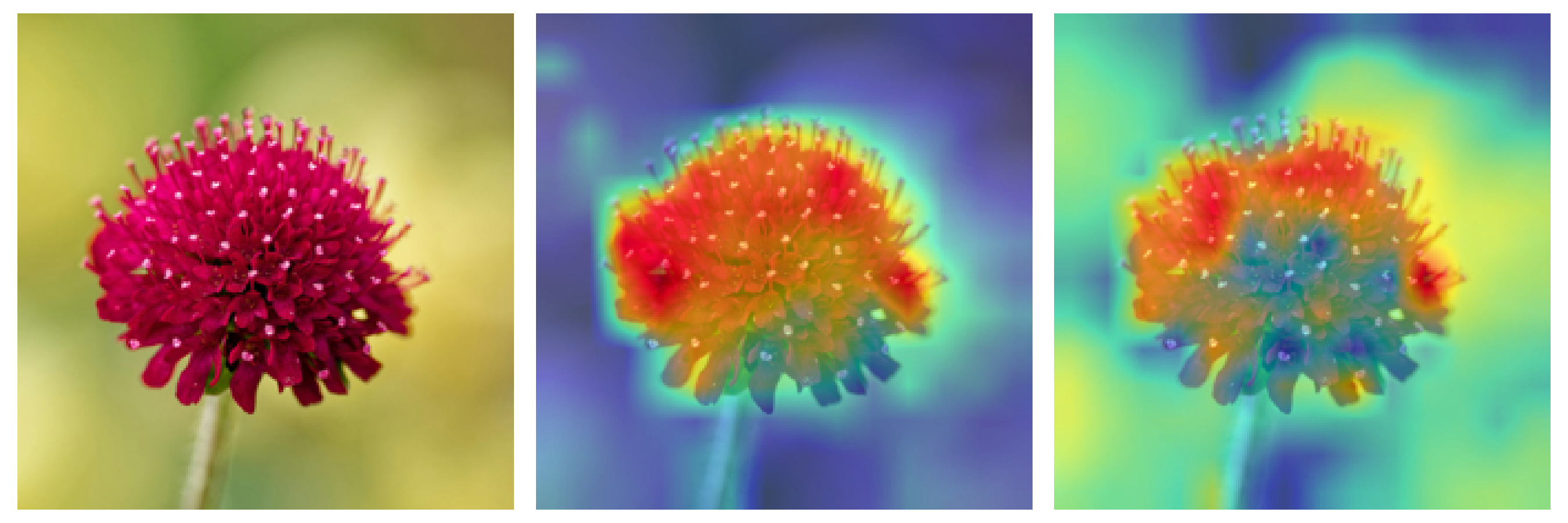}{29} &
					\tile{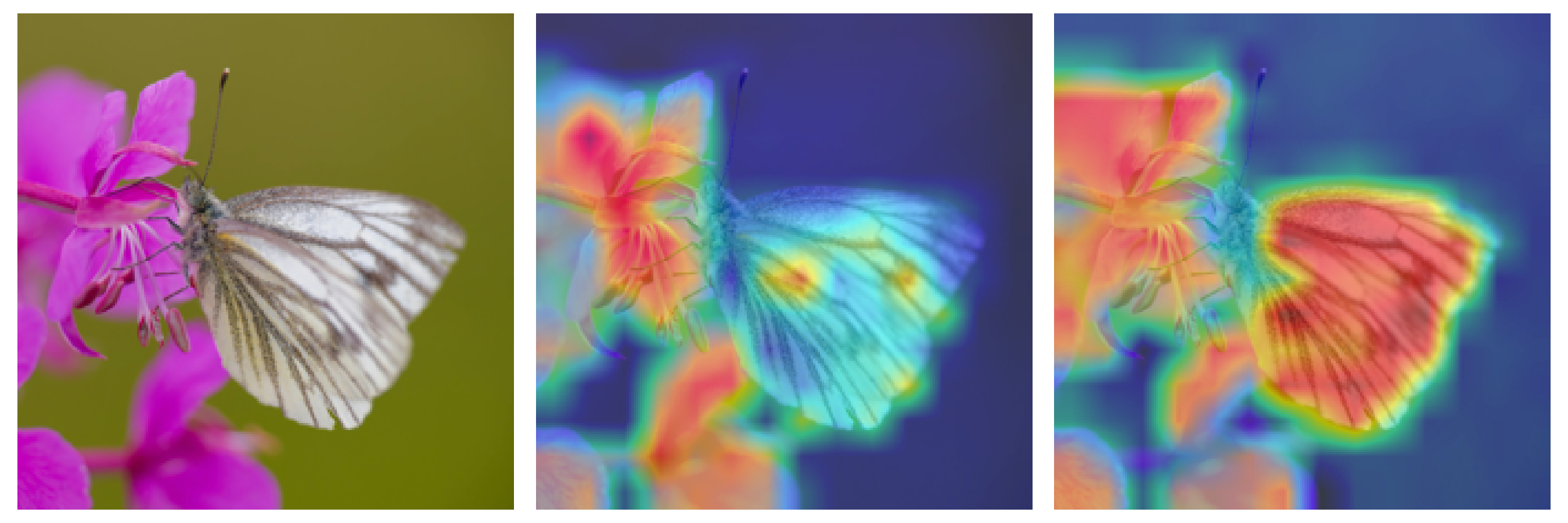}{30} \\
					\tile{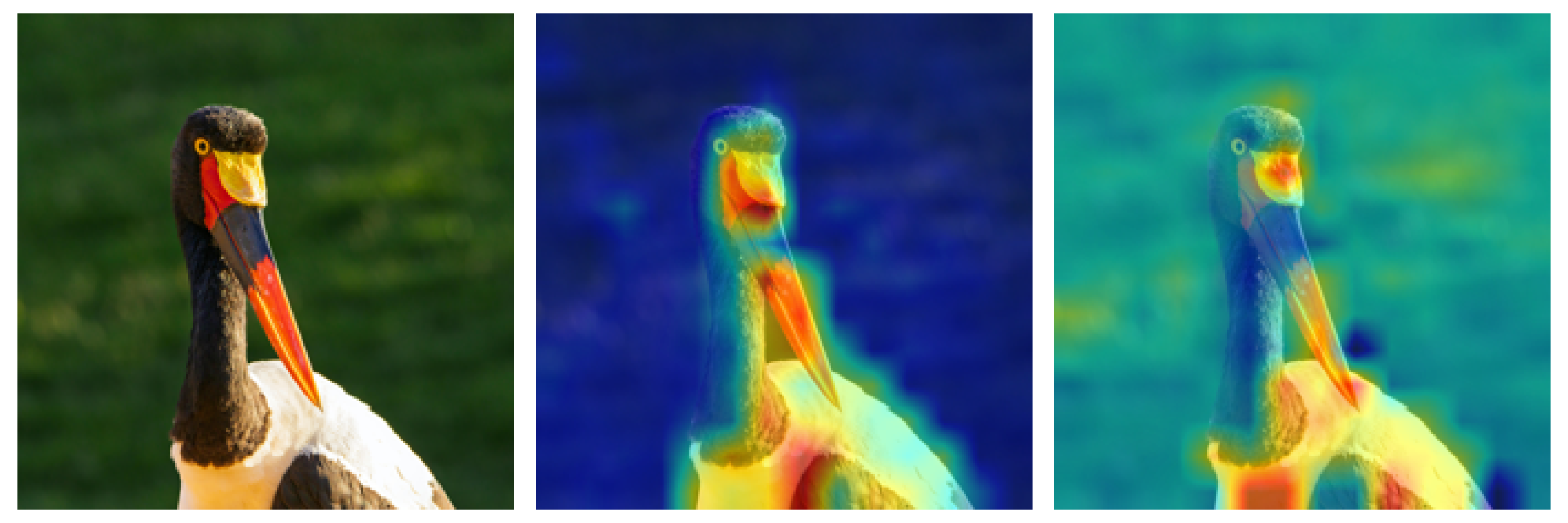}{31} &
					\tile{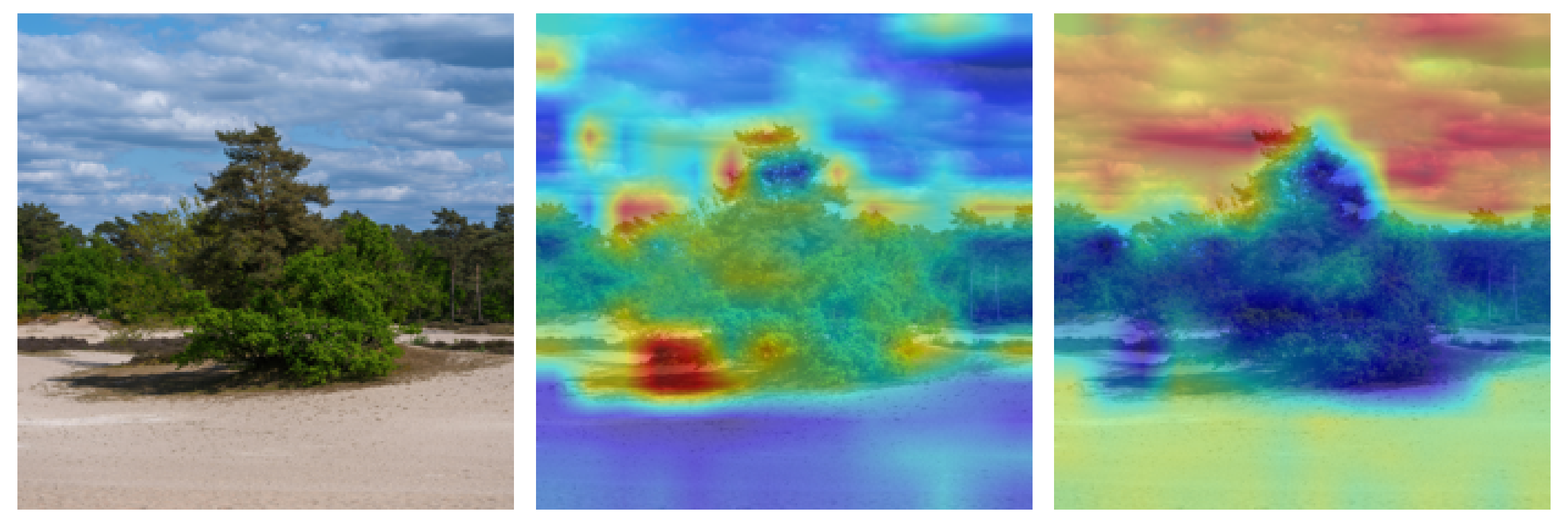}{32} &
					\tile{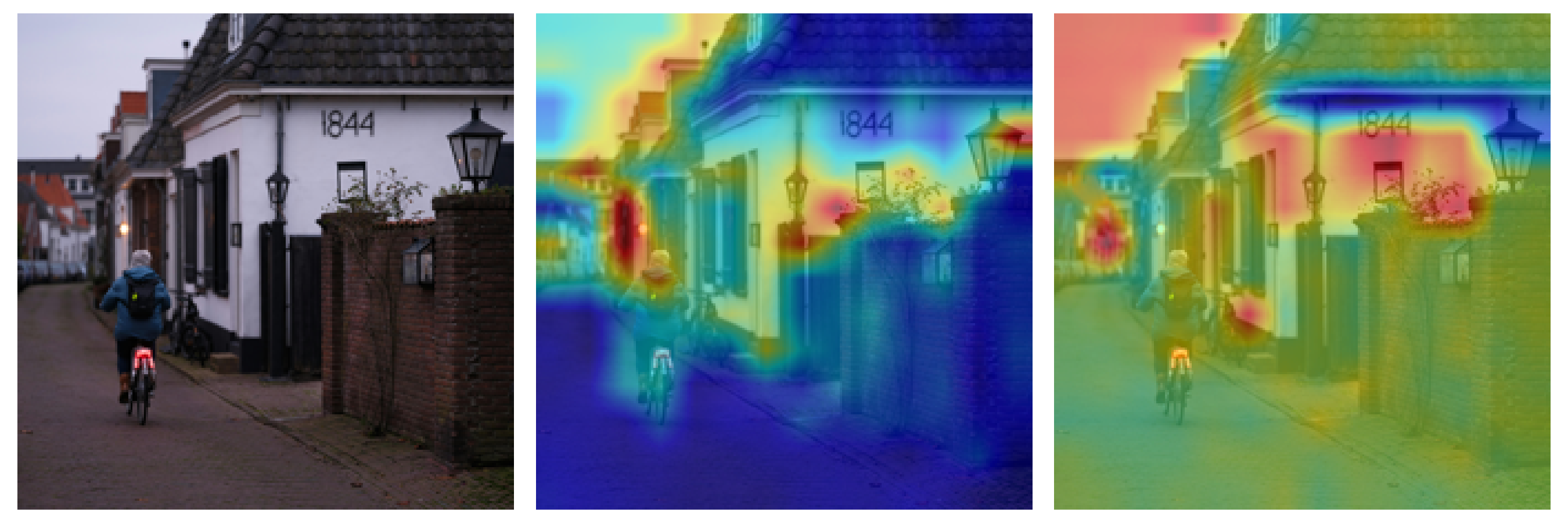}{33} &
					\tile{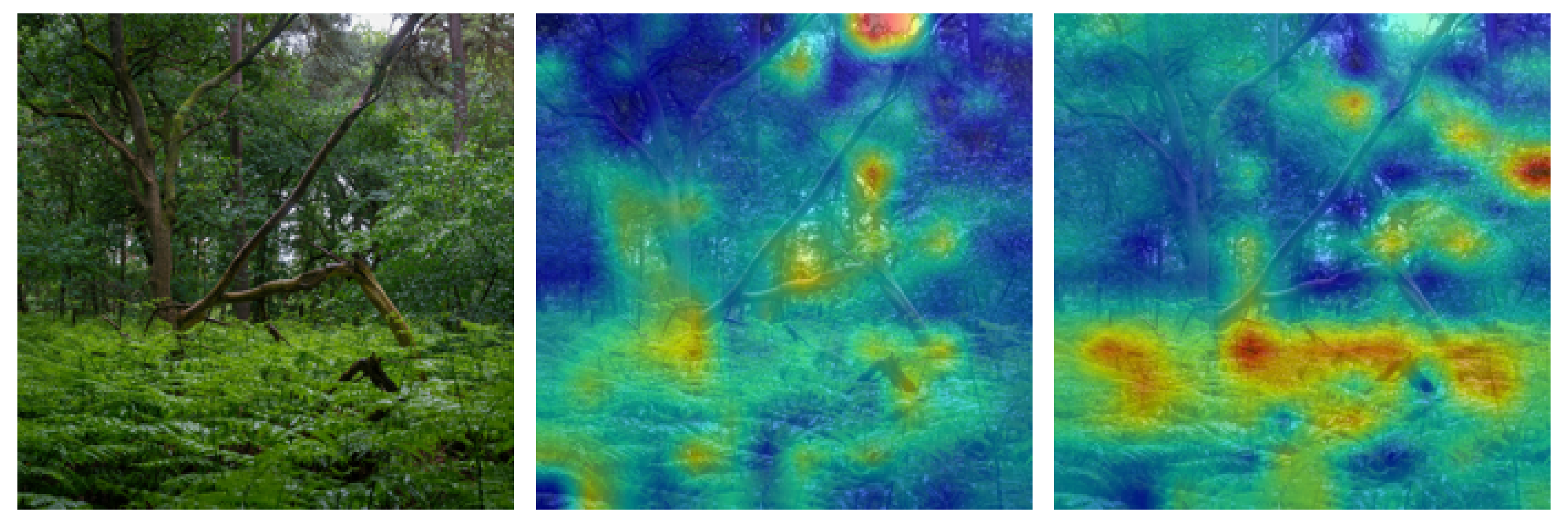}{34} &
					\tile{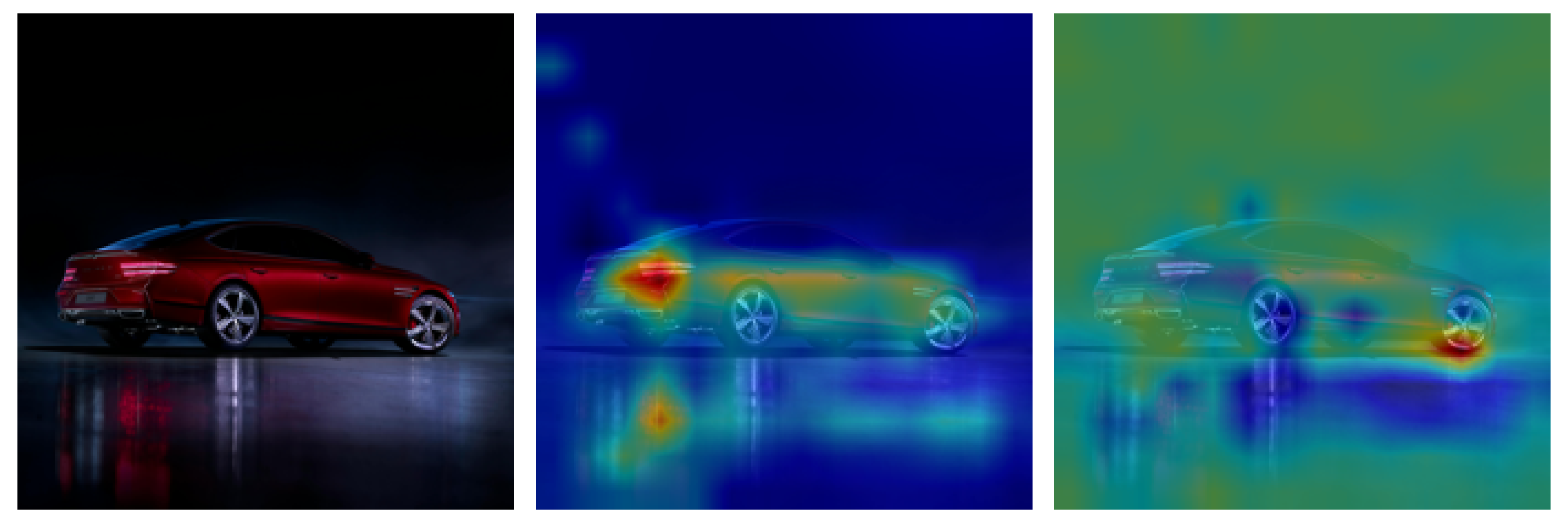}{35} &
					\tile{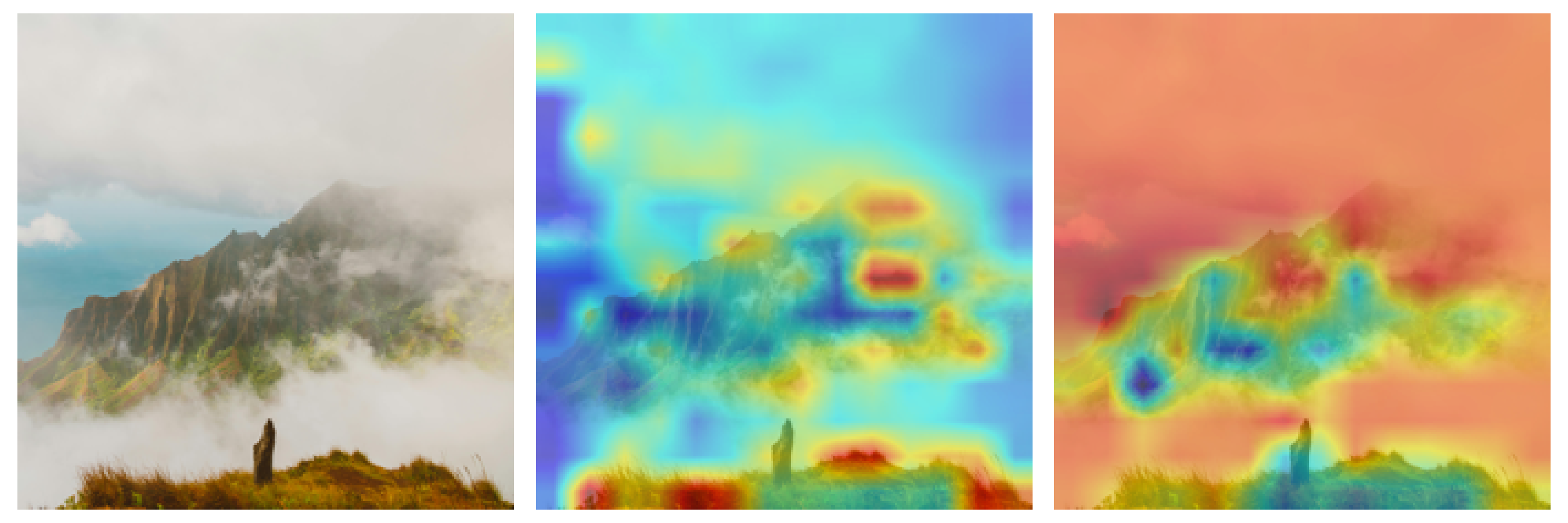}{36} \\
					\tile{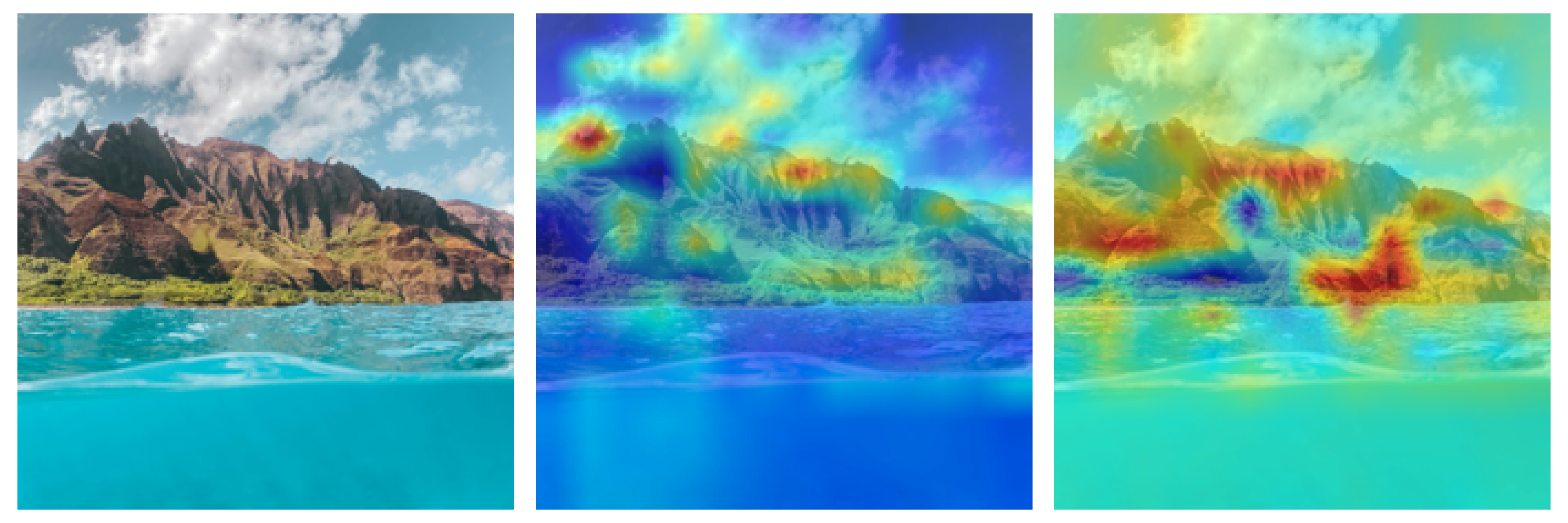}{37} &
					\tile{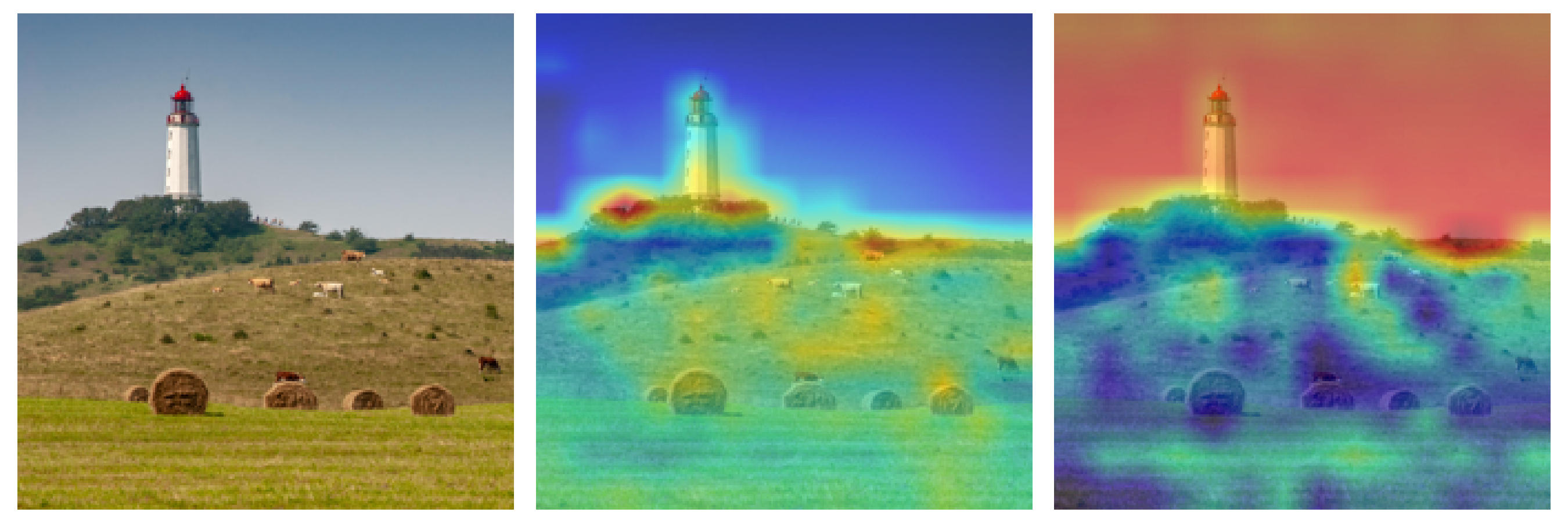}{38} &
					\tile{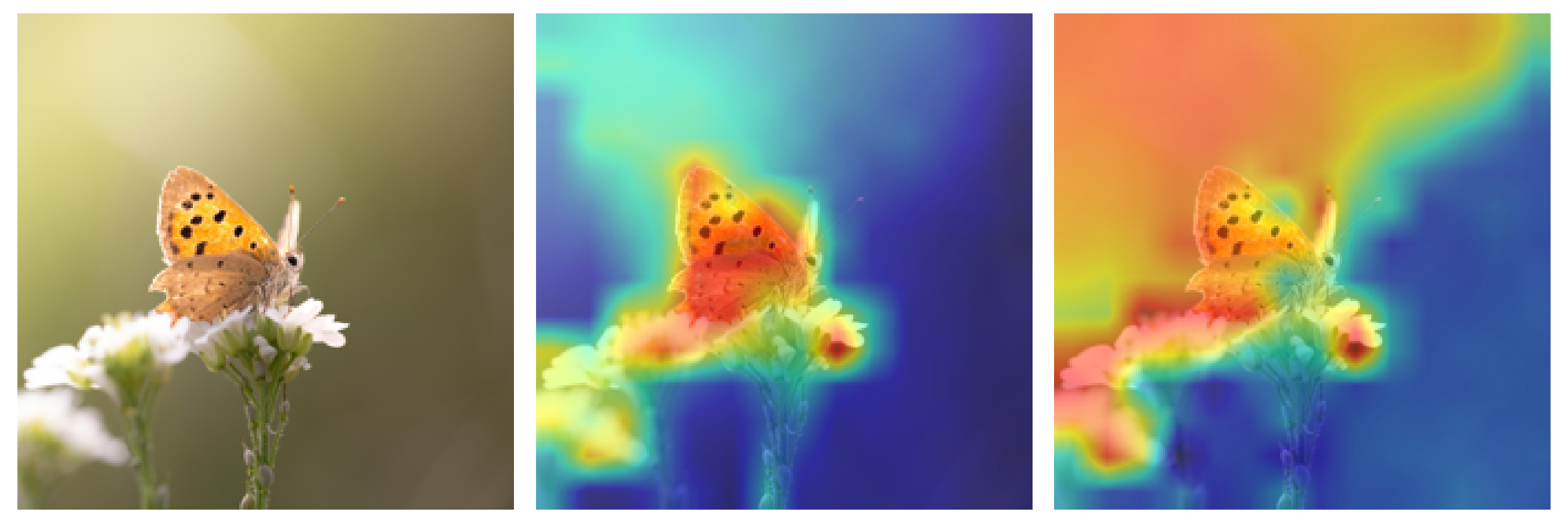}{39} &
					\tile{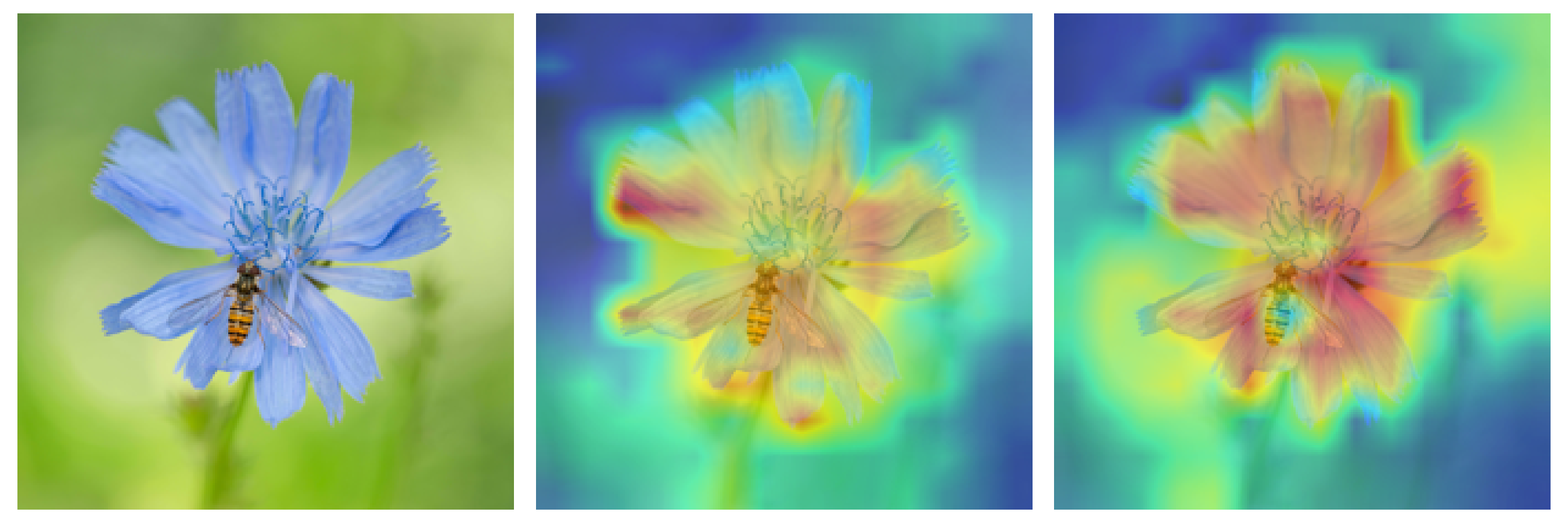}{40} &
					\tile{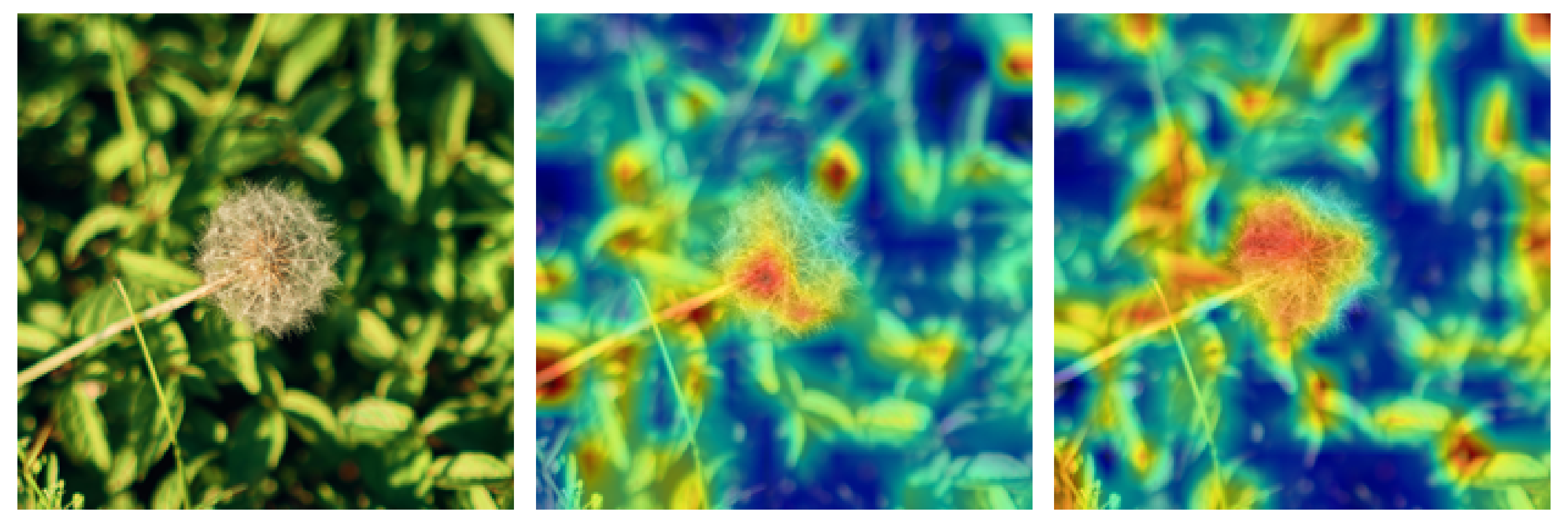}{41} &
					\tile{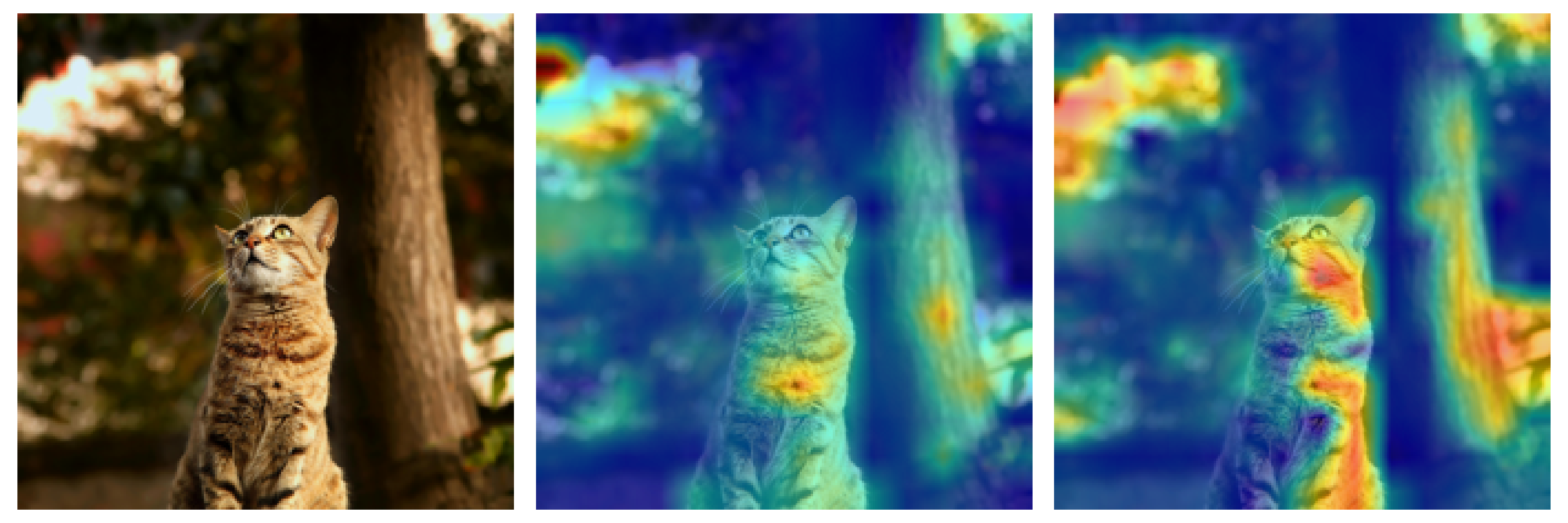}{42} \\
					\tile{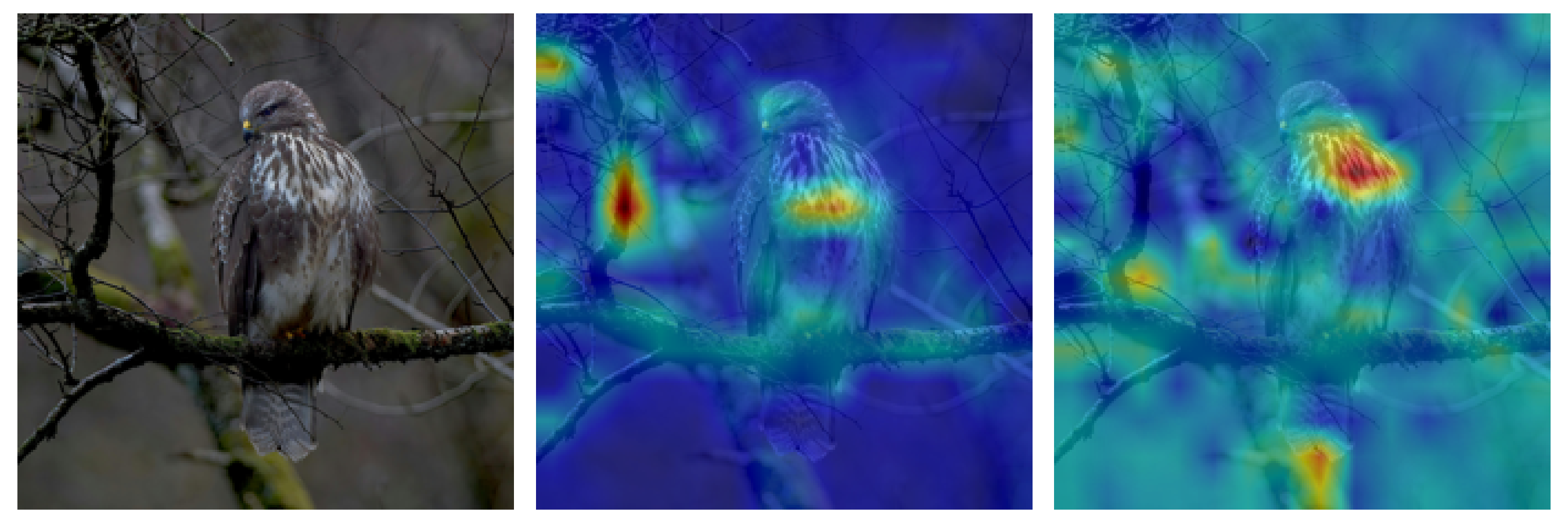}{43} &
					\tile{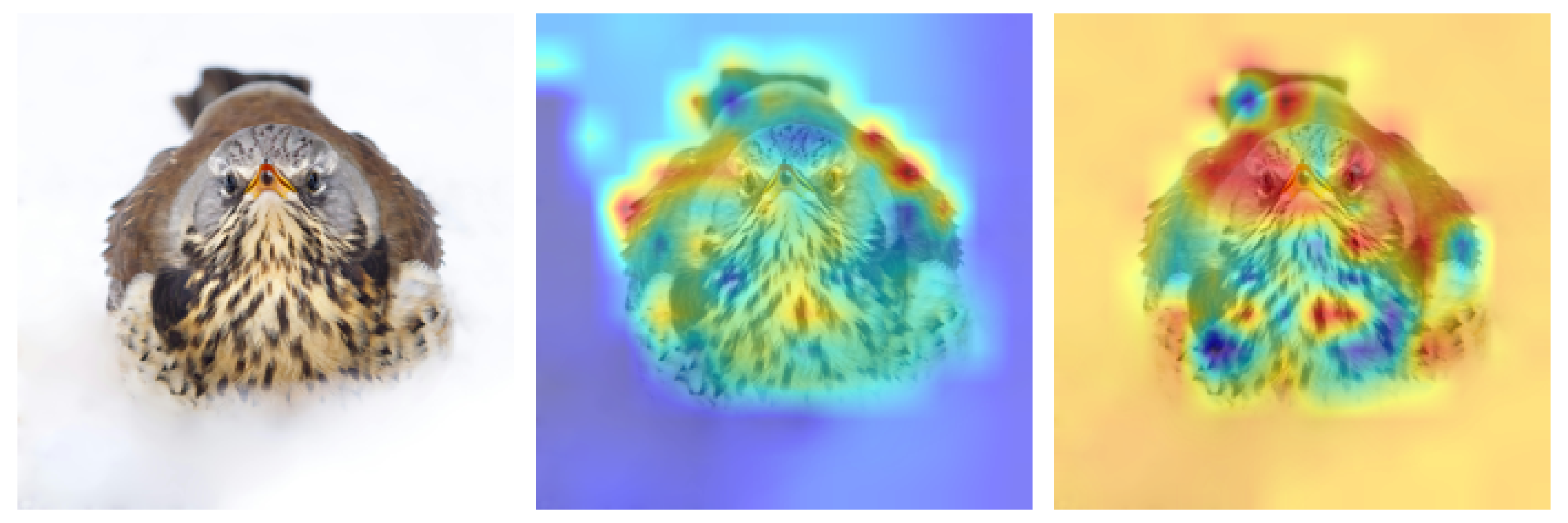}{44} &
					\tile{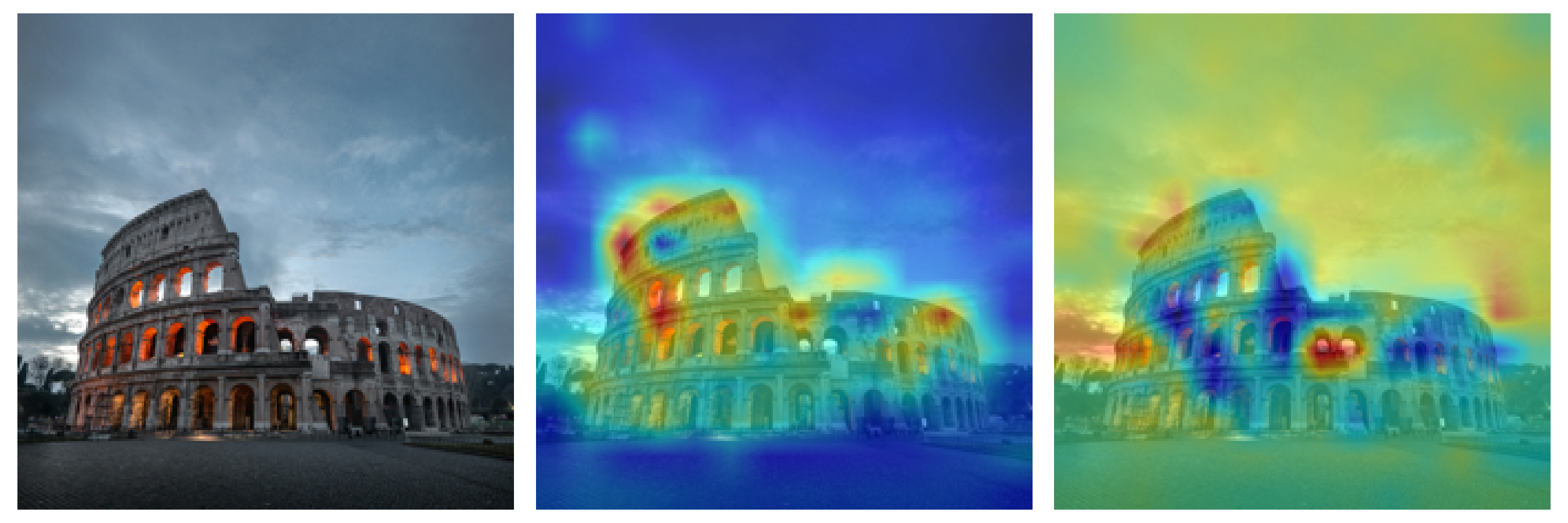}{45} &
					\tile{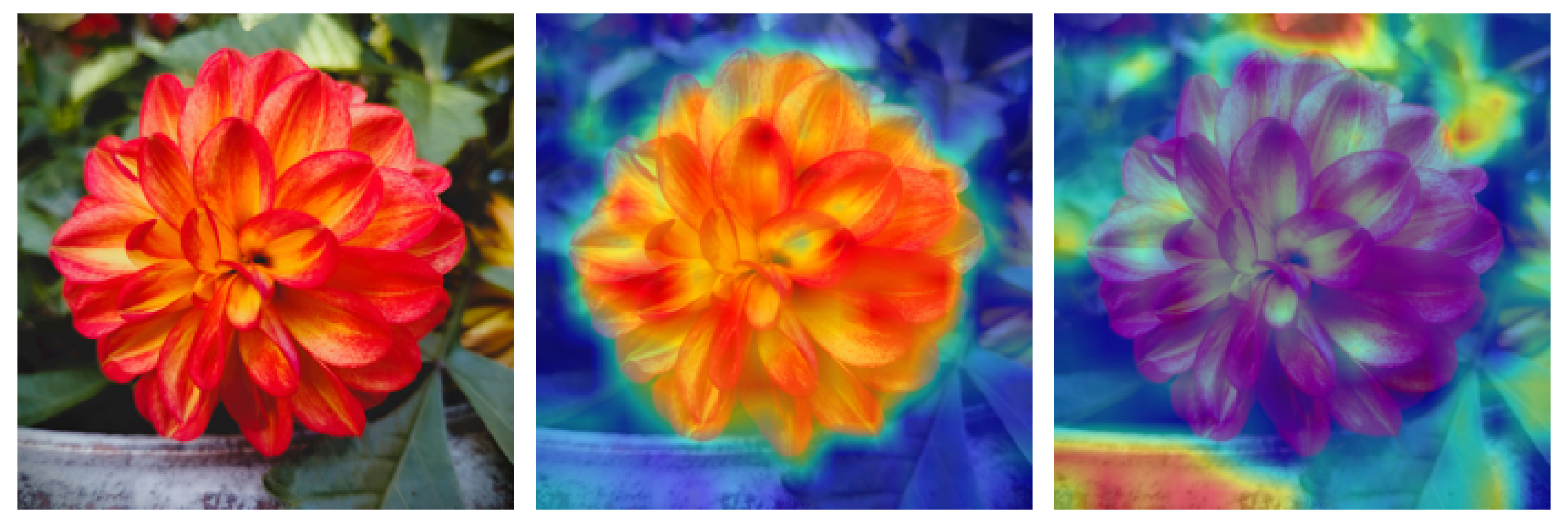}{46} &
					\tile{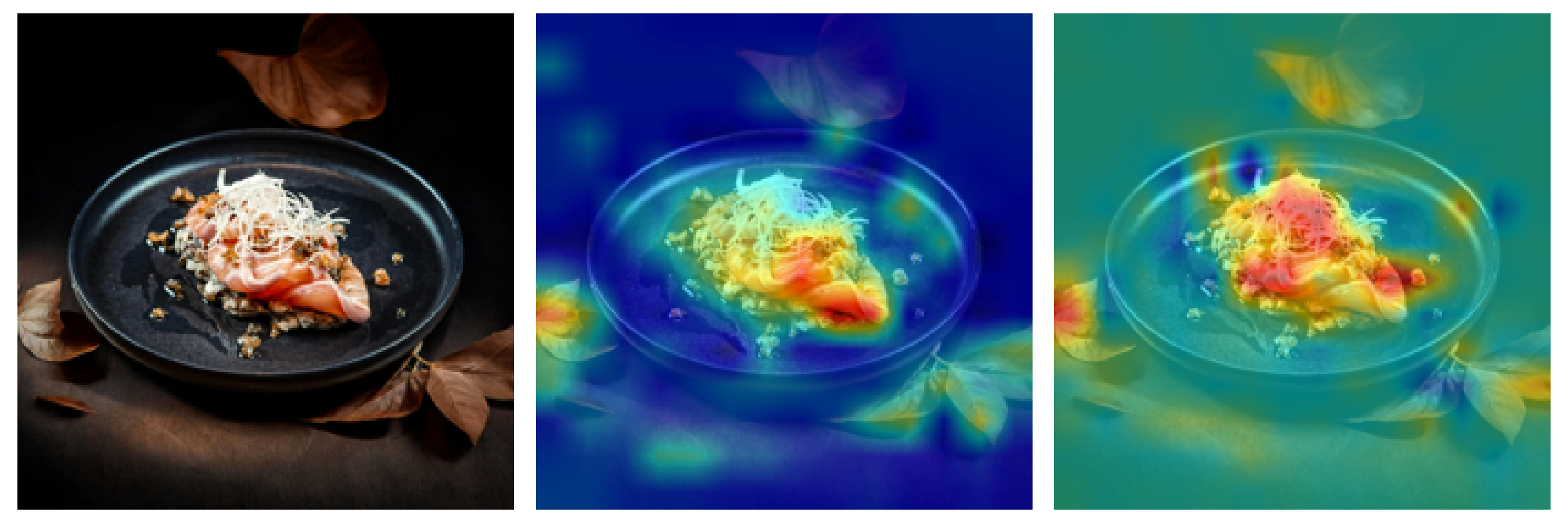}{47} &
					\tile{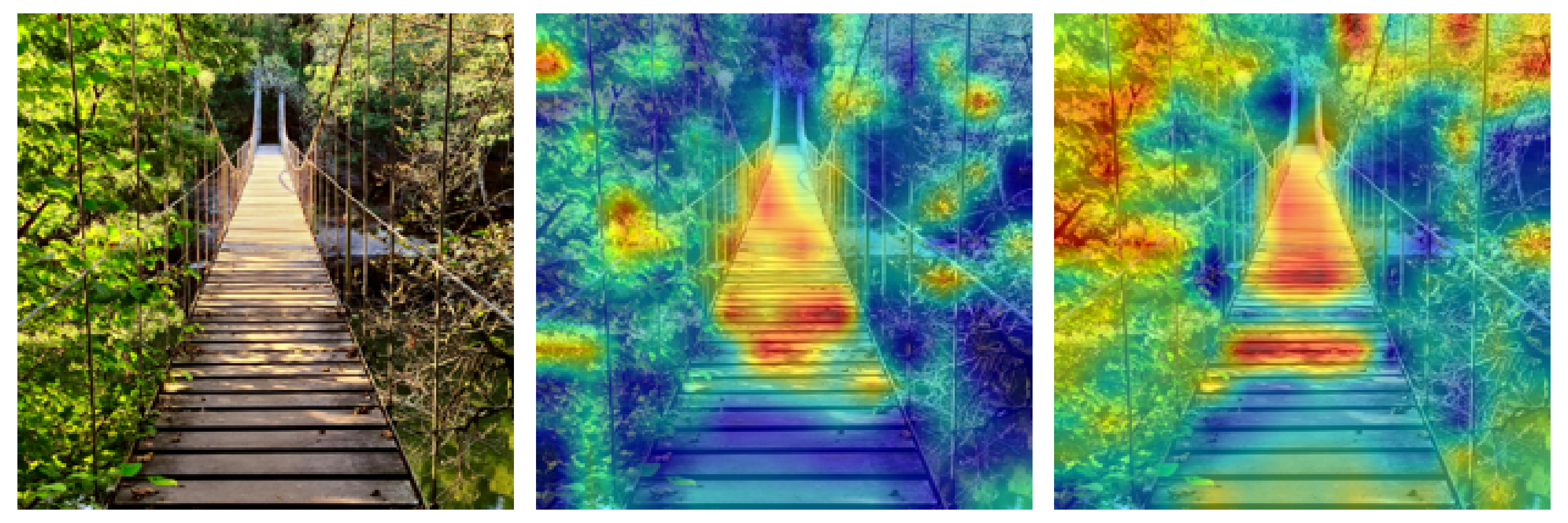}{48} \\
					\tile{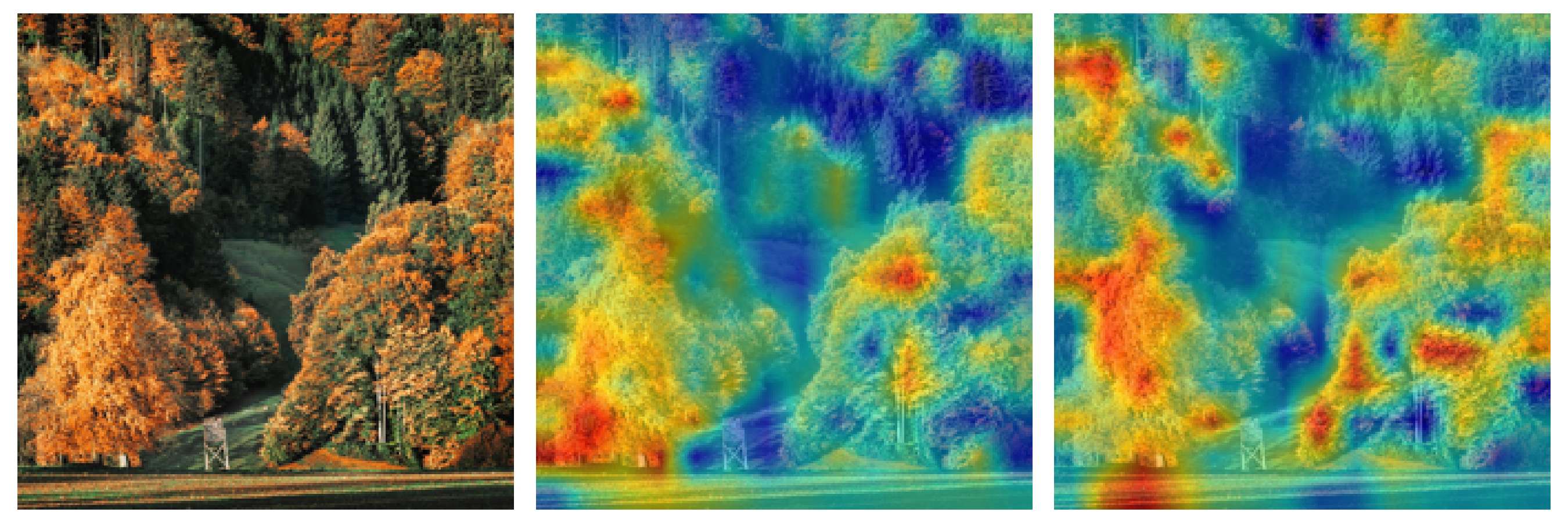}{49} &
					\tile{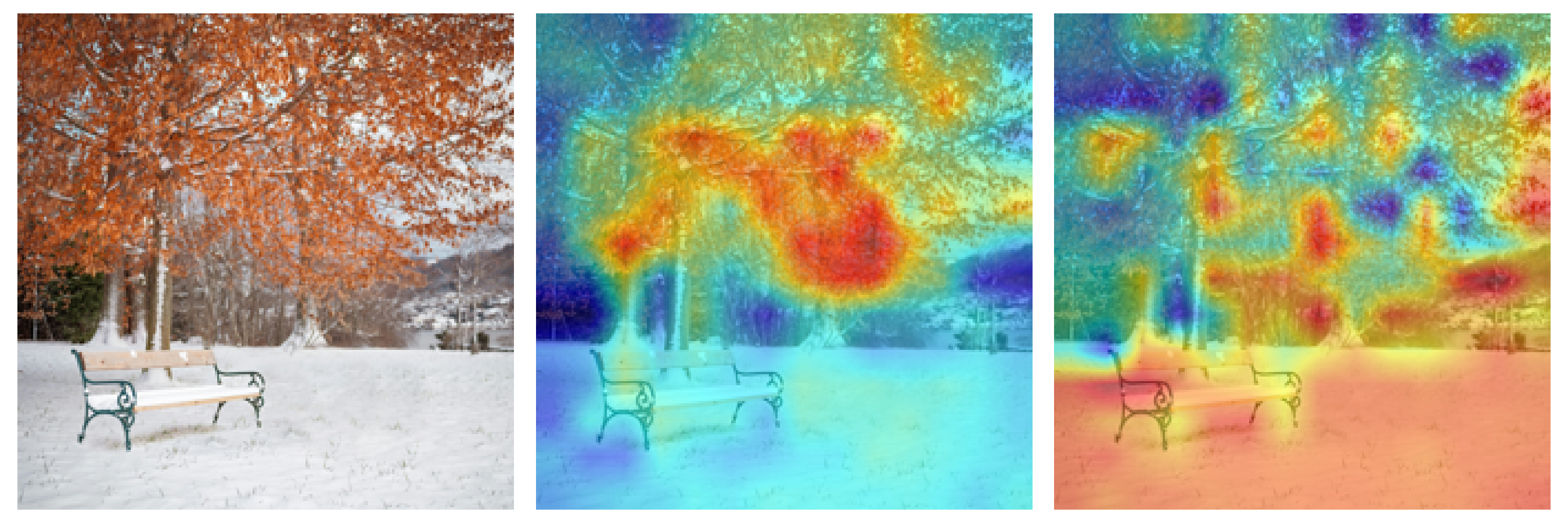}{50} &
					\tile{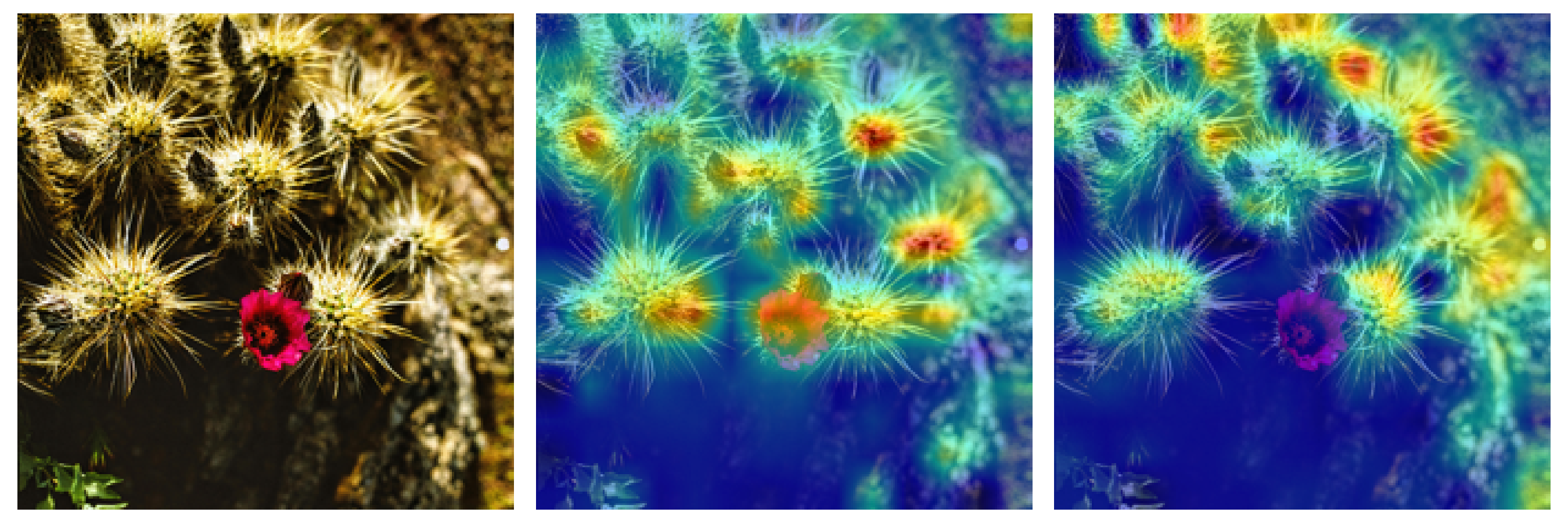}{51} &
					\tile{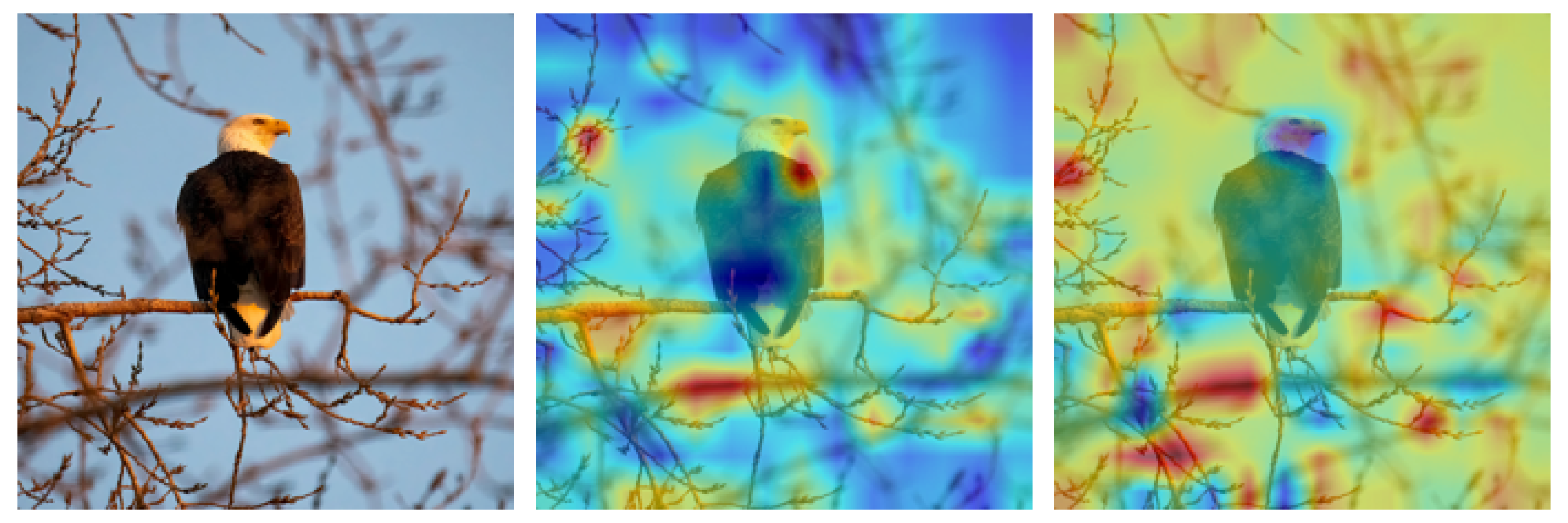}{52} &
					\tile{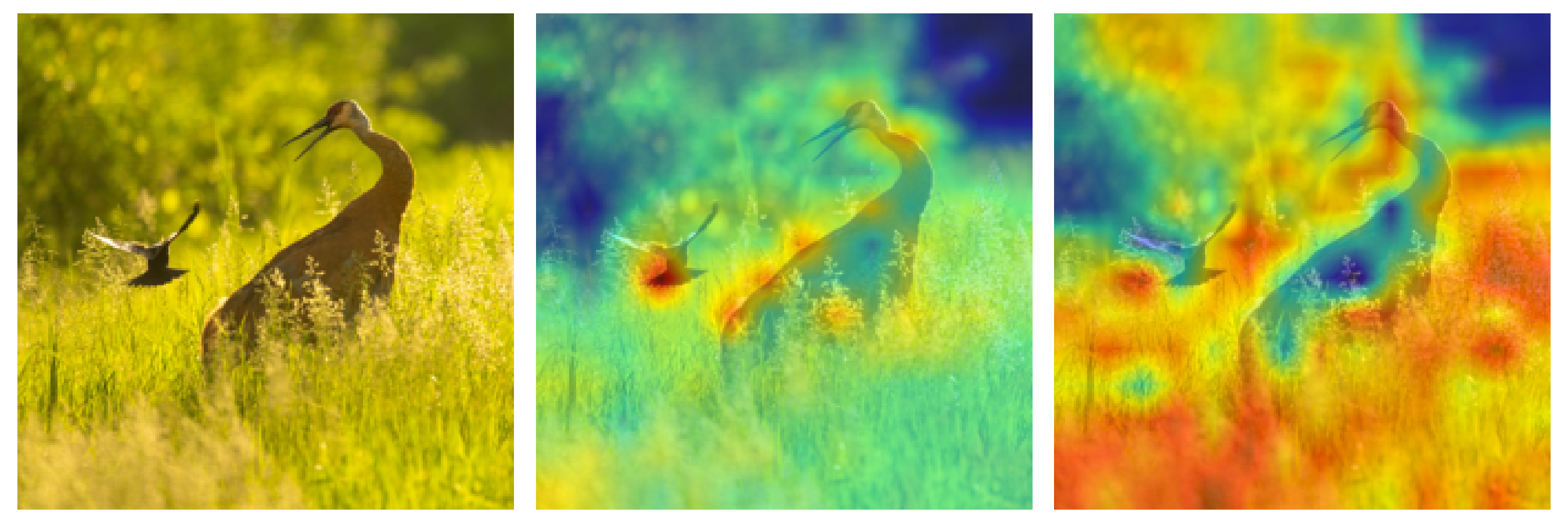}{53} &
					\tile{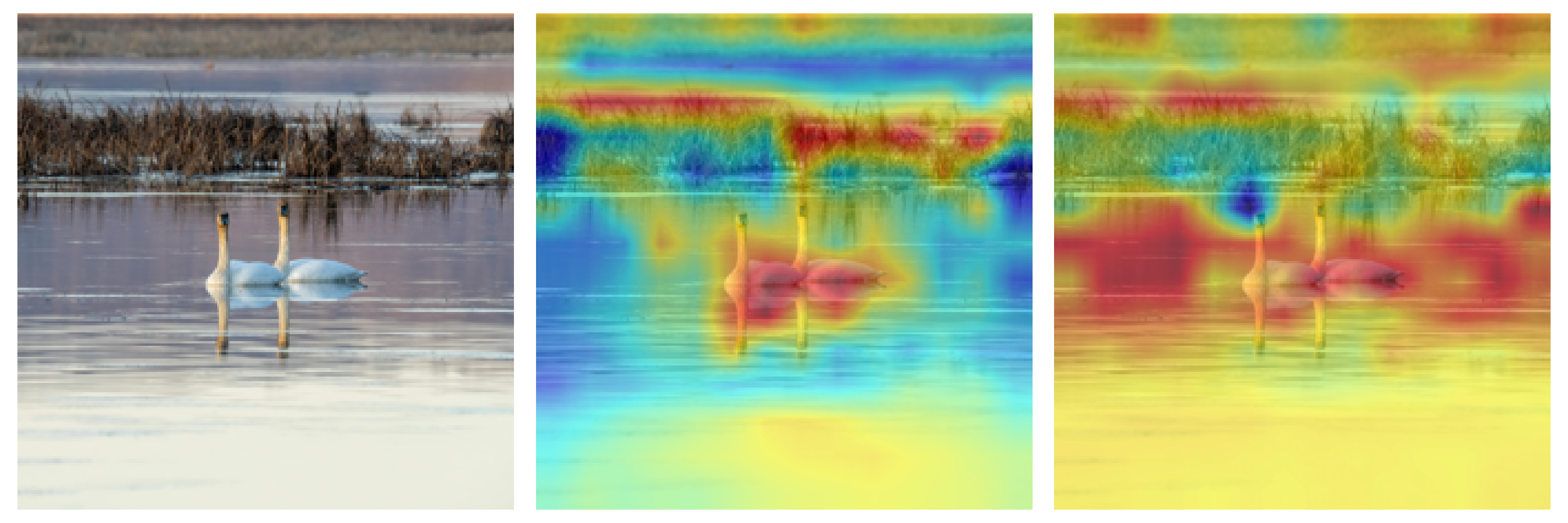}{54} \\
				\end{tabular}
			\end{minipage}
			
			\caption{Further example attention maps as in Figure~\ref{fig:semantic_attention}.}
		\end{figure}
		
		\subsection{Attention Behavior Analysis}		
		\label{sec:attention-behavior-analysis}
		
		To better understand the mechanisms underlying WePE's performance gains, we
		conduct a comprehensive analysis of attention behavior across all 12
		Transformer layers.

		Figure~\ref{fig:attention_maps} visualizes
		self-attention maps from Layer~6, Head~0 for several representative query
		locations. The WePE model (top row) exhibits clearly stronger spatial locality
		than the baseline with standard learnable absolute position embeddings (bottom
		row). For example, when the query lies on a bottom-right patch, WePE
		concentrates attention on a compact set of neighboring patches, whereas the
		baseline distributes attention much more uniformly over the image. This
		qualitative difference indicates that WePE injects an explicit geometric
		inductive bias, encouraging the model to favor local interactions.

		Figure~\ref{fig:positional_similarity} reveals that this locality originates
		from the structure of the positional encodings themselves. The positional
		similarity matrix of WePE displays a pronounced checkerboard pattern that
		directly reflects the doubly periodic lattice induced by the Weierstrass
		$\wp$-function, providing a highly regular spatial prior. Neighboring patches
		exhibit strongly correlated encodings, while distant patches are
		systematically decorrelated. In contrast, the baseline encodings produce a
		much more irregular similarity matrix, lacking such geometric regularity.

		We next quantitatively analyze how attention strength varies with spatial
		distance. Figure~\ref{fig:attention_distance} plots the mean attention
		weight as a function of patch distance for several layers. In early layers,
		WePE shows a much steeper decay of attention with distance than the baseline,
		indicating a significantly stronger locality bias. This distance--decay effect
		gradually relaxes in deeper layers, where both models become more global,
		allowing long-range integration while still preserving the structured
		initialization provided by WePE. These results confirm that the geometric
		properties of WePE are indeed manifested in the learned attention patterns.

		Entropy analysis in Figure~\ref{fig:attention_entropy} further
		quantifies attention focus. We compute the Shannon entropy of each attention
		distribution and average across heads and tokens. In the early layers, WePE
		consistently yields lower entropy than the baseline, corresponding to more
		concentrated and less noisy attention maps that can accelerate feature
		learning. Entropy gradually increases with depth for both models, reflecting
		the expected expansion of the effective receptive field, but the WePE model
		remains consistently more focused.

		Figure~\ref{fig:attention_range} investigates the effective attention
		range. For each query token, we sort keys by spatial distance and find the
		radius required to accumulate a fixed proportion (e.g., 90\%) of the total
		attention mass. In the first layer, WePE attains this threshold within a
		smaller radius than the baseline, indicating a more compact local receptive
		field. In deeper layers, the effective range gradually expands, yielding a
		multi-scale behavior reminiscent of CNNs~\citep{lecun2002gradient}' progressively enlarging receptive
		fields, but achieved here through learned geometric priors rather than
		architectural constraints.

		Finally, Figure~\ref{fig:feature_tsne} visualizes the learned patch
		representations using t-SNE. With WePE, patches that are spatially close in
		the image tend to form coherent clusters in the feature space, demonstrating
		that the spatial priors provided by WePE propagate into the learned
		representations. The baseline model exhibits substantially weaker clustering
		with respect to spatial location, indicating a less structured organization of
		features.
		
		Taken together, these analyses show that WePE's effectiveness stems from a
		mathematically grounded geometric inductive bias that consistently guides
		attention behavior across layers: it induces strong local bias in early
		stages, gradually relaxes to support global integration, and ultimately leads
		to spatially coherent feature representations. This behavior is particularly
		beneficial in limited-data regimes, where strong and well-structured priors
		are crucial for generalization.
		
		\begin{figure}[t]
			\centering
			\includegraphics[width=\linewidth]{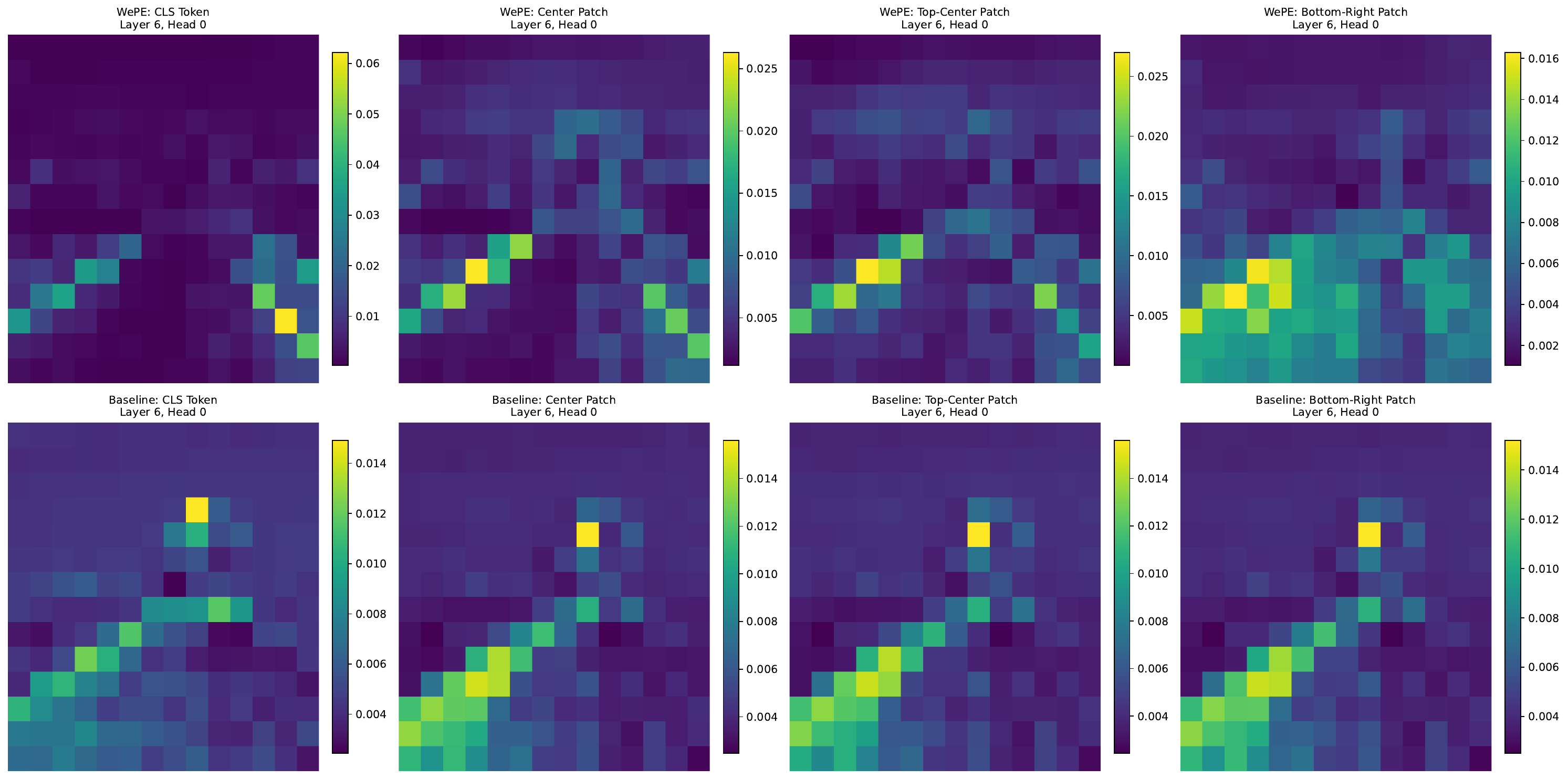}
			\caption{Attention maps comparing WePE (top) and baseline (bottom) at Layer 6, Head 0. WePE shows stronger spatial locality. }
			\label{fig:attention_maps}
		\end{figure}
		
		\begin{figure}[t]
			\centering
			\includegraphics[width=\linewidth]{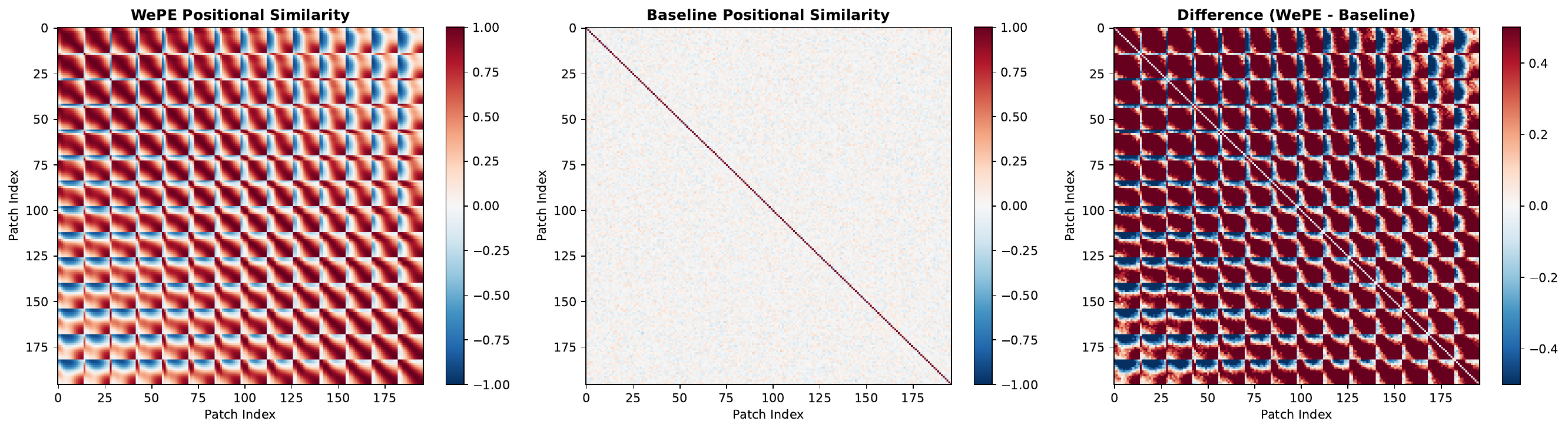}
			\caption{Positional similarity matrices. WePE exhibits checkerboard pattern encoding spatial topology; baseline lacks geometric structure. }
			\label{fig:positional_similarity}
		\end{figure}
		
		\begin{figure}[t]
			\centering
			\includegraphics[width=\linewidth]{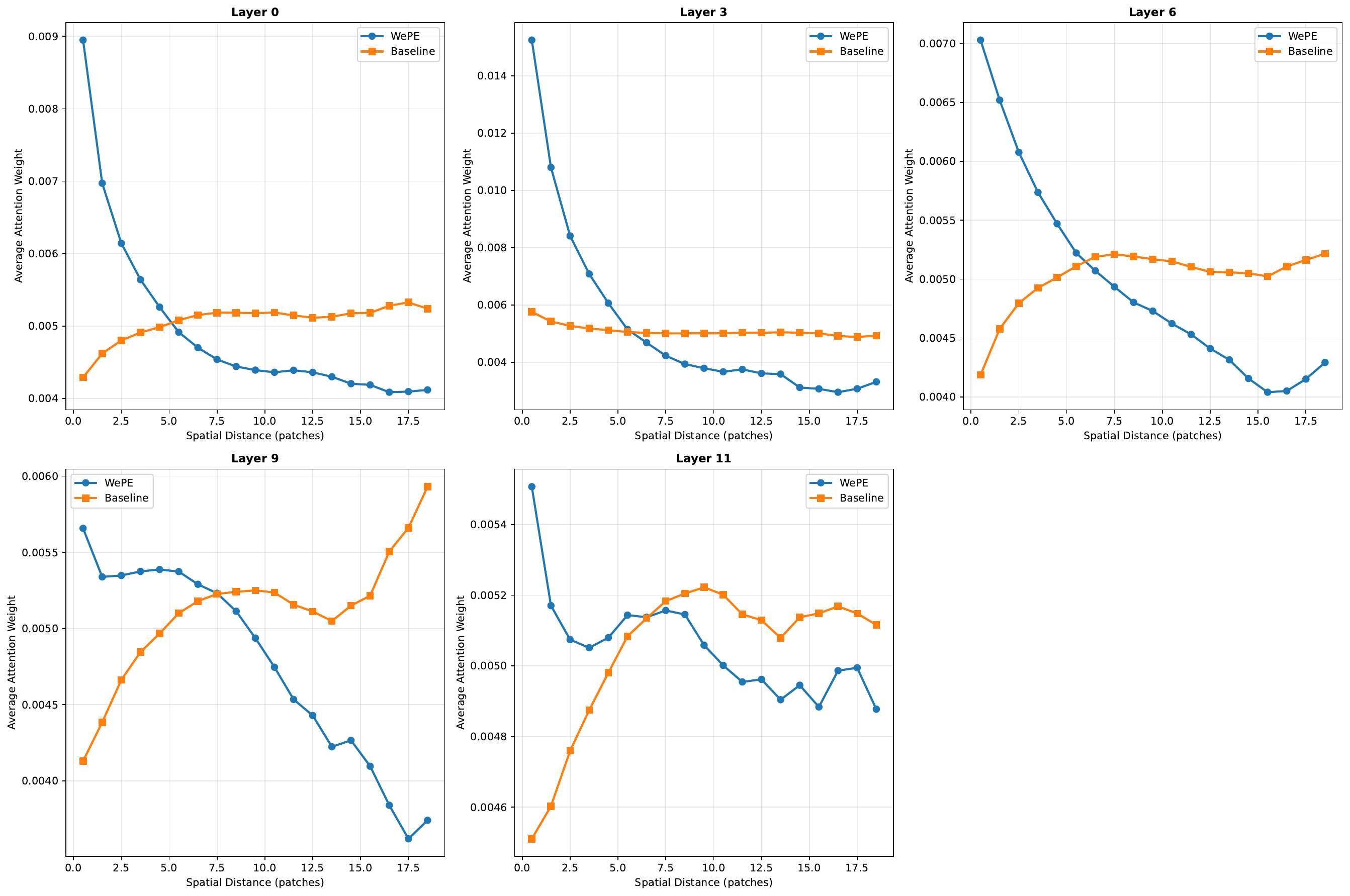}
			\caption{Attention weight vs.\ spatial distance. WePE shows stronger decay in early layers, converging with baseline in deeper layers. }
			\label{fig:attention_distance}
		\end{figure}
		
		\begin{figure}[t]
			\centering
			\includegraphics[width=\linewidth]{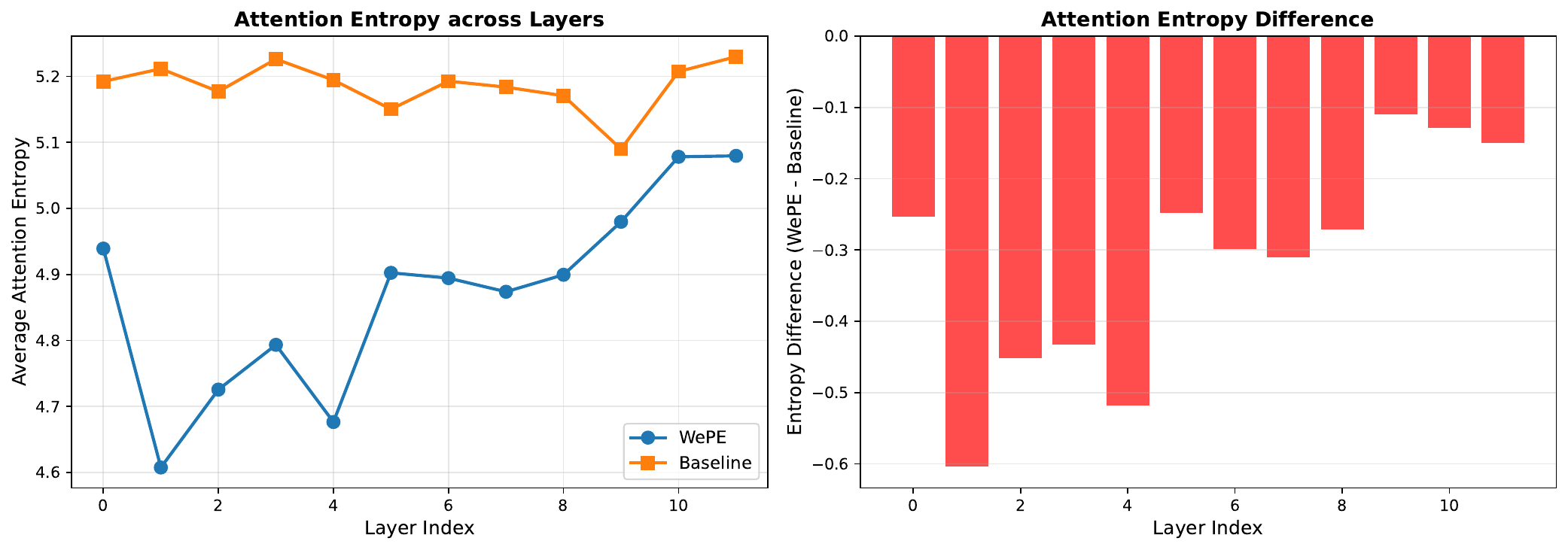}
			\caption{Attention entropy across layers. WePE maintains lower entropy in early layers (4.6-4.9 vs.\ 5.1-5.3), indicating focused attention. }
			\label{fig:attention_entropy}
		\end{figure}

		\begin{figure}[t]
			\centering
			\includegraphics[width=\linewidth]{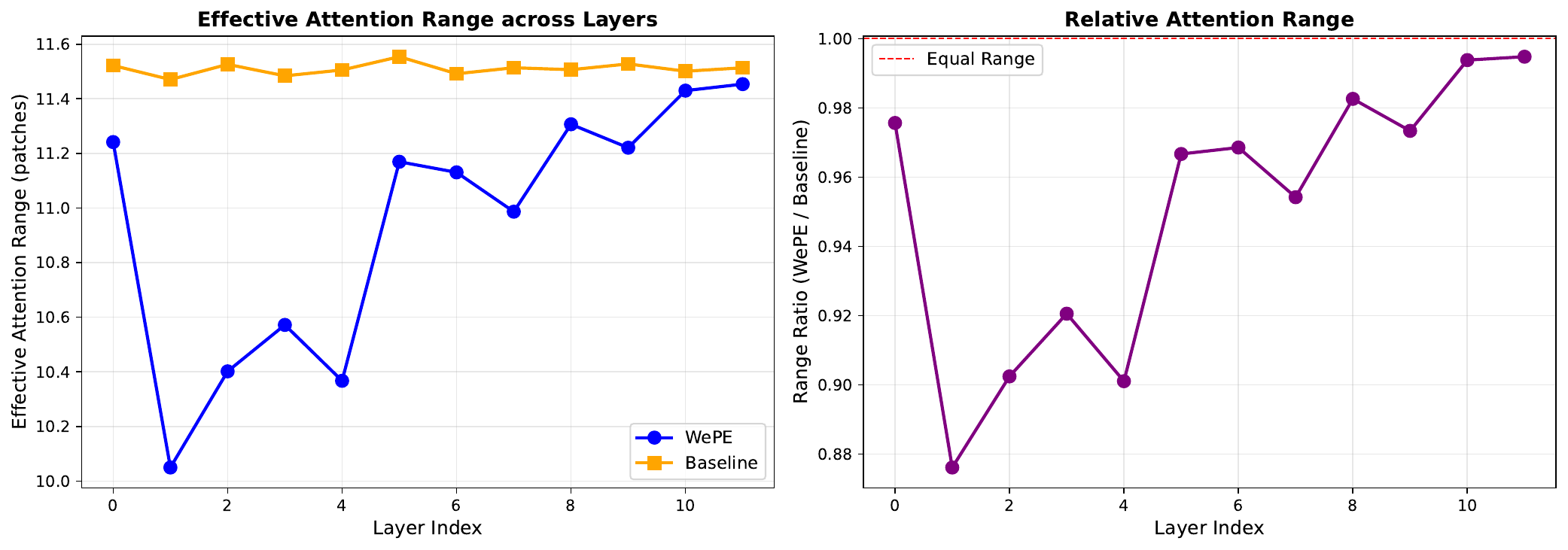}
			\caption{Effective attention range evolution. WePE contracts to 10.0 patches in Layer 1, then expands; baseline remains constant at 11.5 patches. }
			\label{fig:attention_range}
		\end{figure}

		\begin{figure}[t]
			\centering
			\includegraphics[width=\linewidth]{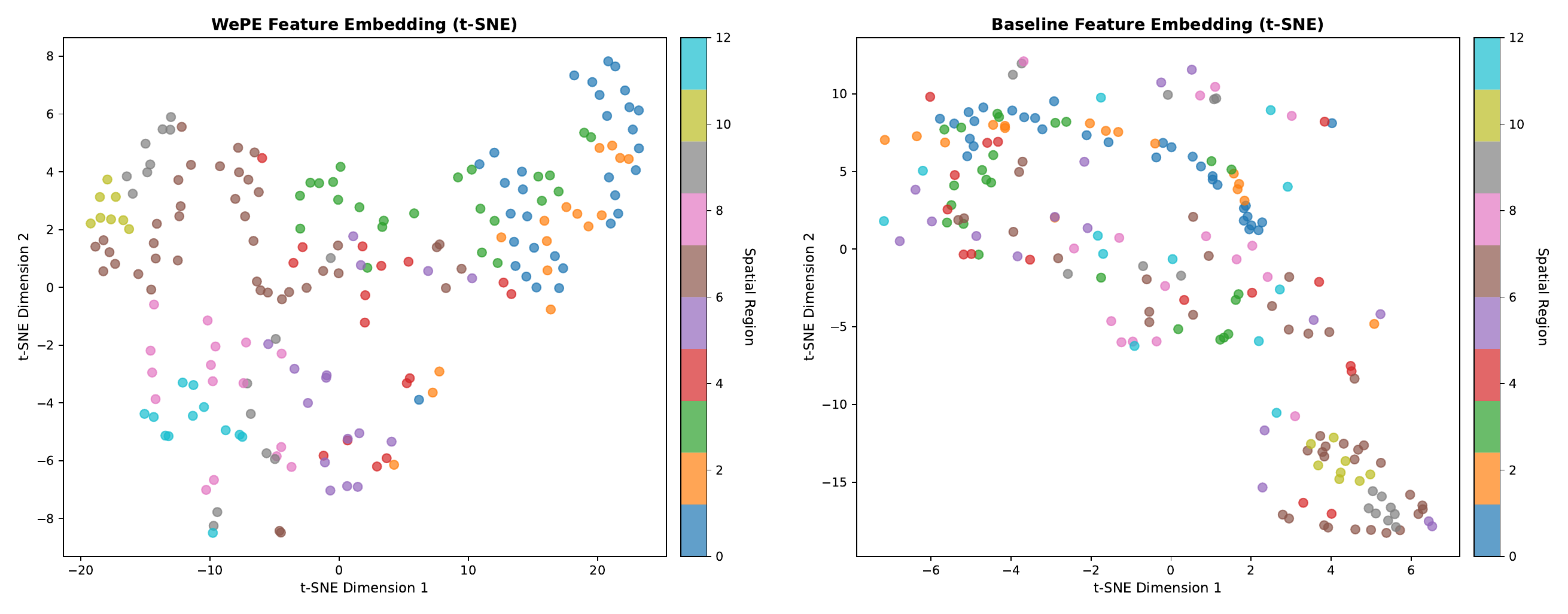}
			\caption{t-SNE of Layer 6 features colored by spatial region. WePE shows clearer spatial clustering than baseline. }
			\label{fig:feature_tsne}
		\end{figure}
		
		\subsection{Extreme aspect ratio input experiment}
		\label{sec:extreme-aspect-ratio}

		To assess how the proposed method behaves when the input aspect ratio departs from the standard 1:1 setting, we retrain the ViT model~\citep{dosovitskiy2020image} with WePE from scratch using several non square resolutions while keeping the architecture, patch size, optimizer, and training schedule identical to the baseline trained on $224 \times 224$ images.

		We consider $224 \times 448$ (wide, 2:1), $448 \times 224$ (tall, 1:2), $672 \times 224$ (very tall, 1:3) and $112 \times 448$ (very wide, 4:1), which change height and width by factors of two to four and induce highly anisotropic patch grids. The aspect ratios evaluated here arguably approach or even surpass the most extreme conditions that models are likely to encounter in real-world applications. We first conducted full 120-epoch training runs for the standard square input 
		($224\times224$, $1{:}1$) and the most extreme aspect ratio ($112\times448$, $4{:}1$). As shown in Figure~\ref{fig:aspect_ratio_curves}, both the training and validation accuracies are already very close to their final converged values by epoch 20, with subsequent training bringing only minor fluctuations and marginal improvements. This indicates that extending the training schedule does not alter the qualitative trends or the relative performance across different aspect ratios. Therefore, for each configuration, we report the best validation accuracy obtained at 10 and 20 epochs, together with the learned lattice parameter~$\alpha_{\text{learn}}$.

		As shown in the table~\ref{tab:aspect_ratio_performance}, After 10 epochs all non square settings already outperform the square baseline (32.82\%), achieving between 36.61\% and 42.72\%. After 20 epochs the models trained with moderate aspect ratios reach 51.86\%, 52.28\%, and 52.83\% accuracy for 2:1, 1:2, and 1:3 respectively, which is on par with or slightly higher than the 51.26\% obtained with 1:1 inputs. Only the extreme 4:1 configuration shows a degradation, which we attribute to the ViT architecture's~\citep{dosovitskiy2020image} inherent sensitivity to extreme patch grids~\citep{touvron2021training, liu2021swin, heo2021rethinking}. This indicates that WePE remains numerically stable for non-square inputs. Performance is negligibly affected by moderate aspect ratios, and even under extreme ratios, the model exhibits a graceful degradation rather than a collapse.

		Crucially, the learned lattice parameter remains in a narrow range across all aspect ratios, with $\alpha_{\text{learn}} \in [0.84, 0.90]$ for both training budgets, indicating that the elliptic lattice adapts automatically to changes in input shape and does not require any manual retuning even when the height to width ratio varies by a factor of four.
		
		\begin{table}[ht] 
			\centering
			\caption{Performance of WePE under different aspect ratios. Baseline uses square input resolution.}
			\label{tab:aspect_ratio_performance}
			\setlength{\tabcolsep}{1.5pt} 
			\tiny 
			\begin{tabularx}{\columnwidth}{@{}l l c c c c c@{}}
				\hline
				\textbf{Experiment} & \textbf{Size} & \textbf{Aspect Ratio} &
				\textbf{Acc (10 ep)} & \textbf{$\alpha_{\text{learn}}$ (10 ep)} &
				\textbf{Acc (20 ep)} & \textbf{$\alpha_{\text{learn}}$ (20 ep)} \\
				\hline
				baseline     & 224$\times$224 & 1:1         & 32.82\% & --     & 51.26\% & --     \\
				wide   & 224$\times$448 & 2:1           & 42.03\% & 0.8647 & 51.86\% & 0.8730 \\
				tall   & 448$\times$224 & 1:2           & 42.61\% & 0.8792 & 52.28\% & 0.9029 \\
				very tall   & 672$\times$224 & 1:3     & 42.72\% & 0.8605 & 52.83\% & 0.8533 \\
				very wide   & 112$\times$448 & 4:1     & 36.61\% & 0.8449 & 46.77\% & 0.8379 \\
				\hline
			\end{tabularx}
		\end{table}
		
		\begin{figure}[t]
			\centering
			\includegraphics[width=\linewidth]{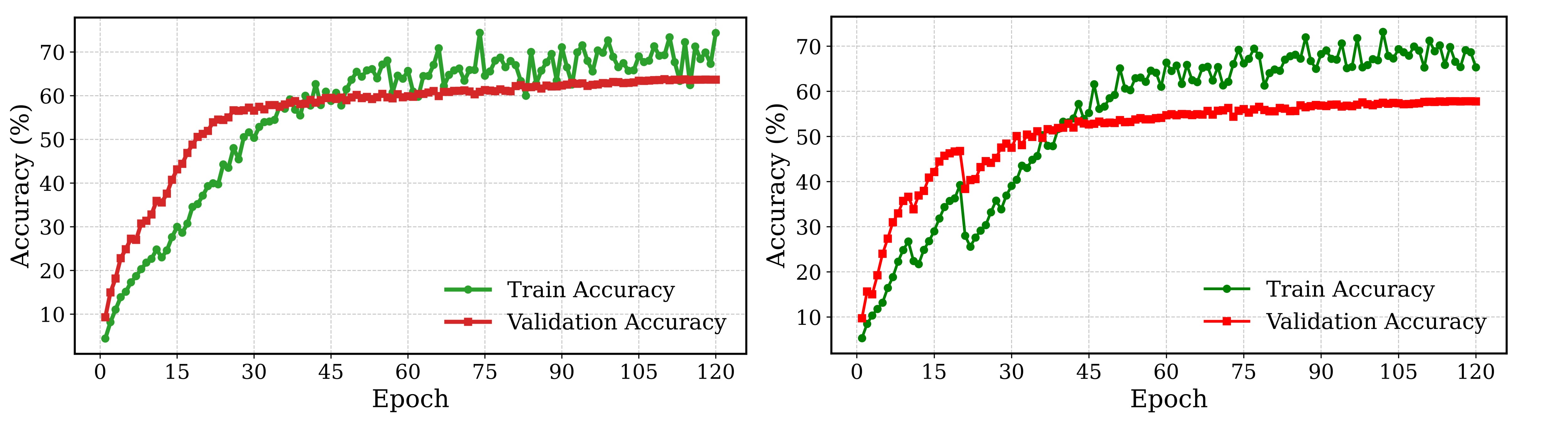}
			\caption{Training and validation accuracy curves for the model trained for the full 120 epochs under standard 1:1 input resolution (left) and extreme 4:1 resolution (right). In both cases, the validation accuracy is already close to convergence by epoch 20, with subsequent improvements being only marginal. }
			\label{fig:aspect_ratio_curves}
		\end{figure}

		\subsection{Sensitivity of the Fourier-like Parameters $\beta$ and $\gamma$}
		\label{sec:fourier-params-sensitivity}
		
		In the fine-tuning adaptation of WePE (Section~\ref{subsec:wefpe-finetune}), the Fourier-like approximation introduces two scalar parameters, $\beta$ and $\gamma$, which are learnable in all our main experiments. To explicitly verify that the method does not rely on careful hand-tuning of these parameters, we conduct an additional sensitivity study in which we fix $\beta$ and $\gamma$ to different values and measure the impact on downstream performance.

		We use a ViT-B/16 backbone pretrained on ImageNet-21k and fine-tune it on
		CIFAR-100 with our Weierstrass positional encoding. The image resolution is
		$224\times 224$ and the patch size is $16$. For the sensitivity experiment, we replace the learnable $\beta$ and $\gamma$ with fixed scalars and perform a $5\times5$ grid search over
		\begin{align*}
			\beta &\in \{0.01, 0.05, 0.10, 0.20, 0.50\}, \\
			\gamma_{\text{scale}} &\in \{0.01, 0.05, 0.10, 0.20, 0.50\}.
		\end{align*}
		where the $k$-th Fourier coefficient is parameterized as
		$\gamma_k = \gamma_{\text{scale}} / k^2$ with $K=3$ terms. For each of the $25$ configurations, we fine-tune the model for $5$ epochs using AdamW with learning rate $10^{-3}$ and weight decay $0.05$, and report the best test accuracy over epochs. All other settings (data augmentations, batch size, random seed, etc.) are kept fixed across the grid.

		Across all $25$ configurations, the mean best test accuracy is
		$49.93\%$ with a standard deviation of $2.81$ and a coefficient of
		variation of $5.6\%$. The minimum and maximum accuracies are $40.29\%$ and $52.27\%$, respectively. The full accuracy surface is visualized as a heatmap in Fig.~\ref{fig:wepe-sensitivity-heatmap}, and two sets of slices highlighting the effect of varying $\beta$ or $\gamma_{\text{scale}}$ independently are shown in Fig.~\ref{fig:wepe-sensitivity-curves}.

		We are primarily interested in the practically relevant range $\beta \ge 0.05$ and $\gamma_{\text{scale}} \ge 0.05$, which contains the settings used in our main experiments. In this $4\times4$ sub-grid ($16$ configurations), the best test accuracy lies between $50.28\%$ and
		$51.59\%$, \textit{i.e.}\, within a narrow band of only $1.31$ percentage points.
		Both Fig.~\ref{fig:wepe-sensitivity-heatmap} and Fig.~\ref{fig:wepe-sensitivity-curves} exhibit a broad plateau over this
		region: for any fixed $\gamma_{\text{scale}}\in\{0.05,0.10,0.20,0.50\}$,
		sweeping $\beta$ from $0.05$ to $0.50$ produces only mild fluctuations, and
		vice versa for sweeping $\gamma_{\text{scale}}$ at fixed
		$\beta\in\{0.05,0.10,0.20,0.50\}$. These variations are comparable to the
		typical run-to-run noise observed in short fine-tuning.

		Noticeable degradation occurs only in the extreme corner cases with very
		small $\beta = 0.01$ and very small
		$\gamma_{\text{scale}} \in \{0.01, 0.05\}$, where the best accuracy drops to
		$41$--$40\%$. In this regime, the stabilizing term $1/(\lVert z\rVert^2 +
		\beta)$ approaches the original singular behavior of $1/\lVert z\rVert^2$,
		and the Fourier correction is too weak to compensate, which is precisely what
		the learnable $\beta$ and $\gamma$ in our main method are designed to avoid.
		Importantly, none of our main experiments operate in this pathological
		parameter regime.

		Although the Fourier-like approximation introduces the additional scalars
		$\beta$ and $\gamma$, this experiment shows that WePE enjoys a wide region of
		insensitivity with respect to these parameters. In the broad practical range
		$\beta\in[0.05,0.50]$ and $\gamma_{\text{scale}}\in[0.05,0.50]$, the
		downstream accuracy varies by at most about $1.3$ percentage points, while
		larger drops are confined to extreme settings that are not used in practice.
		Combined with the fact that $\beta$ and $\gamma$ are learned automatically in our main models, this indicates that the Fourier-like adaptation does not
		introduce a significant tuning burden and that the proposed WePE is robust to the specific choices of these parameters. 
		
		\begin{figure}[t]
			\centering
			\includegraphics[width=0.7\linewidth]{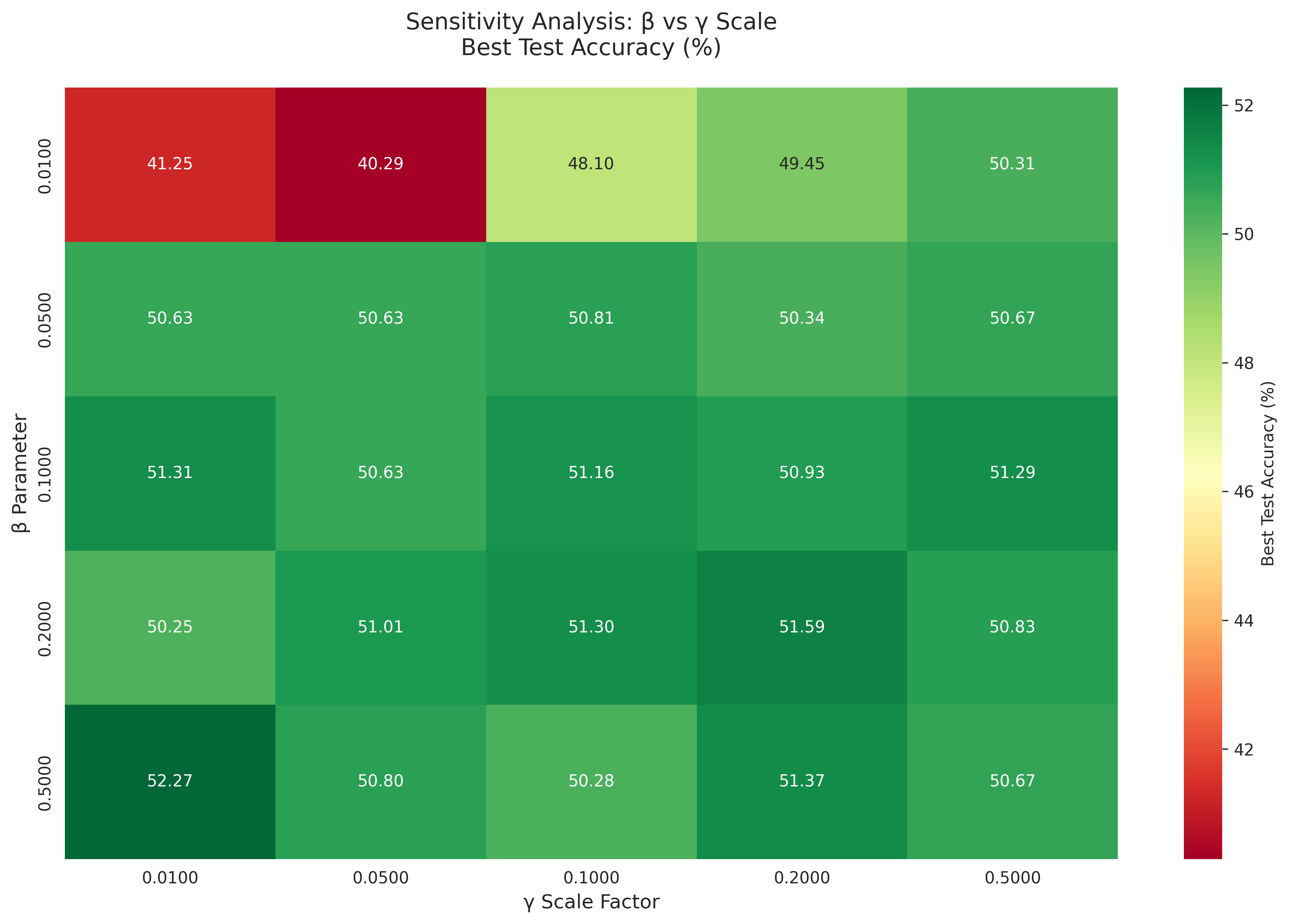}
			\caption{Sensitivity of WePE to the Fourier-like parameters $\beta$ and
				$\gamma_{\text{scale}}$ on CIFAR-100 with a ViT-B/16 backbone. Each cell
				shows the best test accuracy (\%) over $5$ epochs of fine-tuning. }
			\label{fig:wepe-sensitivity-heatmap}
		\end{figure}
		
		\begin{figure}[t]
			\centering
			\includegraphics[width=0.48\linewidth]{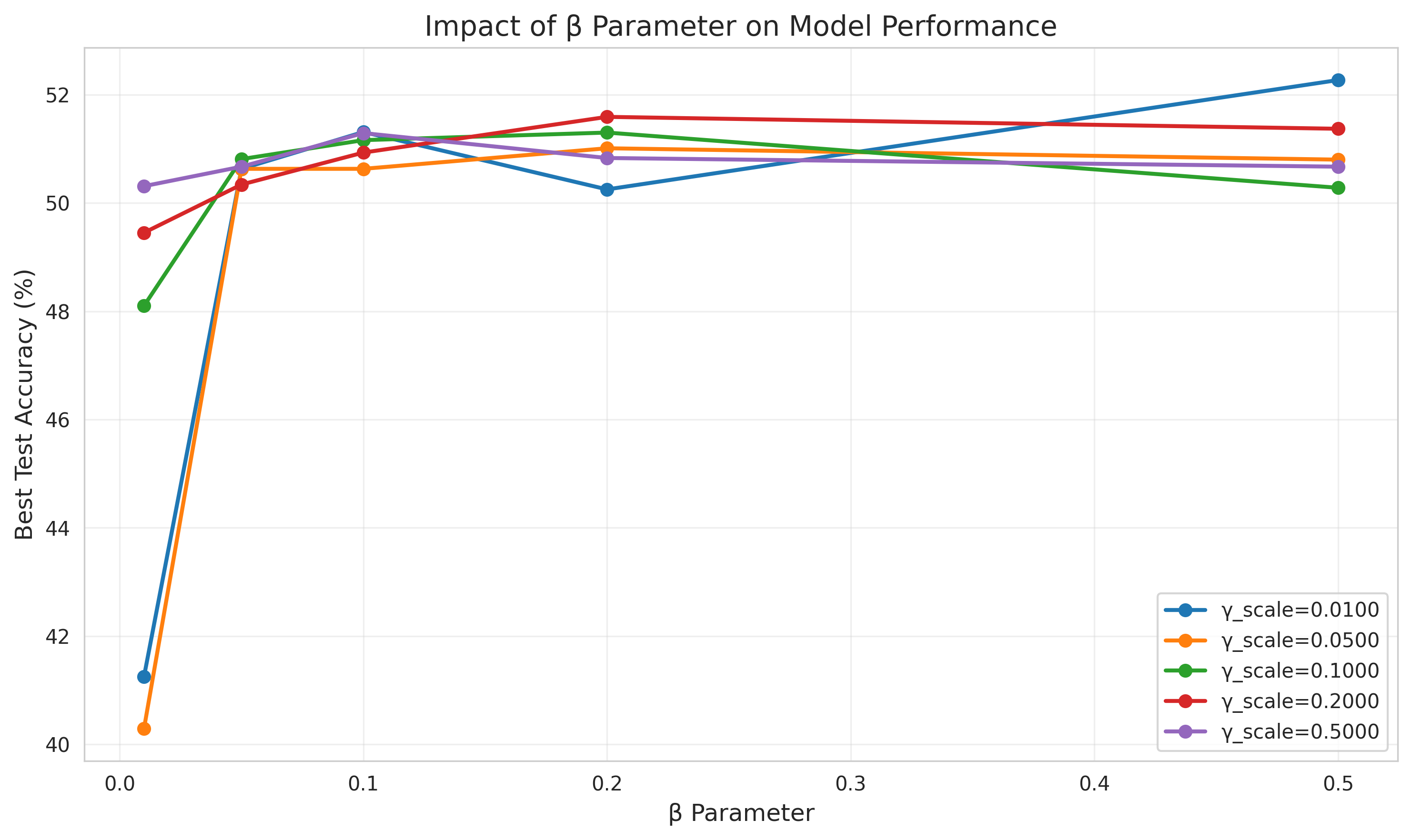}
			\includegraphics[width=0.48\linewidth]{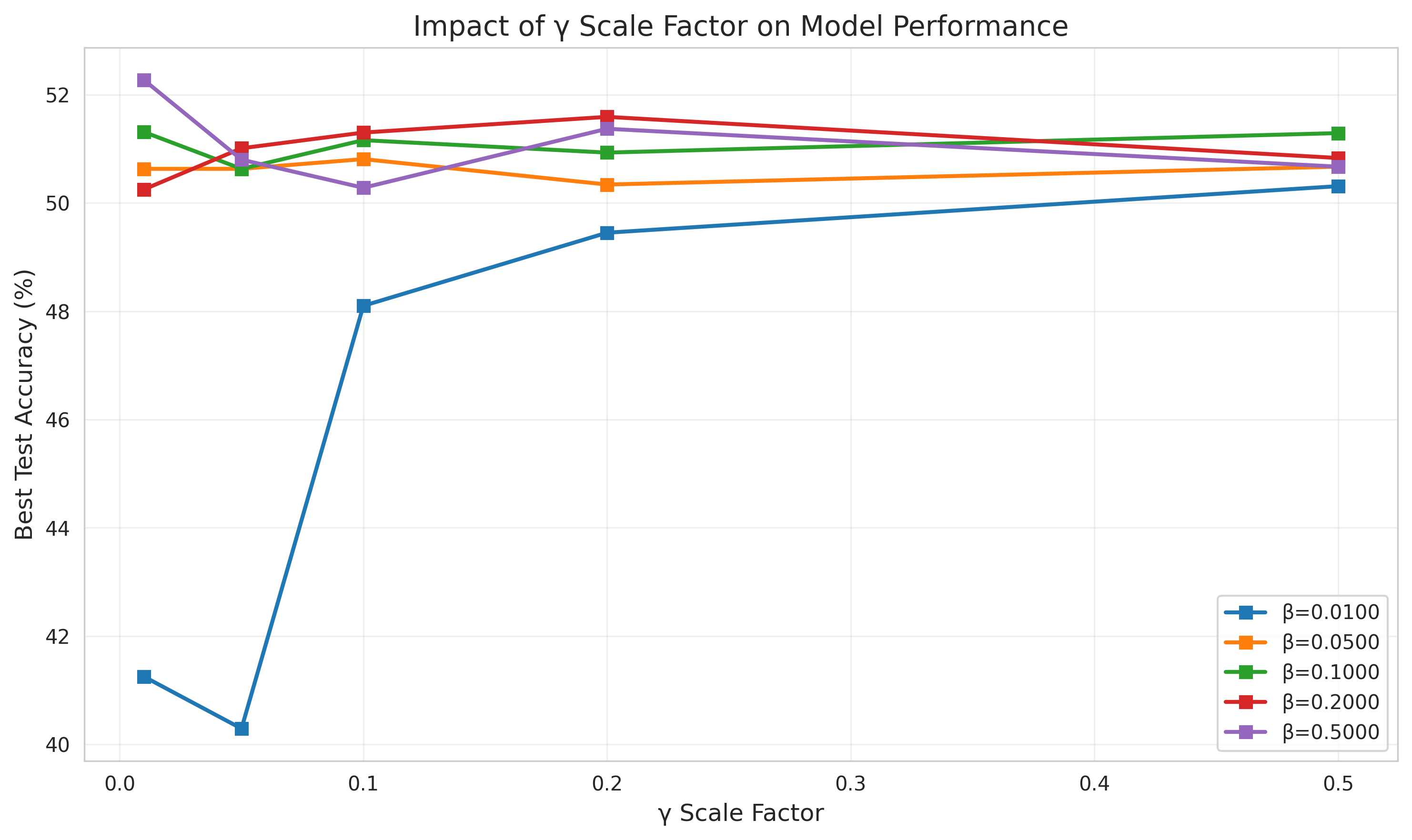}
			\caption{Slices of the sensitivity surface in
				Fig.~\ref{fig:wepe-sensitivity-heatmap}. Varying $\beta$ while keeping $\gamma_{\text{scale}}$ fixed, and
				varying $\gamma_{\text{scale}}$ while keeping $\beta$ fixed. In the practical range $\beta\ge 0.05,\ \gamma_{\text{scale}}\ge 0.05$ the curves are relatively flat, indicating low sensitivity. }
			\label{fig:wepe-sensitivity-curves}
		\end{figure}

		\subsection{Object detection and image segmentation tasks integrated with WePE}
		\label{app:coco_attention_seg}

		To further investigate whether the geometric inductive bias introduced by WePE is beneficial for dense prediction tasks such as detection and segmentation, we first conduct a zero-shot attention--segmentation evaluation on the COCO~2017 dataset. This experiment uses only the frozen image classifier and does not perform any task-specific fine-tuning, thereby isolating the effect of the positional encoding itself.

		We use the same ViT backbone as in the main experiments, instantiated with either standard APE or our WePE. The models are trained only on the classification task (no segmentation supervision) and are kept frozen during this evaluation. For the dataset, we randomly sample $50$ images from the COCO~2017 validation split, together with their official instance-level annotations. All images are resized to $224\times 224$. For each image, we convert the COCO instance masks into a semantic
		segmentation mask by taking the union of all annotated foreground instances (category IDs $>0$) and treating all remaining pixels as background.\footnote{
			We only require a binary foreground/background separation in this evaluation,
			so the exact instance identity is not used.} This yields a binary ground-truth
		mask $M \in \{0,1\}^{H\times W}$ for each image. Given a frozen ViT, we compute attention-based localization maps. Concretely, for each
		transformer layer we average the multi-head attention matrices, add the
		identity connection, renormalize along the last dimension, and then multiply
		the resulting matrices across layers to obtain a single attention matrix
		linking the \texttt{[CLS]} token to all patch tokens. The resulting vector is
		reshaped into a $h\times w$ grid and bilinearly upsampled to the input
		resolution, yielding a dense attention map
		$A \in [0,1]^{H\times W}$.\footnote{In practice we apply a small discard
			ratio to extremely low attention values before renormalization, this has
			negligible impact on the conclusions.} The same pipeline is applied to both
		the APE-based and WePE-based models, without any further tuning. We evaluate how well the attention map $A$ aligns with the ground-truth mask
		$M$ using two complementary metrics: 
		
		\begin{itemize}
			
			\item IoU. We threshold the attention map at $0.5$ to obtain a
			predicted binary mask $\hat{M}$ and compute the standard intersection-over-union
			(IoU) between $\hat{M}$ and $M$:
			\(
			\mathrm{IoU} = 
			\frac{|\hat{M} \cap M|}{|\hat{M} \cup M|}.
			\) 
			
			\item Point-biserial correlation. We treat the attention values
			$A$ as continuous scores and the mask $M$ as a binary variable, and compute
			the point-biserial correlation coefficient between them. This measures
			how strongly high-attention pixels correlate with foreground regions. 
			
		\end{itemize} 
		
		Both metrics are computed per image. we report the mean and standard deviation
		over the $50$ images, and perform a paired $t$-test between WePE and APE. Table~\ref{tab:coco_attention_seg} summarizes the quantitative results.
		WePE improves the mean IoU from $0.2213$ to $0.2408$, a gain of
		$+0.0195$, and increases the mean point-biserial correlation from $0.0255$
		to $0.0734$. The IoU improvement is statistically significant under a paired
		$t$-test ($t=2.41$, $p=0.0197$), while the correlation improvement shows a
		consistent positive trend but is not statistically significant at the $5\%$
		level ($t=1.20$, $p=0.23$). Figure~\ref{fig:coco_attention_hist}
		visualizes the distribution of IoU and correlation scores for both methods,
		and Figure~\ref{fig:coco_attention_qual} provides qualitative examples.

		Overall, these results indicate that WePE produces attention maps that are
		more tightly aligned with object regions in COCO images, even without any
		segmentation or detection training. Since spatially aligned attention is a
		key prerequisite for successful detection and segmentation systems, this
		zero-shot evaluation provides additional evidence that the geometric prior
		introduced by WePE is beneficial beyond image classification. 
		
		\begin{table}[t]
			\centering
			\small
			\begin{tabular}{lcc}
				\toprule
				Method & IoU $\uparrow$ & Corr. $\uparrow$ \\
				\midrule
				APE & $0.2213 \pm 0.1652$ & $0.0255 \pm 0.2261$ \\
				WePE (ours) & $\mathbf{0.2408} \pm 0.1835$ & $\mathbf{0.0734} \pm 0.1929$ \\
				\bottomrule
			\end{tabular}
			\caption{Zero-shot attention--segmentation evaluation on COCO~2017.
				WePE yields better alignment between attention maps and ground-truth
				masks than standard APE, despite no task-specific training. }
			\label{tab:coco_attention_seg}
		\end{table}
		
		\begin{figure}[t]
			\centering
			
			\includegraphics[width=\linewidth]{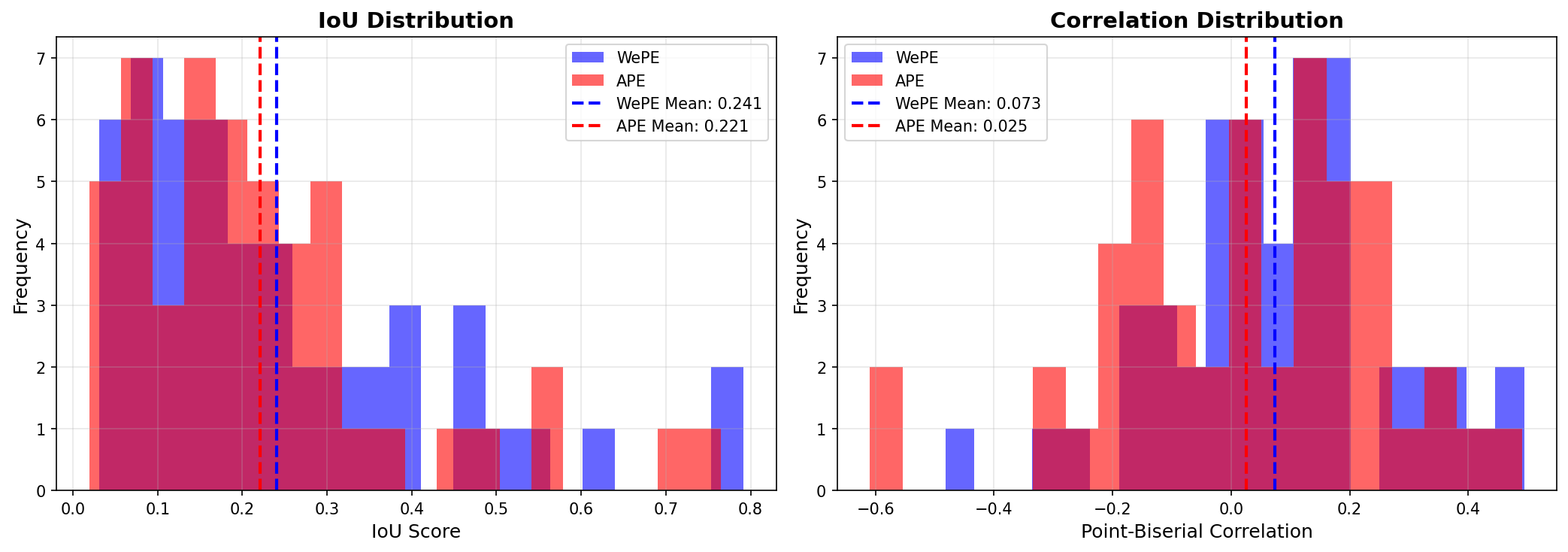}
			\caption{Distributions of IoU (left) and point-biserial correlation
				(right) between attention maps and COCO~2017 segmentation masks,
				comparing WePE and APE. Dashed lines denote the mean of each method. }
			\label{fig:coco_attention_hist}
		\end{figure}

		\begin{figure}[ht]
			\centering
			\newcommand{\cocoimg}[1]{%
				\begin{minipage}{0.31\columnwidth}
					\centering
					\includegraphics[width=\linewidth]{#1.png}%
			\end{minipage}}
			
			\cocoimg{01}\hfill\cocoimg{02}\hfill\cocoimg{03}\\[2pt]
			\cocoimg{04}\hfill\cocoimg{05}\hfill\cocoimg{06}\\[2pt]
			\cocoimg{07}\hfill\cocoimg{08}\hfill\cocoimg{09}\\[2pt]
			\cocoimg{010}\hfill\cocoimg{011}\hfill\cocoimg{012}\\[2pt]
			\cocoimg{013}\hfill\cocoimg{014}\hfill\cocoimg{015}\\[2pt]
			\cocoimg{016}\hfill\cocoimg{017}\hfill\cocoimg{018}
			
			\caption{Qualitative examples from COCO~2017 showing the original image,
				ground-truth mask, and attention overlays from WePE and APE~\citep{dosovitskiy2020image}. WePE
				consistently produces attention patterns that better cover the foreground
				objects and suppress background clutter.} 
			\label{fig:coco_attention_qual}
		\end{figure}

		Furthermore, to verify that the geometric inductive bias of WePE also transfers to large-scale
		object detection, we integrate WePE into the famous ViTDet framework~\cite{li2022exploring}, a strong and widely adopted baseline for plain ViT backbones~\citep{dosovitskiy2020image}, 
		and evaluate on the COCO~2017 dataset. This experiment keeps all components of
		ViTDet unchanged except for the positional encoding inside the ViT backbone~\citep{dosovitskiy2020image},
		providing a controlled comparison between standard APE and WePE. We follow the official ViTDet recipe on COCO~2017.
		The detector is Mask R-CNN or Cascade Mask R-CNN with a simple feature
		pyramid built on top of a plain ViT backbone~\citep{dosovitskiy2020image}, as in~\cite{li2022exploring}.
		We use the default $1024\times 1024$ large-scale
		jittering (LSJ) data augmentation~\cite{li2022exploring}, 100-epoch training schedule,
		AdamW optimizer, and stochastic depth, exactly matching the public
		ViTDet configuration. For the backbone we consider ViT-B and ViT-L~\citep{dosovitskiy2020image} pretrained with MAE~\citep{he2022masked} on
		ImageNet-1K, using the official checkpoints.
		In the WePE configuration, we replace the APE module in the
		ViT backbone by our WePE parameterization while leaving all other architecture
		and optimization hyperparameters identical.
		In particular, the feature pyramid, RPN, ROI heads, and loss functions are
		unchanged. This ensures that any difference in detection performance can be
		attributed to the positional encoding. We train all models on \texttt{train2017} and report
		bounding-box AP ($\text{AP}_{\text{box}}$) and instance
		segmentation AP ($\text{AP}_{\text{mask}}$) on \texttt{val2017} using the
		standard COCO evaluation protocol. Table~\ref{tab:wepe_vitdet_coco} summarizes the experimental results. Crucially, WePE consistently outperforms the baseline, indicating that the inductive bias afforded by WePE successfully translates to the challenging object detection task, thereby further validating its broad generalization capability.

		\begin{table}[t]
			\centering
			\footnotesize 
			\setlength{\tabcolsep}{3.5pt} 
			\begin{tabular}{@{}lcccc@{}}
				\toprule
				\multirow{2}{*}{Backbone \& PE} &
				\multicolumn{2}{c}{Mask R-CNN} &
				\multicolumn{2}{c}{Cascade Mask} \\ 
				\cmidrule(lr){2-3} \cmidrule(lr){4-5}
				& AP$_{\text{box}}$ & AP$_{\text{mask}}$ &
				AP$_{\text{box}}$ & AP$_{\text{mask}}$ \\
				\midrule
				ViT-B + APE (base) & 51.6 & 45.9 & 54.0 & 46.7 \\
				ViT-B + \textbf{WePE} & \textbf{52.9} & \textbf{46.1} & \textbf{54.7} & \textbf{47.3} \\
				\addlinespace[0.2em]
				ViT-L + APE (base) & 55.6 & 49.2 & 57.6 & 49.8 \\
				ViT-L + \textbf{WePE} & \textbf{56.2} & \textbf{49.5} & \textbf{58.2} & \textbf{50.1} \\
				\bottomrule
			\end{tabular}
			\caption{COCO~2017 detection results with ViTDet-style plain backbones.
				All models use MAE-pretrained~\citep{he2022masked} ViT-B/L backbones~\citep{dosovitskiy2020image} and follow the official
				ViTDet configuration~\cite{li2022exploring}.}
			\label{tab:wepe_vitdet_coco}
		\end{table}

		\subsection{Model shortcomings and future improvement directions}
		\label{sec:limitations-future-work}

		At the end of this paper, we conduct an explicit failure analysis on WePE to identify potential deficiencies in the algorithm.

		We first quantify how classification errors relate to the spatial
		distribution of the most attended patches in the last transformer layer. We use the ViT-Tiny + WePE model at a resolution of
		$224\times224$ with a $14\times14$ patch grid.
		For each test image $x$, we extract the attention weights from the last multi-head
		self-attention block and average over heads to obtain an attention matrix
		$A \in \mathbb{R}^{L\times L}$, where $L=1+N$ and $N=14\times14$ is the number of patch tokens.
		We focus on the attention from the class token to patch tokens, \textit{i.e.}\, the row
		$A_{\text{cls}\rightarrow\text{patch}} \in \mathbb{R}^{N}$.
		We select the top-$K$ patches with the largest weights (we use $K=5$ throughout)
		and treat their grid coordinates $\{p_1,\dots,p_K\}$ on the $14\times14$ lattice as the
		discriminative region set for this image. We then define a scalar spatial separation score $S(x)$ as the mean pairwise Euclidean
		distance between these $K$ patches on the lattice:
		\( S(x) = \frac{2}{K(K-1)} \sum_{1 \le i < j \le K} d(p_i, p_j)\), $d(p_i,p_j)$  is the Euclidean distance on the $14\times14$ grid.  
		
		Intuitively, small $S(x)$ indicates that the model concentrates its attention on a compact region,
		while large $S(x)$ corresponds to highly dispersed evidence that requires long-range reasoning. For every test image we record the pair $(S(x), \mathbbm{1}[\hat{y}(x)\neq y])$, where
		$\hat{y}(x)$ is the model prediction and $y$ is the ground-truth label.
		We partition the range of $S$ into $B$ bins (we use $B=20$ with robust percentiles to avoid outliers)
		and compute, for each bin $b$, the empirical error rate
		\(
		\mathrm{Err}(b) = \frac{1}{|{\mathcal{D}}_b|}
		\sum_{x\in {\mathcal{D}}_b} \mathbbm{1}[\hat{y}(x)\neq y],
		\)
		together with the binomial standard deviation as an error bar.
		We also report the number of samples $|{\mathcal{D}}_b|$ per bin to distinguish reliable statistics from low-sample regimes.

		Figure~\ref{fig:wepe-error-vs-separation} shows the resulting
		classification error rate on the test set vs.\ spatial separation curve. Most test samples ($\approx 98.8\%$) fall into the range $S\in[2,10]$, corresponding to moderately dispersed discriminative regions. Within this regime, the error rate remains low and stable between $0.31$ and $0.38$, and the fitted trend is nearly flat. This indicates that WePE handles the vast majority of natural images well, even when the evidence spans multiple mid-range patches. In contrast, only a small fraction of samples ($\approx 1.2\%$) exhibit very large separation scores ($S>10$), meaning that the attended patches are spread across distant parts of the image. In this high-separation regime the error rate increases sharply, and several bins approach an error rate close to $1.0$. These cases correspond to images that require unusually long-range integration of visual cues.
		
		\begin{figure}[t]
			\centering
			\includegraphics[width=0.9\linewidth]{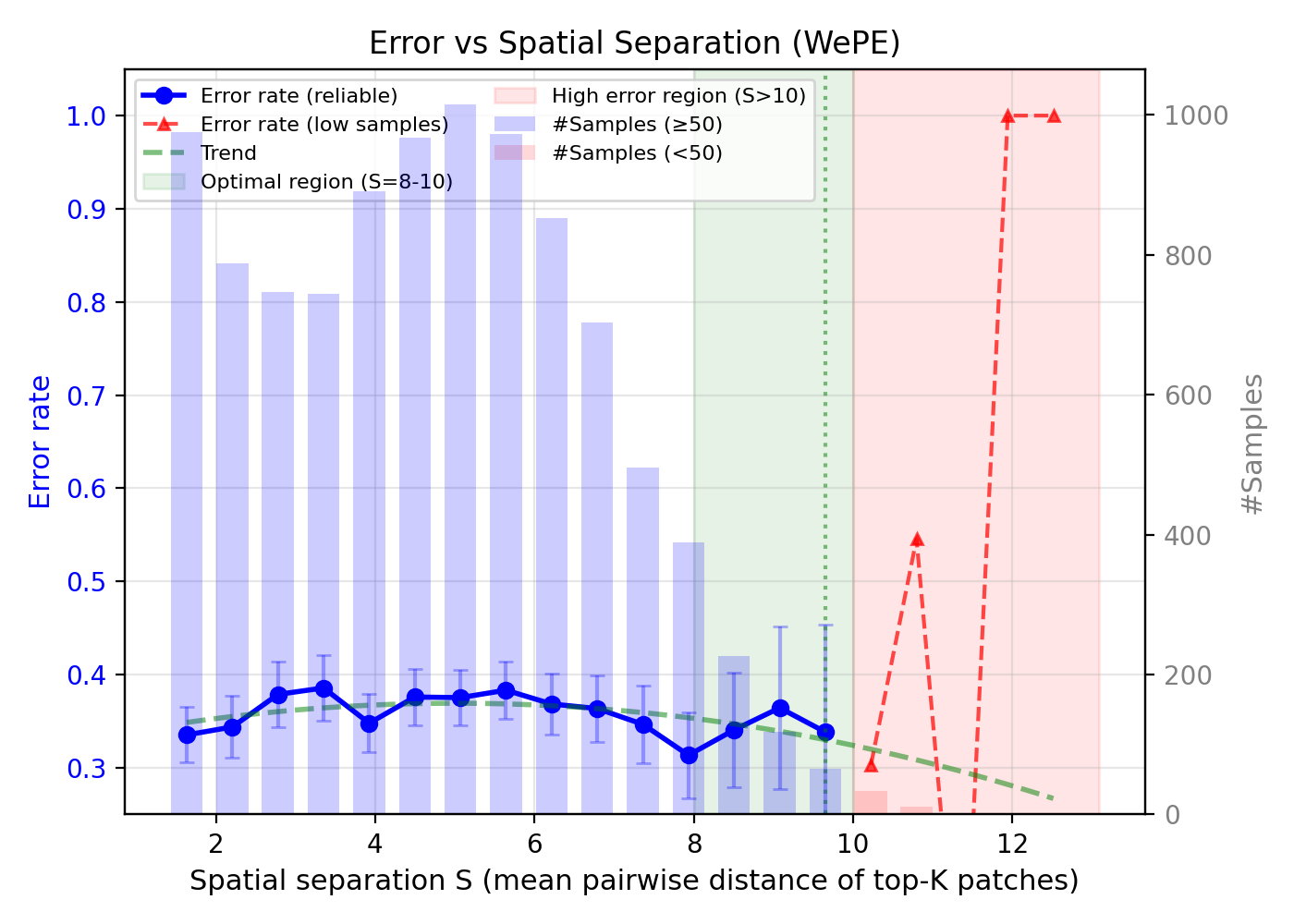}
			\caption{
				Error vs.\ spatial separation for WePE.
				The blue curve shows the classification error rate on the test set as a function of the spatial separation
				score $S$, with error bars
				indicating the binomial standard deviation.
				Bars on the secondary axis denote the number of samples per bin, and the green/red shaded
				regions highlight respectively the mid-range regime with stable low error and the rare
				high-separation regime where errors increase.}
			
			\label{fig:wepe-error-vs-separation}
		\end{figure}

		Immediately after, To visually inspect how WePE allocates attention across the image and to provide explicit
		failure diagnostics, we generate qualitative heatmaps from the last transformer layer. 
		For each test image $x$, we register a forward hook on the last multi-head
		self-attention block and obtain the attention tensor
		$A \in \mathbb{R}^{B\times H\times L\times L}$, where $B$ is the batch size,
		$H$ the number of heads, and $L=1+N$ the sequence length
		(class token plus $N=14\times14$ patch tokens).
		We average over heads and select the row corresponding to the class token,
		yielding a vector $a_{\mathrm{cls}\rightarrow\mathrm{patch}} \in \mathbb{R}^{N}$. We reshape this vector into a $14\times14$ attention map on the patch lattice and
		upsample it to the input resolution using bilinear interpolation.
		The original image is de-normalized to $[0,1]$ and the attention map is overlaid as a
		semi-transparent heatmap.
		We collect both correctly classified samples and failure cases
		(misclassified samples), spanning diverse object categories and spatial layouts. Representative examples are shown in Figure~\ref{fig:wepe-heatmaps}.
		For correctly classified images, WePE consistently produces
		structured and spatially coherent attention patterns:
		high responses concentrate on semantically meaningful parts of the object
		(e.g., the head and torso of an animal, the body of a vehicle, or the base of a lamp),
		while background regions receive very low attention.
		Moreover, attention rarely collapses to a single patch; instead, it forms
		several mid-range clusters covering multiple object parts.
		This behavior matches the intended inductive bias of WePE, which encourages
		locality and moderate-range coupling rather than purely global or purely
		nearest-neighbor interactions. Failure cases reveal complementary behavior.
		When the discriminative evidence is either extremely fragmented or heavily entangled
		with cluttered background, the attention maps sometimes highlight only a subset of the
		relevant regions or partially drift to salient but class-irrelevant structures.
		These patterns are consistent with the quantitative error--vs.--separation analysis: WePE is most reliable when the key cues form one or a few
		coherent regions, and struggles when correct classification requires jointly reasoning
		over many distant or weakly localized cues. 
		
		\begin{figure}[ht]
			\centering
			
			{\small \textbf{Correct Predictions}}
			
			\smallskip
			
			\setlength{\tabcolsep}{1pt}
			\begin{tabular}{cccccccccc}
				\includegraphics[width=0.09\columnwidth]{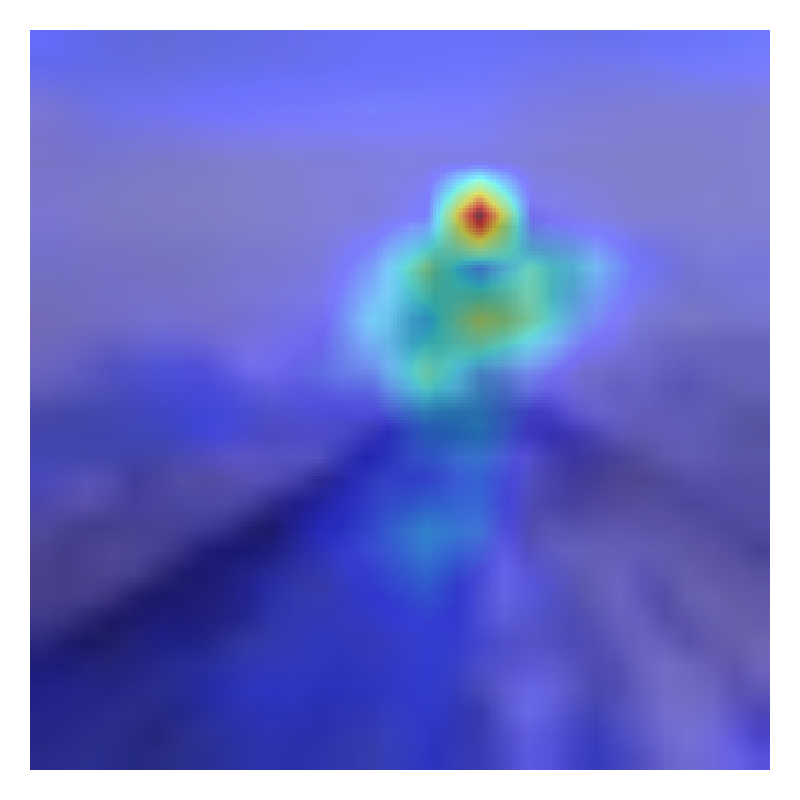} &
				\includegraphics[width=0.09\columnwidth]{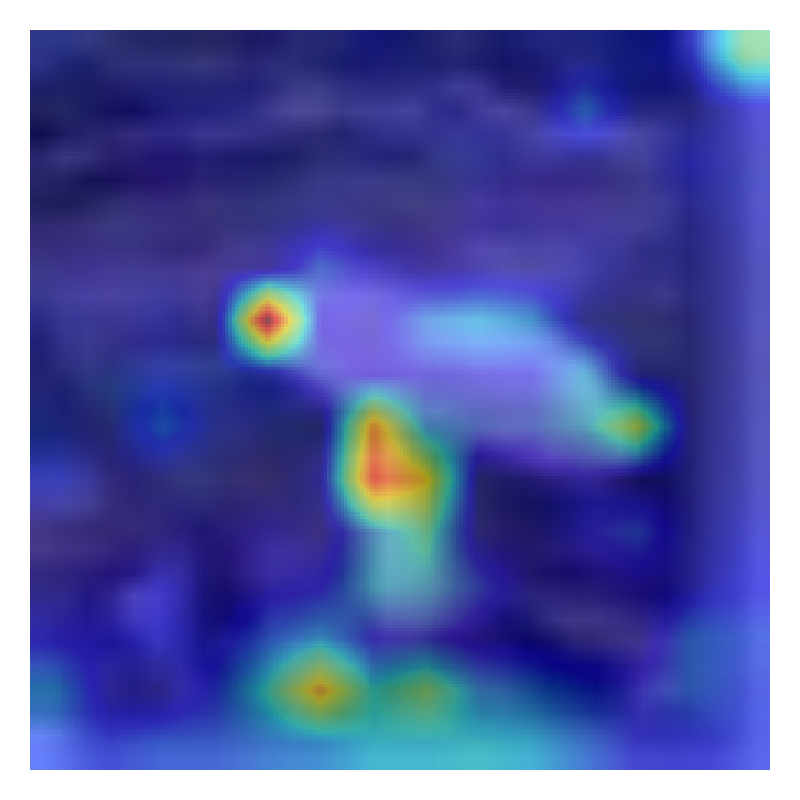} &
				\includegraphics[width=0.09\columnwidth]{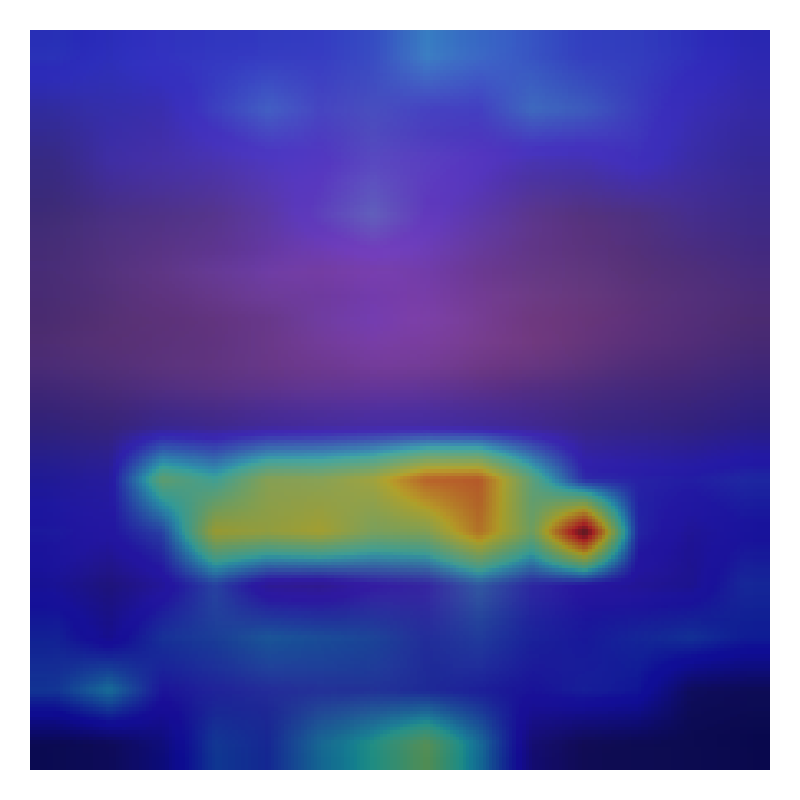} &
				\includegraphics[width=0.09\columnwidth]{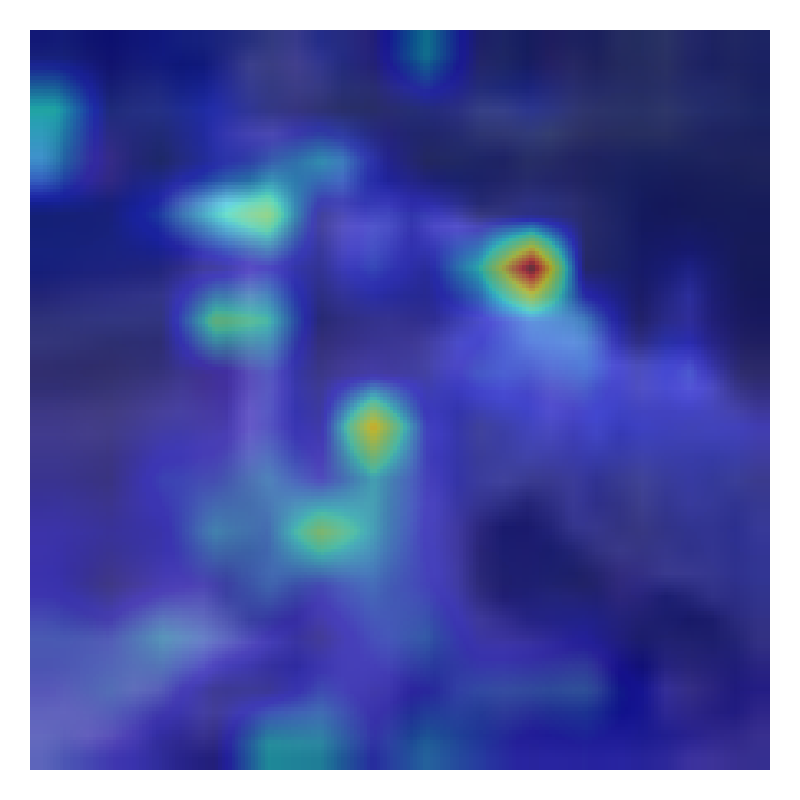} &
				\includegraphics[width=0.09\columnwidth]{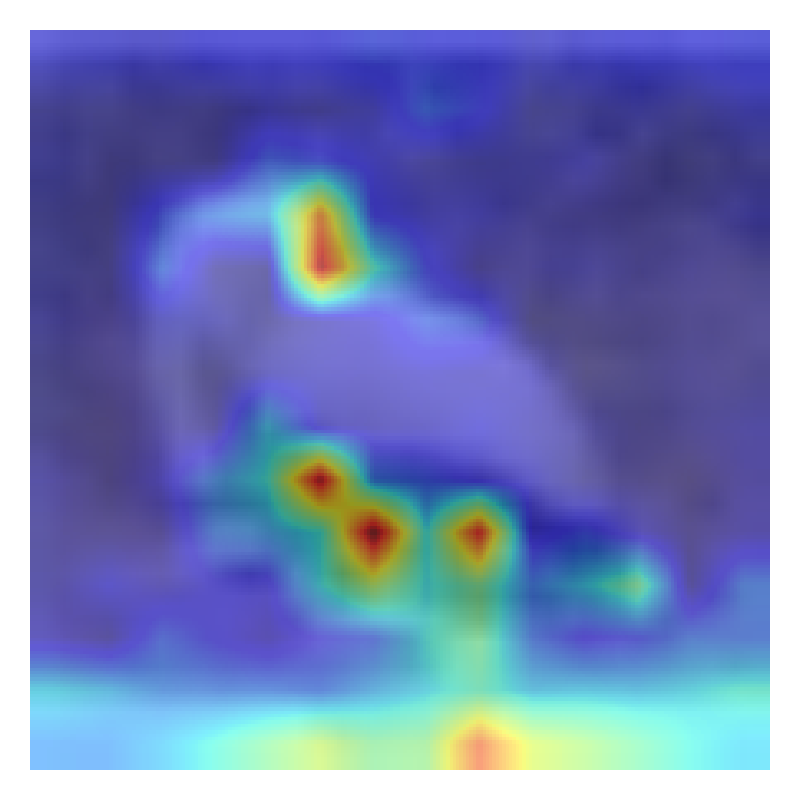} &
				\includegraphics[width=0.09\columnwidth]{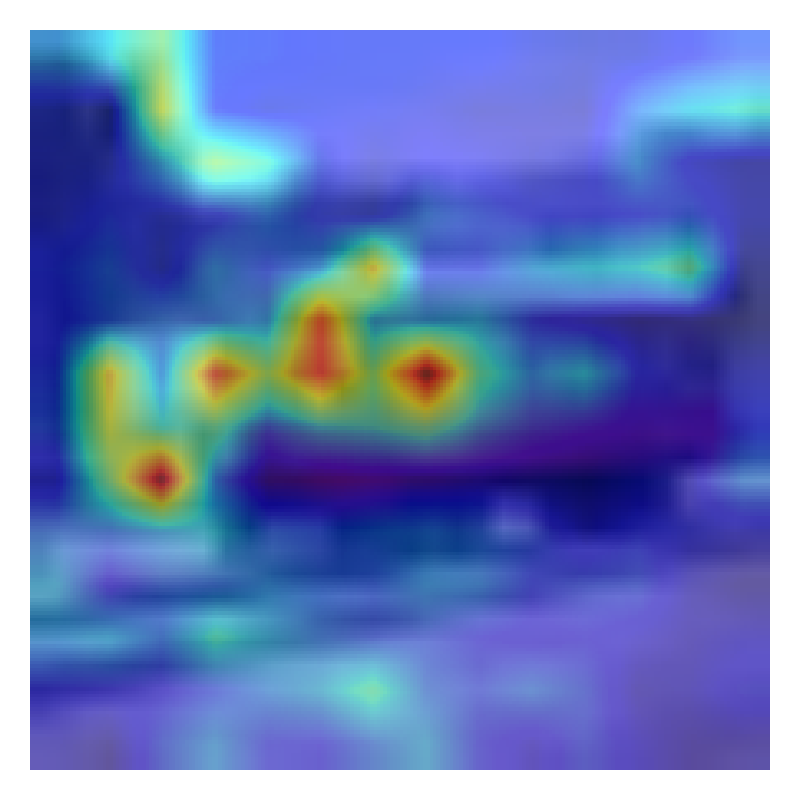} &
				\includegraphics[width=0.09\columnwidth]{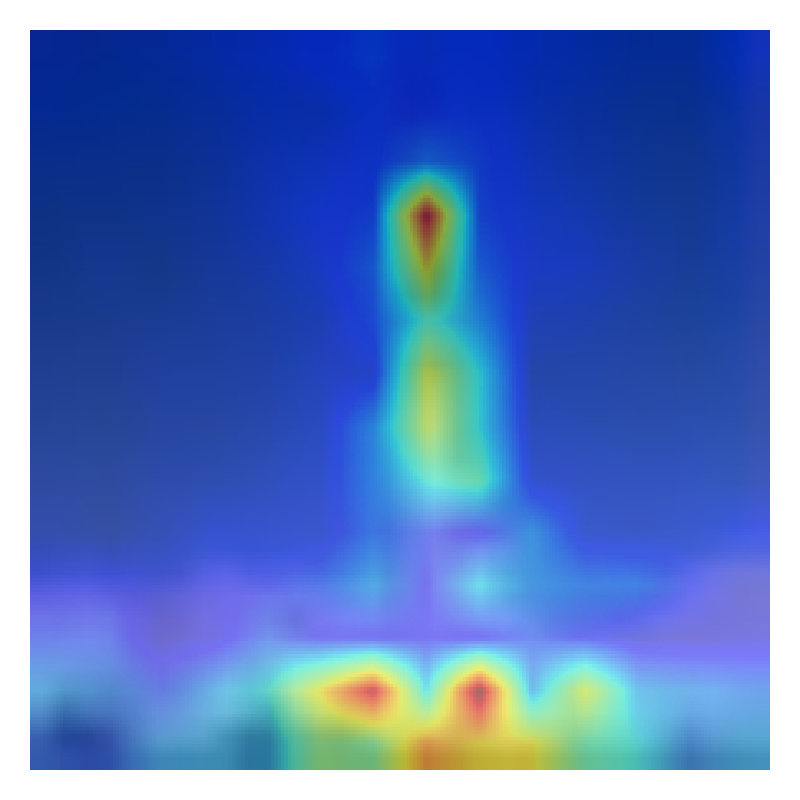} &
				\includegraphics[width=0.09\columnwidth]{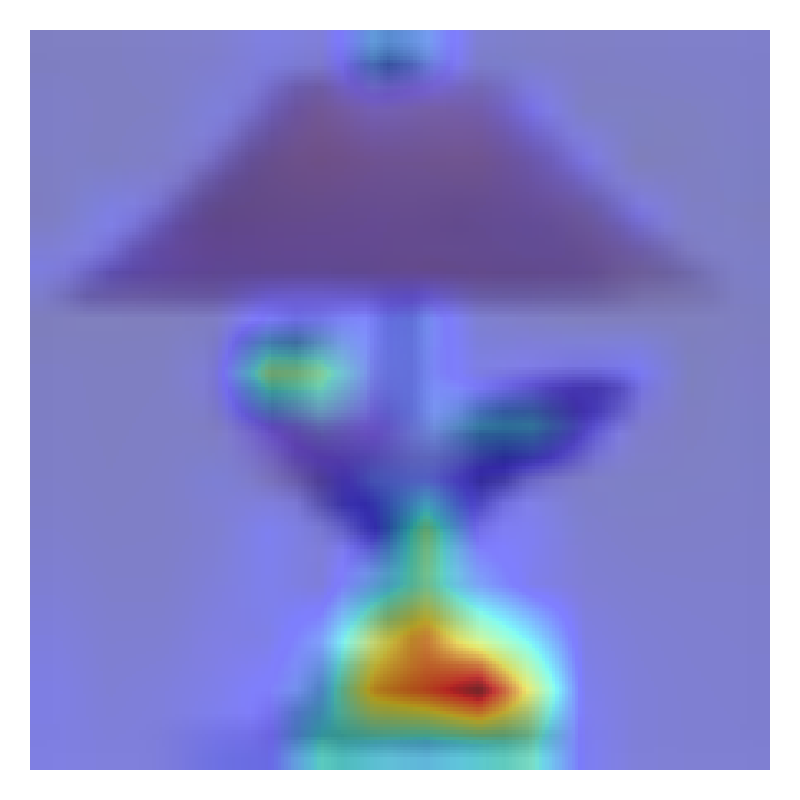} &
				\includegraphics[width=0.09\columnwidth]{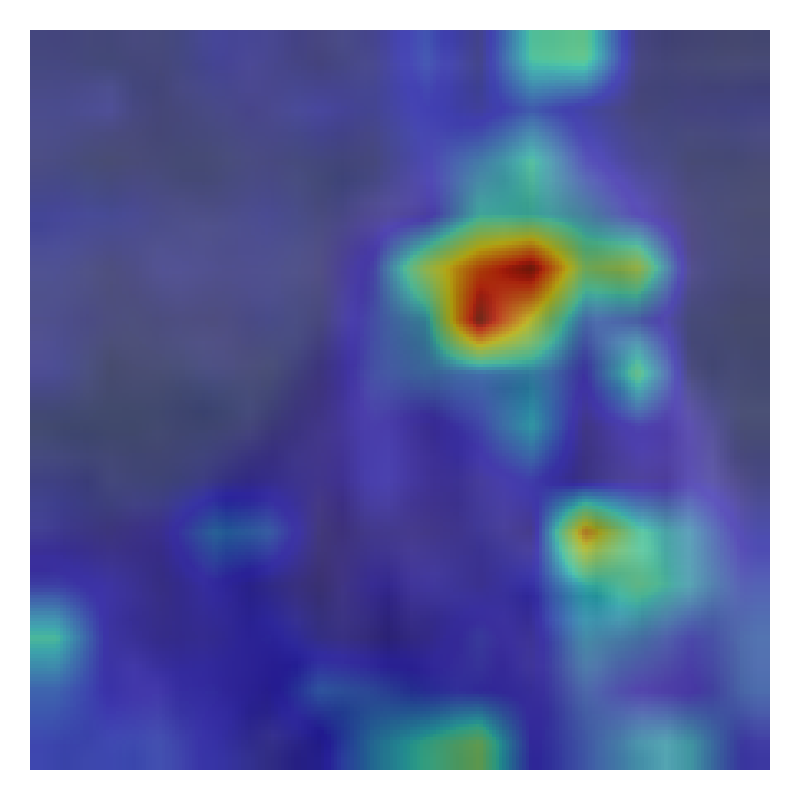} &
				\includegraphics[width=0.09\columnwidth]{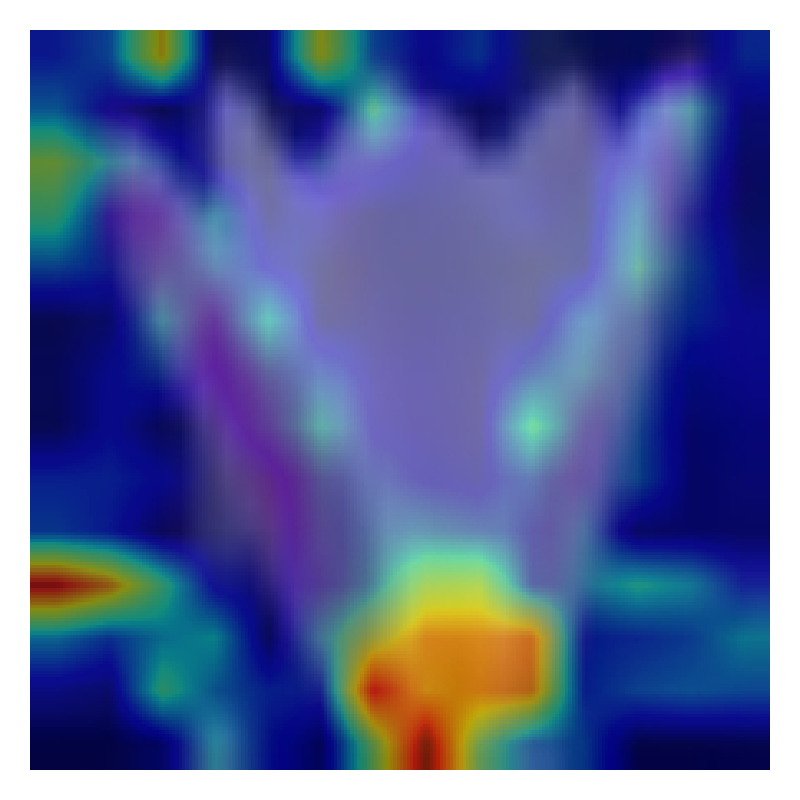}
			\end{tabular}
			
			\medskip
			
			{\small \textbf{Incorrect Predictions}}
			
			\smallskip
			
			\begin{tabular}{cccccccccc}
				\includegraphics[width=0.09\columnwidth]{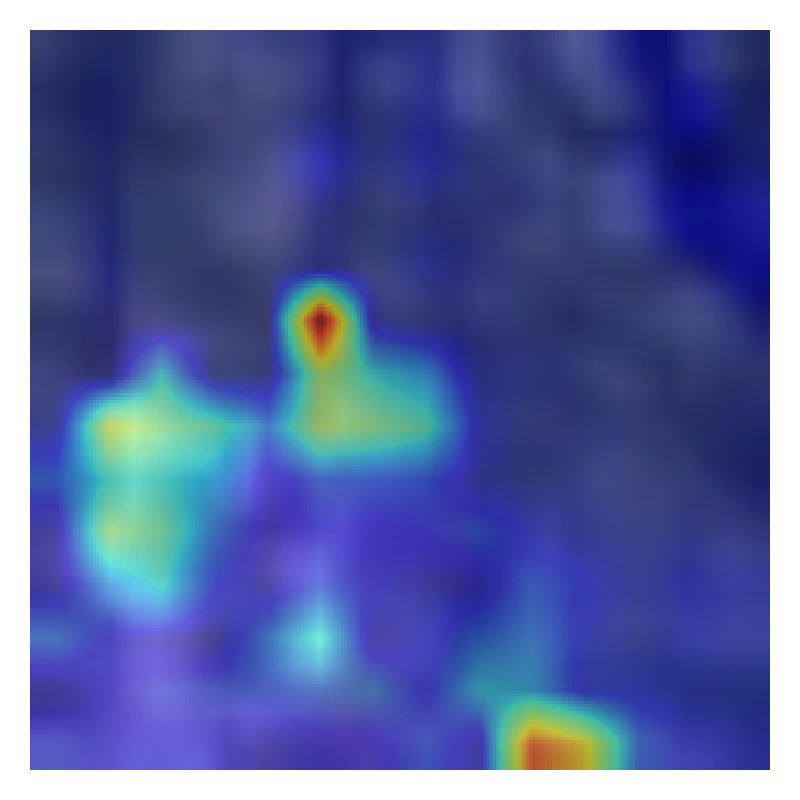} &
				\includegraphics[width=0.09\columnwidth]{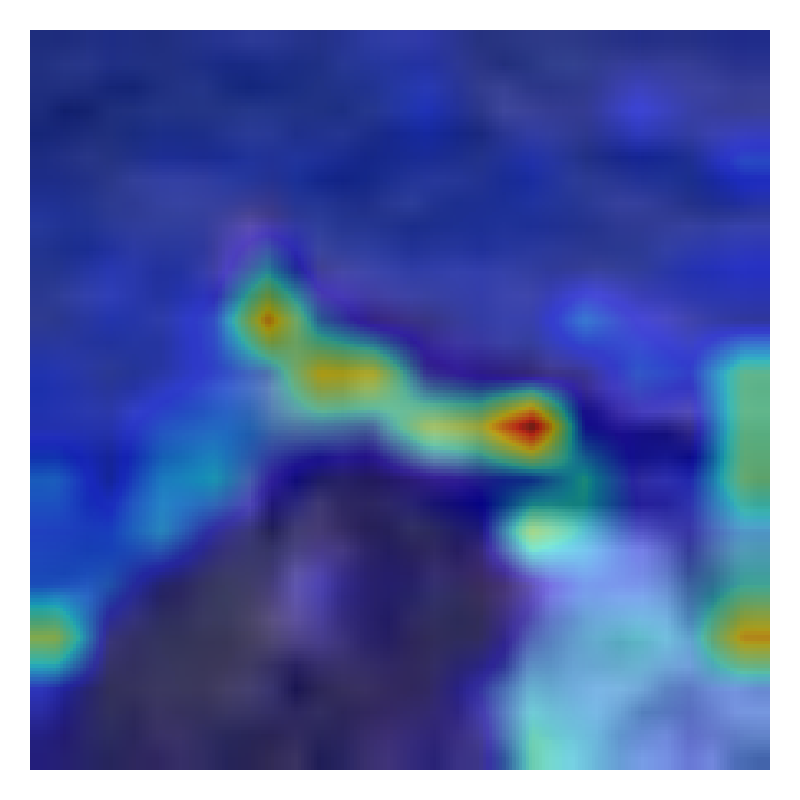} &
				\includegraphics[width=0.09\columnwidth]{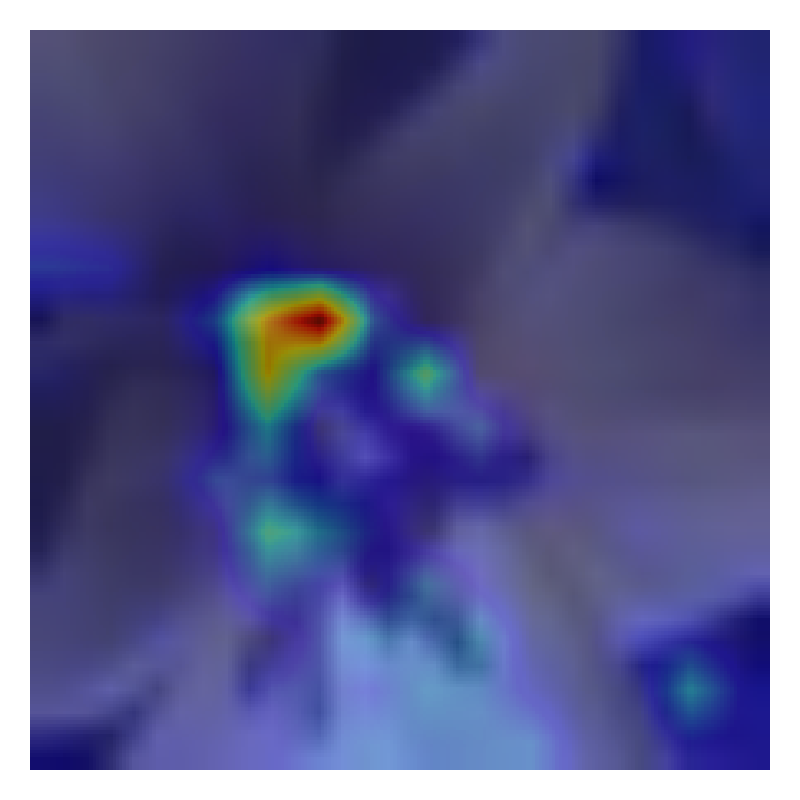} &
				\includegraphics[width=0.09\columnwidth]{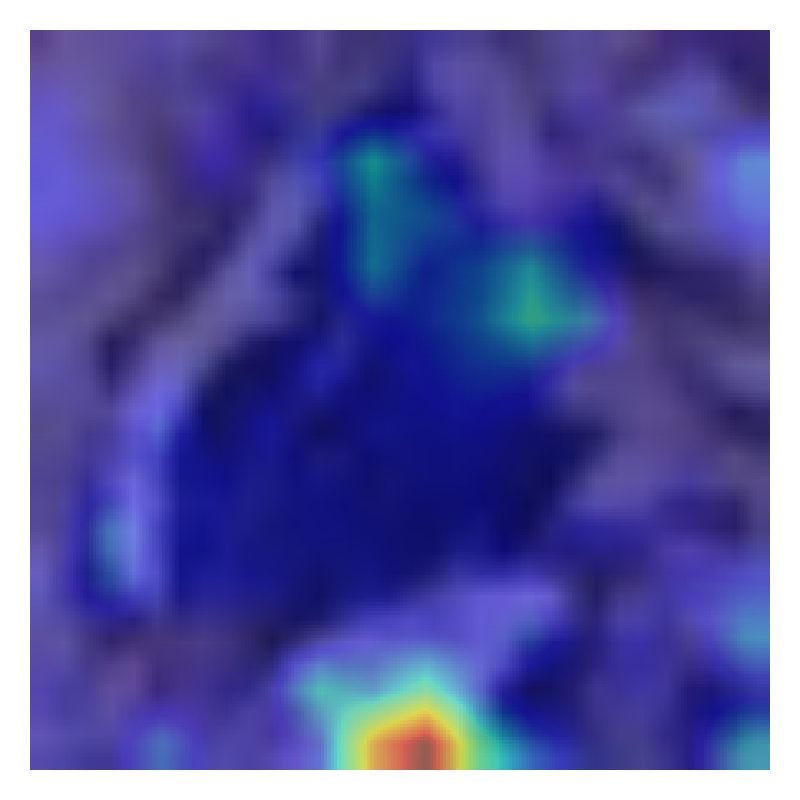} &
				\includegraphics[width=0.09\columnwidth]{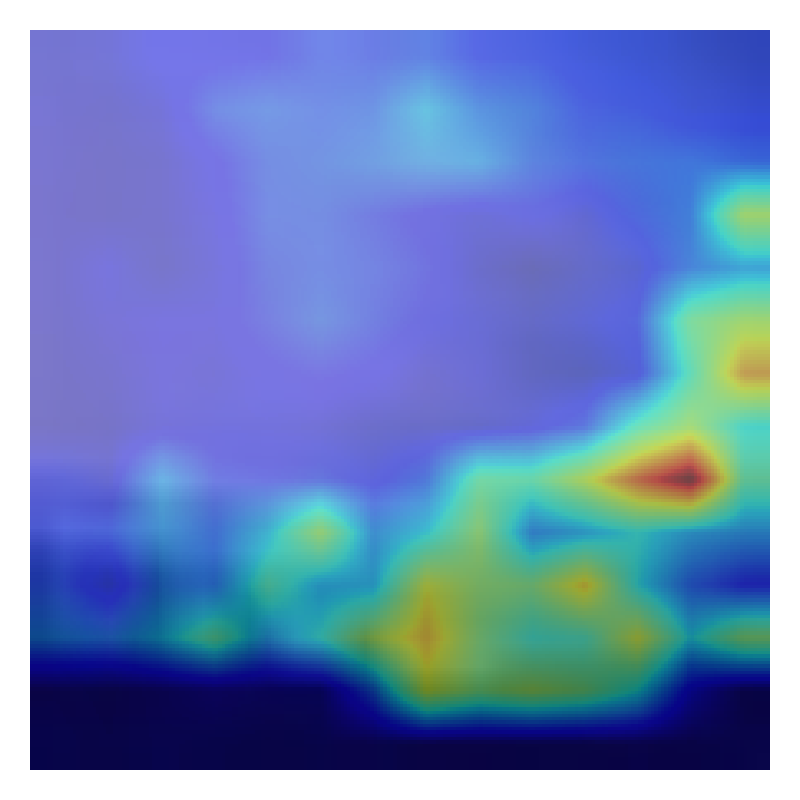} &
				\includegraphics[width=0.09\columnwidth]{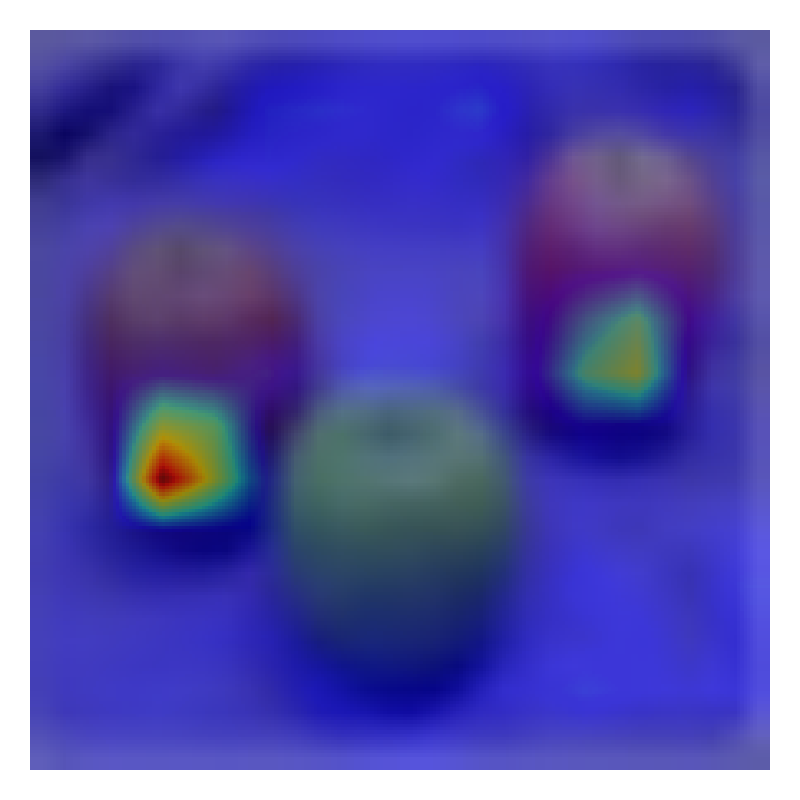} &
				\includegraphics[width=0.09\columnwidth]{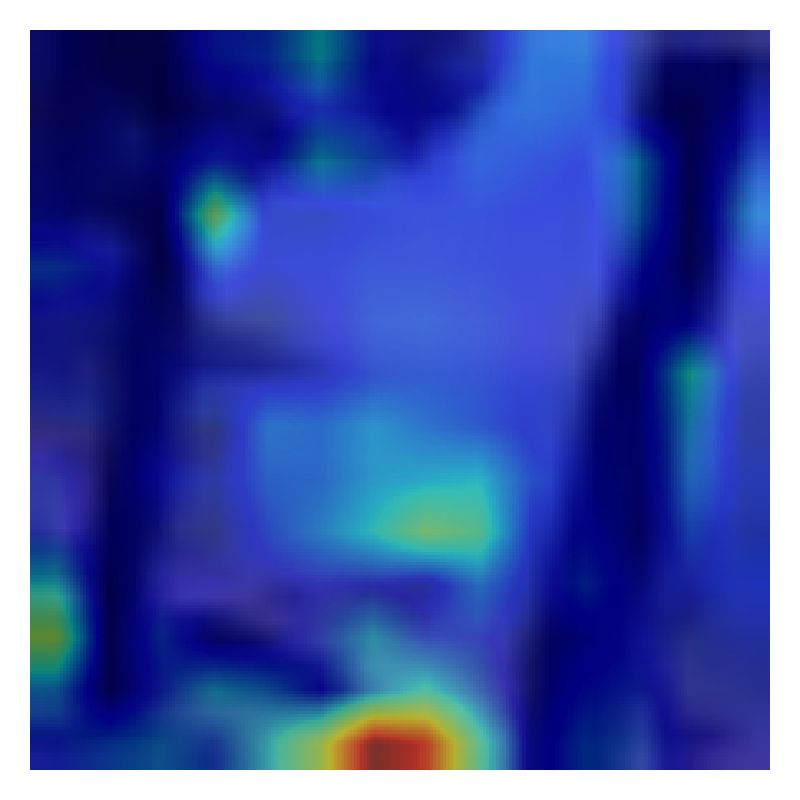} &
				\includegraphics[width=0.09\columnwidth]{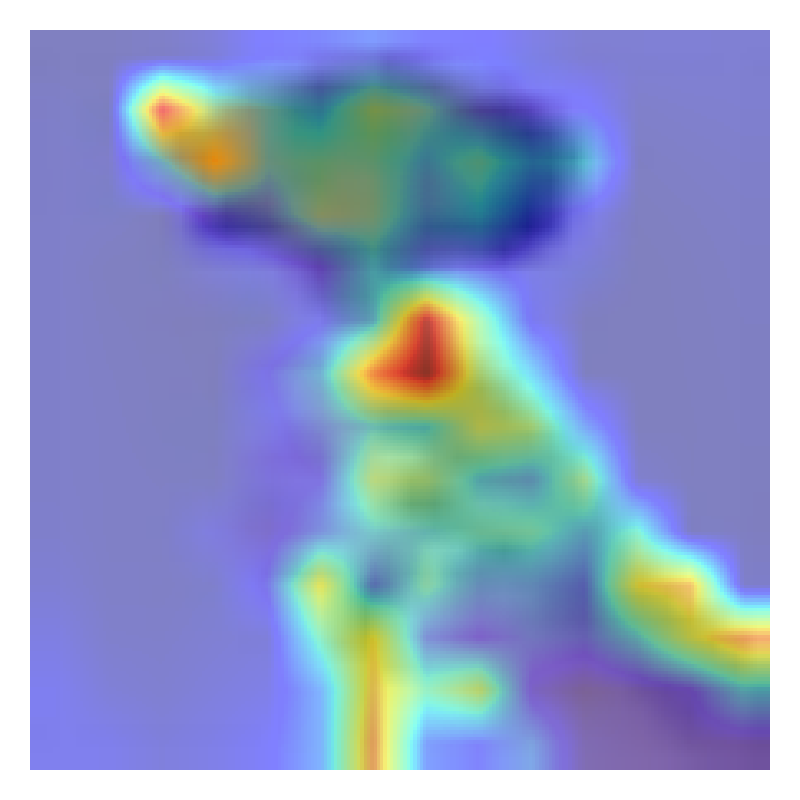} &
				\includegraphics[width=0.09\columnwidth]{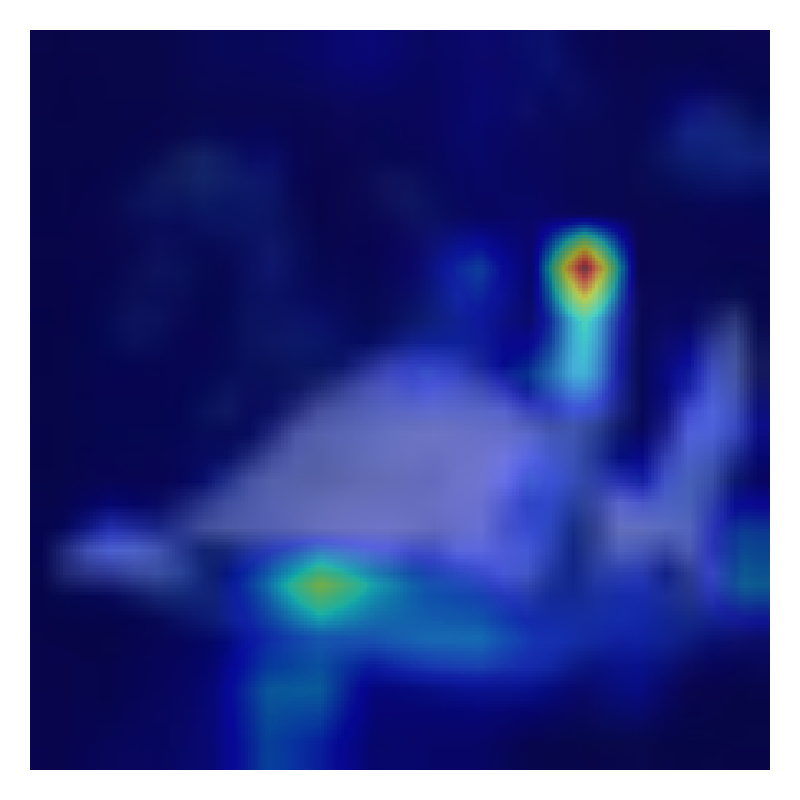} &
				\includegraphics[width=0.09\columnwidth]{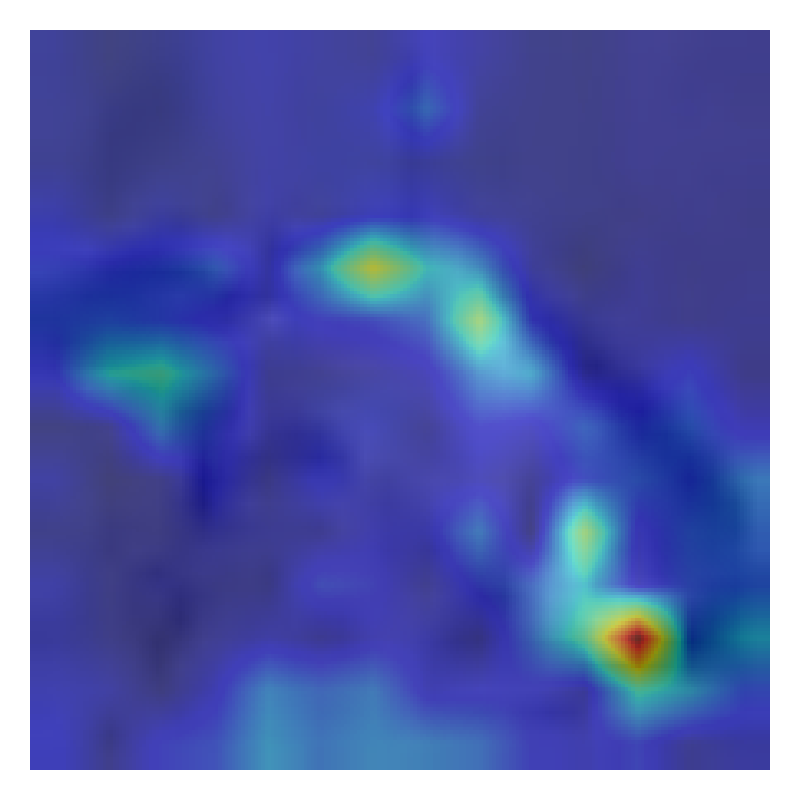}
			\end{tabular}
			
			\caption{Qualitative attention heatmaps for WePE. }
			\label{fig:wepe-heatmaps}
		\end{figure}
		
		\begin{figure}[t]
			\centering
			\includegraphics[width=0.9\linewidth]{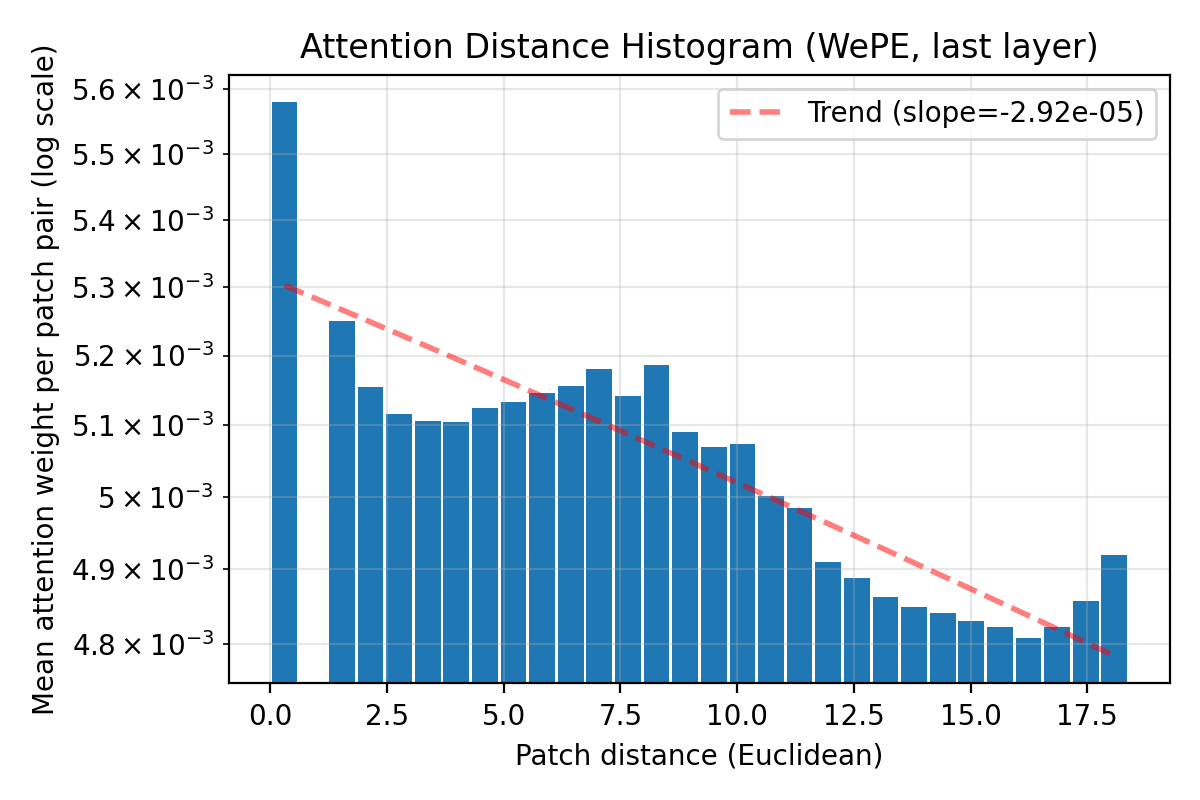}
			\caption{
				Attention--distance histogram for WePE (last layer).
				Mean attention weight per patch pair as a function of Euclidean distance
				on the $14\times14$ patch lattice.
				The dashed line shows a least-squares linear fit with a negative slope.}
			
			\label{fig:wepe-attn-distance}
		\end{figure}

		We next study how WePE distributes attention as a function of spatial distance between
		patches, providing a complementary, more global diagnostic. Using the same model and test set, we again hook the last self-attention block
		and obtain the attention tensor
		$A \in \mathbb{R}^{B\times H\times L\times L}$.
		We discard the class token and keep only patch--patch attention,
		resulting in $A_{\mathrm{pp}} \in \mathbb{R}^{B\times H\times N\times N}$ with
		$N=14\times14$.
		On the $14\times14$ patch lattice, we assign each patch an integer coordinate
		$(i,j)$ and precompute the Euclidean distance between all patch pairs,
		forming a matrix $D\in\mathbb{R}^{N\times N}$ with entries
		$d_{uv} = \|p_u - p_v\|_2$. We then discretize the range of distances into $M$ bins
		and, for each bin, aggregate the mean
		attention weight per patch pair:
		\begin{equation}
			\bar{A}(b)
			\;=\;
			\frac{1}{|\mathcal{P}_b|}
			\sum_{(u,v)\in\mathcal{P}_b}
			\frac{1}{B H} \sum_{b=1}^{B}\sum_{h=1}^{H}
			A_{\mathrm{pp}}^{(b,h)}(u,v),
		\end{equation}
		where $\mathcal{P}_b$ is the set of patch pairs whose distance falls into bin $b$.
		We finally plot $\bar{A}(b)$ against the corresponding bin centers and optionally
		fit a least-squares line to estimate the global trend. Figure~\ref{fig:wepe-attn-distance} reports the resulting attention--distance histogram.
		Although the absolute variation is modest, the fitted trend exhibits a clear
		negative slope: the mean attention weight per patch pair decreases as the
		spatial distance increases.
		This confirms that WePE indeed introduces a distance-aware inductive bias:
		interactions between nearby and mid-range patches are slightly favored over
		very long-range connections.
		At the same time, the decay is gentle rather than abrupt, indicating that
		WePE still allows non-negligible long-range attention, in line with the good
		performance on samples whose discriminative cues are moderately dispersed. We also notice a small uptick in the last one or two distance bins.
		This edge effect is largely due to the small number of patch pairs at maximal
		distances on the finite $14\times14$ grid, which makes the estimate in those bins
		noisy.
		
		Nonetheless, the overall monotonic trend remains clear when considering the
		full range of distances. This histogram analysis is inherently global: it aggregates over all images,
		layers, and heads, and therefore cannot capture head-specific strategies or
		class-dependent patterns.
		In addition, we only analyze the final layer, earlier layers may exhibit
		stronger locality, which is blurred by this averaging.
		Finally, the current WePE design imposes a fixed distance-decay profile that
		is shared across all samples, which partly explains why the model underperforms
		on rare cases requiring extremely long-range reasoning. In future work, we aim to extend WePE towards adaptive and multi-scale
		distance priors
		to better accommodate images whose discriminative cues are either unusually
		local or unusually global.

	\end{document}